\definecolor{darkred}{RGB}{150,50,50}
\definecolor{brown}{RGB}{250,100,100}
\definecolor{green}{RGB}{000,150,100}
\definecolor{purple}{RGB}{250,000,180}
\begin{document}
\title{Semi-Supervised Off Policy Reinforcement Learning}

\author{\name Aaron Sonabend-W \email asonabend@g.harvard.edu \\
       \addr Department of Biostatistics\\
       Harvard University\\
       Boston, MA 02115, USA
       \AND
\name Nilanjana Laha \email nlaha@hsph.harvard.edu \\
       \addr Department of Biostatistics\\
       Harvard University\\
       Boston, MA 02115, USA
              \AND
\name Ashwin N. Ananthakrishnan
\email aananthakrishnan@mgh.harvard.edu  \\
       \addr Division of Gastroenterology\\ Massachusetts General Hospital\\ Boston, MA 02115, USA
       \AND
\name Tianxi Cai
\email tcai@hsph.harvard.edu  \\
       \addr Department of Biostatistics\\
       Harvard University\\
       Boston, MA 02115, USA
       \AND
\name Rajarshi Mukherjee \email ram521@mail.harvard.edu  \\
       \addr Department of Biostatistics\\
       Harvard University\\
       Boston, MA 02115, USA
}
\editor{}
\maketitle
\begin{abstract}
Reinforcement learning (RL) has shown great success in estimating sequential treatment strategies which take into account patient heterogeneity. However, health-outcome information, which is used as the reward for reinforcement learning methods, is often not well coded but rather embedded in clinical notes. Extracting precise outcome information is a resource intensive task, so most of the available well-annotated cohorts are small. To address this issue, we propose a semi-supervised learning (SSL) approach that efficiently leverages a small sized labeled data with true outcome observed, and a large unlabeled data with outcome surrogates. In particular, we propose a semi-supervised, efficient approach to Q-learning and doubly robust off policy value estimation. Generalizing SSL to sequential treatment regimes brings interesting challenges: 1) Feature distribution for Q-learning is unknown as it includes previous outcomes. 2) The surrogate variables we leverage in the modified SSL framework are predictive of the outcome but not informative to the optimal policy or value function. We provide theoretical results for our Q-function and value function estimators to understand to what degree efficiency can be gained from SSL. Our method is at least as efficient as the supervised approach, and moreover safe as it robust to mis-specification of the imputation models.
\end{abstract}

\begin{keywords}
  Semi-supervised learning, $Q$-Learning, Reinforcement-learning, Dynamical Treatment Regimes, Doubly robust value function
\end{keywords}



\section{Introduction}

Finding optimal treatment strategies that can incorporate patient heterogeneity is a cornerstone of personalized medicine. When treatment options change over time, optimal sequential 
treatment rules (STR) can be learned using longitudinal patient
data. With increasing availability of large-scale longitudinal data such as electronic health records (EHR) data in recent years, reinforcement learning (RL) has found much success in estimating such optimal STR   \citep{2019PM}. Existing RL methods include G-estimation \citep{robins2004},  Q-learning \citep{Watkins1989,murphy2005}, A-learning \citep{MurphyAlearning} and directly maximizing the value function \citep{Zhao2015}. Both G-estimation and $A$-learning attempt to model only the component of the outcome regression relevant to the treatment contrast, while $Q$-learning posits complete models for the outcome regression. Although G-estimation and $A$-learning models can be more efficient and robust to mis-specification, $Q$-learning is widely adopted due to its ease of implementation, flexibility and interpretability \citep{Watkins1989,DTRbook,schulte2014}.

Learning STR with EHR data, however, often faces an additional challenge of whether outcome information is readily available. Outcome information, such as development of a clinical event or whether a patient is considered as a responder, is often not well coded but rather embedded in clinical notes. Proxy variables such as diagnostic codes or mentions of relevant clinical terms in clinical notes via natural language processing (NLP), while predictive of the true outcome, are often not sufficiently accurate to be used directly in place of the outcome \citep{hong2019semi,zhang2019high,cheng2020robust}. On the other hand, extracting precise outcome information often requires manual chart review, which is resource intensive, particularly when the outcome needs to be annotated over time. This indicates the need for a semi-supervised learning (SSL) approach that can efficiently leverage a small sized labeled data $\Lsc$ with true outcome observed and a large sized unlabeled data $\Usc$ for predictive modeling. It is worthwhile to note that the SSL setting differs from the standard missing data setting in that the probability of missing tends to 1 asymptotically, which violates the positivity assumption required by the classical missing data methods \citep{chakrabortty}.

While SSL methods have been well developed for prediction, classification and regression tasks \citep[e.g.][]{Chapelle2006,zhu05,BlitzerZ08,Wang2011,qiao2018,chakrabortty}, there is a paucity of literature on SSL methods for estimating optimal treatment rules. Recently, \cite{cheng2020robust} and \cite{kallus2020role} proposed SSL methods for estimating an average causal treatment effect. \cite{Finn2016} proposed a semi-supervised RL method which achieves impressive empirical results and outperforms simple approaches such as direct imputation of the reward. However, there are no theoretical guarantees and the approach lacks causal validity and interpretability within a domain context. Additionally, this method does not leverage available surrogates. In this paper, we fill this gap by proposing a theoretically justified SSL approach to Q-learning using  a large unlabeled data $\Usc$ which contains sequential observations on features $\bO$, treatment assignment $A$, and surrogates $\bW$ that are imperfect proxies of $Y$, as well as a small set of labeled data $\Lsc$ which contains true outcome $Y$ at multiple stages along with $\bO$, $A$ and $\bW$. We will also develop robust and efficient SSL approach to estimating the value function of the derived optimal STR, defined as the expected counterfactual outcome under the derived STR. 

To describe the main contributions of our proposed SSL approach to RL, we first note two important distinctions between the proposed framework and classical SSL methods. First, existing SSL literature often assumes that $\mathcal{U}$ is large enough that the feature distribution is known \citep{Wasserman2007}. However, under the RL setting, the outcome of the stage $t-1$, denoted by $Y_{t-1}$, becomes a feature of stage $t$ for predicting $Y_t$. As such, the feature distribution for predicting $Y_t$ can not be viewed as known in the $Q$-learning procedure. Our methods for estimating an optimal STR and its associated value function, carefully adapt to this sequentially missing data structure. Second, we modify the SSL framework to handle the use of surrogate variables $\bW$ which are predictive of the outcome through the joint law $\Pbb_{Y,\bO,A,\bW}$, but are not part of the conditional distribution of interest $\Pbb_{Y|\bO,A}$. To address these issues, we propose a two-step fitting procedure for finding an optimal STR and for estimating its value function in the SSL setting. Our method consists of using the outcome-surrogates ($\bW$) and features ($\bO,A$) for non-parametric estimation of the missing outcomes ($Y$). We subsequently use these imputations to estimate $Q$ functions, learn the optimal treatment rule and estimate its associated value function. We provide theoretical results to understand when and to what degree efficiency can be gained from  $\bW$ and $\bO,A$. 

We further show that  our approach is robust to mis-specification of the imputation models. To account for potential mis-specification in the models for the $Q$ function, we provide a double robust value function estimator for the derived STR. If either the regression models for the Q functions or the propensity score functions are correctly specified, our value function estimators are consistent for the true value function. 

We organize the rest of the paper as follows. In Section \ref{section: set up} we formalize the problem mathematically and provide some notation to be used in the development and analysis of the methods. In Section \ref{section: SS Q learning} we discuss traditional $Q$-learning and propose an SSL estimation procedure for the optimal STR.  Section \ref{section: SS value function} details an SSL doubly robust estimator of the value function for the derived STR. In Section \ref{theory} we provide theoretical guarantees for our approach and discuss implications of our assumptions and results. Section \ref{section: sims and application} is devoted for numerical experiments  as well as real data analysis with an inflammatory bowel disease (IBD) data-set. We end with a discussion of the methods and possible extensions in Section \ref{section: discussion}. The proposed method has been implemented in R and the code can be found at \url{github.com/asonabend/SSOPRL}. Finally all the technical proofs and supporting lemmas are collected in Appendices \ref{sec:proof_main_results} and \ref{sec:proof_technical_lemmas}.

\section{Problem setup}\label{section: set up}

We consider a longitudinal observational study with outcomes, confounders and treatment indices potentially available over multiple stages. Although our method is generalizable for any number of stages, for ease of presentation we will use two time points of (binary) treatment allocation as follows. For time point $t\in \{1,2\}$, let  $\bO_t\in\mathbb{R}^{d^o_t}$ denote the vector of  covariates measured prior at stage $t$ of dimension $d^o_t$; $A_t\in\{0,1\}$ a treatment indicator variable; and $Y_{t+1}\in\mathbb{R}$ the outcome observed at stage $t+1$, for which higher values of $Y_{t+1}$ are considered beneficial. Additionally we observe surrogates $\bW_t \in\mathbb{R}^{d^\omega_ t}$, a $d^\omega_ t$-dimensional vector of post-treatment covariates potentially predictive of $Y_{t+1}$. In the labeled data where $\bY = (Y_2,Y_3)\trans$ is annotated, we observe a random sample of $n$ independent and identically distributed (iid) random vectors, denoted by 
$$
\mathcal{L}=\{\bL_i = (\bUvec_i\trans,\bY_i\trans)\trans\}_{i=1}^n, \quad \mbox{where $\bU_{ti}=(\bO_{ti}\trans, A_{ti},\bW_{ti}\trans)\trans$ and $\bUvec_i=(\bU_{1i}\trans,\bU_{2i}\trans)\trans$.}
$$ 
We additionally observe an unlabeled set consisting of $N$ iid random vectors, 
$$\mathcal{U}=\{\bUvec_{j}\}_{j=1}^N$$
with $N \gg n$. We denote the entire data as $\mathbb{S}=(\mathcal{L}\cup\mathcal{U})$. To operationalize our statistical arguments we denote the joint distribution of the observation vector $\bL_i$ in $\mathcal{L}$ as $\Pbb$. In order to connect to the unlabeled set, we assume that any observation vector $\bUvec_{j}$ in $\mathcal{U}$ has the distribution induced by $\Pbb$.

We are interested in finding the optimal STR and estimating its {\em value function} to be defined as expected counterfactual outcomes under the derived regime. To this end, let $Y_{t+1}^{(a)}$ be the potential outcome for a patient at time $t+1$ had the patient been assigned at time $t$ to treatment $a\in \{0,1\}$.  A dynamic treatment regime is a set of functions $\mathcal{D}=(d_1,d_2)$, where $d_t(\cdot)\in\{0,1\}$ , $t=1,2$ map from the patient's history up to time $t$ to the treatment choice $\{0,1\}$.
We define the patient's history as $\bH_1\equiv [\bH_{10}\trans, \bH_{11}\trans]\trans$ with $\bH_{1k} = \bphi_{1k}(\bO_1)$, $\bH_2=[\bH_{20}\trans, \bH_{21}\trans]\trans$ with $\bH_{2k}=\bphi_{2k}(\bO_1,A_1,\bO_2)$,  where $\{\bphi_{tk}(\cdot), t=1,2, k=0,1\}$ are pre-specified basis functions. We then define features derived from patient history for regression modeling as $\bX_1\equiv[\bH_{10}\trans,A_1\bH_{11}\trans]\trans$ and $\bX_2\equiv[\bH_{20}\trans,A_{2}\bH_{21}\trans]\trans$. For ease of presentation, we also let $\bHcheck_1 = \bH_1\trans$,  $\bHcheck_2 = (Y_2, \bH_2\trans)\trans$, $\bXcheck_1 = \bX_1$, $\bXcheck_2 = (Y_2, \bX_2\trans)\trans$, and $\bSigma_t=\mathbb{E}[\bXcheck_t\bXcheck_t\trans]$. 

Let $\Ebb_\Dsc$ be the expectation with respect to the measure that generated the data under regime $\Dsc$. Then these sets of rules $\Dsc$ have an associated value function which we can write as $V(\Dsc)=\mathbb{E}_\Dsc\left[Y_2^{(d_1)}+Y_3^{(d_2)}\right]$. Thus, an optimal dynamic treatment regime is a rule $\Dscbar=(\dbar_1,\dbar_2)$ such that $\Vbar=V\left(\Dscbar\right)\ge V\left(\Dsc\right)$ for all $\Dsc$ in a suitable class of admissible decisions \citep{DTRbook}. To identify $\Dscbar$ and $\Vbar$ from the observed data we will require the following sets of standard assumptions \citep{robins1997, schulte2014}: (i) consistency -- $Y_{t+1}=Y_{t+1}^{(0)}I(A_t=0)+Y_{t+1}^{(1)}I(A_t=1)\text{ for }t=1,2$, (ii) no unmeasured confounding -- $Y_{t+1}^{(0)},Y_{t+1}^{(1)}\indep A_t|\bH_t\text{ for }t=1,2$ and (iii) positivity -- $\Pbb(A_t|\bH_t)>\nu$, $\text{ for }t=1,2,\:A_t\in\{0,1\}$, for some fixed $\nu>0$. 

We will develop SSL inference methods to derive optimal STR $\Dscbar$ as well the associated value function $\Vbar$  by leveraging the richness of the unlabeled data and the predictive power of surrogate variables which allows us to gain crucial statistical efficiency. Our main contributions in this regard can be described as follows. First, we provide a systematic generalization of the $Q$-learning framework with theoretical guarantees to the semi-supervised setting with improved efficiency. Second, we provide a doubly robust estimator of the value function in the semi-supervised setup. Third, our $Q$-learning procedure and value function estimator are flexible enough to allow for standard off-the-shelf machine learning tools and are shown to perform well in finite-sample numerical examples. 


\section{Semi-Supervised \texorpdfstring{$Q$}{Lg}-learning}\label{section: SS Q learning}

In this section we propose a semi-supervised Q-learning approach to deriving an optimal STR. To this end, we first recall the basic mechanism of traditional linear parametric $Q$-learning  \citep{DTRbook} and then detail our proposed method. We defer the theoretical guarantees to Section \ref{theory}.  

\subsection{Traditional \texorpdfstring{$Q$}{Lg}-learning}

$Q$-learning is a backward recursive algorithm that identifies optimal STR by optimizing two stage Q-functions defined as: $$Q_2(\bHcheck_2,A_2)\equiv\mathbb{E}[Y_3|\bHcheck_2,A_2], \quad \mbox{and}\quad
Q_1(\bHcheck_1,A_1)\equiv\mathbb{E}[Y_2+\underset{a_2}{\text{max}}\:Q_2(\bHcheck_2,a_2)|\bHcheck_1,A_1]
$$ 
\citep{sutton2018,murphy2005}.
In order to perform inference one typically proceeds by positing models for the $Q$ functions. In its simplest form one assumes a (working) linear model for some parameters $\btheta_t=(\bbeta_t\trans,\bgamma_t\trans)\trans$, $t=1,2$, as follows: 
\begin{align}\label{linear_Qs}
\begin{split}
Q_1(\bHcheck_1,A_1;\btheta_1^0)=& \bXcheck_1\trans\btheta_1^0=\bH_{10}\trans \bbeta_1^0+A_{1}(\bH_{11}\trans \bgamma_1^0) ,\\
Q_2(\bHcheck_2,A_2;\btheta_2^0)=&\bXcheck_2\trans\btheta_2^0 = Y_2 \beta_{21}^0 +
\bH_{20}\trans\bbeta_{22}^0
+A_{2}(\bH_{21}\trans\bgamma_2^0).
\end{split}
\end{align}
Typical $Q$-learning consists of performing a least squares regression for the second stage to estimate $\bthetahat_2$ followed by defining the stage 1 pseudo-outcome for $i=1,...,n$ as 
\[
\Yhat_{2i}^*=Y_{2i}+\underset{a_2}{\text{max}}\:Q_2(\bHcheck_{2i},a_2;\bthetahat_2)=Y_{2i}(1+\hat\beta_{21})+\bH_{20i}\trans{\bbetahat}_{22}
+[\bH_{21i}\trans{\bgammahat}_2]_+,
\]
where $[x]_+=xI(x>0)$. One then proceeds to estimate $\bthetahat_1$ using least squares again, with $\Yhat_2^*$ as the outcome variable. Indeed, valid inference on $\Dscbar$ using the method described above crucially depends on the validity of the model assumed. However as we shall see, even without validity of this model we will be able to provide valid inference on suitable analogues of the $Q$-function working model parameters, and on the value function using a double robust type estimator. To that end it will be instructive to define the least square projections of $Y_3$ and $Y_2^*$ onto $\bXcheck_2$ and $\bXcheck_1$ respectively. The linear regression working models given by \eqref{linear_Qs} have $\btheta_1^0,\:\btheta_2^0$ as unknown regression parameters. To account for the potential mis-specification of the working models in \eqref{linear_Qs}, we define the target population parameters $\bthetabar_1,\bthetabar_2$ as the population solutions to the expected normal equations  
\[
\mathbb{E}\left\{\bXcheck_1(\Ybar_2^*-\bXcheck_1\trans\bthetabar_1)\right\}=\bzero, \quad \mbox{and}\quad \mathbb{E}\left\{ \bXcheck_2\trans\left(Y_3-\bXcheck_2\trans\bthetabar_2\right)\right\}=\bzero,
\]
where $\Ybar_2^*=Y_2+\underset{a_2}{\text{max}}\:Q_2(\bHcheck_2,a_2;\bthetabar_2)$. As these are linear in the parameters, uniqueness and existence for
$\bthetabar_1,\bthetabar_2$ are well defined. In fact, $Q_1(\bHcheck_1,A_1;\bthetabar_1)=\bXcheck_1\trans\bthetabar_1,Q_2(\bHcheck_2,A_2;\bthetabar_2)=\bXcheck_2\trans\bthetabar_2$ are the $L_2$ projection of $\mathbb{E}(Y_2^*|\bXcheck_1)\in\mathcal{L}_2\left(\Pbb_{\bXcheck_1}\right),\:\mathbb{E}(Y_3|\bXcheck_2)\in\mathcal{L}_2\left(\Pbb_{\bXcheck_2}\right)$ onto the subspace of all linear functions of $\bXcheck_1,\bXcheck_2\trans$ respectively. Therefore, $Q$ functions in \eqref{linear_Qs} are the best linear predictors of $\Ybar_2^*$ conditional on $\bXcheck_1$ and $Y_3$ conditional on $\bXcheck_2\trans$.

Traditionally, one only has access to labeled data $\mathcal{L}$, and hence proceeds by estimating $(\btheta_1,\btheta_2)$ in \eqref{linear_Qs} by solving the following sample version set of normal equations:
\begin{align}\label{full_EE}
\begin{split}
\Pbb_n 
\left[\begin{matrix}
\bXcheck_2(Y_3-\bXcheck_2\trans\btheta_2)
\end{matrix}\right] \equiv
\Pbb_n\left[\begin{matrix}
Y_2\{Y_3-(Y_2,\bX_2\trans)\btheta_2\}\\
\bX_2\{Y_3-(Y_2,\bX_2\trans)\btheta_2\}
\end{matrix}\right]  
=&\textbf{0},\\
\Pbb_n \left[
\bX_1
\{Y_2(1+\beta_{21})+\bH_{20}\trans{\bbeta}_{22}
+[\bH_{21}\trans{\bgamma}_2]_+-\bX_1\trans\btheta_1\}\right]=&\bf0.
\end{split}
\end{align}
\citep{DTRbook}, where $\Pbb_n$ denotes the empirical measure: i.e. for a measurable function $f:\mathbb{R}^p\mapsto\mathbb{R}$ and random sample $\{\bL_i\}_{i=1}^n$, $\Pbb_nf=\frac{1}{n}\sum_{i=1}^nf(\bL_i)$. 
The asymptotic distribution for the $Q$ function parameters in the fully-supervised setting has been well studied \cite[see][]{laber2014}. 

\subsection{Semi-supervised \texorpdfstring{$Q$}{Lg}-learning}\label{sec: ssQ}
We next detail our robust imputation-based semi-supervised $Q$-learning that leverages the unlabeled data $\mathcal{U}$ to replace the unobserved $Y_{t}$ in \eqref{full_EE} with their properly imputed values for subjects in $\Usc$. Our SSL procedure includes three key steps: (i) imputation, (ii) refitting, and (iii) projection to the unlabeled data. 
In step (i), we develop flexible imputation models for the conditional mean functions $\{\mu_t(\cdot), \mu_{2t}(\cdot), t = 2, 3\}$, where $\mu_t(\bUvec) = \mathbb{E}(Y_{t}|\bUvec)$ and  $\mu_{2t}(\bUvec) = \mathbb{E}(Y_{2}Y_{t}|\bUvec)$. 
The refitting in step (ii) will ensure the validity of the SSL estimators under potential mis-specifications of the imputation models. 

\subsubsection*{Step I: Imputation.}

Our first imputation step involves weakly parametric or non-parametric prediction modeling to approximate the conditional mean functions $\{\mu_t(\cdot), \mu_{2t}(\cdot), t = 2, 3\}$. Commonly used models such as non-parametric kernel smoothing, basis function expansion or kernel machine regression can be used. We denote the corresponding estimated mean functions as $\{\mhat_t(\cdot), \mhat_{2t}(\cdot), t = 2, 3\}$ under the corresponding imputation models $\{m_t(\bUvec), m_{2t}(\bUvec), t=2,3\}$. Theoretical properties of our proposed SSL estimators on specific choices of the imputation models are provided in section \ref{theory}. We also provide additional simulation results comparing different imputation models in section \ref{section: sims and application}.



\subsubsection*{Step II: Refitting.}
To overcome the potential bias in the fitting from the imputation model, especially under model mis-specification, we update the imputation model with an additional refitting step by expanding it to include linear effects of $\{\bX_t, t=1,2\}$ with cross-fitting to control overfitting bias. 
Specifically, to ensure the validity of the SSL algorithm from the refitted imputation model, we note that the final imputation models for $\{Y_t, Y_{2t}, t=2,3\}$, denoted by $\{\mubar_t(\bUvec), \mubar_{2t}, t=2,3\}$, need to satisfy 
\begin{equation*}
\begin{alignedat}{3}
 \Ebb\left[\bXvec\{Y_2 - \mubar_2(\bUvec)\}\right] = \bzero, &\quad&
\Ebb\left\{Y_2^2 - \mubar_{22}(\bUvec)\right\} =&0 &, \\
\Ebb\left[\bX_2\{Y_3 - \mubar_3(\bUvec)\}\right] = \bzero, &\quad&
\Ebb\left\{Y_2Y_3 - \mubar_{23}(\bUvec)\right\} =&0 &.
\end{alignedat}
\end{equation*}
where $\bXvec = (1,\bX_1\trans,\bX_2\trans)\trans$.
We thus propose a refitting step that expands $\{m_t(\bUvec), m_{2t}(\bUvec), t=2,3\}$ to additionally adjust for linear effects of $\bX_1$ and/or $\bX_2$ to ensure the subsequent projection step is unbiased. To this end, let $\{\mathcal{I}_k,k=1,...,K\}$ denote $K$ random equal sized partitions of the labeled index set $\{1,...,n\}$, and let $\{\mhat_{t}\supnk(\bUvec), \mhat_{2t}\supnk(\bUvec), t=2,3\}$ be the counterpart of $\{\mhat_t(\bUvec), \mhat_{2t}(\bUvec),t=2,3\}$ with labeled observations  in $\{1,..,n\}\setminus\mathcal{I}_k$. We then obtain $\bEtahat_2$, $\etahat_{22}$, $\bEtahat_3$, $\etahat_{23}$ respectively as the solutions to
\begin{equation}\label{eta_EE_Q1}
\begin{alignedat}{3}
\sum_{k=1}^{K}\sum_{i\in\mathcal{I}_k}\bXvec_i\left\{Y_{2i}- \mhat_2\supnk(\bUvec_i)-\bEta_2\trans\bXvec_i\right\}  = &\bzero, \ & \sum_{k=1}^{K}\sum_{i\in\mathcal{I}_k}
\left\{Y_{2i}^2-\mhat_{22}\supnk(\bUvec_i)-
\eta_{22}\right\} = &0& ,  \\
\sum_{k=1}^{K}\sum_{i\in\mathcal{I}_k}
\bX_{2i}\left\{Y_{3i}-\mhat_{3}\supnk(\bUvec_i)-
\bEta_3\trans\bX_{2i}
\right\} = &\bzero, \ & 
\sum_{k=1}^{K}\sum_{i\in\mathcal{I}_k}
\left\{Y_{2i}Y_{3i}-\mhat_{23}\supnk(\bUvec_i)-
\eta_{23}\right\} =&0& .
\end{alignedat}
\end{equation}

Finally, we impute $Y_2$, $Y_3$, $Y_{2}^2$ and $Y_{2}Y_3$ respectively as
$\muhat_2(\bUvec) = K^{-1}\sum_{k=1}^K \mhat_2\supnk(\bUvec) + \bEtahat_2\trans\bXvec$, 
$\muhat_3(\bUvec) = K^{-1}\sum_{k=1}^K \mhat\supnk_3(\bUvec) + \bEtahat_3\trans\bX_2$, 
$\muhat_{22}(\bUvec)=K^{-1}\sum_{k=1}^K \mhat\supnk_{22}(\bUvec) + \etahat_{22}$, and $\muhat_{23}(\bUvec)=K^{-1}\sum_{k=1}^K \mhat\supnk_{23}(\bUvec) + \etahat_{23}$.

\subsubsection*{Step III: Projection}\label{section: SSQL}

In the last step, we proceed to estimate $\bthetahat$ 
by replacing $\{Y_t, Y_{2}Y_{t}, t=2,3\}$ in (\ref{full_EE}) with their the imputed values $\{ \muhat_t(\bUvec),  \muhat_{2t}(\bUvec), t= 2,3\}$ and project to the unlabeled data. Specifically, we obtain the final SSL estimators for $\btheta_1$ and $\btheta_2$ via the following steps:
\begin{enumerate} 
	\item Stage 2 regression:
	we obtain the SSL estimator for $\btheta_2$ as 
    \begin{align*}
    \bthetahat_2=(\bbetahat_2\trans,\bgammahat_2\trans)\trans: 
    \mbox{the solution to}\quad
    \begin{split}
    \Pbb_N&
    \begin{bmatrix}
    \muhat_{23}(\bUvec)-
    [\muhat_{22}(\bUvec),\muhat_2(\bUvec)\bX_2\trans]\btheta_2\\
        \bX_2\{\muhat_3(\bUvec)-
        [\muhat_2(\bUvec),\bX_2\trans]\btheta_2\}
        \end{bmatrix}
        =\textbf{0}
    \end{split}
    \end{align*}
    
\item We compute the imputed pseudo-outcome: 
\[
\Ytilde_{2}^*=\muhat_{2}(\bUvec)+\underset{a\in\{0,1\}}{\text{max }}Q_2\left(\bH_{2},\muhat_{2}(\bUvec),a;\bthetahat_2\right),
\]
\item Stage 1 regression: we estimate $\bthetahat_1=(\bbetahat_1\trans,\bgammahat_1\trans)\trans$ as the solution to:
\begin{align*}
\Pbb_N \left\{\bX_1
(\Ytilde_2^*-\bX_1\trans\btheta_1)\right\}=\bf0.
\end{align*}

\end{enumerate}

Based on the SSL estimator for the Q-learning model parameters, we can then obtain an estimate for the optimal treatment protocol as:
\[
\dhat_t \equiv \dhat_t(\bH_t)\equiv d_t(\bH_t; \bthetahat_t), \mbox{ where } d_t(\bH_t,\btheta_t) = \argmax{a \in \{0,1\}}Q_t(\bH_t,a;\btheta_t)=I\left(\bH_{t1}\trans\bgamma_t>0\right), \: t = 1, 2.
\]
Theorems \ref{theorem: unbiased theta2} and \ref{theorem: unbiased theta1} of Section \ref{theory} demonstrate
the consistency and asymptotic normality of the SSL estimators $\{\bthetahat_t,t=1,2\}$ for their respective population parameters $\{\bthetabar_t,t=1,2 \}$ even in the possible mis-specification of \eqref{linear_Qs}. As we explain next, this in turn yields desirable statistical results for evaluating the resulting policy $\dbar_t \equiv \dbar_t(\bH_t) \equiv d_t(\bH_t,\bthetabar_t) = \argmax{a \in \{0,1\}}Q_t(\bHcheck_t,a;\bthetabar_t)$ for $t=1,2$.

\section{Semi Supervised Off-Policy Evaluation of the Policy}\label{section: SS value function}

To evaluate the performance of the optimal policy $\Dscbar = \{\dbar_t(\bH_t), t=1,2\}$, derived under the Q-learning framework, one may estimate the expected population outcome under the policy $\Dscbar$: 
\[
\Vbar\equiv
\mathbb{E}\left[\mathbb{E}\{Y_2+\mathbb{E}\{Y_3|\bHcheck_2,A_2=\dbar_2(\bH_2)\}|\bH_1,A_1=\dbar_1(\bH_1)\}\right].
\]

If models in \eqref{linear_Qs} are correctly specified, then under standard causal assumptions (consistency, no unmeasured confounding, and positivity), an asymptotically consistent supervised estimator for the value function can be obtained as
\[
\Vhat_Q=\Pbb_n\left[\Qopt_1(\bHcheck_1;\bthetahat_1)\right],
\]
where  $\Qopt_t(\bHcheck_t;\btheta_t)\equiv Q_t\left(\bHcheck_t,d_t(\bH_t;\btheta_t);\btheta_t\right)$. However, $\Vhat_Q$ is likely to be biased when the outcome models in \eqref{linear_Qs} are mis-specified. This occurs frequently in practice since $Q_1(\bHcheck_1,A_1)$ is especially difficult to specify.

To improve the robustness to model mis-specification, we augment $\Vhat_Q$ via propensity score weighting. This gives us an SSL doubly robust (SSL$\subDR$) estimator for $\Vbar$. To this end, we define propensity scores: $$\pi_t(\bHcheck_t)=\Pbb\{A_t=1|\bHcheck_t\}, \quad t=1,2.$$ To estimate $\{\pi_t(\cdot),t=1,2\}$, we impose the following generalized linear models (GLM):
\begin{align}\label{logit_Ws}
\pi_t(\bHcheck_t;\bxi_t)=&\sigma\left(\bHcheck_t\trans\bxi_t\right), \quad \mbox{with}\quad \sigma(x)\equiv1/(1+e^{-x}) \quad \mbox{for}\quad t=1,2.
\end{align}
We use the logistic model with potentially non-linear basis functions $\bHcheck$ for simplicity of presentation but one may choose other GLM or alternative basis expansions to incorporate non-linear effects in the propensity model. We estimate $\bxi=(\bxi_1\trans,\bxi_2\trans)\trans$ based on the standard maximum likelihood estimators using labeled data, denoted by $\bxihat = (\bxihat_1\trans,\bxihat_2\trans)\trans$.
We denote the limit of $\bxihat$ as $\bxibar = (\bxibar_1\trans,\bxibar_2\trans)\trans$. Note that this is not necessarily equal to the true model parameter under correct specification of \eqref{logit_Ws}, but corresponds to the population solution of the fitted models. 

Our framework is flexible to allow an SSL approach to estimate the propensity scores. As these are nuisance parameters needed for estimation of the value function, and SSL for GLMs has been widely explored \citep[See][Ch. 2]{ChakraborttyThesis}, we proceed with the usual GLM estimation to keep the discussion focused. However, SSL for propensity scores can be beneficial in certain cases, as we show in Proposition \ref{lemma: v funcion var}.

\subsection{SUP\texorpdfstring{$\subDR$}{Lg} Value Function Estimation }
To derive a supervised doubly robust (SUP$\subDR$) estimator for $\Vbar$ overcoming confounding in the observed data, we let $\bTheta = (\btheta\trans,\bxi\trans)\trans$ and define the inverse probability weights (IPW) using the propensity scores
\begin{align*}
\omega_1(\bHcheck_1,A_1,\bTheta)
&\equiv
\frac{d_1(\bH_1;\btheta_1)A_1}{\pi_1(\bHcheck_1;\bxi_1)}+\frac{\{1-d_1(\bH_1;\btheta_1)\}\{1-A_1\}}{1-\pi_1(\bHcheck_1;\bxi_1)}, \quad \mbox{and}\\ \omega_2(\bHcheck_2,A_2,\bTheta)
&\equiv
\omega_1(\bHcheck_1,A_1,\bTheta)\left(\frac{d_2(\bH_2;\btheta_2)A_2}{\pi_2(\bHcheck_2;\bxi_2)}+\frac{\{1-d_2(\bH_2;\btheta_2)\}\{1-A_2\}}{1-\pi_2(\bHcheck_2;\bxi_2)}\right).
\end{align*}
Then we augment  $\Qopt_1(\bH_1;\bthetahat_1)$ based on the estimated propensity scores via
\begin{align*}
\begin{split}
\Vsc\subSUPDR(\bL; \bThetahat)=
\Qopt_1(\bH_1;\bthetahat_1)
+&\omega_1(\bHcheck_1,A_1,\bThetahat)
\left[
Y_2-\left\{\Qopt_1(\bH_1, \bthetahat_1)- \Qopt_2(\bHcheck_2;\bthetahat_2)
\right\}\right]\\
+&\omega_2(\bHcheck_2,A_2,\bThetahat)\left\{
Y_3-\Qopt_2(\bHcheck_2; \bthetahat_2)
\right\}
\end{split}
\end{align*}
and estimate $\Vbar$ as 
\begin{align}\label{eq: lab value fun}
\Vhat\subSUPDR = \Pbb_n\left\{\Vsc\subSUPDR(\bL; \bThetahat)\right\} .
\end{align}

\begin{remark}
The importance sampling estimators previously proposed in \citet{Jiang2016} and \citet{WDR} for value function estimation employ similar augmentation strategies. However, they consider a fixed policy, and we account for the fact that the STR is estimated with the same data. The construction of augmentation in $\Vhat\subSUPDR$ also differs from the usual augmented IPW estimators \citep{DTRbook}. As we are interested in the value had the population been treated with function $\Dscbar$ and not a fixed sequence $(A_1,A_2)$, we augment the weights for a fixed treatment (i.e. $A_t=1$) with the propensity score weights for the estimated regime $I(A_t = \dbar_t)$. Finally, we note that this estimator can easily be extended to incorporate non-binary treatments.
\end{remark}

The supervised value function estimator $\Vhat\subSUPDR$ is doubly robust in the sense that if either the outcome models of the propensity score models are correctly specified, then $\Vhat\subSUPDR\stackrel{\Pbb}{\rightarrow} \Vbar$ in probability. Moreover, under certain reasonable assumptions, $\Vhat\subSUPDR$ 
is asymptotically normal. Theoretical guarantees and proofs for this procedure are shown in Appendix \ref{app_DR_Vfun}.

\subsection{SSL\texorpdfstring{$\subDR$}{Lg} Value Function Estimation}\label{section: SS value function est}
Analogous to semi-supervised $Q$-learning, we propose a procedure for adapting the augmented value function estimator to leverage $\mathcal{U}$, by imputing suitable functions of the unobserved outcome in \eqref{eq: lab value fun}. Since $\bHcheck_2$ involves $Y_2$, both $\omega_2(\bHcheck_2,A_2;\bTheta)$ and $\Qopt_2(\bHcheck_2;\btheta_2)=Y_2 \beta_{21}+\Qopt_{2-}(\bH_2;\btheta_2)$ are not available in the unlabeled set, where $\Qopt_{2-}(\bH_2;\btheta_2)=\bH_{20}\trans\bbeta_{22} + [\bH_{21}\trans\bgamma_2]_+$. By writing $\Vsc\subSUPDR(\bL; \bThetahat)$ as
\begin{align*}
\begin{split}
\Vsc\subSUPDR(\bL; \bThetahat)=
\Qopt_1(\bH_1;\bthetahat_1)
+&\omega_1(\bHcheck_1,A_1,\bThetahat)
\left\{
(1+\betahat_{21})Y_2-\Qopt_1(\bH_1, \bthetahat_1)+ \Qopt_{2-}(\bH_2;\bthetahat_2)
\right\}\\
+&\omega_2(\bHcheck_2,A_2,\bThetahat)\left\{
Y_3-\betahat_{21}Y_2 -\Qopt_{2-}(\bH_2; \bthetahat_2)
\right\},
\end{split}
\end{align*}
we note that to impute $\Vsc\subSUPDR(\bL; \bThetahat)$ for subjects in $\Usc$,  we need to impute $Y_2$, $\omega_2(\bHcheck_2,A_2; \bThetahat)$, and $Y_t \omega_2(\bHcheck_2,A_2; \bThetahat)$ for $t=2,3$. We define the conditional mean functions \[\mu^v_2(\bUvec)\equiv\mathbb{E}[Y_2|\bUvec],  \quad \mu^v_{\omega_2}(\bUvec)\equiv\mathbb{E}[\omega_2(\bHcheck_2,A_2; \bThetabar)|\bUvec],\quad \mu^v_{t\omega_2}(\bUvec)\equiv\mathbb{E}[Y_t\omega_2(\bHcheck_2,A_2; \bThetabar)|\bUvec], 
\]
for $t=2,3$, where
$\bThetabar = (\bthetabar\trans,\bxibar\trans)\trans$.
As in Section \ref{sec: ssQ} we approximate these expectations using a flexible imputation model followed by a refitting step for bias correction under possible mis-specification of the imputation models.

\subsubsection*{Step I: Imputation}
We fit flexible weakly parametric or non-parametric models to the labeled data to approximate the functions $\{\mu^v_2(\bUvec), \mu^v_{\omega_2}(\bUvec), \mu^v_{t\omega_2}(\bUvec),\:t=2,3\}$ with unknown parameter $\bTheta$ estimated via the SSL $Q$-learning as in Section \ref{sec: ssQ} and the propensity score modeling as discussed above. Denote the respective imputation models as $\{ m_2(\bUvec), 
m_{\omega_2}(\bUvec),m_{t\omega_2}(\bUvec),\:t=2,3\}$ and their fitted values as
$\{\mhat_2(\bUvec), 
\mhat_{\omega_2}(\bUvec),\mhat_{t\omega_2}(\bUvec),\:t=2,3\}$. 

\subsubsection*{Step II: Refitting}
To correct for potential biases arising from finite sample estimation and model mis-specifications, we perform refitting to obtain final imputed models for $\{Y_2,\omega_2(\bHcheck_2, A_2; \bThetabar),$ $Y_t\omega_2(\bHcheck_2, A_2; \bThetabar),$ $t=2,3\}$ as $\{\mubar^v_2(\bUvec)=m_2(\bUvec)+\eta_2^v, \mubar^v_{\omega_2}(\bUvec)=m_{\omega_2}(\bUvec)+\eta_{\omega_2}^v, \mubar^v_{t\omega_2}(\bUvec)=m_{t\omega_2}(\bUvec)+\eta_{t\omega_2}^v,\:t=2,3\}$. As for the estimation of $\btheta$ for $Q$-learning training, these refitted models are not required to be correctly specified but need to satisfy the following constraints:
\begin{equation*}
\begin{alignedat}{3}
\Ebb\left[\omega_1(\bHcheck_1,A_1;\bThetabar)\left\{Y_2-\mubar_2^v(\bUvec)\right\}\right] &=0, \\
\Ebb\left[\Qopt_{2-}(\bUvec;\btheta_2)\left\{\omega_2(\bHcheck_2,A_2;\bThetabar)-\mubar_{\omega_2}^v(\bUvec)\right\}\right] &= 0,\\
\Ebb\left[\omega_2(\bHcheck_2,A_2;\bThetabar)Y_t- \mubar^v_{t\omega_2}(\bUvec)\right] &= 0,\:t=2,3.
\end{alignedat}
\end{equation*}
To estimate $\eta_2^v$ $\eta_{\omega_2}^v$, and $\eta_{t\omega_2}^v$ under these constraints, we again employ cross-fitting and obtain $\etahat_2^v$ $\etahat_{\omega_2}^v$, and $\etahat_{t\omega_2}^v$ as the solution to the following estimating equations
\begin{equation}\label{V function reffitting}
\begin{alignedat}{3}
\sum_{k=1}^{K}\sum_{i\in\mathcal{I}_k}\omega_1(\bHcheck_{1i},A_{1i};\bThetahat)\left\{Y_2-\mhat_2\supnk(\bUvec_i)-
\etahat_2^v\right\} &=0, \\
\sum_{k=1}^{K}\sum_{i\in\mathcal{I}_k}{ \Qopt_{2-}(\bUvec_i;\bthetahat_2)}\left\{\omega_2(\bHcheck_{2i},A_{2i};\bThetahat)-\mhat_{\omega_2}\supnk(\bUvec_i)-
\etahat_{\omega_2}^v\right\} &=0, \\
\sum_{k=1}^{K}\sum_{i\in\mathcal{I}_k}\left\{\omega_2(\bHcheck_{2i},A_{2i};\bThetahat)Y_{ti}-\mhat_{t\omega_2}\supnk(\bUvec_i)-
\etahat_{t\omega_2}^v\right\} &=0,  \:t=2,3.
\end{alignedat}
\end{equation}

The resulting imputation functions for $Y_2,\omega_2(\bHcheck_2, A_2; \bThetabar)$ and $Y_t\omega_2(\bHcheck_2, A_2; \bThetabar)$  are respectively constructed as $\muhat^v_2(\bUvec)= K^{-1}\sum_{k=1}^{K}
\mhat_2\supnk(\bUvec)+
\etahat_2^v,$ 
$\muhat^v_{\omega_2}(\bUvec)= K^{-1}\sum_{k=1}^{K}
\mhat_{\omega_2}(\bUvec)+
\etahat_{\omega_2}^v,$ and 
$\muhat^v_{t\omega_2}(\bUvec)= K^{-1}\sum_{k=1}^{K}
\mhat_{t\omega_2}\supnk(\bUvec)+
\etahat_{t\omega_2}^v,$ for $t=2,3$.

\subsubsection*{Step III: Semi-supervised augmented value function estimator.}

Finally, we proceed to estimate the value of the policy $\Vbar$, using the following semi-supervised augmented estimator:
	\begin{align}\label{SS_value_fun}
	\Vhat\subSSLDR=\Pbb_N\left\{
	\Vsc\subSSLDR(\bUvec;\bThetahat,\muhat)\right\},
	\end{align}
	 where $\Vschat\subSSLDR(\bUvec)$ is the semi-supervised augmented estimator for observation $\bUvec$ defined as:

	\begin{align*}
	\begin{split}
	\Vsc\subSSLDR(\bUvec;\bThetahat,\muhat)
	=&
	\Qopt_1(\bHcheck_1;\bthetahat_1)
	+\omega_1(\bHcheck_1,A_1,\bThetahat)
	\left[(1+\betahat_{21})\muhat_2^v(\bUvec)-
	 \Qopt_1(\bHcheck_1;\bthetahat_1)+\Qopt_{2-}(\bH_2;\bthetahat_2)
	\right]\\
	+&
	\muhat_{3\omega_2}(\bUvec)-\betahat_{21}\muhat_{2\omega_2}(\bUvec)-\Qopt_{2-}(\bH_2;\bthetahat_2)\muhat_{\omega_2}(\bUvec) .
	\end{split}
	\end{align*}

The above SSL estimator uses both labeled and unlabeled data along with outcome surrogates to estimate the value function, which yields a gain in efficiency as we show in Proposition \ref{lemma: v funcion var}. As its supervised counterpart, $\Vhat\subSSLDR$ is doubly robust in the sense that if either the $Q$ functions or the propensity scores are correctly specified, the value function will converge in probability to the true value $\Vbar$. Additionally, it does not assume that the estimated treatment regime was derived from a different sample. These properties are summarized in Theorem \ref{thrm_ssV_fun} and Proposition \ref{cor_dr_V} of the following section.

\section{Theoretical Results}\label{theory}

In this section we discuss our assumptions and theoretical results for the semi-supervised $Q$-learning and value function estimators. Throughout, we define the norm $\|g(x)\|_{L_2(\Pbb)}\equiv\sqrt{\int g(x)^2d\Pbb(x)}$ for any real valued function $g(\cdot)$. Additionally, let $\{U_n\}$, and $\{V_n\}$ be two sequences of random variables. We will use $U_n=O_\Pbb(V_n)$ to denote stochastic boundedness of the sequence $\{U_n/V_n\}$, that is, for any $\epsilon>0$, $\exists M_\epsilon,n_\epsilon\in\mathbb{R}$ such that $\Pbb\left(|U_n/V_n|>M_\epsilon\right)<\epsilon$ $\forall n>n_\epsilon$. We use $U_n=o_\Pbb(V_n)$ to denote that $U_n/V_n\stackrel{\Pbb}{\rightarrow}0.$

\subsection{Theoretical Results for SSL Q-learning}

\begin{assumption}\label{assumption: covariates}
	(a) Sample size for $\mathcal{U}$, and $\mathcal{L}$, are such that $n/N\longrightarrow 0$ as $N,n\longrightarrow\infty$, (b) $\bHcheck_t\in\mathcal{H}_t$, $\bXcheck_t\in\mathcal{X}_t$ have finite second moments and compact support in $\mathcal{H}_t\subset\mathbb{R}^{q_t}$, $\mathcal{X}_t\subset\mathbb{R}^{p_t}$ $t=1,2$ respectively (c) $\bSigma_1,\:\bSigma_2$ are nonsingular.
\end{assumption}

\begin{assumption}\label{assumption: Q imputation}
	Functions $m_s$, $s\in\{2,3,22,23\}$ are such that (i) $\sup_{\bUvec}|m_s(\bUvec)|<\infty$, and (ii) the estimated functions $\hat m_s$ satisfy (ii) $\sup_{\bUvec}|\mhat_s(\bUvec)-m_s(\bUvec)|=o_\Pbb(1)$. 
\end{assumption}

\begin{assumption}\label{assumption SS linear model}
Suppose $\Theta_1,\Theta_2$ are open bounded sets, and $p_1,p_2$ fixed under \eqref{linear_Qs}.
We define the following class of functions:
\begin{align*}
\begin{split}
\mathcal{Q}_t&\equiv\left\{Q_t:\mathcal{X}_1\mapsto\mathbb{R}|\btheta_1\in\Theta_1\subset\mathbb{R}^{p_t}\right\},\:t=1,2.
\end{split}
\end{align*}
Further suppose for $t=1,2$, the solutions for  $\mathbb{E}[S^\theta_t(\btheta_t)]=\bzero,$ i.e. $\bthetabar_1$ and $ \bthetabar_2$ satisfy

\be
S^\theta_2(\btheta_2)=\frac{\partial}{\partial\btheta_2\trans}
\|Y_3-Q_2(\bXcheck_2;\btheta_2)\|_2^2,\:
S^\theta_1(\btheta_1)=\frac{\partial}{\partial\btheta_1\trans}
\|Y_2^*-Q_1(\bXcheck_1;\btheta_1)\|_2^2.
\ee

The target parameters satisfy $\bthetabar_t\in\Theta_t\:,t=1,2$. We write $\bbetabar_t,\bgammabar_t$ as the components of $\bthetabar_t$, according to equation \eqref{full_EE}.
\end{assumption}
Assumption \ref{assumption: covariates} (a) distinguishes our setting from the standard missing data context. Theoretical results for the missing completely at random (MCAR) setting generally assume that the missingness probability is bounded away from zero \citep{tsiatis2006}, which enables the use of standard semiparametric theory. However, in our setting one can intuitively consider the probability of observing an outcome being $\frac{n}{n+N}$ which converges to $0$. 

Assumption \ref{assumption: Q imputation} is fairly standard as it just requires boundedness of the imputation functions -- which is natural to expect from the boundedness of the covariates. We also require uniform convergence of the estimated functions to their limit. This allows for the normal equations targeting the imputation residuals in \eqref{eta_EE_Q1} and \eqref{V function reffitting} to be well defined. Moreover, several off-the-shelf flexible imputation models for estimation can satisfy these conditions. See for example, local polynomial estimators, basis expansion regression like natural cubic splines or wavelets \citep{tsybakov2009}. In particular, it is worth noting that we do not require any specific rate of convergence. As a result, the required condition is typically much easier to verify for many off-the-shelf algorithms. It is likely that other classes of models such as random forests can satisfy Assumption \ref{assumption: Q imputation}. Recent work suggests that it is plausible to use the existing point-wise convergence results to show uniform convergence. \citep[see][]{scornet2015,biau2008}. 

Assumption \ref{assumption SS linear model} is fairly standard in the literature and ensures well-defined population level solutions for $Q$-learning regressions $\bthetabar$ exist, and belong to that parameter space. In this regard, we differentiate between population solutions $\bthetabar$ and true model parameters $\btheta^0$ shown in equation \eqref{linear_Qs}. If the working models are mis-specified, Theorems \ref{theorem: unbiased theta2} and \ref{theorem: unbiased theta1} still guarantee the $\bthetahat$ is consistent and asymptotically normal centered at the population solution $\bthetabar$. However, when equation \eqref{linear_Qs} is correct, $\bthetahat$ is asymptotically normal and consistent for the true parameter $\btheta^0$. Now we are ready to state the theoretical properties of the semi-supervised $Q$-learning procedure described in Section \ref{sec: ssQ}. 

\begin{theorem}[Distribution of $\bthetahat_2$]\label{theorem: unbiased theta2} 
Under Assumptions \ref{assumption: covariates}-\ref{assumption SS linear model}, $\bthetahat_2$ satisfies
\[
\sqrt n
(\bthetahat_2-\bthetabar_2)
=
\bSigma_2^{-1}\frac{1}{\sqrt n}\sum_{i=1}^n
\bpsi_2(\bL_i;\bthetabar_2)
+o_\Pbb\left(1\right)\stackrel{d}{\rightarrow}\mathcal{N}\bigg({\bf 0},\bV_{2 \scriptscriptstyle \sf SSL}(\bthetabar_2)\bigg),
\]

where $\bSigma_2=\Ebb[\bXcheck_2\bXcheck_2\trans]$ is defined in Section \ref{section: set up}, the influence function $\bpsi_2$ is given by
\[
\bpsi_2(\bL;\bthetabar_2)
=
\begin{bmatrix}
\{Y_2Y_3-\mubar_{23}(\bUvec)\}-\bar\beta_{21}
\{Y_2^2-\mubar_{22}(\bUvec)\}-
Q_{2-}(\bH_2,A_2;\bthetabar_2)
\{Y_2-\mubar_2(\bUvec)\}\\
\bX_2
\{Y_3-\mubar_3(\bUvec)\}-\bar\beta_{21}
\bX_2
\{Y_2-\mubar_2(\bUvec)\}
\end{bmatrix},
\]
and $
\bV_{2 \scriptscriptstyle \sf SSL}(\bthetabar_2)=\bSigma_2^{-1}\Ebb\left[\bpsi_2(\bL;\bthetabar_2)\bpsi_2(\bL;\bthetabar_2)\trans\right]\left(\bSigma_2^{-1}\right)\trans$.  \end{theorem}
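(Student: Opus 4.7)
The plan is to exploit linearity of the SSL estimating equation of Step III in $\btheta_2$. Writing the system as $\widehat{\mathbf{M}}_2 \bthetahat_2 = \widehat{\mathbf{v}}_2$, where $\widehat{\mathbf{M}}_2$ is the $\Pbb_N$-average of the matrix obtained from $\bXcheck_2 \bXcheck_2\trans$ by replacing $Y_2$, $Y_2^2$ and $Y_2 \bX_2\trans$ with $\muhat_2(\bUvec)$, $\muhat_{22}(\bUvec)$ and $\muhat_2(\bUvec)\bX_2\trans$, and $\widehat{\mathbf{v}}_2$ is the analogous replacement in the $\Pbb_N$-average of $\bXcheck_2 Y_3$, I observe that the refitting orthogonality constraints on $\mubar_s$ combined with the defining normal equations for $\bthetabar_2$ ensure that the population counterparts satisfy $\overline{\mathbf{M}}_2 = \bSigma_2$ and $\overline{\mathbf{v}}_2 = \bSigma_2\bthetabar_2$. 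Under Assumption~\ref{assumption: covariates}(a) and Assumption~\ref{assumption: Q imputation}(ii), the uniform consistency of $\mhat_s$ combined with $n/N \to 0$ yields $\widehat{\mathbf{M}}_2 \stackrel{\Pbb}{\rightarrow}\bSigma_2$, so a one-line matrix inversion produces
\[
\sqrt{n}(\bthetahat_2 - \bthetabar_2) = \bSigma_2^{-1}\sqrt{n}\,\Pbb_N \boldsymbol{\Psi}_2(\bthetabar_2, \muhat) + o_\Pbb(1),
\]
where $\boldsymbol{\Psi}_2(\btheta,\mu)$ denotes the stacked Step III estimating integrand.

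Next, I would decompose
\[
\sqrt{n}\,\Pbb_N \boldsymbol{\Psi}_2(\bthetabar_2, \muhat) = \sqrt{n}\,\Pbb_N \boldsymbol{\Psi}_2(\bthetabar_2, \mubar) + \sqrt{n}\,\Pbb_N\{\boldsymbol{\Psi}_2(\bthetabar_2, \muhat) - \boldsymbol{\Psi}_2(\bthetabar_2, \mubar)\}.
\]
The first summand has $\Pbb$-mean zero by refitting orthogonality together with the $\bthetabar_2$ normal equations, and its $\Pbb_N$-variance is $O(1/N)$; hence after $\sqrt n$-scaling it is $O_\Pbb(\sqrt{n/N}) = o_\Pbb(1)$. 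The second summand is linear in the four imputation errors $\muhat_s - \mubar_s$ for $s \in \{2, 22, 3, 23\}$, weighted by test functions $g \in \{1,\, \bX_2,\, Q_{2-}(\bH_2, A_2; \bthetabar_2)\}$; note that $Q_{2-}(\bH_2, A_2; \bthetabar_2) = \bX_2\trans(\bbetabar_{22}\trans, \bgammabar_2\trans)\trans$ lies in the span of $\bX_2$.

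The core step is to establish $\sqrt n\,\Pbb_N g(\bUvec)\{\muhat_s - \mubar_s\} = \sqrt n\,\Pbb_n g(\bUvec)\{Y_s^{\mathrm{rel}} - \mubar_s(\bUvec)\} + o_\Pbb(1)$ for each block, where $Y_s^{\mathrm{rel}}$ is the labeled quantity targeted by $\mu_s$ (e.g., $Y_2 Y_3$ for $s = 23$). Using the structural form $\muhat_s - \mubar_s = K^{-1}\sum_k[\mhat_s^{(-k)} - m_s] + (\bEtahat_s - \bEtabar_s)\trans \mathbf{b}_s$, with refitting basis $\mathbf{b}_s \in \{1, \bX_2, \bXvec\}$, and solving the refitting equation~\eqref{eta_EE_Q1} exactly, I would derive the algebraic identity
\[
\Pbb_n[\mathbf{b}_s \mathbf{b}_s\trans](\bEtahat_s - \bEtabar_s) = \Pbb_n \mathbf{b}_s\{Y_s^{\mathrm{rel}} - \mubar_s\} + \Pbb_n \mathbf{b}_s\{m_s - \mhat_s^{(-k(i))}\}.
\]
Substituting this back into the expansion of the second summand and combining with the $\Pbb_N$-contribution from $K^{-1}\sum_k[\mhat_s^{(-k)} - m_s]$, the deterministic ``nuisance bias'' $\Ebb[g(\bUvec)(\mhat_s^{(-k)} - m_s)]$ appears with opposite signs and cancels exactly by the symmetry of the cross-fitting partition. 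What remain are empirical-process residuals that are $o_\Pbb(1)$: conditional on the held-out fit $\mhat_s^{(-k)}$, each fold-$k$ centred sum has variance bounded by $\sup_{\bUvec}|\mhat_s^{(-k)}(\bUvec) - m_s(\bUvec)|^2 = o_\Pbb(1)$ (Assumption~\ref{assumption: Q imputation}(ii)), so Chebyshev's inequality controls the fluctuation; the $\Pbb_N$-indexed analogue is even smaller by $n/N \to 0$.

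Reassembling the four blocks with their respective test functions $g$ reproduces exactly the influence function $\bpsi_2(\bL; \bthetabar_2)$ stated in the theorem, and a Lindeberg--L\'evy CLT applied to $n^{-1/2}\sum_i \bpsi_2(\bL_i; \bthetabar_2)$---whose summands have finite second moments by Assumptions~\ref{assumption: covariates}(b) and~\ref{assumption: Q imputation}(i)---delivers the limiting $\mathcal{N}(\mathbf{0},\, \bV_{2 \scriptscriptstyle \sf SSL}(\bthetabar_2))$ distribution. The main obstacle throughout is that Assumption~\ref{assumption: Q imputation} supplies only uniform consistency of $\mhat_s$ with no rate of convergence, so classical $\sqrt n$-Donsker arguments are unavailable. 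The combination of cross-fitting (which controls conditional empirical processes independently of rates) with the precise pairwise cancellation of non-vanishing nuisance-bias terms between the refitting contribution and the $\Pbb_N$-nonparametric contribution is what ultimately rescues the asymptotic linear expansion.
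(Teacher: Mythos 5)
Your proposal is correct and follows essentially the same route as the paper's proof: the same linear-system decomposition (the paper's $\Gamma_\Usc+\hat\Gamma_\Sbb^{(K)}$ versus $\Rsc_\Usc+\hat\Rsc_\Sbb^{(K)}$), the same use of the refitting normal equations to convert the $\Pbb_N$-projected imputation errors into labeled-sample residuals, and the same exact cancellation of the cross-fitting nuisance-bias terms between the $\bEta$-correction and the unlabeled nonparametric contribution, with the leftover centred fold sums controlled by conditional concentration without any rate assumption (the paper uses Hoeffding in its Lemma \ref{lemm_deltas} where you invoke Chebyshev, an immaterial difference).
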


We hold off remarks until the end of the results for the $Q$-learning parameters. Since the first stage regression depends on the second stage regression through a non-smooth maximum function, we make the following standard assumption \citep{laber2014} in order to provide valid statistical inference. 
\begin{assumption}\label{assumption: non-regularity} Non-zero estimable population treatment effects $\bgammabar_t$, $t=1,2$: i.e. the population solution to \eqref{full_EE}, is such that (a) $\bH_{21}\trans\bgammabar_2\neq0$ for all $\bH_{21}\neq\bzero$, and (b) $\bgammabar_1$ is such that $\bH_{11}\trans\bgammabar_1\neq0$ for all $ \bH_{11}\neq\bzero$. 
\end{assumption}

Assumption \ref{assumption: non-regularity} yields regular estimators for the stage one regression and the value function, which depend on non-smooth components of the form $[x]_+$. This is needed to achieve asymptotic normality of the $Q$-learning parameters for the first stage regression. Note that the estimating equation for the stage one regression in Section \ref{section: SSQL} includes $\left[\bH_{21}\trans\bgammahat_2\right]_+$. Thus, for the asymptotic normality of $\bthetahat_1$, we require $\sqrt n\Pbb_n\left(\left[\bH_{21}\trans\bgammahat_2\right]_+-\left[\bH_{21}\trans\bar{\bgamma}_2\right]_+\right)$ to be asymptotically normal. 
The latter is automatically true if $\bH_{11}$ contains continuous covariates as $\Pbb\left(\bH_{21}\trans\bgammabar_2=0\right)=0$. Violation of Assumption \ref{assumption: non-regularity} will yield non-regular estimates which translate into poor coverage for the confidence intervals (see \citet{laber2014} for a thorough discussion on this topic).

\begin{theorem}[Distribution of $\hat{\btheta}_{1}$]\label{theorem: unbiased theta1} 
	Under Assumptions \ref{assumption: covariates}-\ref{assumption SS linear model}, and \ref{assumption: non-regularity} (a), $\bthetahat_1$ satisfies
	
	\[
	\sqrt n(\hat{\btheta}_1-\bthetabar_1)=\bSigma_1^{-1}\frac{1}{\sqrt n}\sum_{i=1}^n\bpsi_1(\bL_i;\bthetabar_1)+o_\Pbb(1)\stackrel{d}{\rightarrow}\mathcal{N}\bigg({\bf 0},\bV_{1 \scriptscriptstyle \sf SSL}(\bthetabar_1)\bigg)
	\]
	where $\bSigma_1^{-1}=\Ebb[\bXcheck_1\bXcheck_1\trans]$, the influence function $\bpsi_1$ is given by
	\begin{align*}
	\bpsi_1(\bL;\bthetabar_1)=
	&\bX_1(1+\bar\beta_{21})\{Y_2-\mubar_2(\bUvec)\}
	+
	\mathbb{E}\left[\bX_1
	\left(Y_2,\bH_{20}\trans\right)\right]
	\bpsi_{\beta_2}(\bL;\bthetabar_2)\\
	+&
	\mathbb{E}\left[\bX_1\bH_{21}\trans|\bH_{21}\trans\bgammabar_2>0\right]\Pbb\left(\bH_{21}\trans\bgammabar_2>0\right)
	\bpsi_{\gamma_2}(\bL;\bthetabar_2
	),
	\end{align*} 
	$\bV_{1 \scriptscriptstyle \sf SSL}(\bthetabar_1)=\bSigma_1^{-1}\Ebb\left[\bpsi_1(\bL;\bthetabar_1)\bpsi_1(\bL;\bthetabar_1)\trans\right]\left(\bSigma_1^{-1}\right)\trans$, and $\bpsi_{\beta_2}$, $\bpsi_{\gamma_2}$ are the elements corresponding to $\bbetabar_2$, $\bgammabar_2$ of the influence function $\bpsi_2$ defined in Theorem \ref{theorem: unbiased theta2}.
\end{theorem}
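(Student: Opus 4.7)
The plan is to linearize the stage-1 normal equation and reduce $\sqrt n(\bthetahat_1-\bthetabar_1)$ to an i.i.d.\ sum matching the claimed influence function. Writing $\sqrt n(\bthetahat_1-\bthetabar_1)=(\Pbb_N\bX_1\bX_1\trans)^{-1}\sqrt n\,\Pbb_N\{\bX_1(\Ytilde_2^*-\bX_1\trans\bthetabar_1)\}$, and using $n/N\to0$ (Assumption~\ref{assumption: covariates}(a)) together with nonsingularity of $\bSigma_1$ makes the inverse matrix $\bSigma_1^{-1}+o_\Pbb(n^{-1/2})$. I then substitute the explicit expression $\Ytilde_2^*=\muhat_2(\bUvec)(1+\hat\beta_{21})+\bH_{20}\trans\bbetahat_{22}+[\bH_{21}\trans\bgammahat_2]_+$ and decompose $\Ytilde_2^*-\bX_1\trans\bthetabar_1=T_2+\Delta_\theta$, where $T_2$ freezes the expression at $\bthetabar_2$ and $\Delta_\theta$ collects the perturbation in $\bthetahat_2-\bthetabar_2$; each piece then matches one summand of $\bpsi_1$.

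For $\Delta_\theta$, the linear-in-parameter pieces give $\Pbb_N\{\bX_1\muhat_2\}(\hat\beta_{21}-\bar\beta_{21})+\Pbb_N\{\bX_1\bH_{20}\trans\}(\bbetahat_{22}-\bbetabar_{22})$. Since $\Pbb_N\{\bX_1\muhat_2\}\to\Ebb[\bX_1 Y_2]$ (by the refitting identity $\Ebb[\bX_1\{Y_2-\mubar_2\}]=\bzero$) and $\bbetahat_2-\bbetabar_2=O_\Pbb(n^{-1/2})$ from Theorem~\ref{theorem: unbiased theta2}, this contributes $\Ebb[\bX_1(Y_2,\bH_{20}\trans)]\sqrt n(\bbetahat_2-\bbetabar_2)+o_\Pbb(1)$. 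For the non-smooth piece, Assumption~\ref{assumption: non-regularity}(a) ensures $\Pbb(\bH_{21}\trans\bgammabar_2=0)=0$, and a one-sided expansion gives
\[
[\bH_{21}\trans\bgammahat_2]_+-[\bH_{21}\trans\bgammabar_2]_+ = I(\bH_{21}\trans\bgammabar_2>0)\bH_{21}\trans(\bgammahat_2-\bgammabar_2)+o_\Pbb(n^{-1/2})
\]
after averaging against $\bX_1$, contributing $\Pbb(\bH_{21}\trans\bgammabar_2>0)\,\Ebb[\bX_1\bH_{21}\trans\mid\bH_{21}\trans\bgammabar_2>0]\sqrt n(\bgammahat_2-\bgammabar_2)$. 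Substituting the linear representation of $\bthetahat_2$ from Theorem~\ref{theorem: unbiased theta2} delivers the second and third summands of $\bpsi_1$.

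For the frozen term $T_2$, the deterministic-in-$\bUvec$ components have centered $\Pbb_N$-means of order $N^{-1/2}=o(n^{-1/2})$, so the only nontrivial contribution is $(1+\bar\beta_{21})\Pbb_N\{\bX_1\muhat_2\}$. Conditioning on $\Lsc$ gives $\Pbb_N\{\bX_1\muhat_2\}=\Ebb[\bX_1\muhat_2\mid\Lsc]+o_\Pbb(n^{-1/2})$ via the SSL regime, and then the sample refitting equation $\Pbb_n\{\bXvec(Y_2-\muhat_2)\}=\bzero$ combined with the population identity $\Ebb[\bXvec(Y_2-\mubar_2)]=\bzero$ reduces the conditional expectation to $\Ebb[\bX_1 Y_2]+\Pbb_n\{\bX_1(Y_2-\mubar_2)\}+o_\Pbb(n^{-1/2})$. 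Subtracting via the defining equation of $\bthetabar_1$ delivers the first summand $\bX_1(1+\bar\beta_{21})\{Y_2-\mubar_2(\bUvec)\}$ of $\bpsi_1$. Collecting all pieces and invoking the CLT yields asymptotic normality with the sandwich variance $\bV_{1\scriptscriptstyle\sf SSL}(\bthetabar_1)$. The main obstacle is this last reduction: the empirical-process remainder $[\Pbb_n-\Ebb]\{\bX_1(\muhat_2-\mubar_2)\}$ must be shown to be $o_\Pbb(n^{-1/2})$ despite the cross-fit construction of $\muhat_2$, which requires Donsker-class control via the uniform consistency in Assumption~\ref{assumption: Q imputation}(ii).
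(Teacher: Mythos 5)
Your architecture is essentially the paper's own proof (Appendix \ref{section: Q learning result proofs}): you linearize the stage-1 normal equation, invert $\Pbb_N[\bX_1\bX_1\trans]$ using $n/N\to 0$, split the score into a term frozen at $\bthetabar_2$ and a perturbation in $\bthetahat_2-\bthetabar_2$, dispose of the centered frozen part at rate $O_\Pbb(N^{-1/2})=o_\Pbb(n^{-1/2})$, treat $[\bH_{21}\trans\bgammahat_2]_+-[\bH_{21}\trans\bgammabar_2]_+$ via the sign-agreement event guaranteed by Assumption \ref{assumption: non-regularity} (a) (this is Lemma \ref{lemm_gamma_difs} (a)), and use the refitting identity $\Ebb[\bXvec\{Y_2-\mubar_2(\bUvec)\}]=\bzero$ to replace $\Ebb[\bX_1(\mubar_2(\bUvec),\bH_{20}\trans)]$ by $\Ebb[\bX_1(Y_2,\bH_{20}\trans)]$ and to extract $\bX_1(1+\bar\beta_{21})\{Y_2-\mubar_2(\bUvec)\}$ from the imputation term. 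All of these steps match the paper.

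The one step that would fail as you describe it is the last one. You correctly isolate the remainder $\bigl[\Pbb_n-\Ebb\bigr]\{\bX_1(\muhat_2-\mubar_2)\}$ (in fold-wise form, $n^{-1}\sum_{k}\sum_{i\in\mathcal{I}_k}\bX_{1i}\hat\Delta_2\supnk(\bUvec_i)-\Ebb[\bX_1\hat\Delta_2\supnk(\bUvec)]$), but you propose to control it by ``Donsker-class control via the uniform consistency in Assumption \ref{assumption: Q imputation}(ii).'' Assumption \ref{assumption: Q imputation} supplies only boundedness and uniform consistency of the imputation fits; there is no entropy or complexity condition on the class containing $\mhat_2$, so no Donsker argument is available --- and with black-box learners such as random forests none should be expected. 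The paper's Lemma \ref{lemm_deltas} instead exploits precisely the cross-fit structure that you frame as the obstacle: conditional on the training fold, the summands over $\mathcal{I}_k$ are i.i.d., centered, and uniformly bounded by $\sup_{\bUvec}|\hat\Delta_2\supnk(\bUvec)|=o_\Pbb(1)$, so a conditional Hoeffding bound yields $O_\Pbb(c_{n_K^-})$ with $c_{n_K^-}=o(1)$ for the $\sqrt n$-scaled sum. Cross-fitting is the enabling device here, not a complication to be overcome; with that substitution your argument closes and coincides with the paper's.
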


\begin{remark}
1) Theorems \ref{theorem: unbiased theta2} and \ref{theorem: unbiased theta1} establish the $\sqrt n$-consistency and asymptotic normality (CAN) of $\bthetahat_1,\bthetahat_2$ for any $K\geq2$. Beyond asymptotic normality at $\sqrt n$ scale, these theorems also provide an asymptotic linear expansion of the estimators with influence functions $\bpsi_1$ and $\bpsi_2$ respectively.  

2) $\bV_{1 \scriptscriptstyle \sf SSL}(\bthetabar)$, $\bV_{2 \scriptscriptstyle \sf SSL}(\bthetabar)$ reflect an efficiency gain over the fully supervised approach due to sample $\mathcal{U}$ and the surrogates contribution in prediction performance. This gain is formalized in Proposition \ref{lemma: Q parameter var} which quantifies how correlation between surrogates and outcome increases efficiency.\\ 
3) Let $\bpsi=[\bpsi_1\trans,\bpsi_2\trans]\trans$, we collect the vector of estimated $Q$-learning parameters $\btheta=(\btheta_1\trans,\btheta_2\trans)\trans$, then under Assumptions \ref{assumption: covariates}-\ref{assumption SS linear model}, \ref{assumption: non-regularity} (a), we have
    \begin{align*}
	\sqrt n
	(\bthetahat-\bthetabar)
	=&
	\bSigma^{-1}\frac{1}{\sqrt n}\sum_{i=1}^n
	\bpsi(\bL_i;\bthetabar)
	+o_\Pbb(1)\stackrel{d}{\rightarrow}\mathcal{N}\bigg({\bf 0},\bV\subSSL\left(\bthetabar\right)\bigg)
\end{align*}
with $\bV\subSSL(\bthetabar)=\bSigma^{-1}\Ebb\left[\bpsi(\bL;\bthetabar)\bpsi(\bL;\bthetabar)\trans\right]\left(\bSigma^{-1}\right)\trans$.\\
4) Theorems \ref{theorem: unbiased theta2} and \ref{theorem: unbiased theta1} hold even when the $Q$ functions are mis-specified, that is, $\bthetahat_1,\bthetahat_2$ are CAN for $\bthetabar_1,\bthetabar_2$. Furthermore, if model \eqref{linear_Qs} is correctly specified then we can simply replace $\bthetabar$ with $\btheta^0$ in the above result.\\
3) We estimate $\bV\subSSL(\bthetabar)$ via sample-splitting as
\begin{align*}
\widehat\bV\subSSL(\bthetahat)&=\widehat\bSigma^{-1}\widehat\bA(\btheta)\left(\widehat\bSigma^{-1}\right)\trans,\text{ where}\\
\widehat\bA(\bthetahat)&=
n^{-1}\sum_{k=1}^{K}\sum_{i\in\mathcal{I}_k}
\bpsi\supnk\left(\bL_i;\bthetahat\right)\bpsi\supnk\left(\bL_i;\bthetahat\right)\trans,\\
\widehat\bSigma_t&=
\Pbb_n
\left\{\bX_t\bX_t\trans\right\},\:\:t=1,2.
\end{align*}
\end{remark}

Note that we can decompose $\bpsi$ into the influence function for each set of parameters. For example, we have $\bpsi_2=\left(\bpsi_{\beta_2}\trans,\bpsi_{\gamma_2}\trans\right)\trans$  where 
$
\bpsi_{\gamma_2}(\bL;\bthetabar_2)=\bH_{21}A_2\left[\{Y_3-\mubar_3(\bUvec)\}-\bar\beta_{21}\{Y_2-\mubar_2(\bUvec)\}\right].
$
Therefore we can decompose the variance-covariance matrix into a component for each parameter, the variance-covariance for the treatment effect for stage 2 regression $\bgamma_2$ is \[
\mathbb{E}\left[\bpsi_{\gamma_2}(\bL;\bthetabar_2)\bpsi_{\gamma_2}(\bL;\bthetabar_2)\trans\right]=
\mathbb{E}\left[\bH_{21}\bH_{21}\trans A_{2}^2\left\{
    Y_3-\mubar_3(\bUvec)-\beta_{21}
    \left(Y_2-\mubar_2(\bUvec)\right)\right\}^2\right].
\]
This gives us some insight into how the predictive power of $\bUvec$, which contains surrogates $\bW_1,\bW_2$, decreases parameter standard errors. This is the case for the influence functions for estimating $\bthetabar_1$, $\bthetabar_2$ as well. We formalize this result with the following proposition. Let $\bthetahat_{\subSUP}$ be the estimator for the fully supervised $Q$-learning procedure (i.e. only using labeled data), with influence function and asymptotic variance denoted as $\bpsi\subSUP$ and $\bV\subSUP$ respectively (see Appendix \ref{section: Q learning result proofs} for the exact form of $\bpsi\subSUP$ and $\bV\subSUP$). 

For the following proposition we need the imputation models $\mubar_s$, $s\in\{2,3,22,23\}$ to satisfy additional constraints of the form $\Ebb\left[\bX_2\bX_2\trans\{Y_2Y_3-\mubar_{23}(\bUvec)\}\right]=\bzero$. We list them in Assumption \ref{assumption: additional constraints}, Appendix \ref{section: Q learning result proofs}. One can construct estimators which satisfy such conditions by simply augmenting $\bEta_2,$ $\eta_{22},$ $\bEta_3,$ $\eta_{23}$ in \eqref{eta_EE_Q1} with additional terms in the refitting step.

\begin{proposition}\label{lemma: Q parameter var}
Under Assumptions \ref{assumption: covariates}-\ref{assumption SS linear model}, \ref{assumption: non-regularity} (a), and \ref{assumption: additional constraints}
then
\[
\bV\subSSL(\bthetabar)=\bV\subSUP(\bthetabar)-\bSigma^{-1}\text{Var}\left[\bpsi\subSUP(\bL;\bthetabar)-\bpsi\subSSL(\bL;\bthetabar)\right]\left(\bSigma^{-1}\right)\trans.
\]

\end{proposition}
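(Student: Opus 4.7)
The plan is to establish an orthogonal decomposition of the supervised influence function into the SSL influence function plus a residual that is uncorrelated with $\bpsi\subSSL$. Concretely, since $\Ebb[\bpsi\subSUP]=\Ebb[\bpsi\subSSL]=\bzero$ (the former by definition of $\bthetabar$ as the solution of the population normal equations, the latter by the refitting moment conditions), one has $\mathrm{Var}(\bpsi\subSUP)=\Ebb[\bpsi\subSUP\bpsi\subSUP\trans]$ and similarly for $\bpsi\subSSL$. If $\Ebb[\bpsi\subSSL(\bpsi\subSUP-\bpsi\subSSL)\trans]=\bzero$, then
\[
\Ebb[\bpsi\subSUP\bpsi\subSUP\trans] = \Ebb[\bpsi\subSSL\bpsi\subSSL\trans] + \Ebb[\{\bpsi\subSUP-\bpsi\subSSL\}\{\bpsi\subSUP-\bpsi\subSSL\}\trans],
\]
and pre-/post-multiplying by $\bSigma^{-1}$ and $(\bSigma^{-1})\trans$ yields the stated identity since $\bV\subSUP(\bthetabar)=\bSigma^{-1}\Ebb[\bpsi\subSUP\bpsi\subSUP\trans](\bSigma^{-1})\trans$. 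The crux is therefore verifying the covariance condition.

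To that end I would first obtain an explicit expression for $\bpsi\subSUP-\bpsi\subSSL$. Writing $\bpsi\subSUP_2=\bXcheck_2(Y_3-\bXcheck_2\trans\bthetabar_2)$ and term-by-term differencing the formula in Theorem~\ref{theorem: unbiased theta2}, the $Y$-dependent pieces cancel and one is left with
\[
\bpsi\subSUP_2-\bpsi\subSSL_2 = \begin{bmatrix} \mubar_{23}(\bUvec) - \bar\beta_{21}\mubar_{22}(\bUvec) - Q_{2-}(\bH_2,A_2;\bthetabar_2)\mubar_2(\bUvec) \\ \bX_2\{\mubar_3(\bUvec) - \bar\beta_{21}\mubar_2(\bUvec) - Q_{2-}(\bH_2,A_2;\bthetabar_2)\} \end{bmatrix},
\]
which is a function of $\bUvec$ alone. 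The same calculation applied to Theorem~\ref{theorem: unbiased theta1} shows that $\bpsi\subSUP_1-\bpsi\subSSL_1$ is a $\bUvec$-measurable block plus a fixed linear combination of stage-two sub-blocks of the difference, so the full vector $\bpsi\subSUP-\bpsi\subSSL$ can be written as a finite linear combination of monomials in $\{1,\bX_1,\bX_2,Q_{2-}\}$ times the refitted means $\{\mubar_s\}$.

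It then remains to verify the orthogonality $\Ebb[\bpsi\subSSL(\bpsi\subSUP-\bpsi\subSSL)\trans]=\bzero$ entry by entry. A generic scalar entry is a finite sum of moments of the form $\Ebb[a(\bUvec)\{g(Y)-\mubar_s(\bUvec)\}]$, where $g(Y)\in\{Y_2,Y_3,Y_2^2,Y_2Y_3\}$, $\mubar_s$ is the matching refitted mean, and $a(\bUvec)$ ranges over monomials in $\{1,\bar\beta_{21},\bX_1,\bX_2,\bX_1\bX_2\trans,\bX_2\bX_2\trans,Q_{2-},\mubar_\cdot\}$. Those linear in the $\bX_t$'s are killed by the refitting conditions in \eqref{eta_EE_Q1}; the remaining higher-order products (quadratics in $\bX_2$, products with $Q_{2-}$, cross-products with $\mubar_\cdot$, and the stage-one cross-stage terms involving $\bX_1\bH_{21}\trans\,I(\bH_{21}\trans\bgammabar_2>0)$ generated by differentiating $[\cdot]_+$) are exactly those imposed by Assumption~\ref{assumption: additional constraints}. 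The main obstacle is bookkeeping rather than mathematical depth: each of the roughly two dozen cross-products arising from the two-stage vector must be matched to one of the listed constraints, with special care in the stage-one block where the non-smooth max operator brings in additional conditional-expectation weights of the form $\Ebb[\bX_1\bH_{21}\trans\mid\bH_{21}\trans\bgammabar_2>0]\Pbb(\bH_{21}\trans\bgammabar_2>0)$ that must be absorbed into the required moment conditions.
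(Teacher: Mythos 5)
Your proposal is correct and follows essentially the same route as the paper: the paper defines $\mathcal{E}^\theta(\bUvec)=\bpsi\subSUP(\bL;\bthetabar)-\bpsi\subSSL(\bL;\bthetabar)$, notes it is $\bUvec$-measurable (your explicit stage-two difference matches the paper's $\mathcal{E}_2$ exactly), and obtains the variance identity from the orthogonality $\Ebb[\bpsi\subSSL\,\mathcal{E}^{\theta\trans}]=\bzero$, which is precisely the content of Assumption \ref{assumption: additional constraints}. The only cosmetic difference is that you propose to verify that orthogonality entry by entry from the refitting conditions, whereas the paper states the condensed condition directly as the assumption.
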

\begin{remark}
Proposition \ref{lemma: Q parameter var} illustrates how the estimates for the semi-supervised $Q$-learning parameters are at least as efficient, if not more so, than the supervised ones. Intuitively, the difference in efficiency is explained by how much information is gained by incorporating the surrogates $\bW_1,\bW_2$ into the estimation procedure. If there is no new information in the surrogate variables, then residuals found in $\bpsi\subSSL(\bL;\btheta)$ will be of similar magnitude to those in $\bpsi\subSUP(\bL;\btheta)$, and thus the difference in efficiency will be small:  $\text{Var}\left[\bpsi\subSUP(\bL;\bthetabar)-\bpsi\subSSL(\bL;\bthetabar)\right]\approx0$. In this case both methods will yield equally efficient parameters. The gain in precision is especially relevant for the treatment interaction coefficients $\bgamma_1,\bgamma_2$ used to learn the dynamic treatment rules. Finally, note that for Proposition \ref{lemma: Q parameter var}, we do not need the correct specification of $Q$-functions or imputation models.
\end{remark}

\subsection{Theoretical Results for SSL Estimation of the Value Function}

If model \eqref{linear_Qs} is correct, one only needs to add Assumption \ref{assumption: non-regularity} (b) 
for $\Pbb_N\{\Qopt_1(\bH_1;\bthetahat_1)\}$ to be a consistent estimator of the value function $\Vbar$ \citep{zhu2019}. However, as we discussed earlier, \eqref{linear_Qs} is likely mis-specified. Therefore, we show our semi-supervised value function estimator is doubly robust. We also show it is asymptotically normal and more efficient that its supervised counterpart. To that end, define the following class of functions:
	\[
	\mathcal{W}_t\equiv\left\{\pi_t:\mathcal{H}_t\mapsto\mathbb{R}|\bxi_t\in\Omega_t\right\},\:t=1,2,
	\]
under propensity score models $\pi_1,\:\pi_2$ in \eqref{logit_Ws}.
\begin{assumption}\label{assumption: donsker w}
	
Let the population equations $\Ebb\left[S^\xi_t(\bHcheck_t;\bxi_t)\right]=\bzero, t=1,2$ have solutions $\bxibar_1,\bxibar_2$, where
\be
S^\xi_t(\bHcheck_t;\bxi_t)=&\frac{\partial}{\partial\bxi_t}\log \left[\pi_t(\bHcheck_t;\bxi_t)^{A_t}\{1-\pi_t(\bHcheck_t;\bxi_t)\}^{(1-A_t)}\right],\:t=1,2,
\ee
(i) $\Omega_1,\Omega_2$ are open, bounded sets and the population solutions satisfy $\bxibar_t\in\Omega_t,t=1,2$,\\
	(ii) for $\bxibar_t,t=1,2$, $\inf\limits_{\bHcheck_t\in\mathcal{H}_1}\pi_1(\bHcheck_t;\bxibar_t)>0$,\\
	(iii) Finite second moment: $\Ebb\left[S^\xi_t(\bHcheck_t;\bTheta_t)^2\right]\le\infty$, and Fisher information matrix: $\Ebb\left[\frac{\partial}{\partial\bxi_t}S^\xi_t(\bHcheck_t;\bTheta_t)\right]$ exists and is non singular,\\
	(iv) Second-order partial derivatives of $S^\xi_t(\bHcheck_t;\bTheta_t)$ with respect to $\bxi$ exist and for every $\bHcheck_t$, and satisfy $|\partial^2S^\xi_t(\bHcheck_t;\bTheta_t)/\partial\bxi_i\partial\bxi_j|\leq \tilde S_t(\bHcheck_t)$ for some integrable measurable function $\tilde S_t$ in a neighborhood of $\bxibar$.

\end{assumption}

\begin{assumption}\label{assumption: V imputation}
Functions $m_2,m_{\omega_2},m_{t\omega_2}$ $t=2,3$ are such that (i) $\sup_{\bUvec}|m_s(\bUvec)|<\infty$, and (ii) the estimated functions $\hat m_s$ satisfy (ii) $\sup_{\bUvec}|\mhat_s(\bUvec)-m_s(\bUvec)|=o_\Pbb(1)$, $s\in\{2,\omega_2,2\omega_2,3\omega_2\}$.
\end{assumption}
Assumption \ref{assumption: donsker w} is standard for Z-estimators \citep[see][Ch.~5.6]{vaart_donsker}. Assumption \ref{assumption: V imputation} is the propensity score equivalent version of Assumption \ref{assumption: Q imputation}. Finally, we use $\bpsi^\xi$ and and $\bpsi^\theta$ to denote the influence function for $\bxihat$, and $\bthetahat$ respectively. We are now ready to state our theoretical results for the value function estimator in equation \eqref{SS_value_fun}. The proof, and the exact form of $\bpsi^\xi$ can be found in Appendix \ref{appendix: value fun}.

\begin{theorem}[Asymptotic Normality for $\Vhat\subSSLDR$]\label{thrm_ssV_fun}
	Under Assumptions \ref{assumption: covariates}-\ref{assumption: V imputation}, $\Vhat\subSSLDR$ defined in \eqref{SS_value_fun} satisfies
	\[
	\sqrt n
	\left\{
	\Vhat\subSSLDR-\mathbb{E}_\mathbb{S}\left[\Vsc\subSSLDR(\bL;\bThetabar,\mubar)\right]
	\right\}
	=
	\frac{1}{\sqrt n}\sum_{i=1}^n\psi^{v}_{\subSSLDR}(\bL_i;\bThetabar)
	+o_\Pbb\left(1\right),
	\]

	where
	\[
	\frac{1}{\sqrt n}\sum_{i=1}^n\psi^{v}_{\subSSLDR}(\bL_i;\bThetabar)
	\stackrel{d}{\longrightarrow}\mathcal{N}\left(0,\sigma^2\subSSLDR\right).
	\]
	Here
	\begin{align*}
	\psi^{v}\subSSLDR(\bL;\bThetabar)
	=&
	\nu\subSSLDR(\bL;\bThetabar)
	+
	\bpsi^\theta(\bL)\trans
	\frac{\partial}{\partial \btheta}\int\Vsc\subSUPDR(\bL;\bTheta)d\Pbb_{\bL}\bigg|_{\bTheta=\bThetabar}\\
	&\hspace{2.1cm}+
	\bpsi^\xi(\bL)\trans
	\frac{\partial}{\partial \bxi}\int\Vsc\subSUPDR(\bL;\bTheta)d\Pbb_{\bL}\bigg|_{\bTheta=\bThetabar},\\
	\nu_{\subSSLDR}(\bL;\bThetabar)
	=&
	\omega_1(\bHcheck_1,A_1;\bThetabar_1)
	(1+\bar\beta_{21})
	\left\{
	Y_2-\mubar_2^v(\bUvec)
	\right\}
	+
	\omega_2(\bHcheck_2,A_2,\bThetabar_2)Y_3-
	\mubar_{3\omega_2}(\bUvec)\\
	-&
	\bar\beta_{21}\left\{\omega_2(\bHcheck_2,A_2,\bThetabar_2)Y_2-
	\mubar_{2\omega_2}(\bUvec)\right\}
	-
	\Qopt_{2-}(\bH_2; \bthetabar_2)
	\left\{
	\omega_2(\bHcheck_2,A_2,\bThetabar_2)-\mubar_{\omega_2}(\bUvec)
	\right\}
	,
	\end{align*}
	$\sigma\subSSLDR^2=\Ebb\left[\psi^{v}_{\subSSLDR}(\bL;\bThetabar)^2\right],$ and $\Vsc\subSUPDR(\bL;\bTheta)$ is as defined in \eqref{eq: lab value fun}.
\end{theorem}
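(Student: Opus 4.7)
The plan is to decompose the difference $\Vhat\subSSLDR - \Ebb[\Vsc\subSSLDR(\bL;\bThetabar,\mubar)]$ into an empirical-process piece on the unlabeled sample and a plug-in bias piece, and to linearize the latter around $(\bThetabar,\mubar)$. The first step is to write
$\Vhat\subSSLDR - \Ebb[\Vsc\subSSLDR(\bL;\bThetabar,\mubar)] = (\Pbb_N - \Ebb)\{\Vsc\subSSLDR(\bUvec;\bThetahat,\muhat)\} + \big(\Ebb[\Vsc\subSSLDR(\bUvec;\bThetahat,\muhat)\mid\mathcal{L}] - \Ebb[\Vsc\subSSLDR(\bL;\bThetabar,\mubar)]\big)$.
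Conditional on $\mathcal{L}$, the unlabeled sample is iid and the nuisances $\bThetahat,\muhat$ are frozen, so a conditional CLT combined with the uniform moment bounds from Assumptions \ref{assumption: covariates}(b), \ref{assumption: donsker w}(ii), and \ref{assumption: V imputation} gives the first term at rate $O_\Pbb(N^{-1/2})$, which is $o_\Pbb(n^{-1/2})$ by Assumption \ref{assumption: covariates}(a). The remaining analysis is devoted to the bias term.

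The bias piece is handled using the pointwise algebraic identity $\Vsc\subSSLDR(\bUvec;\bTheta,\mu) = \Vsc\subSUPDR(\bL;\bTheta) - \nu(\bL;\bTheta,\mu)$ with $\nu(\bL;\bThetabar,\mubar)=\nu_{\subSSLDR}(\bL;\bThetabar)$. The refitting constraints \eqref{V function reffitting} that characterise $\mubar$ are engineered precisely so that $\Ebb[\nu(\bL;\bThetabar,\mubar)]=0$, giving the population identity $\Ebb[\Vsc\subSSLDR(\bL;\bThetabar,\mubar)] = \Ebb[\Vsc\subSUPDR(\bL;\bThetabar)]$. The bias then splits as $\big(\Ebb[\Vsc\subSUPDR(\bL;\bThetahat)] - \Ebb[\Vsc\subSUPDR(\bL;\bThetabar)]\big) - \Ebb[\nu(\bL;\bThetahat,\muhat)]$. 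For the first bracket I apply Taylor's theorem to the smooth functional $\bTheta \mapsto \Ebb[\Vsc\subSUPDR(\bL;\bTheta)]$, which is differentiable under Assumptions \ref{assumption SS linear model} and \ref{assumption: donsker w}, and substitute the influence-function expansions $\bthetahat-\bthetabar = n^{-1}\sum_i \bpsi^\theta(\bL_i)+o_\Pbb(n^{-1/2})$ from Theorems \ref{theorem: unbiased theta2} and \ref{theorem: unbiased theta1}, together with $\bxihat-\bxibar = n^{-1}\sum_i \bpsi^\xi(\bL_i)+o_\Pbb(n^{-1/2})$ from standard GLM M-estimator theory under Assumption \ref{assumption: donsker w}. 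This produces the last two summands of $\psi^v_{\subSSLDR}$.

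The remaining piece $-\Ebb[\nu(\bL;\bThetahat,\muhat)]$ must reproduce $\Pbb_n\{\nu_{\subSSLDR}(\cdot;\bThetabar)\}$ up to $o_\Pbb(n^{-1/2})$. Since $\nu$ is linear in $\mu$, the $\mu$-perturbation decomposes into a linear functional of the four increments $\muhat_s-\mubar_s$, each of which splits as $(K^{-1}\sum_k \mhat_s^{(-k)}-m_s)+(\etahat_s-\eta_s)$. Cross-fitting in \eqref{V function reffitting} together with the uniform consistency in Assumption \ref{assumption: V imputation} makes the nonparametric-fit part $o_\Pbb(n^{-1/2})$ without requiring a parametric rate. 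Inverting the refitting estimating equations then yields, for instance, $\etahat_2^v-\eta_2^v = \Ebb[\omega_1]^{-1}\Pbb_n[\omega_1\{Y_2-\mubar_2^v\}]+o_\Pbb(n^{-1/2})$, and analogous identities for $\etahat_{\omega_2}^v$ and $\etahat_{t\omega_2}^v,t=2,3$; assembling the four pieces reproduces exactly $\Pbb_n\{\nu_{\subSSLDR}(\cdot;\bThetabar)\}$, delivering the first summand of $\psi^v_{\subSSLDR}$.

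Combining the three contributions yields the stated asymptotically linear representation; the summands are iid, mean zero (by the refitting constraints for $\nu_{\subSSLDR}$ and the influence-function property of $\bpsi^\theta,\bpsi^\xi$), and have finite second moments, so Lindeberg--L\'{e}vy CLT delivers the $\mathcal{N}(0,\sigma_{\subSSLDR}^2)$ limit. I expect the principal technical obstacle to be the imputation step: because $\mubar_{\omega_2}^v$ and $\mubar_{t\omega_2}^v$ themselves depend on $\bThetabar$ through the propensity weights that are the imputation targets, perturbing $\bThetahat$ generates cross-terms through $\omega_2(\bThetahat)Y_t$ and $\omega_2(\bThetahat)$ that must be shown to cancel exactly against the $\bTheta$-derivative of $\Ebb[\nu]$ arising in the earlier Taylor step, leaving only $\nabla_\bTheta\Ebb[\Vsc\subSUPDR]$ in the final influence function. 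Cross-fitting and the uniform convergence assumptions on the nuisance fits are the ingredients that make this cancellation rigorous without requiring parametric rates on the nonparametric imputation models.
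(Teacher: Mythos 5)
Your proposal is correct and reaches the paper's influence function, but it reorganizes the argument in two ways worth noting. The shared skeleton is the same: (i) the unlabeled-sample fluctuation is negligible at $\sqrt n$ scale, (ii) a Taylor expansion in $\bTheta$ produces the $\partial_\bTheta\int\Vsc\subSUPDR\, d\Pbb_{\bL}$ terms through the influence functions of $\bthetahat$ and $\bxihat$, and (iii) the refitting equations \eqref{V function reffitting} convert the imputation-error contribution into $\Pbb_n\{\nu\subSSLDR(\cdot;\bThetabar)\}$, with cross-fitting absorbing the nonparametric remainder. Where you differ: first, by conditioning on $\Lsc$ you dispose of the entire $\Pbb_N$-fluctuation of $\Vsc\subSSLDR(\bUvec;\bThetahat,\muhat)$ in one conditional Chebyshev/CLT step of order $O_\Pbb(N^{-1/2})$, whereas the paper centers at $(\bThetabar,\mubar)$ and must separately control $\Pbb_N[\Vsc_{\bThetahat,\mubar}-\Vsc_{\bThetabar,\mubar}]$ by a Donsker equicontinuity argument over the unlabeled sample (Lemma \ref{lemma: centered sample average}) and the imputation errors by the Hoeffding-type Lemma \ref{lemm_chakrabortty}; your version is genuinely simpler here. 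Second, by invoking the pointwise identity $\Vsc\subSSLDR=\Vsc\subSUPDR-\nu$ and the constraint $\Ebb[\nu\subSSLDR(\bL;\bThetabar)]=0$ before Taylor expanding, your derivative term is $\partial_\bTheta\Ebb[\Vsc\subSUPDR]$ directly; the paper instead expands $g_1(\bTheta)=\Ebb[\Vsc_{\bTheta,\mubar}]$ and only recombines $\partial g_1+\partial\Ebb[\nu(\cdot;\bTheta)]=\partial\Ebb[\Vsc\subSUPDR]$ at the very end. The cancellation you flag as the principal obstacle, between the $\bTheta$-derivative of $\Ebb[\nu(\cdot;\bTheta,\mubar)]$ and the $\bTheta$-dependence that enters $\etahat$ through the weights $\omega_t(\cdot;\bThetahat)$ inside \eqref{V function reffitting}, is exactly what makes the two bookkeepings agree, and it does go through.

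Two steps in your plan are under-specified. First, replacing $\bThetahat$ by $\bThetabar$ inside labeled-sample averages such as $\Pbb_n[\omega_1(\cdot;\bThetahat)\{Y_2-\mubar_2^v\}]$ is not delivered by cross-fitting or by uniform consistency of the imputation fits: the weights contain indicators $I(\bH_{t1}\trans\bgamma_t>0)$ and the $Q$-functions contain $[\bH_{21}\trans\bgamma_2]_+$, so you need the VC/Donsker equicontinuity arguments of Lemma \ref{lemm_gamma_difs} and the asymptotic-equicontinuity theorem the paper invokes in Lemma \ref{lemma: error diff}; this is also where Assumption \ref{assumption: non-regularity} enters. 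Second, your displayed linearization $\etahat_2^v-\eta_2^v=\Ebb[\omega_1]^{-1}\Pbb_n[\omega_1\{Y_2-\mubar_2^v\}]+o_\Pbb(n^{-1/2})$ is not valid term by term: $\etahat_2^v-\eta_2^v$ also carries $-\Ebb[\omega_1]^{-1}$ times the cross-fitted average of $\omega_1\hat\Delta_2\supnk$, which on its own is only $o_\Pbb(1)$ under Assumption \ref{assumption: V imputation}; it is its difference from $\Ebb_{\mathcal L}[\omega_1\hat\Delta_2\supnk]$, controlled by Lemma \ref{lemm_deltas}, that is $o_\Pbb(n^{-1/2})$, and that conditional expectation is precisely what the nonparametric part of $\muhat_2^v-\mubar_2^v$ contributes on the other side of your split. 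With these two repairs the plan coincides, up to reorganization, with the paper's proof.
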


\begin{proposition}[Double Robustness of $\Vhat\subSSLDR$ as an estimator of $\Vbar$]\label{cor_dr_V} (a) If either $\|Q_t(\bHcheck_t, A_t;  \bthetahat_t)-Q_t(\bHcheck_t,A_t)\|_{L_2(\Pbb)}\rightarrow 0$, or $\| \pi_t(\bHcheck_t; \bxihat_t)-\pi_t(\bHcheck_t)\|_{L_2(\Pbb)}\rightarrow 0$ for $t=1,2$, then under Assumptions \ref{assumption: covariates}-\ref{assumption: V imputation}, $\Vhat\subSSLDR$ satisfies

	\[
	\Vhat\subSSLDR\stackrel{\Pbb}{\longrightarrow}\Vbar.
	\]
	
	(b) If $\|Q_t(\bHcheck_t, A_t;  \bthetahat_t)-Q_t(\bHcheck_t,A_t)\|_{L_2(\Pbb)}\| \pi_t(\bHcheck_t; \bxihat_t)-\pi_t(\bHcheck_t)\|_{L_2(\Pbb)}=o_\Pbb\left(n^{-\frac{1}{2}}\right)$ for $t=1,2$, then under Assumptions \ref{assumption: covariates}-\ref{assumption: V imputation}, $\Vhat\subSSLDR$ satisfies

	\[
	\sqrt n\left(\Vhat\subSSLDR-\Vbar\right)\stackrel{d}{\longrightarrow}\mathcal{N}\left(0,\sigma^2\subSSLDR\right).
	\]
\end{proposition}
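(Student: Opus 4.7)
The plan is to reduce this to a classical double-robustness analysis via two bridging steps. Throughout, write $\bThetabar$ and $\mubar$ for the population limits of $\bThetahat$ and of the refitted imputation functions. By Theorem \ref{thrm_ssV_fun},
\begin{equation*}
\sqrt n\left\{\Vhat\subSSLDR - \Ebb[\Vsc\subSSLDR(\bL;\bThetabar,\mubar)]\right\} \stackrel{d}{\to} \mathcal{N}(0,\sigma^2\subSSLDR)
\end{equation*}
holds without any correctness assumption on the nuisance models. Hence both parts of the proposition reduce to studying the deterministic centering $\Ebb[\Vsc\subSSLDR(\bL;\bThetabar,\mubar)]$.

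The first bridging step compares $\Vsc\subSSLDR$ with $\Vsc\subSUPDR$. Direct algebra gives
\begin{align*}
\Vsc\subSUPDR(\bL;\bThetabar) - \Vsc\subSSLDR(\bL;\bThetabar,\mubar) &= \omega_1(1+\bar\beta_{21})\{Y_2-\mubar_2^v(\bUvec)\} + \{\omega_2 Y_3 - \mubar_{3\omega_2}^v(\bUvec)\} \\
&\quad - \bar\beta_{21}\{\omega_2 Y_2 - \mubar_{2\omega_2}^v(\bUvec)\} - \Qopt_{2-}(\bH_2;\bthetabar_2)\{\omega_2 - \mubar_{\omega_2}^v(\bUvec)\}.
\end{align*}
The population versions of the refitting constraints stated immediately after \eqref{V function reffitting} force each of these four summands to have zero expectation, so $\Ebb[\Vsc\subSSLDR(\bL;\bThetabar,\mubar)] = \Ebb[\Vsc\subSUPDR(\bL;\bThetabar)]$. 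This reduces the problem to analyzing the expectation of the purely supervised doubly robust estimator at its population nuisances.

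The second bridging step is the classical double-robustness identity for two-stage augmented IPW estimators. Using iterated conditional expectations together with consistency, no unmeasured confounding, and positivity, a telescoping expansion expresses $\Ebb[\Vsc\subSUPDR(\bL;\bThetabar)] - \Vbar$ as a sum over $t=1,2$ of cross-product terms controlled by $\|Q_t(\cdot;\bthetabar_t)-Q_t(\cdot)\|_{L_2(\Pbb)}\cdot\|\pi_t(\cdot;\bxibar_t)-\pi_t(\cdot)\|_{L_2(\Pbb)}$ via Cauchy--Schwarz. For part (a), if at every stage $t$ either the $Q$-function or the propensity score is correctly specified in the limit, each cross-product vanishes and the centering equals $\Vbar$, giving $\Vhat\subSSLDR\stackrel{\Pbb}{\to}\Vbar$. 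For part (b), the product-rate hypothesis makes the bias $o_\Pbb(n^{-1/2})$, which is absorbed into the $o_\Pbb(1)$ remainder of Theorem \ref{thrm_ssV_fun}, delivering $\sqrt n(\Vhat\subSSLDR-\Vbar)\stackrel{d}{\to}\mathcal{N}(0,\sigma^2\subSSLDR)$.

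The main obstacle I anticipate is carrying out the telescoping bias decomposition rigorously. The non-smooth $[\bH_{21}\trans\bgamma_2]_+$ inside $\Qopt_{2-}$ and the maximum over $a_2$ in the pseudo-outcome cause second-stage errors to propagate into the first-stage bias through non-differentiable transformations; Assumption \ref{assumption: non-regularity} guarantees that the non-smoothness set has probability zero at the population parameters, so first-order Taylor-type expansions combined with uniform bounded-convergence arguments yield the needed product-form bias bounds, and the $o_\Pbb(n^{-1/2})$ hypothesis in part (b) kills the resulting cross-terms.
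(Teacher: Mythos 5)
Your proposal is correct and follows essentially the same route as the paper: the refitting constraints \eqref{V function impute_constraint} give $\Ebb[\Vsc\subSSLDR(\bUvec;\bThetabar,\mubar)]=\Ebb[\Vsc\subSUPDR(\bL;\bThetabar)]$, the bias of the latter is decomposed into stagewise cross-products of $Q$ and $\pi$ errors bounded by Cauchy--Schwarz (the paper's Lemma \ref{lemm_bias_term}), and Theorem \ref{thrm_ssV_fun} finishes both parts. The only point worth noting is that your anticipated obstacle is not really one: the bias decomposition is a purely population-level identity evaluated at the fixed limits $\bThetabar$ via iterated expectations, so no Taylor expansion through the non-smooth $[\cdot]_+$ is required there.
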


Next we define the supervised influence function for estimator $\Vhat\subSUPDR$. Let $\bpsi^\theta\subSUP$, be the influence function for the supervised estimator $\bthetahat\subSUP$ for model \eqref{linear_Qs}. The influence function for SUP$\subDR$ Value Function Estimation estimator \eqref{eq: lab value fun} and its variance is (see Theorem \ref{thrm_supV_fun} in Appendix \ref{app_DR_Vfun}):

\begin{align*}
	\psi^{v}\subSUPDR(\bL;\bThetabar)
	=&
	\Vsc\subSUPDR(\bL;\bThetabar)-\mathbb{E}_\mathbb{S}\left[\Vsc\subSUPDR(\bL;\bThetabar)\right]\\
	+&
	\bpsi\subSUP^\theta(\bL)\trans
	\frac{\partial}{\partial \btheta}\int\Vsc\subSUPDR(\bL;\bTheta)d\Pbb_{\bL}\bigg|_{\bTheta=\bThetabar}
	+
	\bpsi^\xi(\bL)\trans
	\frac{\partial}{\partial \bxi}\int\Vsc\subSUPDR(\bL;\bTheta)d\Pbb_{\bL}\bigg|_{\bTheta=\bThetabar},\\
	\sigma\subSUPDR^2=&\mathbb{E}\left[\psi^{v}_{\subSUPDR}(\bL;\bThetabar)^2\right].
\end{align*}

The flexibility of our SSL value function estimator $V\subSSLDR$, allows the use of either supervised or SSL approach for estimation of propensity score nuisance parameters $\bxi$. For SSL estimation, we can use an approach similar to Section \ref{sec: ssQ}, \citep[see][Ch. 2]{chakrabortty} for details. This can be beneficial in that we can then quantify the efficiency gain of $V\subSSLDR$ vs. $V\subSUPDR$ by comparing the asymptotic variances. In light of this, we assume SSL is used for $\bxi$ when estimating $V\subSSLDR$.

Before stating the result we discuss an additional requirement for the imputation models. As for Proposition \ref{lemma: Q parameter var}, models $\mubar^v_2(\bUvec),$ $\mubar^v_{\omega_2}(\bUvec),$ $\mubar^v_{t\omega_2}(\bUvec),$  $t=2,3$ need to satisfy a few additional constraints of the form 
\[
\Ebb\left[\omega_1(\bHcheck_1,A_1;\bThetabar_1)\Qopt_{2-}(\bH_2;\bthetabar_1)\{Y_2-\mubar^v_2(\bUvec)\}\right]=\bzero.
\]
As there are several constraints, we list them in Appendix \ref{appendix: value fun}, and condense them in Assumption \ref{assumption: value additional constraints}, Appendix \ref{appendix: value fun}. Again, one can construct estimators which satisfy such conditions by simply augmenting $\eta_2^v,$ $\eta_{\omega_2}^v,$ $\eta_{t\omega_2}^v,$ $t=2,3$ in \eqref{V function reffitting} with additional terms in the refitting step.

\begin{proposition}\label{lemma: v funcion var}
Under Assumptions \ref{assumption: covariates}-\ref{assumption: V imputation}, and \ref{assumption: value additional constraints}, asymptotic variances
$\sigma\subSSLDR^2$, $\sigma\subSUPDR^2$ satisfy 
\[
\sigma\subSSLDR^2=\sigma\subSUPDR^2-\text{Var}\left[\psi^{v}\subSUPDR(\bL;\bThetabar)-\psi^{v}\subSSLDR(\bL;\bThetabar)\right].
\]
\end{proposition}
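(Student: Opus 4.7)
The plan is to establish the variance identity by an orthogonality argument that mirrors the proof of Proposition \ref{lemma: Q parameter var}. Writing $A = \psi^{v}\subSUPDR(\bL;\bThetabar)$ and $B = \psi^{v}\subSSLDR(\bL;\bThetabar)$, both have mean zero as asymptotic influence functions, and the elementary decomposition
\begin{equation*}
\mathrm{Var}(A) \;=\; \mathrm{Var}(B) \;+\; \mathrm{Var}(A-B) \;+\; 2\,\mathrm{Cov}\bigl(B,\,A-B\bigr)
\end{equation*}
reduces the proposition to the single claim $\Ebb[B(A-B)] = 0$. The whole point of the additional refitting constraints collected in Assumption \ref{assumption: value additional constraints} is to make exactly this orthogonality hold, so the proof is structurally identical to that of Proposition \ref{lemma: Q parameter var} but with richer bookkeeping.

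First I would form the difference $\Delta := A - B$ explicitly. Under the standing assumption that the same (SSL) estimator of $\bxi$ is used in both $A$ and $B$, the $\bpsi^\xi$ terms cancel, and $\Delta$ reduces to the sum of (i) $[\Vsc\subSUPDR(\bL;\bThetabar) - \Ebb\,\Vsc\subSUPDR] - \nu\subSSLDR(\bL;\bThetabar)$, which direct subtraction of the two displays shows equals a linear combination of the imputation residuals $Y_2 - \mubar^v_2(\bUvec)$, $\omega_2(\bHcheck_2,A_2;\bThetabar) - \mubar^v_{\omega_2}(\bUvec)$, and $Y_t\omega_2(\bHcheck_2,A_2;\bThetabar) - \mubar^v_{t\omega_2}(\bUvec)$ for $t=2,3$, weighted by $(\bHcheck,A)$-measurable coefficients such as $(1+\bar\beta_{21})\omega_1(\bHcheck_1,A_1;\bThetabar_1)$ and $\Qopt_{2-}(\bH_2;\bthetabar_2)$; and (ii) the $Q$-parameter score correction $(\bpsi^\theta\subSUP-\bpsi^\theta\subSSL)\trans\,\partial_\btheta\!\int\Vsc\subSUPDR\,d\Pbb_\bL\bigl|_{\bThetabar}$, which by Proposition \ref{lemma: Q parameter var} applied component-wise to $\bpsi^\theta$ is again expressible as a linear combination of imputation residuals from the $Q$-learning step.

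Next I would verify $\Ebb[B\,\Delta] = 0$ by expanding the product into cross-terms and handling each via the tower property. Every cross term has the form $\Ebb\bigl[g(\bUvec)\,r(\bL)\bigr]$ where $r(\bL)$ is one of the imputation residuals appearing in $\Delta$ and $g(\bUvec)$ is a $\bUvec$-measurable weight read off from $B$ (or a $\btheta$-score from $B$ paired with a residual in $\Delta_{(ii)}$); conditioning on $\bUvec$ yields $\Ebb\bigl[g(\bUvec)\,\Ebb(r(\bL)\mid\bUvec)\bigr]$. Assumption \ref{assumption: value additional constraints} is precisely the statement that each such weighted conditional expectation vanishes, i.e., $\Ebb[g(\bUvec)r(\bL)] = 0$ for every $(g,r)$ pair generated by pairing a term of $B$ with a term of $\Delta$. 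The $\bpsi^\theta$ piece in $B$ paired with the $\bpsi^\theta$ correction in $\Delta$ is controlled by Assumption \ref{assumption: additional constraints} invoked in the proof of Proposition \ref{lemma: Q parameter var} in exactly the same way.

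The main obstacle is the combinatorial bookkeeping in this second step: $B$ is a sum of four imputation-residual terms plus two nuisance-score terms, and $\Delta$ introduces a matching set of residuals, so the verification amounts to checking a sizeable list of cross-products --- each of which must be matched to a specific constraint drawn from Assumptions \ref{assumption: additional constraints} and \ref{assumption: value additional constraints}. Once this catalog of constraints is shown to cover every cross-product, $\Ebb[B\Delta]=0$ follows, the covariance in the variance decomposition vanishes, and rearrangement yields $\sigma^2\subSSLDR = \sigma^2\subSUPDR - \mathrm{Var}(A-B)$ as claimed.
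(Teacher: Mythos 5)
Your proof is correct and is essentially the paper's own argument: the paper writes $\psi^{v}\subSSLDR=\psi^{v}\subSUPDR-\mathcal{E}^v(\bUvec)$, expands the second moment, and invokes Assumption \ref{assumption: value additional constraints}, which in your notation is exactly the orthogonality $\Ebb[B(A-B)]=0$ that your variance decomposition requires. One small correction of premise: the paper does not have the $\bpsi^\xi$ terms cancel --- it compares a supervised $\bxi$-estimator in $\sigma^2\subSUPDR$ against an SSL one in $\sigma^2\subSSLDR$, so the difference $A-B$ retains a term $\mathcal{E}^\xi(\bUvec)\trans\frac{\partial}{\partial\bxi}\int\Vsc\subSUPDR \, d\Pbb_{\bL}$; this does not affect your argument since the stated assumption covers the full difference $\mathcal{E}^v$ including that piece.
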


\begin{remark}\label{remark: se for V}

1) Proposition \ref{cor_dr_V} illustrates how $\Vhat\subSSLDR$ is asymptotically unbiased if either the $Q$ functions or the propensity scores are correctly specified.\\
2) An immediate consequence of Proposition \ref{lemma: v funcion var} is that the semi-supervised estimator is at least as efficient (or more) as its supervised counterpart, that is $\text{Var}\left[\psi\subSSLDR(\bL;\bTheta)\right]\le\text{Var}\left[\psi\subSUPDR(\bL;\bTheta)\right]$. As with Proposition \ref{lemma: Q parameter var}, the difference in efficiency is explained by the information gain from incorporating surrogates.\\
3) To estimate standard errors for $V\subSSLDR(\bUvec;\bThetabar)$, we will approximate the derivatives of the expectation terms $\frac{\partial}{\partial \bTheta}\int\Vsc\subSUPDR(\bL;\bThetabar)d\Pbb_{\bL}$ using kernel smoothing to replace the indicator functions. In particular, let $\mathbb{K}_h(x)=\frac{1}{h}\sigma(x/h)$, $\sigma$ defined as in \eqref{logit_Ws}, we approximate $d_t(\bH_t,\btheta_2)=I(\bH_{t1}\trans\bgamma_t>0)$ with $\mathbb{K}_h(\bH_{t1}\trans\bgamma_t)$ $t=1,2$, and define the smoothed propensity score weights as
\begin{align*}
\tilde\omega_1(\bHcheck_1,A_1,\bTheta)
&\equiv
\frac{A_1\mathbb{K}_h(\bH_{11}\trans\bgamma_1)}{\pi_1(\bHcheck_1;\bxi_1)}
+
\frac{\left\{1-A_1\right\}\left\{1-\mathbb{K}_h(\bH_{11}\trans\bgamma_1)\right\}}{1-\pi_1(\bHcheck_1;\bxi_1)}, \quad \mbox{and}\\ \tilde\omega_2(\bHcheck_2,A_2,\bTheta)
&\equiv
\tilde\omega_1(\bHcheck_1,A_1,\bTheta)\left[\frac{A_2\mathbb{K}_h(\bH_{21}\trans\bgamma_2)}{\pi_2(\bHcheck_2;\bxi_2)}+
\frac{\left\{1-A_2\right\}\left\{1-\mathbb{K}_h(\bH_{21}\trans\bgamma_2)\right\}}{1-\pi_2(\bHcheck_2;\bxi_2)}\right].
\end{align*}

We simply replace the propensity score functions with these smooth versions in $\psi^{v}_{\subSSLDR}(\bL;\bThetabar)$, detail is given in Appendix \ref{variance estimation of Vss}. To estimate the variance we use the sample-split estimators:
\[
\hat\sigma^2\subSSLDR=
n^{-1}\sum_{k=1}^{K}\sum_{i\in\mathcal{I}_k}\psi^{v{\scriptscriptstyle (\text{-}k)}}\subSSLDR(\bUvec_i;\bThetahat)^2.
\]

\end{remark}

\section{Simulations and application to EHR data:}\label{section: sims and application}
We perform extensive simulations to evaluate the finite sample performance of our method. Additionally we apply our methods to an EHR study of treatment response for patients with inflammatory bowel disease to identify optimal treatment sequence. These data have treatment response outcomes available for a small subset of patients only. 

\subsection{Simulation results}
We compare our SSL Q-learning methods to fully supervised $Q$-learning using labeled datasets of different sizes and settings. We focus on the efficiency gains of our approach. First we discuss our simulation settings, then go on to show results for the $Q$ function parameters under correct and incorrect working models for \eqref{linear_Qs}. We then show value function summary statistics under correct models, and mis-specification for the $Q$ models in \eqref{linear_Qs} and the propensity score function $\pi_2$ in \eqref{logit_Ws}.

Following a similar set-up as in \citet{schulte2014}, we first consider a simple scenario with a single confounder variable at each stage with $\bH_{10}=\bH_{11}=(1,O_1)\trans$, $\bHcheck_{20}=(Y_2,1,O_1,A_1,O_1A_1,O_2)\trans$, and $\bH_{21}=(1,A_1,O_2)\trans$. Specifically, we sequentially generate 
\begin{alignat*}{3}
& O_1\sim \Bern(0.5), &\quad& A_1\sim \Bern(\sigma\left\{\bH_{10}\trans\bxi_1^0\right\}),&\quad& Y_2\sim\Nsc(\bXcheck_1\trans\btheta^0_1, 1), \\
& O_2 \sim \Nsc(\bHcheck_{20}\trans\pmb\delta^0,2),&\quad&
A_2 \sim\Bern\left(\sigma\left\{\bH_{20}\trans\bxi_2^0+\xi_{26}^0O_2^2\right\}\right), \quad \mbox{and} &\quad& Y_3 \sim\Nsc(m_3\left\{\bHcheck_{20}\right\}, 2). 
\end{alignat*}
where $m_3\{\bHcheck_{20}\}=\bH_{20}\trans\bbeta^0_2+A_2(\bH_{21}\trans\bgamma^0_2)+\beta_{27}^0O_2^2Y_2\sin\{[O_2^2(Y_2+1)]^{-1}\}$. 
Surrogates are generated as $W_t=\floor{Y_{t+1}+Z_t},$ $Z_t\sim\Nsc(0,\sigma^2_{z,t})$, $t=1,2$ where $\floor{x}$ corresponds to the integer part of $x\in\mathbb{R}$. Throughout, we let $\bxi_1^0=(0.3,-0.5)\trans$, $\bbeta_1^0=(1,1)\trans$, $\bgamma_1^0=(1,-2)\trans$
$\pmb\delta^0=(0,0.5,-0.75,0.25)\trans$, $\bxi^0_2=(0, 0.5, 0.1,-1,-0.1)\trans$ $\bbeta_2^0=(.1, 3, 0,0.1,-0.5,-0.5)\trans$, $\bgamma_2^0=(1, 0.25, 0.5)\trans$.

We consider an additional case to mimic the structure of the EHR data set used for the real-data application. Outcomes $Y_t$ are binary, and we use a higher number of covariates for the $Q$ functions and multivariate count surrogates $\bW_t$ $t=1,2$. Data is simulated with $\bH_{10}=(1,O_1,\dots,O_6)\trans$,  $\bH_{11}=(1,O_2,\dots,O_6)\trans$,  $\bHcheck_{20}=(Y_2,1,O_1,\dots,O_6,A_1,Z_{21},Z_{22})\trans$, and $\bH_{21}=(1,O_1,\dots,O_4,A_1,Z_{21},Z_{22})\trans$, generated according to
\begin{alignat*}{3}
& \bO_1\sim\Nsc(\bzero,I_6), &\quad& A_1\sim\Bern(\sigma\{\bH_{10}\trans\bxi_1^0\}),&\quad& Y_2\sim\Bern(\sigma\{\bXcheck_1\trans\btheta^0_1\}), \\
& \bO_2=\left[I\left\{Z_1>0\right\},I\left\{Z_2>0\right\}\right]\trans&\quad&
A_2 \sim\Bern\left(\tilde m_2\{\bHcheck_{20}\}\right), \quad \mbox{and} &\quad& Y_3 \sim\Bern(\tilde m_3\left\{\bHcheck_{20}\right\}),
\end{alignat*}
with $\tilde m_2=\sigma\left\{\bH_{20}\trans\bxi_2^0+\tilde\bxi_2\trans\bO_2\right\}$, $\tilde m_3(\bHcheck_{20})=\bH_{20}\trans\bbeta^0_2+A_2(\bH_{21}\trans\bgamma^0_2)+\tilde\bbeta_2\trans\bO_2Y_2\sin\{\|\bO_2\|^2_2/(Y_2+1)\}$ and $Z_l=O_{1l}\delta_l^0+\epsilon_z$, $\epsilon_z\sim\Nsc(0,1)$ $l=1,2$. The dimensions for the $Q$ functions are 13 and 37 for the first and second stage respectively, which match with our IBD dataset discussed in Section \ref{section: IBD}. The surrogates are generated according to $\bW_t=\floor{\bZ_t}$, with $\bZ_t\sim\Nsc\left(\balpha\trans(1,\bO_t,A_t,Y_t),I\right)$. Parameters are set to $\bxi_1^0=(-0.1,1,-1,0.1)\trans$, $\bbeta_1^0=(0.5, 0.2,-1,-1,0.1,-0.1,0.1)\trans$, $\bgamma_1^0=(1, -2,-2,-0.1,0.1,-1.5)\trans$, $\bxi^0_2=(0, 0.5, 0.1,-1,1,-0.1)\trans$, $\bbeta_2^0=(1,\bbeta_1^0, 0.25,-1,-0.5)\trans$, $\bgamma_2^0=(1, 0.1,-0.1,0.1,-0.1,0.25,-1,-0.5)\trans$, and $\balpha=(1,\bzero,1)\trans$.

For all settings, we fit models $Q_1(\bH_1,A_1)=\bH_{10}\trans\bbeta^0_1+A_1(\bH_{11}\trans\bgamma^0_1)$, $Q_2(\bHcheck_2,A_2)=\bHcheck_{20}\trans\bbeta^0_2+A_2(\bH_{21}\trans\bgamma^0_2)$ for the $Q$ functions, $\pi_1(\bH_1)=\sigma\left(\bH_{10}\trans\bxi_1\right)$ and $\pi_2(\bHcheck_2)=\sigma\left(\bH_{20}\trans\bxi_2\right)$ for the propensity scores. The parameters $\xi_{26}^0$ and $\beta_{27}^0$ and $\tilde\bxi_2,\tilde\bbeta_2$ index mis-specification in the fitted Q-learning outcome models and the propensity score models with a value of 0 corresponding to a correct specification. In particular, we set $\xi_{26}^0=1$, $\tilde\bxi_2=\frac{1}{\|(1,\dots,1)\|_2}(1,\dots,1)\trans$, and $\beta_{27}^0=1$, $\tilde\bbeta_2=\frac{1}{\|(1,\dots,1)\|_2}(1,\dots,1)\trans$ for mis-specification of propensity score $\pi_2$ and $Q_1,$ $Q_2$ functions respectively. We set $\xi_{26}^0=\beta_{27}^0=0$ and $\tilde\bxi=\bzero$, $\tilde\bbeta_2=\bzero$ for correct model specification. Under mis-specification of the outcome model or propensity score model, the term omitted by the working models is highly non-linear, in which case the imputation model will be mis-specified as well. We note that our method does not need correct specification of the imputation model. For the imputation models, we considered both random forest (RF) with 500 trees and basis expansion (BE) with piecewise-cubic splines with 2 equally spaced knots on the quantiles 33 and 67 \citep{HastieT}. Finally, we consider two choices of $(n,N)$: $(135,1272)$ which are similar to the sizes of our EHR study and larger sizes of $(500,10000)$. For each configuration, we summarize results based on $1,000$ replications. 

\begin{table}[ht]
\centering
\centerline{(a) $n=135$ and $N=1272$}
\scalebox{0.8}{
\begin{tabular}{@{}lccccccclcccccl@{}}
\toprule
 & \multicolumn{2}{c}{Supervised} &  & \multicolumn{11}{c}{Semi-Supervised} \\ \cmidrule(lr){2-3} \cmidrule(lr){5-15} 
 & \multicolumn{2}{c}{} &  & \multicolumn{5}{c}{Random Forests} &  & \multicolumn{5}{c}{Basis Expansion} \\ \cmidrule(l){5-9} \cmidrule(l){11-15} 
Parameter & Bias & ESE &  & Bias & ESE & ASE & CovP & RE &  & Bias & ESE & ASE & CovP & RE \\ \cmidrule(l){1-3} \cmidrule(l){5-9} \cmidrule(l){11-15} 
  $\gamma_{11}$=1.4 & -0.03 & 0.41 &  & 0.00 & 0.26 & 0.24 & 0.93 & 1.57 &  & 0.00 & 0.24 & 0.23 & 0.93 & 1.68 \\ 
  $\gamma_{12}$=-2.6 & 0.04 & 0.58 &  & -0.01 & 0.36 & 0.34 & 0.94 & 1.61 &  & -0.02 & 0.35 & 0.31 & 0.90 & 1.69 \\ 
  $\gamma_{21}$=0.8 & 0.00 & 0.34 &  & 0.01 & 0.21 & 0.20 & 0.93 & 1.61 &  & 0.00 & 0.20 & 0.19 & 0.94 & 1.71 \\ 
  $\gamma_{22}$=0.2 & -0.02 & 0.45 &  & -0.01 & 0.28 & 0.28 & 0.95 & 1.60 &  & -0.01 & 0.27 & 0.26 & 0.94 & 1.70 \\ 
  $\gamma_{23}$=0.5 & 0 & 0.18 &  & 0.01 & 0.11 & 0.11 & 0.94 & 1.59 &  & 0.00 & 0.11 & 0.11 & 0.94 & 1.68 \\ \bottomrule
\end{tabular}}\vspace{.1in}
\centerline{(b) $n=500$ and $N=10,000$}
\scalebox{0.8}{
\begin{tabular}{@{}lccccccclcccccl@{}}
\toprule
 & \multicolumn{2}{c}{Supervised} &  & \multicolumn{11}{c}{Semi-Supervised} \\ \cmidrule(lr){2-3} \cmidrule(lr){5-15} 
 & \multicolumn{2}{c}{} &  & \multicolumn{5}{c}{Random Forests} &  & \multicolumn{5}{c}{Basis Expansion} \\ \cmidrule(l){5-9} \cmidrule(l){11-15} 
Parameter & Bias & ESE &  & Bias & ESE & ASE & CovP & RE &  & Bias & ESE & ASE & CovP & RE \\ \cmidrule(l){1-3} \cmidrule(l){5-9} \cmidrule(l){11-15} 
  $\gamma_{11}$=1.4 & 0.01 & 0.22 &  & 0.01 & 0.12 & 0.11 & 0.92 & 1.76 &  & 0.01 & 0.12 & 0.11 & 0.92 & 1.80 \\ 
  $\gamma_{12}$=-2.6 & 0 & 0.29 &  & 0 & 0.17 & 0.16 & 0.93 & 1.73 &  & -0.01 & 0.16 & 0.15 & 0.93 & 1.80 \\ 
  $\gamma_{21}$=0.8 & 0.00 & 0.17 &  & 0.00 & 0.10 & 0.09 & 0.93 & 1.80 &  & 0.00 & 0.09 & 0.09 & 0.93 & 1.86 \\ 
  $\gamma_{22}$=0.2 & -0.01 & 0.23 &  & 0 & 0.13 & 0.12 & 0.93 & 1.81 &  & 0 & 0.13 & 0.12 & 0.94 & 1.83 \\ 
  $\gamma_{23}$=0.5 & 0.00 & 0.09 &  & 0.00 & 0.05 & 0.05 & 0.94 & 1.78 &  & 0.00 & 0.05 & 0.05 & 0.95 & 1.81 \\ \bottomrule
\end{tabular}}
\caption{Bias, empirical standard error (ESE) of the supervised and the SSL estimators with either random forest imputation or basis expansion imputation strategies for $\bgammabar_1,\bgammabar_2$ when (a) $n=135$ and $N=1272$ and (b) $n=500$ and $N=10,000$. For the SSL estimators, we also obtain the average of the estimated standard errors (ASE) as well as the empirical coverage probabilities (CovP) of the 95\% confidence intervals.}
\label{tab:simgamma}
\end{table}

We start discussing results under correct specification of the $Q$ functions. In Table \ref{tab:simgamma}, we present the results for the estimation of treatment interaction coefficients $\bgammabar_1,\bgammabar_2$, under the correct model specification, continuous outcome setting with $\beta_{27}^0=\xi_{26}^0=0$. The complete tables for all $\bthetabar$ parameters for the continuous and EHR-like settings can be found in Appendix \ref{appendix: alt sims}. We report bias, empirical standard error (ESE), average standard error (ASE), 95\% coverage probability (CovP) and relative efficiency (RE) defined as the ratio of supervised ESE over SSL estimate ESE. Overall, compared to the supervised approach, the proposed semi-supervised $Q$-learning approach has substantial gains in efficiency while maintaining comparable or even lower bias. This is likely due to the refitting step which helps take care of the finite sample bias, both from the missing outcome imputation and $Q$ function parameter estimation. Imputation with BE yields slightly better estimates than when using RF, both in terms of efficiency and bias. Coverage probabilities are close to the nominal level due to the good performance of the standard error estimation. 

We next turn to $Q$-learning parameters under mis-specification of \eqref{linear_Qs}. Figure \ref{fig_misspec_Q_sin} shows the bias and root mean square error (RMSE) for the treatment interaction coefficients in the 2-stage $Q$ functions. We focus on the continuous setting, where we set $\beta_{27}^0\in\{-1,0,1\}$. Note that $\beta_{27}^0\neq0$ implies that both $Q$ functions are mis-specified as the fitting of $Q_1$ depends on formulation of $Q_2$ as seen in \eqref{full_EE}. Semi-supervised $Q$-learning is more efficient for any degree of mis-specification for both small and large finite sample settings. As the theory predicts, there is no real difference in efficiency gain of SSL across mis-specification of the $Q$ function models. This is because asymptotic distribution of $\bgammahat\subSSL$ shown in Theorems \ref{theorem: unbiased theta2} \& \ref{theorem: unbiased theta1} are centered on the target parameters $\bgammabar$. Thus, both SSL and SUP have negligible bias regardless of the true value of $\beta_{27}^0$.

\begin{figure}[ht]
	\centering
	\includegraphics[width=1.05\textwidth]{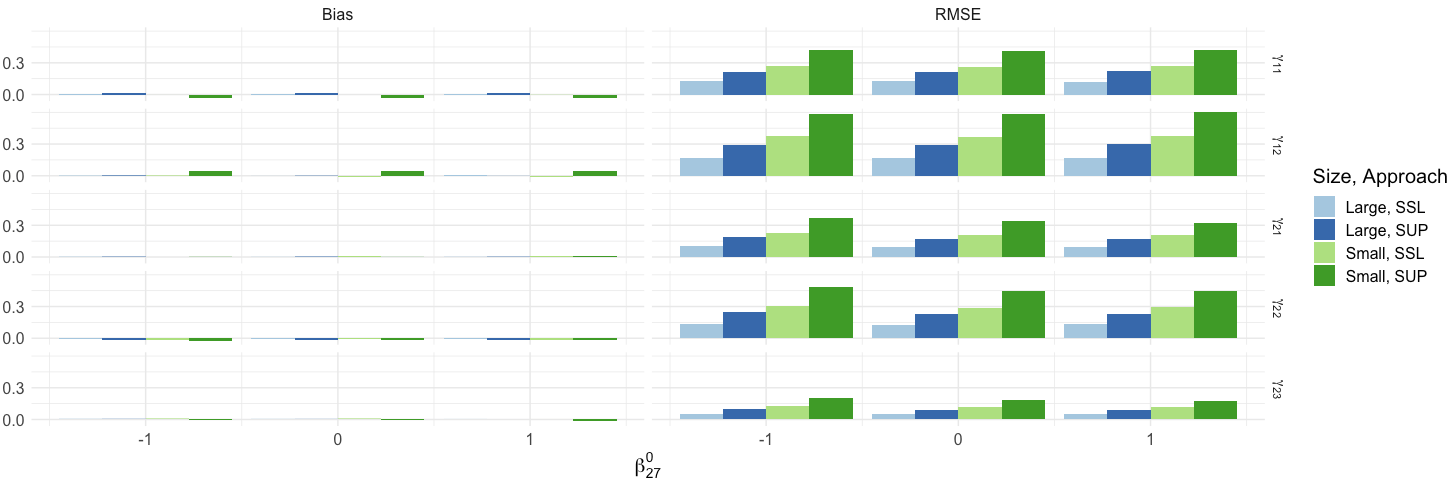}
	\caption{Monte Carlo estimates of bias and RMSE ratios for estimation of $\gamma_{11},\:\gamma_{12}$, $\gamma_{21},\:\gamma_{22},\:\gamma_{23}$ under mis-specification of the $Q$-functions through $\beta_{27}^0$. Results are shown for the large ($N=10,000$, $n=500$) and small ($N=1,272$, $n=135$) data samples for the continuous setting over 1,000 simulated datasets.}
	\label{fig_misspec_Q_sin} 
\end{figure}

\begin{table}[ht]
\centering
\centerline{(a) $n=135$ and $N=1272$}
\scalebox{0.7}{
\begin{tabular}{rlrlrrlrrrrrlrrrrr}
\hline
\multicolumn{1}{l}{}        &             & \multicolumn{1}{l}{} &  & \multicolumn{2}{c}{Supervised}              &  & \multicolumn{11}{c}{Semi-Supervised}                                                                                                                                                                                                   \\ \cmidrule{5-6} \cmidrule{8-18} 
\multicolumn{1}{l}{}        &             & \multicolumn{1}{l}{} &  & \multicolumn{1}{l}{} & \multicolumn{1}{l}{} &  & \multicolumn{5}{c}{Random Forests}                                                                               &  & \multicolumn{5}{c}{Basis Expansion}                                                                              \\  \cmidrule{8-12} \cmidrule{14-18} 
Setting                     & Model       & $\Vbar$              &  & Bias                 & ESE                  &  & Bias                 & ESE                  & ASE                  & CovP                 & RE                   &  & Bias                 & ESE                  & ASE                  & CovP                 & RE                   \\ \cmidrule{1-3} \cmidrule{5-6} \cmidrule{8-12} \cmidrule{14-18} 
 & Correct & 6.08 &  & 0.02 & 0.27 &  & 0.04 & 0.21 & 0.24 & 0.97 & 1.27 &  & 0.02 & 0.23 & 0.25 & 0.97 & 1.18 \\ 
  Continuous & Missp. $Q$ & 6.34 &  & 0.01 & 0.24 &  & 0.03 & 0.19 & 0.22 & 0.97 & 1.27 &  & 0.00 & 0.20 & 0.22 & 0.97 & 1.20 \\ 
   & Missp. $\pi$ & 6.08 &  & 0.01 & 0.28 &  & 0.02 & 0.22 & 0.24 & 0.97 & 1.24 &  & 0.01 & 0.25 & 0.25 & 0.97 & 1.12 \\ 
   & Correct & 1.38 &  & 0.09 & 0.15 &  & 0.05 & 0.12 & 0.12 & 0.94 & 1.24 &  & 0.04 & 0.13 & 0.12 & 0.95 & 1.12 \\ 
  EHR & Missp. $Q$ & 1.43 &  & 0.09 & 0.14 &  & 0.04 & 0.12 & 0.12 & 0.96 & 1.12 &  & 0.03 & 0.14 & 0.12 & 0.95 & 1.02 \\ 
   & Missp. $\pi$ & 1.38 &  & 0.09 & 0.15 &  & 0.05 & 0.14 & 0.13 & 0.96 & 1.13 &  & 0.04 & 0.14 & 0.13 & 0.96 & 1.05 \\   \end{tabular}}\vspace{.1in}
\centerline{(b) $n=500$ and $N=10,000$}
\scalebox{0.7}{
\begin{tabular}{rlrlrrlrrrrrlrrrrr}
\toprule
\multicolumn{1}{l}{}        &             & \multicolumn{1}{l}{} &  & \multicolumn{2}{c}{Supervised}              &  & \multicolumn{11}{c}{Semi-Supervised}                                                                                                                                                                                                   \\ \cmidrule{5-6} \cmidrule{8-18} 
\multicolumn{1}{l}{}        &             & \multicolumn{1}{l}{} &  & \multicolumn{1}{l}{} & \multicolumn{1}{l}{} &  & \multicolumn{5}{c}{Random Forests}                                                                               &  & \multicolumn{5}{c}{Basis Expansion}                                                                              \\  \cmidrule{8-12} \cmidrule{14-18} 
Setting                     & Model       & $\Vbar$              &  & Bias                 & ESE                  &  & Bias                 & ESE                  & ASE                  & CovP                 & RE                   &  & Bias                 & ESE                  & ASE                  & CovP                 & RE                   \\ \cmidrule{1-3} \cmidrule{5-6} \cmidrule{8-12} \cmidrule{14-18} 
 & Correct & 6.08 &  & 0.02 & 0.15 &  & 0.03 & 0.11 & 0.12 & 0.96 & 1.32 &  & 0.02 & 0.13 & 0.13 & 0.95 & 1.16 \\ 
  Continuous & Missp. $Q$ & 6.34 &  & 0.01 & 0.13 &  & 0.03 & 0.10 & 0.10 & 0.96 & 1.31 &  & 0.01 & 0.11 & 0.11 & 0.96 & 1.16 \\ 
   & Missp. $\pi$ & 6.08 &  & 0.01 & 0.14 &  & 0.03 & 0.11 & 0.12 & 0.96 & 1.28 &  & 0.02 & 0.12 & 0.12 & 0.95 & 1.16 \\ 
   & Correct & 1.38 &  & 0.02 & 0.07 &  & 0.01 & 0.04 & 0.06 & 0.99 & 1.55 &  & 0.00 & 0.06 & 0.06 & 0.98 & 1.23 \\ 
  EHR & Missp. $Q$ & 1.43 &  & 0.01 & 0.07 &  & 0.00 & 0.04 & 0.05 & 0.99 & 1.66 &  & 0.00 & 0.05 & 0.06 & 0.98 & 1.35 \\ 
   & Missp. $\pi$ & 1.38 &  & 0.02 & 0.08 &  & 0.01 & 0.06 & 0.07 & 0.99 & 1.22 &  & 0.00 & 0.07 & 0.07 & 0.97 & 1.03 \\  \bottomrule
\end{tabular}}
\caption{Bias, empirical standard error (ESE) of the supervised estimator $\Vhat\subSUPDR$ and bias, ESE, average standard error (ASE) and coverage probability (CovP) for $\Vhat\subSSLDR$ with either random forest imputation or basis expansion imputation strategies when (a) $n=135$ and $N=1272$ and (b) $n=500$ and $N=10,000$. We show performance and relative efficiency across both simulation settings for estimation under correct models, and mis-specification of $Q$ function or propensity score function.}
\label{tab:simvalues}
\end{table}

Next we analyze performance of the doubly robust value function estimators for both continuous and EHR-like settings. Table \ref{tab:simvalues} shows bias and RMSE across different sample sizes, and comparing SSL vs. SUP estimators. Results are shown for the correct specification of the $Q$ functions and propensity scores, and when either is mis-specified. Bias across simulation settings is relatively similar between $\Vhat\subSSLDR$ and $\Vhat\subSUPDR$, and appears to be small relative to RMSE. The low magnitude of bias suggests both estimators are robust to model mis-specification. There is an exception on the EHR setting with small sample size, for which the bias is non-negligible. This is likely due to the fact that the $Q$ function parameters to estimate are 13+37, and the propensity score functions have 12 parameters which add up to a large number relative to the labeled sample size: $n=135$. The SSL bias is lower in this case which could be due to the refitting step, which helped to reduce the finite sample bias. Efficiency gains of $\Vhat\subSSLDR$ are consistent across model specification. We next illustrate our approach using an IBD dataset.

\subsection{Application to an EHR Study of Inflammatory Bowel Disease}\label{section: IBD}

Anti–tumor necrosis factor (anti-TNF) therapy has greatly changed the management and improved the outcomes of patients with inflammatory bowl disease (IBD) \citep{peyrin2010anti}. However, it remains unclear whether a specific anti-TNF agent has any advantage in efficacy over other agents, especially at the individual level. There have been few randomized clinical trials performed to directly compare anti-TNF agents for treating IBD patients \citep{sands2019vedolizumab}. Retrospective studies comparing  infliximab and adalimumab for treating IBD have found limited and sometimes conflicting evidence of their relative effectiveness \citep{inokuchi2019long,lee2019comparison,osterman2017infliximab}. There is even less evidence regarding optimal STR for choosing these treatments over time \citep{ashwin2016}. To explore this, we performed RL using data from a cohort of IBD patients previously identified via machine learning algorithms from the EHR systems of two tertiary referral academic centers in the Greater Boston metropolitan area \citep{ashwin2012}. We focused on the subset of $N=1,272$ patients who initiated either Infliximab ($A_1=0$) or Adalimumab ($A_1=1$) and continued to be treated by either of these two therapies during the next 6 months. The observed treatment sequence distributions are shown in Table \ref{Table: obs As}. The outcomes of interest are the binary indicator of treatment response at 6 months ($t=2$) and at 12 months ($t=3$), both of which were only available on a subset of $n=135$ patients whose outcomes were manually annotated via chart review. 

To derive the STR, we included gender, age, Charlson co-morbidity index \citep{charlson}, prior exposure to anti-TNF agents, as well as mentions of clinical terms associated with IBD such as bleeding complications extracted from the clinical notes via natural language processing (NLP) features as confounding variables at both time points. To improve the imputation of $Y_t$, we use 15 relevant NLP features such as mentions of rectal or bowel resection surgery as surrogates at $t=1,2$. We transformed all count variables using $x\mapsto \log(1+x)$ to decrease skewness in the distributions, and centered continuous features. We used RF with 500 trees to carry out the imputation step, and 5-fold cross-validation (CV) to estimate the value function.

The supervised and semi-supervised estimates are shown in Table \ref{Table: IBD results} for the $Q$-learning models and in Table \ref{Table: V IBD results} for the value functions associated with the estimated STR. Similar to those observed in the simulation studies, the semi-supervised $Q$-learning has more power to detect significant predictors of treatment response. Relative efficiency for almost all $Q$ function estimates is near or over 2. The supervised $Q$-learning does not have the power to detect predictors such as prior use of anti-TNF agents, which are clearly relevant to treatment response \citep{ashwin2016}. Semi-supervised $Q$-learning is able to detect that the efficacy of Adalimumab wears off as patients get older, meaning younger patients in the first stage experienced a higher rate of treatment response to Adalimumab, a finding that cannot be detected with supervised $Q$-learning. Additionally, supervised $Q$-learning does not pick up that there is a higher rate of response to Adalimumab among patients that are male or have experienced an abscess. This translates into a far from optimal treatment rule as seen in the cross-validated value function estimates. Table \ref{Table: V IBD results} reflects that using our semi-supervised approach to find the regime and to estimate the value function of such treatment rules yields a more efficient estimate, as the semi-supervised value function estimate $\Vhat\subSUPDR$ yielded a smaller standard error than that of the supervised estimate $\Vhat\subSUPDR$. However, the standard errors are large relative to the point estimates. On the upside, they both yield estimates very close in numerical value which is reassuring: both should be unbiased as predicted by theory and simulations.

\begin{table}[ht]
\centering
\begin{tabular}{clcc}
\hline
                                            &   & \multicolumn{2}{c}{$A_1$} \\ \cline{2-4} 
\multicolumn{1}{c}{}                       &   & 0           & 1           \\
\multicolumn{1}{c}{\multirow{2}{*}{$A_2$}} & 0 & 912         & 327         \\
\multicolumn{1}{c}{}                       & 1 & 27          & 183         \\ \hline
\end{tabular}\caption{Distribution of treatment trajectories for observed sample of size 1407.}\label{Table: obs As}

\end{table}

\begin{table}[ht]
\scalebox{0.55}{
\begin{tabular}{lcccccccccclccccccccc}
\cmidrule{1-10} \cmidrule{12-21}
\multicolumn{10}{c}{Stage 1 Regression} &  & \multicolumn{10}{c}{Stage 2 Regression} \\ \cmidrule{1-10} \cmidrule{12-21} 
 & \multicolumn{3}{c}{Supervised} &  & \multicolumn{3}{c}{Semi-Supervised} &  &  &  & \multicolumn{4}{c}{Supervised} &  & \multicolumn{3}{c}{Semi-Supervised} &  &  \\ \cmidrule{2-4} \cmidrule{6-8} \cmidrule{12-15} \cmidrule{17-19}
Parameter & Estimate & SE & P-val &  & Estimate & SE & P-val &  & RE &  & Parameter & Estimate & SE & P-val &  & Estimate & SE & P-val &  & RE \\ \cmidrule{1-4} \cmidrule{6-8} \cmidrule{10-10} \cmidrule{12-15} \cmidrule{17-19} \cmidrule{21-21} 
Intercept & \textbf{0.424} & \textbf{0.082} & \textbf{0.00} &  & \textbf{0.518} & \textbf{0.028} & \textbf{0.00} &  & 2.937 &  & $Y_1$ & \textbf{0.37} & \textbf{0.11} & \textbf{0.00} &  & \textbf{0.55} & \textbf{0.05} & \textbf{0.00} &  & 2.08 \\ 
  Female & -0.237 & 0.167 & 0.16 &  & \textbf{-0.184} & \textbf{0.067} & \textbf{0.007} &  & 2.514 &  & Intercept & 0.08 & 0.06 & 0.17 &  & 0.04 & 0.02 & 0.14 &  & 2.40 \\ 
  Age & 0.155 & 0.088 & 0.081 &  & \textbf{0.18} & \textbf{0.034} & \textbf{0.00} &  & 2.588 &  & Female & -0.01 & 0.10 & 0.92 &  & -0.00 & 0.05 & 0.98 &  & 2.21 \\ 
  Charlson Score & 0.006 & 0.072 & 0.929 &  & -0.047 & 0.026 & 0.075 &  & 2.776 &  & Age & 0.05 & 0.06 & 0.35 &  & \textbf{0.07} & \textbf{0.02} & \textbf{0.00} &  & 2.33 \\ 
  Prior anti-TNF & -0.038 & 0.06 & 0.524 &  &\textbf{ -0.085} & \textbf{0.019} & \textbf{0.00} &  & 3.177 &  & Charlson Score & 0.04 & 0.04 & 0.33 &  & \textbf{0.06} & \textbf{0.02} & \textbf{0.01} &  & 2.06 \\ 
  Perianal & \textbf{0.138} & \textbf{0.06} & \textbf{0.022} &  & \textbf{0.179} & \textbf{0.022} & \textbf{0.00} &  & 2.688 &  & Prior anti-TNF & -0.05 & 0.05 & 0.29 &  & \textbf{-0.09} & \textbf{0.02} & \textbf{0.00} &  & 2.39 \\ 
  Bleeding & 0.049 & 0.08 & 0.54 &  & 0.058 & 0.03 & 0.055 &  & 2.675 &  & Perianal & -0.01 & 0.04 & 0.80 &  & \textbf{-0.03} & \textbf{0.02} & \textbf{0.06} &  & 2.31 \\ 
  A1 & 0.163 & 0.488 & 0.739 &  & 0.148 & 0.206 & 0.473 &  & 2.374 &  & Bleeding & -0.04 & 0.05 & 0.49 &  & -0.03 & 0.03 & 0.29 &  & 2.14 \\ 
  Female$\times A_1$ & 0.168 & 0.696 & 0.81 &  & -0.042 & 0.287 & 0.886 &  & 2.424 &  & A1 & 0.11 & 0.25 & 0.67 &  & 0.03 & 0.10 & 0.74 &  & 2.60 \\ 
  Age$\times A_1$ & -0.177 & 0.264 & 0.503 &  & \textbf{-0.278} & \textbf{0.109} & \textbf{0.013} &  & 2.418 &  & Abscess$_2$ & 0.06 & 0.04 & 0.16 &  & \textbf{0.05} & \textbf{0.01} & \textbf{0.00} &  & 2.68 \\ 
  Charlson Score$\times A_1$ & 0.136 & 0.391 & 0.728 &  & 0.195 & 0.178 & 0.276 &  & 2.194 &  & Fistula$_2$ & 0.02 & 0.05 & 0.67 &  & 0.01 & 0.02 & 0.62 &  & 2.33 \\ 
  Perianal$\times A_1$ & -0.113 & 0.226 & 0.618 &  & -0.019 & 0.08 & 0.808 &  & 2.838 &  & Female$\times A_1$ & 0.13 & 0.38 & 0.74 &  & 0.17 & 0.16 & 0.30 &  & 2.37 \\ 
  Bleeding$\times A_1$ & 0.262 & 0.364 & 0.474 &  & 0.127 & 0.161 & 0.431 &  & 2.267 &  & Age$\times A_1$ & -0.02 & 0.12 & 0.88 &  & -0.09 & 0.06 & 0.17 &  & 1.94 \\ 
   &  &  &  &  &  &  &  &  &  &  & Charlson Score$\times A_1$ & -0.02 & 0.16 & 0.89 &  & 0.04 & 0.07 & 0.55 &  & 2.19 \\ 
   &  &  &  &  &  &  &  &  &  &  & Perianal$\times A_1$ & -0.14 & 0.09 & 0.15 &  & \textbf{-0.17} & \textbf{0.04} & \textbf{0.00} &  & 2.34 \\ 
   &  &  &  &  &  &  &  &  &  &  & Bleeding$\times A_1$ & 0.13 & 0.20 & 0.51 &  & 0.03 & 0.09 & 0.76 &  & 2.17 \\ 
   &  &  &  &  &  &  &  &  &  &  & A2 & 0.07 & 0.17 & 0.69 &  & \textbf{0.22} & \textbf{0.07} & \textbf{0.00} &  & 2.55 \\ 
   &  &  &  &  &  &  &  &  &  &  & Female$\times A_2$ & -0.39 & 0.28 & 0.16 &  & \textbf{-0.51} & \textbf{0.11} & \textbf{0.00} &  & 2.53 \\ 
   &  &  &  &  &  &  &  &  &  &  & Age$\times A_2$ & 0.09 & 0.10 & 0.40 &  & \textbf{0.15} & \textbf{0.04} & \textbf{0.00} &  & 2.27 \\ 
   &  &  &  &  &  &  &  &  &  &  & Charlson Score$\times A_2$ & 0.01 & 0.07 & 0.84 &  & -0.03 & 0.03 & 0.42 &  & 2.08 \\ 
   &  &  &  &  &  &  &  &  &  &  & Perianal$\times A_2$ & \textbf{0.20} & \textbf{0.09} &\textbf{ 0.04} &  & \textbf{0.23} & \textbf{0.04} &\textbf{ 0.00} &  & 2.23 \\ 
   &  &  &  &  &  &  &  &  &  &  & Bleeding$\times A_2$ & 0.03 & 0.08 & 0.77 &  & 0.02 & 0.04 & 0.49 &  & 2.34 \\ 
   &  &  &  &  &  &  &  &  &  &  & Abscess$_2\times A_2$ & -0.13 & 0.07 & 0.06 &  & \textbf{-0.09} & \textbf{0.03} & \textbf{0.00} &  & 2.31 \\ 
   &  &  &  &  &  &  &  &  &  &  & Fistula$_2\times A_2$ & -0.04 & 0.06 & 0.56 &  & -0.03 & 0.03 & 0.36 &  & 2.17 \\  \bottomrule
\end{tabular}}
\caption{Results of Inflammatory Bowel Disease data set, for first and second stage regressions. Fully supervised $Q$-learning is shown on the left and semi-supervised is shown on the right. Last columns in the panels show relative efficiency (RE) defined as the ratio of standard errors of the semi-supervised vs. supervised method, RE greater than one favors semi-supervised. Significant coefficients at the 0.05 level are in bold.}\label{Table: IBD results}
\end{table}

\begin{table}[ht]
\centering
\begin{tabular}{lll}
                         & Estimate & SE    \\ \hline
$\Vhat\subSUPDR$  & 0.851    & 0.486 \\
$\Vhat\subSSLDR$ & 0.871     & 0.397 \\ \hline
\end{tabular}\caption{Value function estimates for Inflammatory Bowel Disease data set, the first row has the estimate for treatment rule learned using $\mathcal{U}$ and its respective value function, the second row shows the same for a rule estimated using $\mathcal{L}$ and its estimated value.}\label{Table: V IBD results}
\end{table}
\section{Discussion}\label{section: discussion}    

We have proposed an efficient and robust strategy for estimating optimal dynamic treatment rules and their value function, in a setting where patient outcomes are scarce. In particular, we developed a two step estimation procedure amenable to non-parametric imputation of the missing outcomes. This helped us establish $\sqrt n$-consistency and asymptotic normality for both the $Q$ function parameters $\bthetahat$ and the doubly robust value function estimator $\Vhat\subSSLDR$.  We additionally provided theoretical results which illustrate if and when the outcome-surrogates $\bW$ contribute towards efficiency gain in estimation of $\bthetahat\subSSL$ and $\Vhat\subSSLDR$. This lets us conclude that our procedure is always preferable to using the labeled data only: since estimation is robust to mis-specification of the imputation models, our approach is safe to use and will be at least as efficient as the supervised methods.

We focused on the 2-time point, binary action setting for simplicity but all our theoretical results and algorithms can be easily extended to a higher finite time horizon, and multiple actions with careful bookkeeping of notation. In practice, one would need to be careful with the variability of the IPW-value function which increases substantially with time. However, the SSL approach would come in handy to estimate propensity scores, providing an efficiency gain that would help stabilize the IPW in longer horizons. 

We are interested in extending this framework to handle missing at random (MAR) sampling mechanisms. In the EHR setting, it is feasible to sample a subset of the data completely at random in order to annotate the records. Hence, we argue the MCAR assumption is true by design in our context. However, the MAR context allows us to leverage different data sources for $\Lsc$ and $\Usc$. For example, we could use an annotated EHR data cohort and a large unlabeled registry data repository for our inference, ultimately making the policies and value estimation more efficient and robust. We believe this line of work has the potential to leverage massive observational cohorts, which will help to improve personalized clinical care for a wide range of diseases.

\newpage

\bibliography{references}
\newpage

\appendix
\counterwithin{figure}{section}
\counterwithin{table}{section}
\section{Simulation Results for Alternative Settings}\label{appendix: alt sims}
In this Section we provide additional results for data generating scenarios described in Section \ref{section: sims and application}. Tables \ref{tab:simtheta small n} and \ref{tab:simtheta small n} contain results for estimation of $Q$ function parameters for the EHR simulation setting for small and large sample sizes respectively. Table \ref{tab:simtheta} contains the complete parameter results for the continuous data generating setting for both small and large samples.

\begin{table}[ht]
\centering
\centerline{(a) $n=135$ and $N=1272$}
\scalebox{0.8}{
\begin{tabular}{@{}lccccccclcccccl@{}}
\toprule
 & \multicolumn{2}{c}{Supervised} &  & \multicolumn{11}{c}{Semi-Supervised} \\ \cmidrule(lr){2-3} \cmidrule(lr){5-15} 
 & \multicolumn{2}{c}{} &  & \multicolumn{5}{c}{Random Forests} &  & \multicolumn{5}{c}{Basis Expansion} \\ \cmidrule(l){5-9} \cmidrule(l){11-15} 
Parameter & Bias & ESE &  & Bias & ESE & ASE & CovP & RE &  & Bias & ESE & ASE & CovP & RE \\ \cmidrule(l){1-3} \cmidrule(l){5-9} \cmidrule(l){11-15} 
$\beta_{11}$=1.2 & 0.05 & 0.09 &  & 0.03 & 0.06 & 0.05 & 0.88 & 1.65 &  & 0.03 & 0.06 & 0.05 & 0.89 & 1.60 \\ 
  $\beta_{12}$=0 & 0.00 & 0.06 &  & 0.00 & 0.04 & 0.04 & 0.90 & 1.57 &  & 0.00 & 0.04 & 0.04 & 0.91 & 1.62 \\ 
  $\beta_{13}$=-0.4 & 0& 0.07 &  & -0.01 & 0.05 & 0.04 & 0.92 & 1.53 &  & 0& 0.05 & 0.05 & 0.93 & 1.56 \\ 
  $\beta_{14}$=-0.3 & 0.00 & 0.07 &  & -0.01 & 0.04 & 0.04 & 0.93 & 1.67 &  & 0& 0.04 & 0.04 & 0.93 & 1.64 \\ 
  $\beta_{15}$=0 & 0.00 & 0.08 &  & 0.00 & 0.04 & 0.04 & 0.93 & 1.69 &  & 0.00 & 0.04 & 0.04 & 0.92 & 1.69 \\ 
  $\beta_{16}$=0 & 0& 0.07 &  & 0.00 & 0.04 & 0.04 & 0.93 & 1.67 &  & 0.00 & 0.04 & 0.04 & 0.93 & 1.74 \\ 
  $\beta_{17}$=0 & 0.00 & 0.08 &  & 0.00 & 0.05 & 0.04 & 0.92 & 1.62 &  & 0.00 & 0.05 & 0.04 & 0.92 & 1.62 \\ 
  $\gamma_{11}$=0.1 & -0.01 & 0.14 &  & 0.00 & 0.09 & 0.08 & 0.91 & 1.55 &  & 0& 0.09 & 0.07 & 0.89 & 1.55 \\ 
  $\gamma_{12}$=0 & -0.01 & 0.09 &  & -0.01 & 0.06 & 0.05 & 0.92 & 1.53 &  & -0.01 & 0.06 & 0.06 & 0.93 & 1.51 \\ 
  $\gamma_{13}$=0 & 0& 0.08 &  & 0& 0.05 & 0.05 & 0.93 & 1.58 &  & 0 & 0.05 & 0.05 & 0.94 & 1.58 \\ 
  $\gamma_{14}$=0 & 0 & 0.08 &  & 0.00 & 0.05 & 0.05 & 0.93 & 1.58 &  & 0 & 0.05 & 0.05 & 0.93 & 1.58 \\ 
  $\gamma_{15}$=0 & 0.00 & 0.09 &  & 0.00 & 0.05 & 0.05 & 0.92 & 1.59 &  & 0 & 0.05 & 0.05 & 0.95 & 1.65 \\ 
  $\gamma_{16}$=-0.1 & 0 & 0.09 &  & 0 & 0.06 & 0.05 & 0.92 & 1.52 &  & 0 & 0.06 & 0.05 & 0.93 & 1.49 \\ 
  $\beta_{21}$=0.1 & 0.00 & 0.10 &  & -0.01 & 0.15 & 0.13 & 0.91 & 0.71 &  & 0 & 0.14 & 0.13 & 0.93 & 0.75 \\ 
  $\beta_{22}$=0.6 & 0 & 0.13 &  & 0.01 & 0.11 & 0.10 & 0.91 & 1.16 &  & 0 & 0.11 & 0.11 & 0.94 & 1.18 \\ 
  $\beta_{23}$=0 & 0.00 & 0.06 &  & 0.00 & 0.04 & 0.04 & 0.93 & 1.44 &  & 0.00 & 0.04 & 0.04 & 0.93 & 1.47 \\ 
  $\beta_{24}$=-0.2 & 0.00 & 0.06 &  & 0 & 0.05 & 0.04 & 0.89 & 1.16 &  & 0 & 0.05 & 0.05 & 0.93 & 1.20 \\ 
  $\beta_{25}$=-0.2 & 0.00 & 0.05 &  & 0 & 0.05 & 0.04 & 0.90 & 1.13 &  & 0 & 0.04 & 0.04 & 0.92 & 1.18 \\ 
  $\beta_{26}$=0 & 0.00 & 0.04 &  & 0.00 & 0.02 & 0.02 & 0.94 & 1.50 &  & 0.00 & 0.02 & 0.02 & 0.94 & 1.50 \\ 
  $\beta_{27}$=0 & 0.00 & 0.04 &  & 0.00 & 0.03 & 0.02 & 0.94 & 1.52 &  & 0.00 & 0.02 & 0.02 & 0.94 & 1.58 \\ 
  $\beta_{28}$=0 & 0.00 & 0.05 &  & 0.00 & 0.04 & 0.03 & 0.92 & 1.49 &  & 0.00 & 0.04 & 0.03 & 0.92 & 1.49 \\ 
  $\beta_{29}$=0 & 0 & 0.12 &  & 0.00 & 0.08 & 0.07 & 0.91 & 1.49 &  & 0.00 & 0.08 & 0.08 & 0.93 & 1.52 \\ 
  $\beta_{210}$=-0.2 & 0.00 & 0.11 &  & 0 & 0.07 & 0.07 & 0.94 & 1.54 &  & 0.00 & 0.07 & 0.07 & 0.94 & 1.57 \\ 
  $\beta_{211}$=-0.1 & 0.01 & 0.11 &  & 0.00 & 0.07 & 0.07 & 0.94 & 1.54 &  & 0.00 & 0.07 & 0.07 & 0.93 & 1.56 \\ 
  $\gamma_{21}$=0.1 & 0.01 & 0.16 &  & 0.01 & 0.11 & 0.10 & 0.92 & 1.47 &  & 0.01 & 0.11 & 0.10 & 0.94 & 1.51 \\ 
  $\gamma_{22}$=0 & 0.00 & 0.08 &  & 0.00 & 0.06 & 0.05 & 0.94 & 1.47 &  & 0.00 & 0.06 & 0.06 & 0.93 & 1.50 \\ 
  $\gamma_{23}$=0 & 0 & 0.08 &  & 0.00 & 0.06 & 0.05 & 0.94 & 1.45 &  & 0.00 & 0.05 & 0.05 & 0.94 & 1.48 \\ 
  $\gamma_{24}$=0 & 0 & 0.07 &  & 0.00 & 0.05 & 0.05 & 0.93 & 1.43 &  & 0.00 & 0.05 & 0.05 & 0.94 & 1.46 \\ 
  $\gamma_{25}$=0 & 0 & 0.07 &  & 0 & 0.05 & 0.05 & 0.94 & 1.48 &  & 0 & 0.05 & 0.05 & 0.94 & 1.48 \\ 
  $\gamma_{26}$=0 & 0 & 0.18 &  & 0 & 0.12 & 0.11 & 0.92 & 1.45 &  & 0 & 0.12 & 0.11 & 0.94 & 1.52 \\ 
  $\gamma_{27}$=-0.2 & -0.01 & 0.16 &  & -0.01 & 0.11 & 0.10 & 0.93 & 1.47 &  & -0.01 & 0.11 & 0.10 & 0.94 & 1.48 \\ 
  $\gamma_{28}$=-0.1 & -0.01 & 0.15 &  & -0.01 & 0.10 & 0.10 & 0.94 & 1.54 &  & -0.01 & 0.10 & 0.10 & 0.94 & 1.57 \\  \bottomrule
\end{tabular}}
\caption{Bias, empirical standard error (ESE) of the supervised and the SSL estimators with either random forest imputation or basis expansion imputation strategies for $\bthetabar$ when (a) $n=135$ and $N=1272$ under the EHR simulation setting. For the SSL estimators, we also obtain the average of the estimated standard errors (ASE) as well as the empirical coverage probabilities (CovP) of the 95\% confidence intervals.}
\label{tab:simtheta small n}
\end{table}

\begin{table}[ht]
\centering
\centerline{(b) $n=500$ and $N=10,000$}
\scalebox{0.8}{
\begin{tabular}{@{}lccccccclcccccl@{}}
\toprule
 & \multicolumn{2}{c}{Supervised} &  & \multicolumn{11}{c}{Semi-Supervised} \\ \cmidrule(lr){2-3} \cmidrule(lr){5-15} 
 & \multicolumn{2}{c}{} &  & \multicolumn{5}{c}{Random Forests} &  & \multicolumn{5}{c}{Basis Expansion} \\ \cmidrule(l){5-9} \cmidrule(l){11-15} 
Parameter & Bias & ESE &  & Bias & ESE & ASE & CovP & RE &  & Bias & ESE & ASE & CovP & RE \\ \cmidrule(l){1-3} \cmidrule(l){5-9} \cmidrule(l){11-15} 
$\beta_{11}$=1.2 & 0.01 & 0.05 &  & 0.00 & 0.02 & 0.02 & 0.91 & 2.09 &  & 0.00 & 0.02 & 0.02 & 0.92 & 2.00 \\ 
  $\beta_{12}$=0 & 0.00 & 0.03 &  & 0.00 & 0.01 & 0.01 & 0.91 & 2.07 &  & 0.00 & 0.01 & 0.01 & 0.92 & 2.07 \\ 
  $\beta_{13}$=-0.4 & 0.00 & 0.04 &  & 0 & 0.02 & 0.02 & 0.92 & 2.05 &  & 0 & 0.02 & 0.02 & 0.92 & 2.05 \\ 
  $\beta_{14}$=-0.3 & 0 & 0.04 &  & 0 & 0.02 & 0.01 & 0.92 & 2.06 &  & 0 & 0.02 & 0.02 & 0.92 & 2.06 \\ 
  $\beta_{15}$=0 & 0.00 & 0.04 &  & 0 & 0.02 & 0.02 & 0.94 & 2.18 &  & 0 & 0.02 & 0.02 & 0.94 & 2.06 \\ 
  $\beta_{16}$=0 & 0 & 0.04 &  & 0.00 & 0.02 & 0.02 & 0.94 & 2.18 &  & 0.00 & 0.02 & 0.02 & 0.94 & 2.18 \\ 
  $\beta_{17}$=0 & 0.00 & 0.04 &  & 0.00 & 0.02 & 0.02 & 0.93 & 2.06 &  & 0.00 & 0.02 & 0.02 & 0.94 & 2.06 \\ 
  $\gamma_{11}$=0.1 & 0 & 0.07 &  & 0 & 0.03 & 0.03 & 0.91 & 2.00 &  & 0 & 0.03 & 0.03 & 0.91 & 2.00 \\ 
  $\gamma_{12}$=0 & -0.01 & 0.05 &  & 0 & 0.02 & 0.02 & 0.90 & 2.00 &  & 0 & 0.02 & 0.02 & 0.89 & 2.00 \\ 
  $\gamma_{13}$=0 & 0.00 & 0.04 &  & 0.00 & 0.02 & 0.02 & 0.92 & 2.00 &  & 0.00 & 0.02 & 0.02 & 0.91 & 1.90 \\ 
  $\gamma_{14}$=0 & 0 & 0.04 &  & 0.00 & 0.02 & 0.02 & 0.94 & 2.00 &  & 0.00 & 0.02 & 0.02 & 0.94 & 1.90 \\ 
  $\gamma_{15}$=0 & 0.00 & 0.04 &  & 0.00 & 0.02 & 0.02 & 0.94 & 2.16 &  & 0.00 & 0.02 & 0.02 & 0.94 & 2.05 \\ 
  $\gamma_{16}$=-0.1 & 0 & 0.04 &  & 0 & 0.02 & 0.02 & 0.93 & 2.05 &  & 0 & 0.02 & 0.02 & 0.92 & 1.95 \\ 
  $\beta_{21}$=0.1 & 0.00 & 0.05 &  & 0.00 & 0.04 & 0.04 & 0.95 & 1.16 &  & 0.00 & 0.04 & 0.05 & 0.96 & 1.13 \\ 
  $\beta_{22}$=0.6 & 0 & 0.07 &  & 0 & 0.04 & 0.04 & 0.95 & 1.74 &  & 0 & 0.04 & 0.04 & 0.96 & 1.69 \\ 
  $\beta_{23}$=0 & 0.00 & 0.03 &  & 0.00 & 0.01 & 0.01 & 0.94 & 1.87 &  & 0.00 & 0.01 & 0.01 & 0.94 & 1.87 \\ 
  $\beta_{24}$=-0.2 & 0.00 & 0.03 &  & 0.00 & 0.02 & 0.02 & 0.94 & 1.71 &  & 0.00 & 0.02 & 0.02 & 0.95 & 1.71 \\ 
  $\beta_{25}$=-0.2 & 0.00 & 0.02 &  & 0 & 0.01 & 0.01 & 0.94 & 1.60 &  & 0 & 0.01 & 0.01 & 0.95 & 1.60 \\ 
  $\beta_{26}$=0 & 0.00 & 0.02 &  & 0.00 & 0.01 & 0.01 & 0.92 & 1.90 &  & 0.00 & 0.01 & 0.01 & 0.93 & 1.90 \\ 
  $\beta_{27}$=0 & 0.00 & 0.02 &  & 0.00 & 0.01 & 0.01 & 0.94 & 1.89 &  & 0.00 & 0.01 & 0.01 & 0.94 & 1.89 \\ 
  $\beta_{28}$=0 & 0.00 & 0.03 &  & 0.00 & 0.01 & 0.01 & 0.94 & 1.92 &  & 0.00 & 0.01 & 0.01 & 0.94 & 1.92 \\ 
  $\beta_{29}$=0 & 0.00 & 0.06 &  & 0.00 & 0.03 & 0.03 & 0.92 & 1.94 &  & 0.00 & 0.03 & 0.03 & 0.93 & 1.88 \\ 
  $\beta_{210}$=-0.2 & 0 & 0.05 &  & 0 & 0.03 & 0.03 & 0.94 & 2.00 &  & 0.00 & 0.03 & 0.03 & 0.94 & 2.00 \\ 
  $\beta_{211}$=-0.1 & 0.00 & 0.06 &  & 0.00 & 0.03 & 0.03 & 0.94 & 2.00 &  & 0.00 & 0.03 & 0.03 & 0.94 & 2.00 \\ 
  $\gamma_{21}$=0.1 & 0 & 0.08 &  & 0.00 & 0.04 & 0.04 & 0.94 & 1.98 &  & 0.00 & 0.04 & 0.04 & 0.94 & 1.98 \\ 
  $\gamma_{22}$=0 & 0.00 & 0.04 &  & 0.00 & 0.02 & 0.02 & 0.93 & 1.95 &  & 0.00 & 0.02 & 0.02 & 0.93 & 1.86 \\ 
  $\gamma_{23}$=0 & 0 & 0.04 &  & 0 & 0.02 & 0.02 & 0.94 & 1.81 &  & 0 & 0.02 & 0.02 & 0.93 & 1.90 \\ 
  $\gamma_{24}$=0 & 0 & 0.03 &  & 0.00 & 0.02 & 0.02 & 0.94 & 1.83 &  & 0.00 & 0.02 & 0.02 & 0.95 & 1.83 \\ 
  $\gamma_{25}$=0 & 0 & 0.04 &  & 0 & 0.02 & 0.02 & 0.94 & 1.84 &  & 0 & 0.02 & 0.02 & 0.94 & 1.84 \\ 
  $\gamma_{26}$=0 & -0.01 & 0.09 &  & 0 & 0.04 & 0.04 & 0.93 & 2.00 &  & 0 & 0.04 & 0.04 & 0.93 & 2.00 \\ 
  $\gamma_{27}$=-0.2 & 0.01 & 0.08 &  & 0.00 & 0.04 & 0.04 & 0.94 & 1.98 &  & 0.00 & 0.04 & 0.04 & 0.94 & 1.98 \\ 
  $\gamma_{28}$=-0.1 & 0.00 & 0.08 &  & 0.00 & 0.04 & 0.04 & 0.94 & 1.95 &  & 0.00 & 0.04 & 0.04 & 0.94 & 1.95 \\  \bottomrule
\end{tabular}}
\caption{Bias, empirical standard error (ESE) of the supervised and the SSL estimators with either random forest imputation or basis expansion imputation strategies for $\bthetabar$ when (b) $n=500$ and $N=10,000$ under the EHR simulation setting. For the SSL estimators, we also obtain the average of the estimated standard errors (ASE) as well as the empirical coverage probabilities (CovP) of the 95\% confidence intervals.}
\label{tab:simtheta large n}
\end{table}

\begin{table}[ht]
\centering
\centerline{(a) $n=135$ and $N=1272$}
\scalebox{0.8}{
\begin{tabular}{@{}lccccccclcccccl@{}}
\toprule
 & \multicolumn{2}{c}{Supervised} &  & \multicolumn{11}{c}{Semi-Supervised} \\ \cmidrule(lr){2-3} \cmidrule(lr){5-15} 
 & \multicolumn{2}{c}{} &  & \multicolumn{5}{c}{Random Forests} &  & \multicolumn{5}{c}{Basis Expansion} \\ \cmidrule(l){5-9} \cmidrule(l){11-15} 
Parameter & Bias & ESE &  & Bias & ESE & ASE & CovP & RE &  & Bias & ESE & ASE & CovP & RE \\ \cmidrule(l){1-3} \cmidrule(l){5-9} \cmidrule(l){11-15} 
$\beta_{11}$=4.9 & 0.04 & 0.34 &  & 0.01 & 0.22 & 0.18 & 0.91 & 1.58 &  & 0.01 & 0.20 & 0.17 & 0.90 & 1.70 \\ 
  $\beta_{12}$=1.1 & -0.03 & 0.42 &  & 0.00 & 0.26 & 0.24 & 0.94 & 1.61 &  & 0.01 & 0.25 & 0.23 & 0.92 & 1.68 \\ 
  $\gamma_{11}$=1.4 & -0.03 & 0.41 &  & 0.00 & 0.26 & 0.24 & 0.93 & 1.57 &  & 0.00 & 0.24 & 0.23 & 0.93 & 1.68 \\ 
  $\gamma_{12}$=-2.6 & 0.04 & 0.58 &  & -0.01 & 0.36 & 0.34 & 0.94 & 1.61 &  & -0.02 & 0.35 & 0.31 & 0.90 & 1.69 \\ 
  $\beta_{21}$=0.1 & 0.00 & 0.10 &  & 0.00 & 0.13 & 0.12 & 0.94 & 0.82 &  & 0.00 & 0.16 & 0.17 & 0.94 & 0.64 \\ 
  $\beta_{22}$=3 & 0.00 & 0.33 &  & 0.00 & 0.24 & 0.23 & 0.93 & 1.39 &  & 0 & 0.26 & 0.25 & 0.93 & 1.30 \\ 
  $\beta_{23}$=0 & -0.01 & 0.34 &  & -0.01 & 0.24 & 0.22 & 0.93 & 1.43 &  & -0.01 & 0.24 & 0.24 & 0.94 & 1.39 \\ 
  $\beta_{24}$=0.1 & 0 & 0.43 &  & 0 & 0.29 & 0.28 & 0.94 & 1.49 &  & 0 & 0.30 & 0.29 & 0.94 & 1.46 \\ 
  $\beta_{25}$=-0.5 & 0.01 & 0.15 &  & 0 & 0.09 & 0.09 & 0.93 & 1.62 &  & 0.00 & 0.09 & 0.09 & 0.93 & 1.71 \\ 
  $\beta_{26}$=-0.4 & 0.03 & 0.48 &  & 0.01 & 0.37 & 0.35 & 0.93 & 1.29 &  & 0.01 & 0.41 & 0.40 & 0.94 & 1.16 \\ 
  $\gamma_{21}$=0.8 & 0.00 & 0.34 &  & 0.01 & 0.21 & 0.20 & 0.93 & 1.61 &  & 0.00 & 0.20 & 0.19 & 0.94 & 1.71 \\ 
  $\gamma_{22}$=0.2 & -0.02 & 0.45 &  & -0.01 & 0.28 & 0.28 & 0.95 & 1.60 &  & -0.01 & 0.27 & 0.26 & 0.94 & 1.70 \\ 
  $\gamma_{23}$=0.5 & 0 & 0.18 &  & 0.01 & 0.11 & 0.11 & 0.94 & 1.59 &  & 0.00 & 0.11 & 0.11 & 0.94 & 1.68 \\ \bottomrule
\end{tabular}}\vspace{.1in}
\centerline{(b) $n=500$ and $N=10,000$}
\scalebox{0.8}{
\begin{tabular}{@{}lccccccclcccccl@{}}
\toprule
 & \multicolumn{2}{c}{Supervised} &  & \multicolumn{11}{c}{Semi-Supervised} \\ \cmidrule(lr){2-3} \cmidrule(lr){5-15} 
 & \multicolumn{2}{c}{} &  & \multicolumn{5}{c}{Random Forests} &  & \multicolumn{5}{c}{Basis Expansion} \\ \cmidrule(l){5-9} \cmidrule(l){11-15} 
Parameter & Bias & ESE &  & Bias & ESE & ASE & CovP & RE &  & Bias & ESE & ASE & CovP & RE \\ \cmidrule(l){1-3} \cmidrule(l){5-9} \cmidrule(l){11-15} 
$\beta_{11}$=4.9 & 0.00 & 0.17 &  & 0 & 0.10 & 0.09 & 0.91 & 1.72 &  & 0 & 0.10 & 0.08 & 0.92 & 1.79 \\ 
  $\beta_{12}$=1.1 & 0 & 0.22 &  & 0.00 & 0.12 & 0.11 & 0.93 & 1.80 &  & 0.00 & 0.12 & 0.11 & 0.93 & 1.86 \\ 
  $\gamma_{11}$=1.4 & 0.01 & 0.22 &  & 0.01 & 0.12 & 0.11 & 0.92 & 1.76 &  & 0.01 & 0.12 & 0.11 & 0.92 & 1.80 \\ 
  $\gamma_{12}$=-2.6 & 0 & 0.29 &  & 0 & 0.17 & 0.16 & 0.93 & 1.73 &  & -0.01 & 0.16 & 0.15 & 0.93 & 1.80 \\ 
  $\beta_{21}$=0.1 & -0.01 & 0.05 &  & 0 & 0.05 & 0.05 & 0.94 & 1.06 &  & 0 & 0.07 & 0.08 & 0.95 & 0.74 \\ 
  $\beta_{22}$=3 & 0.00 & 0.17 &  & 0.00 & 0.11 & 0.10 & 0.93 & 1.60 &  & 0.00 & 0.12 & 0.11 & 0.94 & 1.45 \\ 
  $\beta_{23}$=0 & 0.00 & 0.17 &  & 0.00 & 0.10 & 0.10 & 0.95 & 1.66 &  & 0.00 & 0.11 & 0.11 & 0.95 & 1.54 \\ 
  $\beta_{24}$=0.1 & 0.02 & 0.23 &  & 0.01 & 0.13 & 0.12 & 0.94 & 1.77 &  & 0.01 & 0.14 & 0.13 & 0.94 & 1.68 \\ 
  $\beta_{25}$=-0.5 & 0.00 & 0.07 &  & 0.00 & 0.04 & 0.04 & 0.93 & 1.74 &  & 0.00 & 0.04 & 0.04 & 0.94 & 1.78 \\ 
  $\beta_{26}$=-0.4 & -0.01 & 0.25 &  & -0.01 & 0.17 & 0.15 & 0.93 & 1.51 &  & -0.01 & 0.19 & 0.18 & 0.94 & 1.31 \\ 
  $\gamma_{21}$=0.8 & 0.00 & 0.17 &  & 0.00 & 0.10 & 0.09 & 0.93 & 1.80 &  & 0.00 & 0.09 & 0.09 & 0.93 & 1.86 \\ 
  $\gamma_{22}$=0.2 & -0.01 & 0.23 &  & 0 & 0.13 & 0.12 & 0.93 & 1.81 &  & 0 & 0.13 & 0.12 & 0.94 & 1.83 \\ 
  $\gamma_{23}$=0.5 & 0.00 & 0.09 &  & 0.00 & 0.05 & 0.05 & 0.94 & 1.78 &  & 0.00 & 0.05 & 0.05 & 0.95 & 1.81 \\ \bottomrule
\end{tabular}}
\caption{Bias, empirical standard error (ESE) of the supervised and the SSL estimators with either random forest imputation or basis expansion imputation strategies for $\bthetabar$ when (a) $n=135$ and $N=1272$ and (b) $n=500$ and $N=10,000$ under the continuous outcome simulation setting. For the SSL estimators, we also obtain the average of the estimated standard errors (ASE) as well as the empirical coverage probabilities (CovP) of the 95\% confidence intervals.}
\label{tab:simtheta}
\end{table}

\begin{figure}[ht]
	\centering
	\includegraphics[width=1\textwidth]{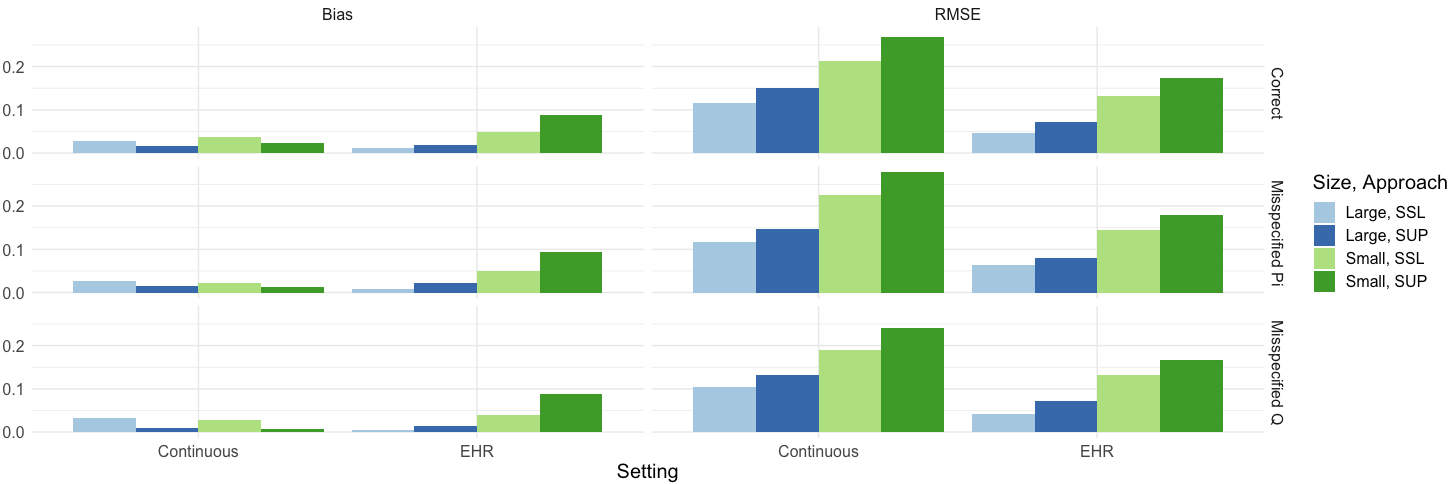}
	\caption{Monte Carlo estimates for doubly-robust value function estimation: $\Vhat\subSSLDR$, $\Vhat\subSUPDR$ under continuous, and EHR settings. Columns show bias and RMSE respectively, rows show different mis-specification scenarios. Results are shown for the large ($N=10,000$, $n=500$) and small data samples ($N=1,272$, $n=135$) for the continuous setting over 1,000 simulated datasets.
	}
	\label{fig_misspec_Qpi_sin} 
\end{figure}

\newpage

\section{Proof of Main Results}\label{sec:proof_main_results}

\subsection{Semi-supervised \texorpdfstring{$Q$}{Lg}-learning asymptotics}\label{section: Q learning result proofs}

In this section we first show the proofs for the theoretical results on the generalized semi-supervised $Q$-learning shown in section \ref{theory}. 

\subsubsection{Proofs for theoretical results for \texorpdfstring{$Q$}{Lg}-learning in section \ref{theory}}\label{unbiased_prfs} 
We first define $\btheta_{2-}\equiv(\bbeta_{22}\trans,\bgamma_2\trans)\trans$, and $\hat\Delta_s\supnk(\bUvec)\equiv\mhat_s\supnk(\bUvec)-m_s(\bUvec)$, $s\in\{2,3,22,23\}$, and note that from Assumptions \ref{assumption: covariates}, \ref{assumption: Q imputation} \& \ref{assumption SS linear model} it follows that:

\begin{equation}\label{deltas bound}
\begin{split}
\sum\limits_{k=1}^{K}& \sup\limits_{\bUvec}\left|\hat\Delta_{2t}\supnk(\bUvec)\right|=o_\Pbb(1)\text{ for }t=2,3,\\
\sum\limits_{k=1}^{K}& \sup\limits_{\bXvec,\bUvec}\|\bXvec\hat\Delta_2\supnk(\bUvec)\|=o_\Pbb(1),\\
\sum\limits_{k=1}^{K}& \sup\limits_{,\bX_2\bUvec}\|\bX_2\hat\Delta_3\supnk(\bUvec)\|=o_\Pbb(1),
\end{split}
\end{equation}

Next we remind that, to ensure the validity of the SSL algorithm from the refitted imputation model, the final imputation models for $\{Y_t, Y_{2t}, t=2,3\}$, denoted by $\{\mubar_t(\bUvec), \mubar_{2t}, t=2,3\}$, need to satisfy the constraints shown in Section \ref{sec: ssQ}:
\begin{equation}\label{impute_constraint}
\begin{alignedat}{3}
 \Ebb\left[\bXvec\{Y_2 - \mubar_2(\bUvec)\}\right] = \bzero, &\quad&
\Ebb\left\{Y_2^2 - \mubar_{22}(\bUvec)\right\} =&0 &, \\
\Ebb\left[\bX_2\{Y_3 - \mubar_3(\bUvec)\}\right] = \bzero, &\quad&
\Ebb\left\{Y_2Y_3 - \mubar_{23}(\bUvec)\right\} =&0 &.
\end{alignedat}
\end{equation}
where $\bXvec = (1,\bX_1\trans,\bX_2\trans)\trans$.
\begin{proof}[Proof of Theorem \ref{theorem: unbiased theta2}]

Recall the estimating equation for stage 2 regression in Section \ref{section: SSQL} is 
    \begin{align*}
    \begin{split}
        \Pbb_N&
        \begin{bmatrix}
        \muhat_{23}(\bUvec)-\hat\beta_{21}
        \muhat_{22}(\bUvec)-\muhat_2(\bUvec)\bX_2\trans\bthetahat_{2-}\\
        \bX_2\left\{\muhat_3(\bUvec)-\hat\beta_{21}
        \muhat_2(\bUvec)-\bX_2\trans\bthetahat_{2-}\right\}
        \end{bmatrix}
        =\textbf{0}.
    \end{split}
    \end{align*}
Centering the above at $\bthetabar_2$ we get
\begin{align}\label{stage2_expansion}
\Pbb_N
\begin{bmatrix}
\muhat_{22}(\bUvec),\muhat_2(\bUvec)\bX_2\trans\\
\bX_2
\muhat_2(\bUvec),\bX_2\bX_2\trans
\end{bmatrix}
(\bthetahat_2-\bthetabar_2)
=
\Pbb_N&
\begin{bmatrix}
\muhat_{23}(\bUvec)-\bar\beta_{21}
        \muhat_{22}(\bUvec)-\muhat_2(\bUvec)\bX_2\trans\bthetabar_{2-}\\
\bX_2\left\{\muhat_3(\bUvec)-\bar\beta_{21}
\muhat_2(\bUvec)-\bX_2\trans\bthetabar_{2-}\right\}
\end{bmatrix}.
\end{align}

Define
\begin{align*}
\mathcal{R}_{\mathcal{U}}=&
\Pbb_N
\begin{bmatrix}
\mubar_{23}(\bUvec)-
\bar\beta_{21}\mubar_{22}(\bUvec)-\mubar_2(\bUvec)\bX_2\trans\bthetabar_{2-}\\
\bX_2\left\{
\mubar_3(\bUvec)-\bar\beta_{21}\mubar_2(\bUvec)-\bX_2\trans\bthetabar_{2-}\right\}
\end{bmatrix},
\\
\hat{\mathcal{R}}_\mathbb{S}^{(K)}
=&
\Pbb_N
\begin{bmatrix}
\left\{\muhat_{23}(\bUvec)-\mubar_{23}(\bUvec)\right\}
-\bar\beta_{21}\left\{
\muhat_{22}(\bUvec)-\mubar_{22}(\bUvec)\right\}-\left\{\muhat_2(\bUvec)-\mubar_2(\bUvec)\right\}\bX_2\trans\bthetabar_{2-}\\
\bX_2\left\{\muhat_3(\bUvec)-\mubar_3(\bUvec)\right\}-\bar\beta_{21}\bX_2\left\{
\muhat_2(\bUvec)-\mubar_2(\bUvec)\right\}
\end{bmatrix},\\
\Gamma_{\mathcal{U}}
=&
\Pbb_N
\begin{bmatrix}
\mubar_{22}(\bUvec)&\mubar_2(\bUvec)\bX_2\trans\\
\mubar_2(\bUvec)\bX_2&\bX_2\bX_2\trans
\end{bmatrix},\\
\hat\Gamma_\mathbb{S}^{(K)}
=&
\Pbb_N
\begin{bmatrix}
\muhat_{22}(\bUvec)-\mubar_{22}(\bUvec)&\left\{\muhat_2(\bUvec)-\mubar_2(\bUvec)\right\}\bX_2\trans\\
\left\{\muhat_2(\bUvec)-\mubar_2(\bUvec)\right\}\bX_2&\bzero\\
\end{bmatrix},
\end{align*}

with these we can re-write equation \eqref{stage2_expansion} as
$\left(\Gamma_{\mathcal{U}}+\hat\Gamma_\mathbb{S}^{(K)}\right)
(\bthetahat_2-\bthetabar_2)
=
\mathcal{R}_{\mathcal{U}}+
\hat{\mathcal{R}}_\mathbb{S}^{(K)}$. We next deal with each term.\\

(I) We first consider $\hat{\mathcal{R}}_\mathbb{S}^{(K)}$, let 
\begin{align*}
\hat{\mathcal{S}}^{\eta}_\mathbb{S}=&
\Pbb_N
\begin{bmatrix}
\left(\etahat_{23}-\eta_{23}\right)
-\bar\beta_{21}(
\etahat_{22}-\eta_{22})-
(\bEtahat_2-\bEta_2)\trans\bX_2\bX_2\trans\bthetabar_{2-}\\
\bX_2\bX_2\trans
\left\{\left(\bEtahat_3-\bEta_3\right)-\bar\beta_{21}
\left(\bEtahat_2-\bEta_2\right)\right\}
\end{bmatrix}
\\
\hat{\mathcal{S}}^{(K)}_\mathbb{S}=&
\frac{1}{K}\sum_{k=1}^K
\Pbb_N
\begin{bmatrix}
\hat\Delta_{23}\supnk(\bUvec)
-\bar\beta_{21}\hat\Delta_{22}\supnk(\bUvec)-\hat\Delta_2\supnk(\bUvec)\bX_2\trans\bthetabar_{2-}\\
\bX_2\left\{\hat\Delta_3\supnk(\bUvec)-\bar\beta_{21}
\hat\Delta_2\supnk(\bUvec)\right\}
\end{bmatrix}
\\
\bar{\mathcal{S}}_k=&
\mathbb{E}_\mathcal{L}
\begin{bmatrix}
\hat\Delta_{23}\supnk(\bUvec)
-\bar\beta_{21}\hat\Delta_{22}\supnk(\bUvec)-\hat\Delta_2\supnk(\bUvec)\bX_2\trans\bthetabar_{2-}\\
\bX_2\left\{\hat\Delta_3\supnk(\bUvec)-\bar\beta_{21}
\hat\Delta_2\supnk(\bUvec)\right\}
\end{bmatrix}
\:\text{ for } k \in\{1,\dots,K\}.
\end{align*}
From \eqref{eta_EE_Q1} it follows that 
$\hat{\mathcal{R}}_\mathbb{S}^{(K)}=\hat{\mathcal{S}}^{\eta}_\mathbb{S}+\hat{\mathcal{S}}^{(K)}_\mathbb{S}.$
Next using \eqref{deltas bound}, Assumption \ref{assumption: Q imputation}, and Lemma \ref{lemm_chakrabortty} it follows that $\hat{\mathcal{S}}^{(K)}_\mathbb{S}=
\frac{1}{K}\sum_k\bar{\mathcal{S}}_k+O_\Pbb\left(N^{-\frac{1}{2}}\right)$, which lets us write $\hat{\mathcal{R}}_\mathbb{S}^{(K)}=\hat{\mathcal{S}}^{\eta}_\mathbb{S}+\frac{1}{K}\sum_k\bar{\mathcal{S}}_k+O_\Pbb\left(N^{-\frac{1}{2}}\right)$.\\ 
Now consider $\hat{\mathcal{S}}^{\eta}_\mathbb{S}$, note that by the central limit theorem (CLT) $\Pbb_n\bX_2\bX_2=\Ebb\bX_2\bX_2+O_\Pbb\left(n^{-\frac{1}{2}}\right)$. Thus using this, Slutsky's theorem and Assumption \ref{assumption: covariates}
\[
(\Pbb_n\bX_2\bX_2)^{-1}(\Pbb_N\bX_2\bX_2)=I+O_\Pbb\left(n^{-\frac{1}{2}}\right),
\]
then using \eqref{impute_constraint}, \eqref{eta_EE_Q1} and Assumption \ref{assumption: Q imputation} we can write
\begin{align*}
&\Pbb_N
\left\{(\bEtahat_2-\bEta_2)\trans\bX_2\bX_2\trans\bthetabar_{2-}\right\}\\
=&
\left[
\left(\Pbb_n\bX_2\bX_2\trans\right)^{-1}\frac{1}{n}\sum\limits_{k=1}^{K}\sum\limits_{i\in\mathcal{I}_k}\bX_{2i}\left\{Y_{2i}-\mubar_2(\bUvec_i)+m_2(\bUvec_i)-m_2\supnk(\bUvec_i)\right\}\right]
\trans
\Pbb_N(\bX_2\bX_2\trans)\btheta_{2-}\\
=&
\left[\Pbb_n\bX_2\trans\left\{Y_2-\mubar_2(\bUvec)\right\}+\frac{1}{n}\sum\limits_{k=1}^{K}\sum\limits_{i\in\mathcal{I}_k}\bX_{2i}\trans\hat\Delta_2\supnk(\bUvec_i)\right]
\left(\Pbb_n\bX_2\bX_2\trans\right)^{-1}\Pbb_N(\bX_2\bX_2\trans)\btheta_{2-}\\
=&
\Pbb_n\bX_2\btheta_{2-}\trans\left\{Y_2-\mubar_2(\bUvec)\right\}+\frac{1}{n}\sum\limits_{k=1}^{K}\sum\limits_{i\in\mathcal{I}_k}\bX_{2i}\trans\btheta_{2-}\hat\Delta_2\supnk(\bUvec_i)\\
+&
O_\Pbb\left(n^{-\frac{1}{2}}\right)\left[\Pbb_n\bX_2\btheta_{2-}\trans\left\{Y_2-\mubar_2(\bUvec)\right\}+\frac{1}{n}\sum\limits_{k=1}^{K}\sum\limits_{i\in\mathcal{I}_k}\bX_{2i}\trans\btheta_{2-}\hat\Delta_2\supnk(\bUvec_i)\right]\\
=&
\Pbb_n\bX_2\btheta_{2-}\trans\left\{Y_2-\mubar_2(\bUvec)\right\}+\frac{1}{n}\sum\limits_{k=1}^{K}\sum\limits_{i\in\mathcal{I}_k}\bX_{2i}\trans\btheta_{2-}\hat\Delta_2\supnk(\bUvec_i)+O_\Pbb\left(n^{-1}\right)+O_\Pbb\left(n^{-\frac{1}{2}}\right)o_\Pbb(1).
\end{align*} 
Analogous derivations for all terms in $\hat{\mathcal{S}}^{\eta}_\mathbb{S}$ gives us
\[
\hat{\mathcal{S}}^{\eta}_\mathbb{S}=\mathbb{T}_{\mathcal{L}}-\mathbb{T}_\mathcal{L}^{(K)}+O_\Pbb\left(n^{-1}\right)+O_\Pbb\left(n^{-\frac{1}{2}}\right)o_\Pbb(1),
\]

where

\begin{align*}
\mathbb{T}_{\mathcal{L}}=&
\Pbb_n
\begin{bmatrix}
\left\{Y_2Y_3-\mubar_{23}(\bUvec)\right\}-\bar\beta_{21}
\left\{Y_2^2-\mubar_{22}(\bUvec)\right\}-
\left\{Y_2-\mubar_2(\bUvec)\right\}\bX_2\trans\bthetabar_{2-}\\
\bX_2\left\{Y_3-\mubar_3(\bUvec)\right\}-\bar\beta_{21}
\bX_2\left\{Y_2-\mubar_2(\bUvec)\right\}
\end{bmatrix},
\\
\mathbb{T}_\mathcal{L}^{(K)}=&
\frac{1}{n}\sum_{k=1}^{K}\sum_{i\in\mathcal{I}_k}
\begin{bmatrix}
\hat\Delta_{23}\supnk(\bUvec_i)-\bar\beta_{21}
\hat\Delta_{22}\supnk(\bUvec_i)-
\hat\Delta_2\supnk(\bUvec_i)\bX_{2i}\trans\bthetabar_{2-}\\
\bX_{2i}\left\{\hat\Delta_3\supnk(\bUvec_i)-\bar\beta_{21}
\hat\Delta_2\supnk(\bUvec_i)
\right\}
\end{bmatrix}.
\end{align*}


From the above it follows that $\hat{\mathcal{R}}_\mathbb{S}^{(K)}=\mathbb{T}_\mathcal{L}-\mathbb{T}_\mathcal{L}^{(K)}+\frac{1}{K}\sum_k\bar{\mathcal{S}}_k+O_\Pbb\left(n^{-1}\right)+O_\Pbb\left(n^{-\frac{1}{2}}\right)o_\Pbb(1)$. Next by Assumption \ref{assumption: Q imputation} and using Lemma \ref{lemm_deltas} with $\hat C_{n,N}=1$, and setting functions $\hat l_n(\cdot)$, $\hat \pi_n(\cdot)$ to be the constant 1, and $f(\bX_2)=\bX_2$ to be the identity function, we have $\sqrt n\left(\mathbb{T}^{(K)}_\mathcal{L}-\frac{1}{K}\sum_k\bar{\mathcal{S}}_k\right)=O_\Pbb\left(c_{n^-_K}\right)$. Therefore $\hat{\mathcal{R}}_\mathbb{S}^{(K)}=\mathbb{T}_{\mathcal{L}}+O_\Pbb\left(n^{-\frac{1}{2}}c_{n^-_K}\right)$.
 
(II) Now we consider $\mathcal{R}_{\mathcal{U}}$, from the CLT, assuming working model \eqref{linear_Qs}, as constraints \eqref{impute_constraint} are satisfied it follows that
\[
\mathcal{R}_{\mathcal{U}}=\mathbb{E}
\begin{bmatrix}
\mubar_{23}(\bUvec)-\bar\beta_{21}\mubar_{22}(\bUvec)-\mubar_2(\bUvec)\bX_2\trans\bthetabar_{2-}\\
\bX_2\{\mubar_3(\bUvec)-\bar\beta_{21}\mubar_2(\bUvec)-\bX_2\trans\bthetabar_{2-}\}
\end{bmatrix}
+O_\Pbb\left(N^{-\frac{1}{2}}\right)=\pmb 1O_\Pbb\left(N^{-\frac{1}{2}}\right).
\]
(III) Next we focus on $\hat\Gamma_\mathbb{S}^{(K)}$, we use a similar expansion to (I) and define 
\begin{align*}
\hat{\mathcal{F}}_\mathbb{S}^{\eta}=&
\begin{bmatrix}
\etahat_{22}-\eta_{22}&(\etahat_2-\eta_2)\bX_2\trans\\
\left(\etahat_2-\eta_2\right)
\bX_2&\bzero\\
\end{bmatrix},\\
\hat{\mathcal{F}}_\mathbb{S}^{(K)}=&
\frac{1}{K}\sum_{k=1}^K\Pbb_N
\begin{bmatrix}
\hat\Delta_{22}\supnk(\bUvec)&\hat\Delta_2\supnk(\bUvec)\bX_2\trans&\\
\hat\Delta_2\supnk(\bUvec)\bX_2&\bzero
\end{bmatrix},\\
\bar{\mathcal{F}}_k=&
\mathbb{E}_\mathcal{L}
\begin{bmatrix}
\hat\Delta_{22}\supnk(\bUvec)&\hat\Delta_2\supnk(\bUvec)\bX_2\trans&\\
\hat\Delta_2\supnk(\bUvec)\bX_2&\bzero
\end{bmatrix}
\:\forall k\in\{1,\dots,K\},
\end{align*}
We argue as in (I), that from \eqref{eta_EE_Q1} it follows that 
$\hat{\Gamma}^{(K)}_\mathbb{S}=\hat{\mathcal{F}}^{\eta}_\mathbb{S}+\hat{\mathcal{F}}^{(K)}_\mathbb{S}.$ Using \eqref{deltas bound}, Assumptions \ref{assumption: Q imputation} and Lemma \ref{lemm_chakrabortty} $\hat{\mathcal{F}}^{(K)}_\mathbb{S}-\frac{1}{K}\sum_k\bar{\mathcal{F}}_k=O_\Pbb\left(N^{-\frac{1}{2}}\right)$, therefore $\hat{\Gamma}^{(K)}_\mathbb{S}=\hat{\mathcal{F}}^{\eta}_\mathbb{S}+\frac{1}{K}\sum_k\bar{\mathcal{F}}_k+O_\Pbb\left(N^{-\frac{1}{2}}\right)$. Next we follow the same decomposition for $\hat{\mathcal{F}}^{\eta}_\mathbb{S}$ as we did in (I) for $\hat{\mathcal{S}}^{\eta}_\mathbb{S}$, it follows that
\begin{align*}
\hat{\Gamma}^{(K)}_\mathbb{S}
=&
\Pbb_n
\begin{bmatrix}
Y_2^2-\mubar_{22}(\bUvec)&\left\{Y_2-\mubar_2(\bUvec)\right\}\bX_2\trans\\
\left\{Y_2-\mubar_2(\bUvec)\right\}\bX_2&\bzero
\end{bmatrix}\\
-&
\frac{1}{n}\sum_{k=1}^{K}\sum_{i\in\mathcal{I}_k}
\begin{bmatrix}
\hat\Delta_{22}\supnk(\bUvec)&\hat\Delta_2\supnk(\bUvec)\bX_2\trans&\\
\hat\Delta_2\supnk(\bUvec)\bX_2&\bzero
\end{bmatrix}
+
\frac{1}{K}\sum_k\bar{\mathcal{F}}_k
+O_\Pbb\left(n^{-1}\right)+O_\Pbb\left(n^{-\frac{1}{2}}\right)o_\Pbb(1).
\end{align*}
The first term in the right hand side is $O_\Pbb\left(n^{-\frac{1}{2}}\right)$ by he CLT, the next two terms together are $O_\Pbb\left(n^{-\frac{1}{2}}c_{n^-_K}\right)$ by Lemma \ref{lemm_deltas}, thus $\hat{\Gamma}^{(K)}_\mathbb{S}=O_\Pbb\left(n^{-\frac{1}{2}}c_{n^-_K}\right).$\\

(IV) Finally we consider $\Gamma_{\mathcal{U}}$. By central limit theorem and \eqref{impute_constraint} it follows that
\begin{align*}
\Gamma_{\mathcal{U}}
=
\mathbb{E}
\begin{bmatrix}
\mubar_{22}(\bUvec)&\mubar_2(\bUvec)\bX_2\trans\\
\mubar_2(\bUvec)\bX_2&\bX_2\bX_2\trans\\
\end{bmatrix}
+O_\Pbb\left(N^{-\frac{1}{2}}\right)=\mathbb{E}[\bXcheck_2\bXcheck_2\trans]+O_\Pbb\left(N^{-\frac{1}{2}}\right).
\end{align*}
From (I)-(IV) we can write \eqref{stage2_expansion} as $(\bthetahat_2-\bthetabar_2)=\mathbb{E}\left[\bXcheck_2\bXcheck_2\trans\right]^{-1}\mathbb{T}_{\mathcal{L}}+O_\Pbb\left(n^{-\frac{1}{2}}c_{n^-_K}\right)$, it follows that

\begin{align*}
&\sqrt n
(\bthetahat_2-\bthetabar_2)\\
=&
\mathbb{E}\left[\bXcheck_2\bXcheck_2\trans\right]^{-1}\frac{1}{\sqrt n}\sum_{i=1}^n
\begin{bmatrix}
\left\{Y_{2i}Y_{3i}-\mubar_{23}(\bUvec_i)\right\}-\bar\beta_{21}
\left\{Y_{2i}^2-\mubar_{22}(\bUvec_i)\right\}-
\bXcheck_{2i}\trans\bthetabar_{2-}
\left\{Y_{2i}-\mubar_2(\bUvec_i)\right\}\\
\bX_{2i}\left\{Y_{3i}-\mubar_3(\bUvec_i)\right\}-\bar\beta_{21}
\bX_{2i}\left\{Y_{2i}-\mubar_2(\bUvec_i)\right\}
\end{bmatrix}\\
+&o_\Pbb\left(1\right).
\end{align*}
\end{proof}

\begin{proof}[Proof of Theorem \ref{theorem: unbiased theta1}] The solution to stage 1 estimating equation $\btheta_1$ in Section \ref{section: SSQL} satisfies 
\[
\Pbb_N
\left[\bX_1\left\{
\muhat_2(\bUvec)+\hat\beta_{21}\muhat_2(\bUvec)+\bH_{20}\trans\bbetahat_{22}+\left[\bH_{21}\trans\bgammahat_2\right]_+-\bX_1\trans\bthetahat_1
\right\}\right]=\bzero.
\]
We center the above at $\bthetabar_1$ and get 
\begin{align}\label{eq: centered theta1}
\Pbb_N\left[\bX_1\bX_1\trans\right]\left(\bthetahat_1-\bthetabar_1\right)
=
\Pbb_N
\left[\bX_1\left\{
\mubar_2(\bUvec)+\hat\beta_{21}\mubar_2(\bUvec)+\bH_{20}\trans\bbetahat_{22}+\left[\bH_{21}\trans\bgammahat_2\right]_+-\bX_1\trans\bthetabar_1
\right\}\right].
\end{align}
Next, with the following definitions
\begin{align*}
\hat\Sigma_\Usc=&\Pbb_N\left[\bX_1\bX_1\trans\right],\quad
\hat\Sigma_\Lsc=\Pbb_n\left[\bX_1\bX_1\trans\right],\\
\mathcal{R}^{(1)}=&\Pbb_N
\left[\bX_1\left\{
\mubar_2(\bUvec)+\hat\beta_{21}\mubar_2(\bUvec)+\bH_{20}\trans\bbetahat_{22}+\left[\bH_{21}\trans\bgammahat_2\right]_+-\bX_1\trans\bthetabar_1
\right\}\right],\\
\hat{\mathcal{R}}^{(1K)}_\mathbb{S}=&\Pbb_N\left[
\bX_1\left\{\muhat_2(\bUvec)-\mubar_2(\bUvec)\right\}
\right],
\end{align*}
we can write \eqref{eq: centered theta1} as $\hat\Sigma_{\mathcal{U}}(\bthetahat_1-\bthetabar_1)=\mathcal{R}^{(1)}+(1+\hat\beta_{21})\hat{\mathcal{R}}^{(1K)}_\mathbb{S}$. We now analyze both terms $\mathcal{R}^{(1)},$ and $(1+\hat\beta_{21})\hat{\mathcal{R}}^{(1K)}_\mathbb{S}$.\\

I) First we consider $(1+\hat\beta_{21})\hat{\mathcal{R}}^{(1K)}_\mathbb{S}$, define 
\begin{align*}
\hat{\mathcal{S}}^{(1\eta)}_\mathbb{S}=&\hat\Sigma_\Usc\left(\bEtahat_2-\bEta_2\right),
\\
\hat{\mathcal{S}}^{(1\mathbb K)}_\mathbb{S}=&\frac{1}{K}\sum_{k=1}^K\Pbb_N\left[\bX_1\hat\Delta_2\supnk(\bUvec)\right],
\\
\bar{\mathcal{S}}^{(1)}_k=&\mathbb{E}\left[\bX_1\hat\Delta_2\supnk(\bUvec)\right],
\end{align*}
from \eqref{eta_EE_Q1} it follows that $\hat{\mathcal{R}}^{(1K)}_\mathbb{S}=\hat{\mathcal{S}}^{(1\eta)}_\mathbb{S}+\hat{\mathcal{S}}^{(1\mathbb K)}_\mathbb{S}$, next from Assumptions \ref{assumption: covariates}, \ref{assumption: Q imputation}, we get $\sum_{k=1}^{K} \sup\limits_{\bX_1,\bUvec}\| \bX_1\hat\Delta_2\supnk(\bUvec)\|=o_\Pbb(1)$, thus by Lemma \ref{lemm_chakrabortty} $\hat{\mathcal{S}}^{(1\mathbb K)}_\mathbb{S}=\bar{\mathcal{S}}^{(1)}_k+\left(N^{-\frac{1}{2}}\right)$. Using \eqref{eta_EE_Q1} again, and recalling $\mubar_2(\bUvec)=m_2(\bUvec)+\bX_1\trans\bEta_2$ we have
\begin{align*}
\hat{\mathcal{S}}^{(1\eta)}_\mathbb{S}
=&
\hat\Sigma_\Usc\hat\Sigma_\Lsc^{-1}
\frac{1}{n}\sum_{k=1}^K\sum_{i\in\mathcal{I}_k}\bX_{1i}\left\{Y_{2i}-\mubar_2(\bUvec_i)-\mhat_2\supnk(\bUvec_i)+m_2(\bUvec_i)\right\}\\
=&
\frac{1}{n}\sum_{i=1}^n\bX_{1i}\left\{Y_{2i}-\mubar_2(\bUvec_i)\right\}
-
\frac{1}{n}\sum_{k=1}^K\sum_{i\in\mathcal{I}_k}\bX_{1i}\hat\Delta_2\supnk(\bUvec_i)
+
O_\Pbb\left(n^{-\frac{1}{2}}\right),
\end{align*}
where the last line follows by the CLT and Assumptions  \ref{assumption: covariates} and \ref{assumption: Q imputation} as
\[
\hat\Sigma_{\mathcal{U}}\hat\Sigma_{\mathcal{L}}^{-1}=I+O_\Pbb\left(n^{-\frac{1}{2}}\right)
\] 
Now using Lemma \ref{lemm_chakrabortty} and Assumptions  \ref{assumption: covariates}, \ref{assumption: Q imputation} again, it follows that 
\[
\hat{\mathcal{S}}^{(1\mathbb K)}_\mathbb{S}=\bar{\mathcal{S}}^{(1)}_k+O_\Pbb\left(N^{-\frac{1}{2}}\right),
\]
combining the above we can write
\[
\hat{\mathcal{R}}_\mathbb{S}^{(1K)}
=
\Pbb_n\bX_1\left\{Y_2-\mubar_2(\bUvec)\right\}
-
\frac{1}{n}\sum_{k=1}^K\left\{\sum_{i\in\mathcal{I}_k}
\bX_{1i}\hat\Delta_2\supnk(\bUvec_i)-\mathbb{E}\left[\bX_1\hat\Delta_2\supnk(\bUvec)\right]\right\}
+
O_\Pbb\left(n^{-\frac{1}{2}}\right).
\]
Next by Assumption \ref{assumption: Q imputation} and Lemma \ref{lemm_deltas} we have 
\[
\frac{1}{\sqrt n}\sum_{k=1}^K\left\{\sum_{i\in\mathcal{I}_k}
\bX_{1i}\hat\Delta_2\supnk(\bUvec_i)-\mathbb{E}\left[\bX_1\hat\Delta_2\supnk(\bUvec)\right]\right\}=O_\Pbb\left(c_{n^-_K}\right),
\]
therefore $\hat{\mathcal{R}}_\mathbb{S}^{(1K)}=\Pbb_n\bX_1\left\{Y_2-\mubar_2(\bUvec)\right\}+O_\Pbb\left(n^{-\frac{1}{2}}c_{n^-_K}\right)$. Finally using Theorem \ref{theorem: unbiased theta2} we have $\hat\beta_{21}-\bar\beta_{21}=O_\Pbb\left(n^{-\frac{1}{2}}\right)$, and by CLT $\Pbb_n\bX_1\left\{Y_2-\mubar_2(\bUvec)\right\}=O_\Pbb\left(n^{-\frac{1}{2}}\right)$, thus we can write 
\[
(1+\hat\beta_{21})\hat{\mathcal{R}}^{(1K)}_\mathbb{S}=(1+\bar\beta_{21})\Pbb_n\bX_1\left\{Y_2-\mubar_2(\bUvec)\right\}+O_\Pbb\left(n^{-\frac{1}{2}}c_{n^-_K}\right).
\]
II) Next we consider $\mathcal{R}^{(1)}$ by writing
\begin{align*}
\mathcal{R}^{(1)}=
&\Pbb_N
\left[\bX_1\left\{
\mubar_2(\bUvec)+\bar\beta_{21}\mubar_2(\bUvec)+\bH_{20}\trans\bbetabar_{22}+\left[\bH_{21}\trans\bgammabar_2\right]_+-\bX_1\trans\bthetabar_1
\right\}\right]\\
+&
\Pbb_N
\left[\bX_1\left\{\mubar_2(\bUvec)(\hat\beta_{21}-\bar\beta_{21})+\bH_{20}\trans(\bbetahat_{22}-\bbetabar_{22})+\left[\bH_{21}\trans\bgammahat_2\right]_+-
\left[\bH_{21}\trans\bgammabar_2\right]_+
\right\}\right],
\end{align*}
note that under \eqref{impute_constraint} using model \eqref{linear_Qs} the first term in the right hand side is mean zero, therefore from Assumption \ref{assumption: covariates} and CLT
\[
\Pbb_N
\left\{\bX_1\left(
\mubar_2(\bUvec)+\bbetabar_{21}\mubar_2(\bUvec)+\bH_{20}\trans\bbetabar_{22}+\left[\bH_{21}\trans\bgammabar_2\right]_+-\bX_1\trans\bthetabar_1
\right)\right\}=O_\Pbb\left(N^{-\frac{1}{2}}\right).
\]
Hence, we have
\begin{align*}
\sqrt n\mathcal{R}^{(1)}
=&
\sqrt n\Pbb_N
\left[\bX_1\left\{
\mubar_2(\bUvec)(\hat\beta_{21}-\bar\beta_{21})+\bH_{20}\trans(\bbetahat_{22}-\bbetabar_{22})+\left[\bH_{21}\trans\bgammahat_2\right]_+-
\left[\bH_{21}\trans\bgammabar_2\right]_+
\right\}\right]
+O_\Pbb\left(\sqrt{\frac{n}{N}}\right)\\
=&
\Pbb_N
\left[\bX_1
\left(\mubar_2(\bUvec),\bH_{20}\trans\right)\right]
\sqrt n\left(\bbetahat_2-\bbetabar_2\right)
+
\sqrt n\Pbb_N
\left[\bX_1\left(\left[\bH_{21}\trans\bgammahat_2\right]_+-
\left[\bH_{21}\trans\bgammabar_2\right]_+
\right)\right]
+O_\Pbb\left(\sqrt{\frac{n}{N}}\right)
\\
=&
\mathbb{E}\left[\bX_1
\left(\mubar_2(\bUvec),\bH_{20}\trans\right)\right]
n^{-\frac{1}{2}}\sum_{i=1}^n\bpsi_{2i\beta}
+
\sqrt n\Pbb_N
\left[\bX_1\left(\left[\bH_{21}\trans\bgammahat_2\right]_+-
\left[\bH_{21}\trans\bgammabar_2\right]_+
\right)\right]
+O_\Pbb\left(\sqrt{\frac{n}{N}}\right),
\end{align*}
where the last inequality follows from the CLT, where $\bpsi_{2i\beta}$ is the element corresponding to $\bbetahat_2$ of the influence function $\bpsi_{2i}$ defined in Theorem \ref{theorem: unbiased theta2}.

Next by Theorem \ref{theorem: unbiased theta2} we know that 
\[
\sqrt n (\bgammahat_2-\bgammabar_2)=O_\Pbb(1),
\]
using Lemma \ref{lemm_gamma_difs} (a) we have 
\[
\Pbb\left[\sqrt{n}\Pbb_N
\left\{\bX_1\left(\left[\bH_{21}\trans\bgammahat_2\right]_+-
\left[\bH_{21}\trans\bgammabar_2\right]_+
\right)\right\}
=
\Pbb_N\bigg\{\bX_1\bH_{21}\trans I\left(\bH_{21}\trans\bgammabar_2>0\right)
\bigg\}
\sqrt{n}\left(
\bgammahat_2
-\bgammabar_2
\right)\right]\rightarrow1.
\]

Therefore, letting $\bpsi_{2i\gamma}$ be the element corresponding to $\bgammahat_2$ of the influence function $\bpsi_{2i}$ defined in Theorem \ref{theorem: unbiased theta2},
\begin{align*}
&\sqrt n\Pbb_N
\left\{\bX_1\left(\left[\bH_{21}\trans\bgammahat_2\right]_+-
\left[\bH_{21}\trans\bgammabar_2\right]_+
\right)\right\}\\
&=
\Pbb_N\left\{\bX_1\bH_{21}\trans I(\bH_{21}\trans\bgammabar_2>0) I\left(\bgammahat_2\in\mathcal{A}\right)\right\}
\sqrt{n}\left(
\bgammahat_2
-\bgammabar_2
\right)\\
&+
\sqrt{n}\Pbb_N\left\{\bX_1\left(
\left[\bH_{21}\trans\bgammahat_2\right]_+
-\left[\bH_{21}\trans\bgammabar_2\right]_+
\right)\right\}I_{\left\{\bgammahat_2\notin\mathcal{A}\right\}}\\
&=
\mathbb{E}\left[
\bX_1\bH_{21}\trans|\bH_{21}\trans\bgammabar_2>0,\bgammahat_2\in\mathcal{A}\right]\Pbb\left(\bH_{21}\trans\bgammabar_2>0\right)\Pbb\left(\bgammahat_2\in\mathcal{A}\right)
\frac{1}{\sqrt n}\sum_{i=1}^n\psi_{2\gamma_2i}
+O_\Pbb\left(c_{n^-_K}\right)+o_\Pbb\left(1\right)\\
&=
\mathbb{E}\left[\bX_1\bH_{21}\trans|\bH_{21}\trans\bgammabar_2>0\right]\Pbb\left(\bH_{21}\trans\bgammabar_2>0\right)
\frac{1}{\sqrt n}\sum_{i=1}^n\psi_{2\gamma_2i}+o_\Pbb\left(1\right),\\
\end{align*}
combining all terms
\begin{align*}
\sqrt n\mathcal{R}^{(1)}
=&
\mathbb{E}\left[\bX_1
\left(\mubar_2(\bUvec),\bH_{20}\trans\right)\right]
n^{-\frac{1}{2}}\sum_{i=1}^n\bpsi_{2i(\beta )}\\
+&
\mathbb{E}\left[\bX_1\bH_{21}\trans|\bH_{21}\trans\bgammabar_2>0\right]\Pbb\left(\bH_{21}\trans\bgammabar_2>0\right)
\frac{1}{\sqrt n}\sum_{i=1}^n\bpsi_{2i(\gamma)}\\
+&
O_\Pbb\left(c_{n^-_K}\right).
\end{align*}
Finally, from I), II), and since $\hat\Sigma_{\mathcal{U}}^{-1}=\mathbb{E}\left[\bX_1\bX_1\trans\right]^{-1}+o_\Pbb\left(1\right)$ by the LLN, we have 
\begin{align*}
\sqrt n(\bthetahat_1-\bthetabar_1)
=&
\mathbb{E}\left[\bX_1\bX_1\trans\right]^{-1}\hat\Sigma_{\mathcal{U}}^{-1}\mathcal{R}^{(1)}+\mathbb{E}\left[\bX_1\bX_1\trans\right]^{-1}(1+\hat\beta_{21})\hat{\mathcal{R}}^{(1K)}_\mathbb{S}+o_\Pbb\left(1\right)
\\
=&\mathbb{E}\left[\bX_1\bX_1\trans\right]^{-1}(1+\bar\beta_{21})\frac{1}{\sqrt n}\sum_{i=1}^n\{Y_{2i}-\mubar_2(\bUvec_i)\}\\
+&
\mathbb{E}\left[\bX_1\bX_1\trans\right]^{-1}
\mathbb{E}\left[\bX_1
\left(\mubar_2(\bUvec),\bH_{20}\trans\right)\right]
\frac{1}{\sqrt n}\sum_{i=1}^n\bpsi_{2i(\beta )}\\
+&
\mathbb{E}\left[\bX_1\bX_1\trans\right]^{-1}\mathbb{E}\left[\bX_1\bH_{21}\trans|\bH_{21}\trans\bgammabar_2>0\right]\Pbb\left(\bH_{21}\trans\bgammabar_2>0\right)
\frac{1}{\sqrt n}\sum_{i=1}^n\bpsi_{2i\gamma}\\
+&
o_\Pbb\left(1\right),
\end{align*}
using \eqref{impute_constraint} we have $\mathbb{E}\left[\bX_1
\left(\mubar_2(\bUvec),\bH_{20}\trans\right)\right]=\mathbb{E}\left[\bX_1
\left(Y_2,\bH_{20}\trans\right)\right]$ which yields our required results
\end{proof}
Next we discuss some results and assumptions needed for Proposition \ref{lemma: Q parameter var}. First we show the asymptotic results for the supervised estimation of the $Q$-function parameters. Recall $\bthetahat_{1 \scriptscriptstyle \sf SUP},$ $\bthetahat_{2 \scriptscriptstyle \sf SUP}$ are the estimators for the $Q$-function parameters, when using the labeled data $\Lsc$ only. From \cite{laber2014} we have that the following results for $\bthetahat_{2 \scriptscriptstyle \sf SUP}$:
\[
\sqrt n\left(\bthetahat_{2 \scriptscriptstyle \sf SUP}-\bthetabar_2\right)=\Sigma_2^{-1}\frac{1}{\sqrt n}\sum_{i=1}^n
\bpsi_{2 \scriptscriptstyle \sf SUP}(\bL;\bthetabar_2)
\rightarrow\Nsc\left(\bzero,\bV_{2 \scriptscriptstyle \sf SUP}\left[\bthetabar_2\right]\right),
\]
with 
\begin{align*}
\bpsi_{2 \scriptscriptstyle \sf SUP}(\bL;\bthetabar_2)
&=\bXcheck_2\{Y_{3i}-\bXcheck_{2i}\trans\bthetabar_2\},\\ \bV_{2 \scriptscriptstyle \sf SUP}\left[\bthetabar_2\right]&=\bSigma_2^{-1}\Ebb\left[\bpsi_{2 \scriptscriptstyle \sf SUP}(\bL;\bthetabar_2)\bpsi_{2 \scriptscriptstyle \sf SUP}(\bL;\bthetabar_2)\trans\right]\left(\bSigma_2^{-1}\right)\trans,
\end{align*}
and for $\bthetahat_{1 \scriptscriptstyle \sf SUP}$:
\[
\sqrt n\left(\bthetahat_{1 \scriptscriptstyle \sf SUP}-\bthetabar_1\right)=\Sigma_1^{-1}\frac{1}{\sqrt n}\sum_{i=1}^n
\bpsi_{1 \scriptscriptstyle \sf SUP}(\bL;\bthetabar_1)
\rightarrow\Nsc\left(\bzero,\bV_{1 \scriptscriptstyle \sf SUP}\left[\bthetabar_1\right]\right),
\]
with 
\begin{align*}
\bpsi_{1 \scriptscriptstyle \sf SUP}(\bL_i;\bthetabar_2)
=&
\bX_{i1}\{Y_{2i}+Y_{2i}\bar\beta_{21}+\bH_{20i}\trans\bbetabar_{22}+[\bH_{21i}\trans\bgammabar_2]_+-\bX_{1i}\trans\bthetabar_1\}\\
+&
\mathbb{E}\left[\bX_1
\left(Y_2,\bH_{20}\trans\right)\right]
\bpsi_{2 \scriptscriptstyle \sf SUP,(\beta )}(\bL_i)\\
+&
\mathbb{E}\left[\bX_1\bH_{21}\trans|\bH_{21}\trans\bgammabar_2>0\right]\Pbb\left(\bH_{21}\trans\bgammabar_2>0\right)
\bpsi_{2 \scriptscriptstyle \sf SUP,(\gamma )}(\bL_i),\\ 
\bV_{1 \scriptscriptstyle \sf SUP}\left[\bthetabar_1\right]
=&
\bSigma_1^{-1}\Ebb\left[\bpsi_{1 \scriptscriptstyle \sf SUP}(\bL;\bthetabar_1)\bpsi_{1 \scriptscriptstyle \sf SUP}(\bL;\bthetabar_1)\trans\right]\left(\bSigma_1^{-1}\right)\trans.
\end{align*}
Next we discuss the assumption required for Proposition \ref{lemma: Q parameter var}. We need the imputation models $\mubar_s(\bUvec)$, $s\in\{2,3,22,23\}$ to satisfy several additional constraints. For example, for the stage two $Q$-function parameters, recall $\btheta_{2-}=(\bbeta_{22}\trans,\bgamma_2\trans)\trans$, the imputation models should satisfy:
\begin{equation*}
\begin{alignedat}{3}
 \Ebb\left[\bXvec\trans\mubar_j(\bUvec)\{g_s(\bY) - \mubar_s(\bUvec)\}\right] = \bzero &\quad&
\Ebb\left[\bXvec\trans \mubar_2(\bUvec)\bX_2\trans\bthetabar_{2-}\{g_s(\bY) - \mubar_s(\bUvec)\}\right] = \bzero&,s,j\in\{2,3,22,23\} \\
\Ebb\left[\bXvec\bXvec\trans\mubar_j(\bUvec)\{g_s(\bY) - \mubar_s(\bUvec)\}\right] = \bzero, &\quad&
\Ebb\left[\bXvec\bXvec\trans \bX_2\trans\bthetabar_{2-}\{g_s(\bY) - \mubar_s(\bUvec)\}\right] = \bzero&,s,j\in\{2,3\},
\end{alignedat}
\end{equation*}
where $\bX=(1,\bX_1\trans,\bX_2\trans)\trans$, $g_2(\bY)=Y_2,$ $g_3(\bY)=Y_3,$ $g_{22}(\bY)=Y_2^2,$ $g_{23}(\bY)=Y_2Y_3$. 

To summarize all the assumptions needed, we define the following functions:
\begin{align}\label{equation: psi theta difference}
\begin{split}
\mathcal{E}^\theta(\bUvec)\equiv&
\left\{\mathcal{E}_1(\bUvec)\trans,\mathcal{E}_2(\bUvec)\trans\right\}\trans,\\
\mathcal{E}_2(\bUvec)\equiv&
\begin{bmatrix}
    \mubar_{23}(\bUvec)-
    [\mubar_{22}(\bUvec),\mubar_2(\bUvec)\bX_2\trans]\bthetabar_2\\
        \bX_2\{\mubar_3(\bUvec)-
        [\mubar_2(\bUvec),\bX_2\trans]\bthetabar_2\}
        \end{bmatrix},\\
\mathcal{E}_1(\bUvec)
\equiv&
\bX_1\{\mubar_2(\bUvec)(1+\bar\beta_{21})+\Qopt_{2-}(\bH_2;\bthetabar_2)-\bX_{1}\trans\bthetabar_1\}\\
+&
\mathbb{E}\left[\bX_1
\left(Y_2,\bH_{20}\trans\right)\right]
\mathcal{E}_{2\beta}(\bUvec)\\
+&
\mathbb{E}\left[\bX_1\bH_{21}\trans|\bH_{21}\trans\bgammabar_2>0\right]\Pbb\left(\bH_{21}\trans\bgammabar_2>0\right)
\mathcal{E}_{2\gamma}(\bUvec),
\end{split}
\end{align}
where $\mathcal{E}_{2\beta}(\bUvec)$, $\mathcal{E}_{2\gamma}(\bUvec)$ are the elements corresponding to $\bbetabar_2$, $\bgammabar_2$ of $\mathcal{E}_2(\bUvec)$. Now we can succinctly summarize the constraints, by having $\mubar_s(\bUvec)$, $s\in\{2,3,22,23\}$ satisfy 
\begin{align*}
\Ebb\left[\left\{
\bpsi_{2 \scriptscriptstyle \sf SUP}(\bL;\bthetabar_2)
-
\mathcal{E}_2(\bUvec)\right\}\mathcal{E}_2(\bUvec)\trans
\right]=\bzero.
\end{align*}  
This is condensed in the following assumption.

\begin{assumption}\label{assumption: additional constraints}
Let $\mathcal{E}^\theta(\bUvec)$ be as defined in \eqref{equation: psi theta difference}, and \[
\bpsi\subSUP(\bL;\bthetabar)=\left[\bpsi_{1 \scriptscriptstyle \sf SUP}(\bL;\bthetabar_1)\trans,\bpsi_{2 \scriptscriptstyle \sf SUP}(\bL;\bthetabar_2)\trans\right]\trans,
\]
the imputation models $\mubar_s(\bUvec)$, $s\in\{2,3,22,23\}$ satisfy
\begin{align*}
\Ebb\left[\left\{
\bpsi\subSUP(\bL;\bthetabar)
-
\mathcal{E}^\theta(\bUvec)\right\}\mathcal{E}^\theta(\bUvec)\trans
\right]=\bzero.
\end{align*} 
\end{assumption}
\begin{proof}[Proof of Proposition \ref{lemma: Q parameter var}]

We first show the result is true for $\bV_{2 \scriptscriptstyle \sf SSL}\left[\bthetabar_2\right]$. To simplify algebra, we denote the influence function from Theorem \ref{theorem: unbiased theta2} as $\bpsi_{2 \scriptscriptstyle \sf SSL}(\bL;\bthetabar_2)$. Using the influence function of $\bthetahat_{2 \scriptscriptstyle \sf SUP}$ and Theorem \ref{theorem: unbiased theta2} we have the following relationship:
\[
\bpsi_{2 \scriptscriptstyle \sf SSL}(\bL;\bthetabar_2)
=
\bpsi_{2 \scriptscriptstyle \sf SUP}(\bL;\bthetabar_2)
-
\mathcal{E}_2(\bUvec).
\]
Therefore 
\begin{align*}
\bV_{2 \scriptscriptstyle \sf SSL}\left(\bthetabar_2\right)
=&
\bSigma_2^{-1}
\Ebb\left[
\bpsi_{2 \scriptscriptstyle \sf SSL}(\bL;\bthetabar_2)
\bpsi_{2 \scriptscriptstyle \sf SSL}(\bL;\bthetabar_2)\trans
\right]
\left(\bSigma_2^{-1}\right)\trans\\
=&
\bSigma_2^{-1}
\Ebb\left[
\left\{\bpsi_{2 \scriptscriptstyle \sf SUP}(\bL;\bthetabar_2)-\mathcal{E}_2(\bUvec)\right\}
\left\{\bpsi_{2 \scriptscriptstyle \sf SUP}(\bL;\bthetabar_2)-\mathcal{E}_2(\bUvec)\right\}\trans
\right]
\left(\bSigma_2^{-1}\right)\trans\\
=&
\bSigma_2^{-1}
\Ebb\left[
\bpsi_{2 \scriptscriptstyle \sf SUP}(\bL;\bthetabar_2)
\bpsi_{2 \scriptscriptstyle \sf SUP}(\bL;\bthetabar_2)\trans
\right]
\left(\bSigma_2^{-1}\right)\trans\\
+&
\bSigma_2^{-1}
\Ebb\left[\mathcal{E}_2(\bUvec)\mathcal{E}_2(\bUvec)\trans
\right]
\left(\bSigma_2^{-1}\right)\trans\\
-&
2\bSigma_2^{-1}
\Ebb\left[
\bpsi_{2 \scriptscriptstyle \sf SUP}(\bL;\bthetabar_2)
\mathcal{E}_2(\bUvec)\trans
\right]
\left(\bSigma_2^{-1}\right)\trans
\end{align*}
Now, since our imputation models satisfy Assumption \ref{assumption: additional constraints}, it follows that
\begin{align*}
\Ebb\left[\left\{
\bpsi_{2 \scriptscriptstyle \sf SUP}(\bL;\bthetabar_2)
-
\mathcal{E}_2(\bUvec)\right\}\mathcal{E}_2(\bUvec)\trans
\right]=\bzero.
\end{align*}
Therefore we have 
\begin{align*}
\bV_{2 \scriptscriptstyle \sf SSL}\left(\bthetabar_2\right)
=&
\bV_{2 \scriptscriptstyle \sf SUP}\left(\bthetabar_2\right)
-
\bSigma_2^{-1}\text{Var}\left[
\mathcal{E}_2(\bUvec)\right]\left(\bSigma_2^{-1}\right)\trans.
\end{align*}
To show the result is true for $\bV_{1 \scriptscriptstyle \sf SSL}\left[\bthetabar_1\right]$, 
We denote by $\mathcal{E}_{2\beta}(\bUvec)$ and $\mathcal{E}_{2\gamma}(\bUvec)$ the vectors corresponding to $\bbetabar_2$, $\bgammabar_2$ in $\mathcal{E}_2(\bUvec)$ respectively, and further recall the definition of $\mathcal{E}_1(\bUvec)$:

\begin{align*}
\mathcal{E}_1(\bUvec)
=&
\bX_1\{\mubar_2(\bUvec)+\mubar_2(\bUvec)\bar\beta_{21}+\bH_{20}\trans\bbetabar_{22}+[\bH_{21}\trans\bgammabar_2]_+-\bX_{1}\trans\bthetabar_1\}\\
+&
\mathbb{E}\left[\bX_1
\left(Y_2,\bH_{20}\trans\right)\right]
\mathcal{E}_{2\beta}(\bUvec)\\
+&
\mathbb{E}\left[\bX_1\bH_{21}\trans|\bH_{21}\trans\bgammabar_2>0\right]\Pbb\left(\bH_{21}\trans\bgammabar_2>0\right)
\mathcal{E}_{2\gamma}(\bUvec).
\end{align*}
From the form of the influence function of $\bthetahat_{1 \scriptscriptstyle \sf SUP}$, and Theorems \ref{theorem: unbiased theta2} \& \ref{theorem: unbiased theta1} we have that:
\[
\bpsi_{1 \scriptscriptstyle \sf SSL}(\bL;\bthetabar_1)
=
\bpsi_{1 \scriptscriptstyle \sf SUP}(\bL;\bthetabar_1)
-
\mathcal{E}_1(\bUvec)
.
\]

Analogous steps for the proof of $\bthetabar_2$ can then be used to show
\begin{align*}
\bV_{1 \scriptscriptstyle \sf SSL}\left(\bthetabar_1\right)
=&
\bV_{1 \scriptscriptstyle \sf SUP}\left(\bthetabar_1\right)
-
\bSigma_1^{-1}\text{Var}\left[\mathcal{E}_1(\bUvec)\right]\left(\bSigma_1^{-1}\right)\trans.
\end{align*}

The required result is obtained by stacking the influence functions for $\btheta_1,\btheta_2$ for the supervised and semi-supervised versions, noting that 
\[
\bpsi\subSSL(\bL;\bthetabar)
=
\bpsi\subSUP(\bL;\bthetabar)
-
\mathcal{E}^\theta(\bUvec).
\]
and repeating the steps above.
\end{proof}

\subsection{Value Function Results}\label{appendix: value fun}

In this Section we prove the main results for our SSL value function estimator. Before the proofs we go over some useful definitions, notation and lemmas. First recall that, in order to correct for potential biases arising from finite sample estimation and model mis-specifications, the final imputed models for $\{Y_2,\omega_2(\bHcheck_2, A_2; \bThetabar),$ $Y_t\omega_2(\bHcheck_2, A_2; \bThetabar),$ $t=2,3\}$ satisfy the following constraints:
\begin{equation}\label{V function impute_constraint}
\begin{alignedat}{3}
\Ebb\left[\omega_1(\bHcheck_1,A_1;\bThetabar)\left\{Y_2-\mubar_2^v(\bUvec)\right\}\right] &=0, \\
\Ebb\left[\Qopt_{2-}(\bUvec;\btheta_2)\left\{\omega_2(\bHcheck_2,A_2;\bThetabar)-\mubar_{\omega_2}^v(\bUvec)\right\}\right] &= 0,\\
\Ebb\left[\omega_2(\bHcheck_2,A_2;\bThetabar)Y_t- \mubar^v_{t\omega_2}(\bUvec)\right] &= 0,\:t=2,3.
\end{alignedat}
\end{equation}

Next, define the set
\begin{align*}
\mathcal{S}(\delta)=\bigg\{
(\btheta,\bxi)\bigg|
\|\bthetahat-\btheta\|_2^2<\delta,
\|\bxihat-{\bxi}\|_2^2<\delta,
\btheta_t\in\Theta_t,{\bxi}_t\in\Omega_t,t=1,2,\\ \pi_1(\bH_1;{\bxi}_1)>0,\:\pi_2(\bHcheck_2;{\bxi}_2)>0,\:\forall\:\bH\in\mathcal{H}
\bigg\}.
\end{align*}

We will be using the influence functions for our model parameters $\bTheta$. In this regard let $\bpsi^\theta=(\bpsi_1\trans,\bpsi_2\trans)\trans$. By Theorems \ref{theorem: unbiased theta2} \& \ref{theorem: unbiased theta1} $\sqrt n(\bthetahat-\bthetabar)=n^{-1/2}\sum_{i=1}^n\bpsi^\theta(\bUvec_i)+o_\Pbb(1)$. Next, from Assumption \ref{assumption: donsker w}, it can be shown that $\bxihat$ has the following expansion:
    $
    \sqrt n(\bxihat-\bxibar)=n^{-1/2}\sum_{i=1}^n\bpsi^\xi\left(\bL_i;\bxibar\right)+o_\Pbb(1),
    $
    where 
    \[
    \bpsi_t^\xi\left(\bL;\bxibar\right)=\Ebb\left\{\bHcheck_t\trans\bHcheck_t\sigma\left(\bHcheck_t\trans\bxibar_t\right)[1-\sigma\left(\bHcheck_t\trans\bxibar_t\right)]\right\}^{-1}\bHcheck_t\left\{A_t-\sigma\left(\bHcheck_t\trans\bxibar_t\right)\right\},\:\:t=1,2,
    \]
    $\bpsi^\xi\left(\bL;\bxibar\right)=\left[\bpsi_1^\xi\left(\bL;\bxibar\right),\bpsi_2^\xi\left(\bL;\bxibar\right)\right]$ and $\mathbb{E}[\bpsi^\xi]=0$, $\mathbb{E}[(\bpsi^\xi)\trans\bpsi^\xi]<\infty$.
	
	 We now introduce a set of definitions used in this section to make the proofs easier to read. Recall from \eqref{SS_value_fun} we have
	
	$
	\Vhat\subSSLDR=\Pbb_N\left\{
	\Vsc\subSSLDR(\bUvec;\bThetahat,\muhat)\right\},$
	where $\Vsc\subSSLDR(\bUvec;\bThetahat,\muhat)$ is the semi-supervised augmented estimator for observation $\bUvec$, we re-write $\Vsc\subSSLDR(\bUvec;\bThetahat,\muhat)$ as $\Vsc_{\bThetahat,\muhat}(\bUvec)$ recall its definition, and define the following functions:
	\begin{align}\label{eq: Vs muhat}
	\begin{split}
	\Vsc_{\bThetahat,\muhat}(\bUvec)
	\equiv&
	\Qopt_1(\bHcheck_1;\bthetahat_1)
	+\omega_1(\bHcheck_1,A_1,\bThetahat)
	\left[(1+\betahat_{21})\muhat_2^v(\bUvec)-
	 \Qopt_1(\bHcheck_1;\bthetahat_1)+\Qopt_{2-}\{\bH_2;\bthetahat_2\}
	\right]\\
	+&\muhat^v_{3\omega_2}(\bUvec)-\betahat_{21}\muhat^v_{2\omega_2}(\bUvec)-\Qopt_{2-}(\bH_2;\bthetahat_2)\muhat^v_{\omega_2}(\bUvec),
	\\
	\Vsc_{\bThetabar,\muhat}(\bUvec)
	\equiv&
	\Qopt_1(\bHcheck_1;\bthetabar_1)
	+\omega_1(\bHcheck_1,A_1,\bThetabar)
	\left[(1+\bar\beta_{21})\muhat_2^v(\bUvec)-
	 \Qopt_1(\bHcheck_1;\bthetabar_1)+\Qopt_{2-}\{\bH_2;\bthetabar_2\}
	\right]\\
	+&\muhat^v_{3\omega_2}(\bUvec)-\bar\beta_{21}\muhat^v_{2\omega_2}(\bUvec)-\Qopt_{2-}(\bH_2;\bthetabar_2)\muhat^v_{\omega_2}(\bUvec).
	\end{split}
	\end{align}
     We next replace the estimated imputation functions with their limits $\mubar^v_2$, $\mubar^v_{2\omega_2}$, $\mubar^v_{3\omega_2}$ and $\mubar^v_{\omega_2}$, and define:
	\begin{align}\label{eq: Vs mubar}
	\begin{split}
	\mathcal{V}_{\bThetahat,\mubar}(\bUvec)\equiv
	&\Qopt_1(\bHcheck_1;\bthetahat_1)
	+\omega_1(\bHcheck_1,A_1,\bThetahat)
	\left[(1+\betahat_{21})\mubar_2^v(\bUvec)-
	 \Qopt_1(\bHcheck_1;\bthetahat_1)+\Qopt_{2-}(\bH_2;\bthetahat_2)
	\right]\\
	+&\mubar^v_{3\omega_2}(\bUvec)-\hat\beta_{21}\mubar^v_{2\omega_2}(\bUvec)-\Qopt_{2-}(\bH_2;\bthetahat_2)\mubar^v_{\omega_2}(\bUvec),
	\\
    \mathcal{V}_{\bThetabar,\mubar}(\bUvec)\equiv
	&\Qopt_1(\bHcheck_1;\bthetabar_1)
	+\omega_1(\bHcheck_1,A_1,\bThetabar)
	\left[(1+\bar\beta_{21})\mubar_2^v(\bUvec)-
	 \Qopt_1(\bHcheck_1;\bthetabar_1)+\Qopt_{2-}(\bH_2;\bthetabar_2)
	\right]\\
	+&\mubar^v_{3\omega_2}(\bUvec)-\bar\beta_{21}\mubar^v_{2\omega_2}(\bUvec)-\Qopt_{2-}(\bH_2;\bthetabar_2)\mubar^v_{\omega_2}(\bUvec).
	\end{split}
	\end{align}
	Finally we define the following functions which are weighted sums of the imputation function errors:
	\begin{align}\label{eq: errs terms}
	\begin{split}
	\mathcal{E}_{\bThetahat}(\bUvec)
	\equiv&
	\omega_1(\bHcheck_1,A_1;\bThetahat)
	(1+\hat\beta_{21})
	\left\{
	\muhat_2^v(\bUvec)-\mubar_2^v(\bUvec)
	\right\}
	+
	\muhat^v_{3\omega_2}(\bUvec)-\mubar^v_{3\omega_2}(\bUvec)\\
	-&
	\hat\beta_{21}\left\{\muhat^v_{2\omega_2}(\bUvec)-\mubar^v_{2\omega_2}(\bUvec)\right\}
    -
    \Qopt_{2-}\left(\bH_2;\bthetahat_2\right)
    \left\{\muhat^v_{\omega_2}(\bUvec)-\mubar^v_{\omega_2}(\bUvec)\right\}
	,\\
	\mathcal{E}_{\bThetabar}(\bUvec)
	\equiv&
	\omega_1(\bHcheck_1,A_1;\bThetabar)
	(1+\bar\beta_{21})
	\left\{
	\muhat_2^v(\bUvec)-\mubar_2^v(\bUvec)
	\right\}
	+
	\muhat^v_{3\omega_2}(\bUvec)-\mubar^v_{3\omega_2}(\bUvec)
	\\
	-&
	\bar\beta_{21}\left\{\muhat^v_{2\omega_2}(\bUvec)-\mubar^v_{2\omega_2}(\bUvec)\right\}
    -
    \Qopt_{2-}\left(\bH_2;\bthetabar_2\right)
    \left\{\muhat^v_{\omega_2}(\bUvec)-\mubar^v_{\omega_2}(\bUvec)\right\}.
	\end{split}
	\end{align}
     These definitions will come in handy in the following proofs as we can use them to write $\Vsc_{\bThetahat,\muhat}(\bUvec)=\mathcal{V}_{\bThetahat,\mubar}(\bUvec)+\mathcal{E}_{\bThetahat}(\bUvec)$, $\Vsc_{\bThetabar,\muhat}(\bUvec)=\mathcal{V}_{\bThetabar,\mubar}(\bUvec)+\mathcal{E}_{\bThetabar}(\bUvec)$. Finally, recalling that $\Pbb_{\bUvec}$ is the underlying distribution of the data, we define function $g_1:\bTheta\mapsto\mathbb{R}$ as
     \[
     g_1(\bTheta)=\int\Vsc_{\bTheta,\mubar}(\bUvec)d\Pbb_{\bUvec}.
     \]
     With the above definitions we proceed by stating three lemmas that will be used to prove Theorem \ref{thrm_ssV_fun}. We defer the proofs of these lemmas for after proving the main Theorem in this section.
    \begin{lemma}\label{lemma: CLT and expectation terms}
	Under Assumptions \ref{assumption: covariates}-\ref{assumption: V imputation}, we have
	\begin{align*}
	\mbox{I)}&\quad
	\sqrt n\left\{\Pbb_N\left[\Vsc_{\bThetabar,\mubar}\right]-g_1\left(\bThetabar\right)\right\}=o_\Pbb(1),\\
	\mbox{II)}&\quad
	\sqrt n\left\{g_1(\bThetahat)-g_1\left(\bThetabar\right)\right\}
	=
	\frac{1}{\sqrt n}\sum_{i=1}^n\left\{\left(\frac{\partial}{\partial \btheta}g_1\left(\bThetabar\right)\right)\trans 
	\bpsi^\theta(\bUvec_i)
	+
	\left(\frac{\partial}{\partial \bxi}g_1\left(\bThetabar\right)\right)\trans \bpsi^\xi(\bUvec_i)\right\}
	+o_\Pbb(1).
	\end{align*}
	\end{lemma}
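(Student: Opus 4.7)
My plan is to handle the two parts separately, since each exploits a different structural fact: part I uses only that $\Vsc_{\bThetabar,\mubar}(\bUvec)$ is a fixed (population-level) function of the unlabeled feature vector $\bUvec$, while part II is a standard delta-method argument applied to the smoothed population functional $g_1$.

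\textbf{Part I.} I would first observe that $\Vsc_{\bThetabar,\mubar}(\bUvec)$ depends only on quantities contained in $\bUvec = (\bO_1,A_1,\bW_1,\bO_2,A_2,\bW_2)$: the $Q$-function components $\Qopt_1(\bHcheck_1;\bthetabar_1)$ and $\Qopt_{2-}(\bH_2;\bthetabar_2)$ are functions of $\bH_1,\bH_2$; the weight $\omega_1(\bHcheck_1,A_1;\bThetabar)$ depends on $(\bH_1,A_1)$; and $\mubar_2^v,\mubar_{\omega_2}^v,\mubar_{t\omega_2}^v$ are by construction functions of $\bUvec$ only. Hence $\Pbb_N\{\Vsc_{\bThetabar,\mubar}\}$ is an empirical mean of $N$ i.i.d.\ copies, with population mean $g_1(\bThetabar)$. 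Under Assumption \ref{assumption: covariates}(b) (compact support of covariates), Assumption \ref{assumption: Q imputation} and Assumption \ref{assumption: donsker w}(ii) (propensity scores bounded away from $0$), the function $\Vsc_{\bThetabar,\mubar}(\bUvec)$ is bounded, so the classical CLT yields $\sqrt N\{\Pbb_N[\Vsc_{\bThetabar,\mubar}] - g_1(\bThetabar)\} = O_\Pbb(1)$. Multiplying by $\sqrt{n/N}$ and using Assumption \ref{assumption: covariates}(a) ($n/N\to 0$) gives the $o_\Pbb(1)$ bound.

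\textbf{Part II.} Here the plan is a two-term Taylor expansion of the population functional $g_1(\bTheta) = \int \Vsc_{\bTheta,\mubar}(\bUvec) d\Pbb_{\bUvec}$ around $\bThetabar$, i.e.
\begin{align*}
g_1(\bThetahat) - g_1(\bThetabar)
=
\bigl(\tfrac{\partial}{\partial \btheta}g_1(\bThetabar)\bigr)^{\!\top}(\bthetahat-\bthetabar)
+
\bigl(\tfrac{\partial}{\partial \bxi}g_1(\bThetabar)\bigr)^{\!\top}(\bxihat-\bxibar)
+ R_n,
\end{align*}
where $R_n = o_\Pbb(n^{-1/2})$ will follow once differentiability with a locally bounded Hessian is established, together with $\|\bThetahat-\bThetabar\|=O_\Pbb(n^{-1/2})$ (from Theorems \ref{theorem: unbiased theta2}--\ref{theorem: unbiased theta1} and standard MLE theory under Assumption \ref{assumption: donsker w}). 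Substituting the asymptotic linear expansions $\sqrt n(\bthetahat-\bthetabar)=n^{-1/2}\sum_i\bpsi^\theta(\bL_i)+o_\Pbb(1)$ and $\sqrt n(\bxihat-\bxibar)=n^{-1/2}\sum_i\bpsi^\xi(\bL_i)+o_\Pbb(1)$ delivers the claimed expansion.

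\textbf{Main obstacle.} The integrand $\Vsc_{\bTheta,\mubar}(\bUvec)$ is not differentiable in $\bTheta$ pointwise: it contains plus-function terms $[\bH_{t1}^{\!\top}\bgamma_t]_+$ inside $\Qopt_{2-}$ and $\Qopt_1$, and indicator terms $d_t(\bH_t;\btheta_t)=I(\bH_{t1}^{\!\top}\bgamma_t>0)$ inside $\omega_1,\omega_2$. The key step is therefore to show that after integrating $\bUvec$ against $\Pbb_{\bUvec}$ these terms become continuously differentiable in $\bgamma_t$ in a neighborhood of $\bgammabar_t$. For this I would invoke Assumption \ref{assumption: non-regularity}, which guarantees $\Pbb(\bH_{t1}^{\!\top}\bgammabar_t=0)=0$, together with the boundedness of $\bH_{t1}$ from Assumption \ref{assumption: covariates}(b); a dominated-convergence argument of the same type used in the stage-1 derivation of Theorem \ref{theorem: unbiased theta1} (see in particular the Lemma \ref{lemm_gamma_difs}(a) calculation invoked there) then shows that $\partial/\partial\bgamma_t$ can be brought inside the integral, with the Hessian of $g_1$ being locally bounded on $\mathcal{S}(\delta)$. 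Smoothness in $\bbeta_t,\bxi_t$ is immediate because those parameters enter linearly or through the logistic link $\sigma$. Once smoothness of $g_1$ is in hand, the Taylor remainder is $O_\Pbb(\|\bThetahat-\bThetabar\|^2)=o_\Pbb(n^{-1/2})$ and the proof concludes.
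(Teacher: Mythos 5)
Your proposal is correct and follows essentially the same route as the paper's proof: part I is the identical CLT-plus-$\sqrt{n/N}$ argument for the i.i.d.\ mean over the unlabeled sample, and part II is the same first-order Taylor expansion of $g_1$ around $\bThetabar$ with remainder $O_\Pbb(\|\bThetahat-\bThetabar\|^2)=O_\Pbb(n^{-1})$, followed by substitution of the influence-function expansions for $\bthetahat$ and $\bxihat$. Your extra discussion of why $g_1$ is differentiable despite the plus-function and indicator terms in the integrand (smoothing by integration under Assumption \ref{assumption: non-regularity} and dominated convergence) addresses a step the paper's proof takes for granted, so it strengthens rather than departs from the argument.
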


	\begin{lemma}\label{lemma: centered sample average}
	Under Assumptions \ref{assumption: covariates}-\ref{assumption: V imputation}, the following holds: 
	\begin{align*}
    \sqrt n\left\{\bigg(\Pbb_N\left[\Vsc_{\bThetahat,\mubar}\right]-g_1(\bThetahat)\bigg)-\bigg(\Pbb_N\left[\Vsc_{\bThetabar,\mubar}\right]-g_1(\bThetabar)\bigg)\right\}
    =
    o_\Pbb\left(1\right).
	\end{align*}
	\end{lemma}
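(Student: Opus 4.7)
The goal is to show $A_n := \sqrt n\,(\Pbb_N - \Pbb)\{\Vsc_{\bThetahat,\mubar} - \Vsc_{\bThetabar,\mubar}\} = o_\Pbb(1)$, where I have used the identity $g_1(\bTheta) = \Pbb\,\Vsc_{\bTheta,\mubar}$. My plan is to treat this as an increment of the empirical process $\sqrt N(\Pbb_N - \Pbb)$ over the $N$ unlabeled observations and control it with standard empirical-process machinery. Under Assumption \ref{assumption: covariates}(a),
\[
A_n \;=\; \sqrt{n/N}\,\cdot\,\sqrt N\,(\Pbb_N - \Pbb)\{\Vsc_{\bThetahat,\mubar} - \Vsc_{\bThetabar,\mubar}\},
\]
and since $\sqrt{n/N}\to 0$, any $O_\Pbb(1)$ bound on the $\sqrt N$-scale increment suffices; in fact an $o_\Pbb(1)$ asymptotic-equicontinuity bound will drop out. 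Thus it is enough to prove $\bTheta\mapsto\Vsc_{\bTheta,\mubar}$ is indexed by a Donsker class near $\bThetabar$ and is $L_2(\Pbb)$-continuous at $\bThetabar$.

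First I would establish that the class $\mathcal F_\delta := \{\Vsc_{\bTheta,\mubar}(\cdot) : \bTheta \in \mathcal S(\delta)\}$ is $\Pbb$-Donsker for some small fixed $\delta>0$. Inspecting \eqref{eq: Vs mubar}, the $\bTheta$-dependence enters only through: (i) linear forms $\bH_{10}\trans\bbeta_1$, $\bH_{20}\trans\bbeta_{22}$; (ii) hinges $[\bH_{t1}\trans\bgamma_t]_+$ inside $\Qopt_1, \Qopt_{2-}$, Lipschitz in $\bgamma_t$; (iii) treatment-rule indicators $I(\bH_{t1}\trans\bgamma_t > 0)$ appearing in $\omega_1,\omega_2$; and (iv) the smooth logistic links $\sigma(\bHcheck_t\trans\bxi_t)$, which Assumption \ref{assumption: donsker w}(ii) keeps uniformly bounded away from $0$ and $1$ on $\mathcal S(\delta)$ for small $\delta$. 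Classes (i), (ii), (iv) are Lipschitz-indexed and hence Donsker under the compact parameter spaces (Assumption \ref{assumption SS linear model}) and compact covariate support (Assumption \ref{assumption: covariates}(b)); class (iii) is the classical VC-subgraph class of half-spaces. The fixed limits $\mubar^v_2, \mubar^v_{\omega_2}, \mubar^v_{t\omega_2}$ are uniformly bounded by Assumption \ref{assumption: V imputation}(i) together with the constraint equations \eqref{V function impute_constraint}, giving a square-integrable envelope. Preservation of the Donsker property under Lipschitz combinations and pointwise products of uniformly bounded Donsker classes (van der Vaart and Wellner, 1996, \S 2.10) then delivers the Donsker conclusion for $\mathcal F_\delta$.

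Next I would verify $L_2(\Pbb)$-continuity: $\|\Vsc_{\bTheta,\mubar} - \Vsc_{\bThetabar,\mubar}\|_{L_2(\Pbb)} \to 0$ as $\bTheta \to \bThetabar$. The linear, Lipschitz, and smooth components contribute bounds proportional to $\|\bTheta-\bThetabar\|$ via Cauchy--Schwarz and bounded covariates. The only delicate pieces are the indicators $I(\bH_{t1}\trans\bgamma_t > 0)$, for which I would invoke Assumption \ref{assumption: non-regularity}: it guarantees $\Pbb(\bH_{t1}\trans\bgammabar_t = 0) = 0$, so $I(\bH_{t1}\trans\bgamma_t > 0) \to I(\bH_{t1}\trans\bgammabar_t > 0)$ $\Pbb$-a.s.\ as $\bgamma_t \to \bgammabar_t$, and bounded convergence supplies the $L_2$ limit. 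Combining this with $\bThetahat \stackrel{\Pbb}{\to} \bThetabar$ --- a consequence of Theorems \ref{theorem: unbiased theta2}, \ref{theorem: unbiased theta1} and standard Z-estimator consistency under Assumption \ref{assumption: donsker w} --- via a subsequence argument yields $\|\Vsc_{\bThetahat,\mubar} - \Vsc_{\bThetabar,\mubar}\|_{L_2(\Pbb)} \stackrel{\Pbb}{\to} 0$. The asymptotic equicontinuity lemma (e.g., van der Vaart, 1998, Lemma 19.24) applied to $\mathcal F_\delta$ then gives $\sqrt N\,(\Pbb_N - \Pbb)\{\Vsc_{\bThetahat,\mubar} - \Vsc_{\bThetabar,\mubar}\} = o_\Pbb(1)$; multiplying by $\sqrt{n/N}\to 0$ closes the proof.

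The main obstacle will be the non-smooth indicator and hinge terms inside $\omega_t$ and $\Qopt_t$: they block any naive Taylor expansion in $\bTheta$ and are exactly the reason Assumption \ref{assumption: non-regularity} has been imposed. Without that assumption the discontinuity set $\{\bH_{t1}\trans\bgammabar_t = 0\}$ could carry positive $\Pbb$-mass, and the $L_2$-continuity step would fail; with it, the VC and Lipschitz structure carry the argument through using only off-the-shelf empirical-process tools.
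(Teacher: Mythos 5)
Your proposal is correct and follows essentially the same route as the paper's proof: identify $g_1(\bTheta)=\int\Vsc_{\bTheta,\mubar}\,d\Pbb_{\bUvec}$, show the class $\{\Vsc_{\bTheta,\mubar}:\bTheta\in\mathcal{S}(\delta)\}$ is $\Pbb$-Donsker, establish consistency of $\bThetahat$ and $L_2(\Pbb)$-convergence of $\Vsc_{\bThetahat,\mubar}$ to $\Vsc_{\bThetabar,\mubar}$, invoke asymptotic equicontinuity, and absorb the $\sqrt{n/N}\to 0$ factor. The only cosmetic difference is that you argue the $L_2$-continuity of the indicator terms via $\Pbb(\bH_{t1}\trans\bgammabar_t=0)=0$ and dominated convergence, while the paper derives explicit $O_\Pbb(n^{-1})$ rates term by term; both are valid.
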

	
	\begin{lemma}\label{lemma: error diff}
	Under Assumptions \ref{assumption: covariates}-\ref{assumption: V imputation}, the following assertions hold:  
	\begin{align*}
	\mbox{I)}&\quad
	\sqrt n\Pbb_N\left\{\mathcal{E}_{\bThetahat}-\mathcal{E}_{\bThetabar}\right\}=o_\Pbb(1),\\
	\mbox{II)}&\quad\sqrt n\Pbb_N\left[\mathcal{E}_{\bThetabar}\right]
	=
	\mathbb{G}_n
	\left\{\nu\subSSLDR(\bL;\bThetabar)\right\}\\
	&\hspace{2.4cm}+
	\frac{1}{\sqrt n}\sum_{i=1}^n
	\left\{\bpsi^\theta(\bL_i)\trans\frac{\partial}{\partial \btheta}\int\nu\subSSLDR(\bL_i;\bTheta)d\Pbb_{\bL}\bigg|_{\bTheta=\bThetabar}
	+
	\bpsi^\xi(\bL_i)\trans\frac{\partial}{\partial \bxi}\int\nu\subSSLDR(\bL_i;\bTheta)d\Pbb_{\bL}\bigg|_{\bTheta=\bThetabar}
    \right\}\\
	&\hspace{2.4cm}+
	o_\Pbb(1).
	\end{align*}

	\end{lemma}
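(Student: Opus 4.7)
\textbf{Plan for Lemma \ref{lemma: error diff}.} My approach mirrors the template used in the proof of Theorem \ref{theorem: unbiased theta2}, where imputation errors $\muhat - \mubar$ are decomposed through the refitting structure into a cross-fitted nuisance piece plus a finite-dimensional adjustment, after which Lemmas \ref{lemm_chakrabortty} and \ref{lemm_deltas} handle the $\Usc$ empirical average and the refitting equations \eqref{V function reffitting} identify the labeled contribution.

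For Part I, the only source of discrepancy between $\mathcal{E}_{\bThetahat}$ and $\mathcal{E}_{\bThetabar}$ is the replacement of $\bThetabar$ by $\bThetahat$ in the multiplicative factors $\omega_1(\bHcheck_1,A_1;\cdot)$, $\beta_{21}$, and $\Qopt_{2-}(\bH_2;\cdot)$, while the imputation error factors $\muhat^v_s-\mubar^v_s$ are identical. I would Taylor-expand each multiplicative factor in $\bTheta$ around $\bThetabar$; by Theorems \ref{theorem: unbiased theta2}, \ref{theorem: unbiased theta1} and Assumption \ref{assumption: donsker w}, $\bThetahat-\bThetabar=O_\Pbb(n^{-1/2})$, and the partial derivatives are bounded uniformly on the compact support by Assumption \ref{assumption: covariates}(b) and boundedness of $\pi_t(\cdot;\bxi_t)$ away from zero in a neighborhood of $\bxibar$ (Assumption \ref{assumption: donsker w}). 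Combined with the uniform $o_\Pbb(1)$ bound on $\muhat^v_s - \mubar^v_s$ from Assumption \ref{assumption: V imputation} and the arguments used for $\etahat^v_s - \eta^v_s$, the pointwise integrand of $\mathcal{E}_{\bThetahat} - \mathcal{E}_{\bThetabar}$ is $o_\Pbb(n^{-1/2})$ uniformly in $\bUvec$, so applying $\sqrt{n}\Pbb_N$ gives $o_\Pbb(1)$.

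For Part II, I would decompose each imputation difference via $\muhat^v_s - \mubar^v_s = K^{-1}\sum_k (\mhat^{(-k)}_s - m_s) + (\etahat^v_s - \eta^v_s)$ for $s\in\{2,\omega_2,2\omega_2,3\omega_2\}$, and treat the two pieces separately inside $\sqrt{n}\Pbb_N[\mathcal{E}_{\bThetabar}]$. For the cross-fitted nuisance piece, $\Pbb_N$ applied to a bounded multiplicative factor times $\mhat^{(-k)}_s-m_s$ can be handled by Lemma \ref{lemm_chakrabortty}: since $n/N\to 0$ the empirical-to-population fluctuation over $\Usc$ is $O_\Pbb(N^{-1/2})=o_\Pbb(n^{-1/2})$, and the residual expectation term can be matched against the analogous labeled-data sum via Lemma \ref{lemm_deltas}, leaving an error of $o_\Pbb(n^{-1/2})$. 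For the scalar piece $\etahat^v_s - \eta^v_s$, inverting the refitting equations \eqref{V function reffitting} and linearizing around $\bThetabar$ yields
\begin{align*}
\etahat^v_s - \eta^v_s &= \frac{\Pbb_n[g_s(\bL;\bThetabar)\{h_s(\bL) - m_s(\bUvec) - \eta^v_s\}]}{\Ebb[g_s(\bL;\bThetabar)]} + R_{s,n}(\bThetahat - \bThetabar) + o_\Pbb(n^{-1/2}),
\end{align*}
where $g_s,h_s$ are the weight and response in the $s$-th refitting equation (e.g.\ $g_2=\omega_1$, $h_2=Y_2$), $R_{s,n}$ is the relevant derivative of the estimating equation, and the cross-fitted residual contribution is absorbed via Lemma \ref{lemm_deltas}. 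Multiplying by the pre-factor $\Pbb_N$-average of the corresponding weight term, the pieces assemble exactly into the labeled-data sum $\mathbb{G}_n\{\nu\subSSLDR(\bL;\bThetabar)\}$ after invoking the population constraints \eqref{V function impute_constraint} to identify $\mubar^v_s$-terms with their expectations.

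The derivative corrections emerge from the fact that the refitting equations use $\bThetahat$ rather than $\bThetabar$; linearizing $g_s(\cdot;\bThetahat) = g_s(\cdot;\bThetabar) + (\partial g_s/\partial \bTheta)(\bThetahat-\bThetabar) + o_\Pbb(n^{-1/2})$ produces terms of the form $(\bThetahat-\bThetabar)\trans \partial_{\bTheta}\Ebb[\nu\subSSLDR(\bL;\bTheta)]|_{\bThetabar}$, which after substituting the influence-function expansions $\sqrt{n}(\bthetahat-\bthetabar) = n^{-1/2}\sum_i \bpsi^\theta(\bL_i)+o_\Pbb(1)$ and $\sqrt{n}(\bxihat-\bxibar) = n^{-1/2}\sum_i\bpsi^\xi(\bL_i)+o_\Pbb(1)$ yield the claimed $\bpsi^\theta$ and $\bpsi^\xi$ terms. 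The main technical obstacle is the careful bookkeeping of cross-fitting remainders together with the $\bThetahat$-vs-$\bThetabar$ interaction inside the refitting equations: one must simultaneously track two sources of estimation (nuisance $\muhat$ and finite-dimensional $\bThetahat$) and verify that the leading-order pieces assemble into $\nu\subSSLDR$ via the constraint set \eqref{V function impute_constraint} while the cross-terms and second-order Taylor remainders all reduce to $o_\Pbb(1)$ under Assumptions \ref{assumption: covariates}--\ref{assumption: V imputation}.
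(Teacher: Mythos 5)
Your overall architecture matches the paper's: decompose $\muhat^v_s-\mubar^v_s$ into a cross-fitted nuisance piece plus the refitting scalar, control the $\Usc$-average of the nuisance piece with Lemma \ref{lemm_chakrabortty} and Lemma \ref{lemm_deltas}, invert the refitting equations \eqref{V function reffitting} to re-express everything as a labeled-data sum, and extract the derivative corrections through the influence-function expansions of $\bthetahat$ and $\bxihat$. However, there are two genuine gaps, both stemming from the same issue: the multiplicative factors $\omega_1(\bHcheck_1,A_1;\bTheta)$, $\omega_2(\bHcheck_2,A_2;\bTheta)$ and $\Qopt_{2-}(\bH_2;\btheta_2)$ contain the indicator $I(\bH_{t1}\trans\bgamma_t>0)$ and the positive part $[\bH_{21}\trans\bgamma_2]_+$, which are not differentiable in $\bTheta$. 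In Part I you propose to "Taylor-expand each multiplicative factor in $\bTheta$" and conclude that the integrand of $\mathcal{E}_{\bThetahat}-\mathcal{E}_{\bThetabar}$ is $o_\Pbb(n^{-1/2})$ \emph{uniformly in} $\bUvec$. That pointwise uniform claim is false: for $\bUvec$ with $\bH_{t1}\trans\bgammabar_t$ near zero, the difference $I(\bH_{t1}\trans\bgammahat_t>0)-I(\bH_{t1}\trans\bgammabar_t>0)$ takes values in $\{-1,0,1\}$ and is not small. What is true, and what the paper uses, is that the $\Pbb_N$-\emph{average} of the absolute difference of these factors is $O_\Pbb(n^{-1/2})$ (via the $\mathcal{A}_q$ set argument of Lemma \ref{lemm_gamma_difs}(a),(b) and the product rule of Lemma \ref{lemma: Op product}), which multiplied by $\sup_{\bUvec}|\muhat^v_s-\mubar^v_s|=o_\Pbb(1)$ gives $O_\Pbb(n^{-1/2})\,o_\Pbb(1)$ and hence the claim after scaling by $\sqrt n$. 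You need to replace your uniform pointwise bound by this average-level bound.

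The second gap is in Part II. After assembling the labeled-data contributions, what you obtain is $n^{-1/2}\sum_i\nu\subSSLDR(\bL_i;\bThetahat)$ — evaluated at $\bThetahat$, not $\bThetabar$ — and passing to the stated display requires the expansion
\begin{align*}
\frac{1}{\sqrt n}\sum_{i=1}^n\nu\subSSLDR(\bL_i;\bThetahat)
=\mathbb{G}_n\left\{\nu\subSSLDR(\bL;\bThetabar)\right\}
+\sqrt n\int\nu\subSSLDR(\bL;\bThetahat)\,d\Pbb_{\bL}+o_\Pbb(1).
\end{align*}
The cross term $\mathbb{G}_n\{\nu\subSSLDR(\bL;\bThetahat)-\nu\subSSLDR(\bL;\bThetabar)\}=o_\Pbb(1)$ cannot be obtained by pointwise linearization of $g_s(\cdot;\bThetahat)$ as you propose — again because of the indicators — and the paper instead establishes it with an empirical-process argument: the relevant class (built from half-space indicators, the logistic propensity classes $\mathcal{W}_t$ and the linear classes $\mathcal{Q}_t$) is $\Pbb$-Donsker, $\Pbb(\bThetahat\in\mathcal{S}(\delta))\to1$, and $\int\{\nu\subSSLDR(\cdot;\bThetahat)-\nu\subSSLDR(\cdot;\bThetabar)\}^2d\Pbb\to0$, so Theorem 2.1 of the van der Vaart–Wellner reference applies. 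Only the \emph{population} functional $g_2(\bTheta)=\int\nu\subSSLDR(\bL;\bTheta)d\Pbb_{\bL}$ is smooth enough to Taylor-expand (the expectation smooths the indicators), and it is this expansion — together with the constraint $\int\nu\subSSLDR(\bL;\bThetabar)d\Pbb_{\bL}=0$ from \eqref{V function impute_constraint} — that produces the $\bpsi^\theta$ and $\bpsi^\xi$ derivative corrections. Your write-up identifies the right limiting objects but omits the Donsker step that justifies the interchange; without it the argument does not close.
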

	
	\begin{proof}[Proof of Theorem \ref{thrm_ssV_fun}]
	We start by expanding the expression in \eqref{SS_value_fun} and using definitions \eqref{eq: Vs muhat}, \eqref{eq: Vs mubar}, \eqref{eq: errs terms}:
	
	\begin{align*}
	&\sqrt n
	\left\{
	\Pbb_{N}\left[\Vsc_{\bThetahat,\muhat}\right]-\Ebb_\mathbb{S}\left[\Vsc_{\bThetabar,\mubar}\right]
	\right\}\\
	=&
    \sqrt n\left\{\underbrace{\Pbb_N\left[\Vsc_{\bThetabar,\mubar}\right]+\Pbb_N\left[\mathcal{E}_{\bThetabar}\right]}_{(I)}-\underbrace{g_1(\bThetabar)-\mathbb{E}_\mathbb{S}\left[\mathcal{E}_{\bThetabar}\right]}_{(II)}\right\}\\
	+&
	\sqrt n\left\{\bigg(\Pbb_N\left[\Vsc_{\bThetahat,\mubar}\right]+\Pbb_N\left[\mathcal{E}_{\bThetahat}\right]-\underbrace{g_1(\bThetahat)\bigg)-\mathbb{E}_\mathbb{S}\left[\mathcal{E}_{\bThetahat}\right]}_{(III)}\right\}
	-
	\left\{\underbrace{\Pbb_N\left[\Vsc_{\bThetabar,\mubar}\right]+\Pbb_N\left[\mathcal{E}_{\bThetabar}\right]}_{(I)}-\underbrace{g_1(\bThetabar)-\mathbb{E}_\mathbb{S}\left[\mathcal{E}_{\bThetabar}\right]}_{(II)}\right\}\\
	+&
	\sqrt n\left\{\underbrace{g_1(\bThetahat)+\mathbb{E}_\mathbb{S}\left[\mathcal{E}_{\bThetahat}\right]}_{(III)}-\mathbb{E}_\mathbb{S}\left[\Vsc_{\bThetabar,\mubar}\right]\right\}
	\\
	=&
    \sqrt n\left\{\Pbb_N\left[\Vsc_{\bThetabar,\mubar}\right]-g_1(\bThetabar)\right\}
    \\
	+&
	\sqrt n\left\{g_1(\bThetahat)-g_1(\bThetabar)\right\}
	\\
	+&
	\sqrt n\left\{\left(\Pbb_N\left[\Vsc_{\bThetahat,\mubar}\right]-g_1(\bThetahat)\right)
	-
	\bigg(\Pbb_N\left[\Vsc_{\bThetabar,\mubar}\right]-g_1(\bThetabar)\bigg)\right\}
	\\
	+&
	\sqrt n\Pbb_N\left[\mathcal{E}_{\bThetahat}-\mathcal{E}_{\bThetabar}\right]
	\\
	+&
	\sqrt n\Pbb_N\left[\mathcal{E}_{\bThetabar}\right]\\
	=&
	\frac{1}{\sqrt n}\sum_{i=1}^n\psi^{v}_{\subSSLDR}(\bL_i;\bThetabar)
	+o_\Pbb\left(1\right).
    \end{align*}

	which follows from Lemmas \ref{lemma: CLT and expectation terms}, \ref{lemma: centered sample average} \& \ref{lemma: error diff} with the influence function $\psi^{v}_{\subSSLDR}$ defined as
		
	\begin{align*}
	\psi^{v}\subSSLDR(\bL;\bThetabar)
	=&
	\nu\subSSLDR(\bL;\bThetabar)
	+
	\bpsi^\theta(\bL)\trans\frac{\partial}{\partial \btheta}\int\left\{\Vsc_{\bTheta,\mubar}(\bL)+\nu\subSSLDR(\bL;\bTheta)\right\}d\Pbb_{\bL}\bigg|_{\bTheta=\bThetabar}\\
	&\hspace{2.1cm}+
	\bpsi^\xi(\bL)\trans\frac{\partial}{\partial \bxi}\int\left\{\Vsc_{\bTheta,\mubar}(\bUvec)+\nu\subSSLDR(\bL;\bTheta)\right\}d\Pbb_{\bL}\bigg|_{\bTheta=\bThetabar},\\
	\nu_{\subSSLDR}(\bL;\bThetabar)
	=&
	\omega_1(\bHcheck_1,A_1;\bThetabar_1)
	(1+\bar\beta_{21})
	\left\{
	Y_2-\mubar_2^v(\bUvec)
	\right\}
	+
	\omega_2(\bHcheck_2,A_2,\bThetabar_2)Y_3-
	\mubar_{3\omega_2}(\bUvec)\\
	-&
	\bar\beta_{21}\left\{\omega_2(\bHcheck_2,A_2,\bThetabar_2)Y_2-
	\mubar_{2\omega_2}(\bUvec)\right\}
	-
	\Qopt_{2-}(\bH_2; \bthetabar_2)
	\left\{
	\omega_2(\bHcheck_2,A_2,\bThetabar_2)-\mubar_{\omega_2}(\bUvec)
	\right\}
	\end{align*}

	Next note that 
	\[
	\int\left(\Vsc_{\bTheta,\mubar}(\bUvec)+\nu\subSSLDR(\bL;\bTheta)\right)d\Pbb_{\bL}\bigg|_{\bTheta=\bThetabar}
	=
	\int\Vsc\subSUPDR(\bL;\bTheta)d\Pbb_{\bL}\bigg|_{\bTheta=\bThetabar},
	\]
	where $\Vsc\subSUPDR(\bL;\bTheta)$ is defined in \eqref{eq: lab value fun}.
	Finally, all random variables in the expression of $\psi^{v}_{\subSSLDR}(\bL;\bThetabar)$ are bounded by Assumptions \ref{assumption: covariates} and \ref{assumption: donsker w} we have $\mathbb{E}\left[\psi^{v}_{\subSSLDR}(\bL;\bThetabar)^2\right]<\infty$, the central limit theorem yields that
	\[
	\sqrt n
	\left\{
	\Pbb_{N}\left[\Vsc_{\bThetahat,\muhat}\right]-g_1(\bThetabar)
	\right\}=
    \frac{1}{\sqrt n}\sum_{i=1}^n\psi^{v}_{\subSSLDR}(\bL_i;\bThetabar)
	+o_\Pbb\left(1\right)\stackrel{d}{\longrightarrow}N\left(0,\sigma^2\subSSLDR\right).
	\]
	\end{proof}
	
	\begin{proof}[Proof of Lemma \ref{lemma: CLT and expectation terms}]
	I) We start with $\sqrt n\left\{\Pbb_N\left[\Vsc_{\bThetabar,\mubar}\right]-g_1(\bThetabar)\right\}$. Note that $\Vsc_{\bThetabar, \mubar}(\bUvec)$ is a deterministic function of random variable $\bUvec$ as parameters and imputation functions are fixed.
	We have that $\Ebb\left[\Vsc_{\bThetabar, \mubar}(\bUvec)^2\right]<\infty$ holds by Assumption \ref{assumption: covariates} \& \ref{assumption: donsker w}. Thus the central limit theorem yields $\mathbb{G}_N\left\{\Vsc_{\bThetabar, \mubar}\right\}\stackrel{d}{\longrightarrow}\mathcal{N}\left(0,Var\left[\Vsc_{\bThetabar, \mubar}\right]\right),$ therefore \[
	\sqrt n\left\{\Pbb_N\left[\Vsc_{\bThetabar,\mubar}\right]-g_1(\bThetabar)\right\}=\sqrt{\frac{n}{N}}\mathbb{G}_N\left\{\Vsc_{\bThetabar, \mubar}\right\}=O_\Pbb\left(\frac{\sqrt n}{N}\right)=o_\Pbb\left(1\right).
	\]
	II) We next consider $\sqrt n\left\{g_1(\bThetahat)-g_1(\bThetabar)\right\}$. Using a Taylor series expansion
	\be
	g_1(\bThetahat)
	=
	g_1(\bThetabar)
	+
	(\bthetahat-\bthetabar)\trans\frac{\partial}{\partial \btheta}g_1(\bThetabar)
	+
	(\bxihat-\bxibar)\trans\frac{\partial}{\partial \bxi}g_1(\bThetabar)
	+O_\Pbb\left(n^{-1}\right),
	\ee
	as both $\|\bthetahat-\bthetabar\|_2^2=O_\Pbb\left(n^{-1}\right)$ and $\|\bxihat-\bxibar\|_2^2=O_\Pbb\left(n^{-1}\right)$ by Theorems \ref{theorem: unbiased theta2}, \ref{theorem: unbiased theta1} and Assumption \ref{assumption: donsker w},
	therefore 
	\be
	\sqrt n\left\{g_1(\bThetahat)-g_1(\bThetabar)\right\}
	=&
	\sqrt n(\bthetahat-\bthetabar)\trans\frac{\partial}{\partial \btheta}g_1(\bThetabar)
	+
	\sqrt n(\bxihat-\bxibar)\trans\frac{\partial}{\partial \bxi}g_1(\bThetabar)
	+o_\Pbb(1).
	\ee
    We can write 
	\begin{align*}
	&\sqrt n\left\{g_1(\bThetahat)-g_1(\bThetabar)\right\}
	=
	\frac{\partial}{\partial \btheta}g_1(\bThetabar)\frac{1}{\sqrt n}\sum_{i=1}^n\bpsi^\theta(\bUvec_i)
	+
	\frac{\partial}{\partial \bxi}g_1(\bThetabar)\frac{1}{\sqrt n}\sum_{i=1}^n\bpsi^\xi(\bUvec_i)
	+o_\Pbb(1).
	\end{align*}
	\end{proof}
	
	\begin{proof}[Proof of Lemma \ref{lemma: centered sample average}]
	
	We consider $\sqrt n\left\{\left(\Pbb_N\left[\Vsc_{\bThetahat,\mubar}\right]-g_1(\bThetahat)\right)-\bigg(\Pbb_N\left[\Vsc_{\bThetabar,\mubar}\right]-g_1(\bThetabar)\bigg)\right\}$, recall that $d_t(\bHcheck_t,\btheta_t)=I(\bH_{t1}\trans\bgamma_t>0)$ $t=1,2$, thus the inverse probability weight functions are defined as 
    \begin{align*}
    \omega_1(\bHcheck_1,A_1,\bTheta)
    &\equiv
    \frac{I(\bH_{11}\trans\bgamma_1>0)A_1}{\pi_1(\bHcheck_1;\bxi_1)}+\frac{\{1-I(\bH_{11}\trans\bgamma_1>0)\}\{1-A_1\}}{1-\pi_1(\bHcheck_1;\bxi_1)}, \quad \mbox{and}\\ \omega_2(\bHcheck_2,A_2,\bTheta)
    &\equiv
    \omega_1(\bHcheck_1,A_1,\bTheta)\left(\frac{I(\bH_{21}\trans\bgamma_2>0)A_2}{\pi_2(\bHcheck_2;\bxi_2)}+\frac{\{1-I(\bH_{21}\trans\bgamma_2>0)\}\{1-A_2\}}{1-\pi_2(\bHcheck_2;\bxi_2)}\right).
    \end{align*}
    Define the class 
    \[
	\ell_t=\{I\left(\bH_t\trans\bgamma_t\ge0\right):\mathcal{H}_{t1},\bgamma\in\mathbb{R}^{q_t}\},\:t=1,2
	\]
    and the collection of half spaces $\mathcal{C}_\ell\equiv\left\{\bH_t\in\mathbb{R}^{q_t}:\bH_t\trans\bgamma_t\ge0,\bgamma\in\mathbb{R}^{q_t},t\in\{1,2\}\right\}$. By \citet{Dudley} $\mathcal{C}_\ell$ is a VC class of VC dimension $q_t+1$. Next by \cite{vanderVaartAadW1996WCaE} we have that as $\mathcal{C}_\ell$ is a VC-class $\ell_t$ is a class of the same index. Finally, by Theorem 2.6.7 we have that $\ell_t$ is a $\Pbb$-Donsker class.
    Next define the following function
	\begin{align*}
	f_{\bTheta}(\bUvec)
	=&
	\Qopt_1(\bHcheck_1;\btheta_1)
	+\omega_1(\bHcheck_1,A_1,\bTheta)
	\left[(1+\beta_{21})\mubar_2^v(\bUvec)-
	 \Qopt_1(\bHcheck_1;\btheta_1)+\Qopt_{2-}(\bH_2;\btheta_2)
	\right]\\
	+&
	\mubar^v_{3\omega_2}(\bUvec)-\beta_{21}\mubar^v_{2\omega_2}(\bUvec)-\Qopt_{2-}(\bH_2;\btheta_2)\mubar^v_{\omega_2}(\bUvec).
	\end{align*}
	
	We define the associated class of functions $\mathcal{C}_1=\left\{
	f_{\bTheta}(\bUvec)
	|\bUvec,\bTheta\in\mathcal{S}(\delta)
	\right\}.$\\
	\indent i) By Assumptions \ref{assumption SS linear model}, \ref{assumption: donsker w} and
	Theorem 19.5 in \cite{vaart_donsker}, $\ell_t,\:\mathcal{W}_t,\:\mathcal{Q}_t,t=1,2$ are $\Pbb$-Donsker classes. Thus it follows that $\mathcal{C}_1$ is a Donsker class. 
	
	ii) We estimate ${\bxi}_1,{\bxi}_2$ for (\ref{logit_Ws}) with their maximum likelihood estimators, $\bxihat_1,\bxihat_2$, solving $\Pbb_n\left[S_t(\bxi_t)\right]=\bzero, t=1,2$. By Assumption (\ref{assumption: donsker w}) and Theorem 5.9 in \cite{vaart_donsker} $\bxihat_t\stackrel{p}{\longrightarrow}\bxibar_t, t=1,2$. Next, by Theorems \ref{theorem: unbiased theta2}, \ref{theorem: unbiased theta1}, under Assumptions \ref{assumption: covariates}, \ref{assumption: Q imputation}, $\bthetahat_t\stackrel{p}{\longrightarrow}\bthetabar_t, t=1,2$. Thus $\Pbb\left(\bThetahat\in\mathcal{S}(\delta)\right)\rightarrow1,\:\forall\delta.$ 
	
	iii) We next show  $\int\left(\Vsc_{\bThetahat,\mubar}-\Vsc_{\bThetabar,\mubar}\right)^2d\Pbb_{\bUvec}\longrightarrow0.$
	By Assumptions \ref{assumption: donsker w} (ii), \ref{assumption: V imputation}, and bounded covariates and there exists a constant $c\in\mathbb{R}$ such that we can write

	\begin{align*}
	&\int\left(\Vsc_{\bThetahat,\mubar}-\Vsc_{\bThetabar,\mubar}\right)^2d\Pbb_{\bUvec}\\
	\le&
	\int
	\bigg(\Qopt_1(\bH_1;\bthetahat_1)- \Qopt_1(\bH_1;\bthetabar_1)\bigg)^2d\Pbb_{\bUvec}\\
	+&c
	\int\bigg(\frac{1}{1-\pi_1(\bH_1;\bxihat_1)}-\frac{1}{1-\pi_1(\bH_1;\bxibar_1)}\bigg)^2d\Pbb_{\bUvec}
	\\
	+&c
	\int
	\left(\frac{1}{\pi_1(\bH_1;\bxihat_1)}-\frac{1}{ \pi_1(\bH_1;\bxibar_1)}\right)^2
	\\
	+&
	c\int\left\{\Qopt_{2-}(\bHcheck_2;\bthetahat_2)-\Qopt_{2-}(\bH_2;\bthetabar_2)\right\}^2d\Pbb_{\bUvec}
	\\
	+&
	c\int\left\{I(\bH_{11}\trans\bgammahat_1>0)-I(\bH_{11}\trans\bgammabar_1>0)\right\}^2d\Pbb_{\bUvec}
	\\
	+&
	\left(\hat\beta_{21}-\bar\beta_{21}\right)^2
	\\
	+&
	c
	\int
	\left(\bH_{20}\trans\bbetabar_{22}+[\bH_{20}\trans\bgammabar_2]_+-\bH_{20}\trans\bbetahat_{22}-[\bH_{20}\trans\bgammahat_2]_+
	\right)^2
	d\Pbb_{\bUvec}
	\end{align*}
	
	 where we use $(a-b)^2,(a+b)^2\leq2a^2+2b^2\:\forall a,b\in\mathbb{R}$, $\dhat_1,A_1\leq1$ for all $\bH\in\mathcal{H}$, and boundedness of $\bthetahat_t,t=1,2$ by Assumptions \ref{assumption: covariates}-\ref{assumption SS linear model}. Next note that all terms outside integrals are bounded by Assumptions \ref{assumption: covariates}-\ref{assumption SS linear model}. Finally we consider terms within the integrals with the following example
	
	\begin{align*}
	\int
	\left(\Qopt_{2-}(\bH_2;\bthetahat_2)- \Qopt_{2-}(\bH_2;\bthetabar)\right)^2d\Pbb_{\bUvec}
	=&\int\left(
	\bH_{20}\trans\bbetahat_{22}+[\bH_{21}\trans\bgammahat_2]_+
	-\bH_{20}\trans\bbetabar_{22}-[\bH_{21}\trans\bgammabar_2]_+
	\right)^2d\Pbb_{\bUvec}\\
	=&
	4\|
	\bbetahat_{22}-\bbetabar_{22}\|_2^2\int\bH_{20}\trans\bH_{20}
	d\Pbb_{\bUvec}\\
	+&
	4
	\|\bgammahat_2-\bgammabar_2\|_2^2\int\bH_{21}\trans\bH_{21}
	d\Pbb_{\bUvec}=O_\Pbb(n^{-1}),
	\end{align*}
	
	which follows from Theorem \ref{theorem: unbiased theta2} and Lemma \ref{lemm_gamma_difs} (a). All similar terms can be handled accordingly. We get the convergence in probability to 0: $\int\left(\Vsc_{\bThetahat,\mubar}-\Vsc_{\bThetabar,\mubar}\right)^2d\Pbb_{\bUvec}\rightarrow0$ as all other terms within expectation are $O_\Pbb\left(n^{-1}\right)$ by the dominating convergence theorem, boundedness
	conditions as stated in Assumptions \ref{assumption: Q imputation}, \ref{assumption: donsker w}, and the consistency of $\bxihat$ and $\bthetahat$ as $\Pbb\left(\bThetahat\in\mathcal{S}(\delta)\right)\rightarrow1,\:\forall\delta>0.$ 
	
	Finally, we have i) $\Pbb\left(\bThetahat\in\mathcal{S}(\delta)\right)\rightarrow1,$ ii) $\mathcal{C}_1$ is a Donsker class, and\\ iii) $\int\left(\Vsc_{\bThetahat,\mubar}-\Vsc_{\bThetabar,\mubar}\right)^2d\Pbb_{\bUvec}\longrightarrow0$, then by Theorem 2.1 in \cite{wellner_emp}, 
	\[
	\sqrt {\frac{n}{N}}\sqrt n\left\{\left(\Pbb_N\left[\Vsc_{\bThetahat,\mubar}\right]-g_1(\bThetahat)\right)-\bigg(\Pbb_N\left[\Vsc_{\bThetabar,\mubar}\right]-g_1(\bThetabar)\bigg)\right\}=\sqrt {\frac{n}{N}}o_\Pbb(1).
	\]
	\end{proof}
	
	\begin{proof}[Proof of Lemma \ref{lemma: error diff}]
	I) First note that from the empirical normal equations \eqref{V function reffitting}, we have that the solution $\etahat_2^v$ satisfies $\etahat_2^v-\eta_2^v=O_\Pbb\left(n^{-\frac{1}{2}}\right)$. Therefore
    \begin{align*}
    \sup_{\bUvec}\left|
	\muhat_2^v(\bUvec)-\mu_2^v(\bUvec)
	\right|
	&=
	\sup_{\bUvec}\left|
	\frac{1}{K}
    \mhat_2\supnk(\bUvec)+
    \etahat_2^v-
    m_2(\bUvec)+
    \eta_2^v
	\right|\\
	&\le
	\frac{1}{K}
	\sup_{\bUvec}\left|
    \mhat_2\supnk(\bUvec)+
    m_2(\bUvec)
	\right|
	+
	\left|\etahat_2^v-\eta_2^v\right|\\
	&=o_\Pbb(1)+O_\Pbb\left(n^{-\frac{1}{2}}\right)=o_\Pbb(1),
    \end{align*}
    where we additionally use Assumption \ref{assumption: V imputation} for the difference of estimated and true imputation models $\hat m_2$, $m_2$. Similarly $\sup_{\bUvec}
	\left|
	\muhat^v_{t\omega_2}(\bUvec)-\mubar^v_{t\omega_2}(\bUvec)\right|
	=o_\Pbb(1)$, $\sup_{\bUvec}
	\left|
	\muhat^v_{\omega_2}(\bUvec)-\mubar^v_{\omega_2}(\bUvec)\right|
	=o_\Pbb(1)$, $t=2,3$. Next, using the triangle and Jensen's inequalities, we have
	\begin{align*}
	&\Pbb_N\left[\mathcal{E}_{\bThetahat}-\mathcal{E}_{\bThetabar}\right]\\
	\le&
	\Pbb_N\bigg|
	\omega_1(\bHcheck_1,A_1;\bThetahat_1)(1+\hat\beta_{21})-\omega_1(\bHcheck_1,A_1;\bThetabar_1)(1+\bar\beta_{21})\bigg|\sup_{\bUvec}\left|
	\muhat_2^v(\bUvec)-\mubar_2^v(\bUvec)
	\right|
	\\
	+&
	\bigg|
	\hat\beta_{21}-\bar\beta_{21}\bigg|\sup_{\bUvec}\left|\muhat_{2\omega_2}(\bUvec)-\mu_{2\omega_2}(\bUvec)\right|
	\\
	+&
	\Pbb_N\bigg|
	\Qopt_{2-}(\bH_2;\bthetahat_2)-\Qopt_{2-}(\bH_2;\bthetabar_2)
	\bigg|
	\sup_{\bUvec}\left|\muhat_{\omega_2}(\bUvec)-\mubar_{\omega_2}(\bUvec)\right|
	\\
	\le&
	\Pbb_N\bigg|
	\omega_1(\bHcheck_1,A_1;\bThetahat_1)-\omega_1(\bHcheck_1,A_1;\bThetabar_1)\bigg|o_{\Pbb}(1)
	+
	\Pbb_N\bigg|
	\omega_1(\bHcheck_1,A_1;\bThetahat_1)\hat\beta_{21}-\omega_1(\bHcheck_1,A_1;\bThetabar_1)\bar\beta_{21}\bigg|o_{\Pbb}(1)
	\\
	+&
	\bigg|
	\hat\beta_{21}-\bar\beta_{21}\bigg|o_{\Pbb}(1)
	+
	\Pbb_N\bigg|
	\left(\bbetahat_{22}-\bbetahat_{22}\right)\trans\bH_{20}+[\bgammahat_2\trans\bH_{21}]_+-[\bgammabar_2\trans\bH_{21}]_+\bigg|
	o_{\Pbb}(1).
	\end{align*}
	
	By Theorem \ref{theorem: unbiased theta2} we have $\bthetahat_2-\bthetabar_2=O_\Pbb\left(n^{-\frac{1}{2}}\right)$, also from Lemma \ref{lemm_gamma_difs} (a) it follows that $\Pbb_N\left(\left[\bH_{21}\trans\bgammahat_2\right]_+-\left[\bH_{21}\trans\bgammabar_2\right]_+\right)=O_\Pbb\left(n^{-\frac{1}{2}}\right)$, hence as covariates are bounded we have
	\begin{align*}
	&\bigg|
	\hat\beta_{21}-\bar\beta_{21}\bigg|o_{\Pbb}(1)+\Pbb_N\bigg|
	\left(\bbetahat_{22}-\bbetahat_{22}\right)\trans\bH_{20}+[\bgammahat_2\trans\bH_{21}]_+-[\bgammabar_2\trans\bH_{21}]_+\bigg|\\
	&\le \left\{o_{\Pbb}(1)+\sup_{\bH_{20}}\|\bH_{20}\|_2\|\bbetahat_{22}-\bbetabar_{22}\|_2+\sup_{\bH_{21}}\|\bH_{21}\|_2\right\}O_\Pbb\left(n^{-\frac{1}{2}}\right)
	=O_\Pbb\left(n^{-\frac{1}{2}}\right).
	\end{align*} 
	Next, we can write 
\[
\omega_1(\bH_1,A_1;\bThetahat_1)
=
I\left\{A_1=d_1\left(\bH_1;\bxihat_1\right)
\right\}\left\{
\frac{A_1}{\pi_1\left(\bH_1;\bxihat_1\right)}
+
\frac{1-A_1}{1-\pi_1\left(\bH_1;\bxihat_1\right)}
\right\}.
\]
By Lemma \ref{lemm_gamma_difs} (b) it follows that \begin{align*}
\Pbb_N\left[I\left\{A_1=d_1\left(\bH_1;\bxihat_1\right)\right\}-I\left\{A_1=d_1\left(\bH_1;\bxibar_1\right)
\right\}\right]
&=
O_\Pbb\left(n^{-\frac{1}{2}}\right),\\
\Pbb_N\left[\frac{A_1}{\pi_1(\bH_1;\bxihat_1)}-\frac{A_1}{\pi_1\left(\bH_1;\bxibar_1\right)}\right]&=O_\Pbb\left(n^{-\frac{1}{2}}\right),\\
\Pbb_N\left[\frac{1-A_1}{1-\pi_1(\bH_1;\bxihat_1)}-\frac{1-A_1}{1-\pi_1\left(\bH_1;\bxibar_1\right)}\right]&=O_\Pbb\left(n^{-\frac{1}{2}}\right).
\end{align*}
Using the above and Lemma \ref{lemma: Op product} we get 
\begin{align*}
\hat\beta_{21}\Pbb_N
		\left\{\omega_1(\bHcheck_1,A_1;\bThetahat_1)\right\}
		-
		\bar\beta_{21}\Pbb_N
		\left\{\omega_1(\bHcheck_1,A_1;\bThetabar_1)\right\}
		&=O_\Pbb\left(n^{-\frac{1}{2}}\right),\\
\Pbb_N
		\left\{\omega_1(\bHcheck_1,A_1;\bThetahat_1)\right\}
		-
		\Pbb_N
		\left\{\omega_1(\bHcheck_1,A_1;\bThetabar_1)\right\}&=O_\Pbb\left(n^{-\frac{1}{2}}\right).
\end{align*}

	From the above we get
	\[
	\Pbb_N\left\{\mathcal{E}_{\bThetahat}-\mathcal{E}_{\bThetabar}\right\}
	=
	O_\Pbb\left(n^{-\frac{1}{2}}\right)
	o_\Pbb(1).
	\]

	II) To show the relevant result, we first recall the definition of $\nu\subSSLDR$ from Theorem \ref{thrm_ssV_fun} and show that
	\begin{align}\label{eq: nu IF}
	\begin{split}
	\frac{1}{\sqrt n}\sum_{i=1}^n\nu_{\subSSLDR}(\bL_i;\bThetahat)
	&=
	\frac{1}{\sqrt n}\sum_{i=1}^n
	\nu\subSSLDR(\bL_i;\bThetabar)
	\\
	&+
	\frac{1}{\sqrt n}\sum_{i=1}^n
	\left(
	\frac{\partial}{\partial \btheta}\Ebb\left[\nu\subSSLDR(\bL_i;\bThetabar)\right]\right)\trans
	\bpsi^\theta(\bL_i)\\
	&+
	\frac{1}{\sqrt n}\sum_{i=1}^n
	\left(
	\frac{\partial}{\partial \bxi}\Ebb\left[\nu\subSSLDR(\bL_i;\bThetabar)\right]\right)\trans
	\bpsi^\xi(\bL_i)
	+
	o_\Pbb(1).
	\end{split}
	\end{align}
	We start expanding $\frac{1}{\sqrt n}\sum_{i=1}^n\nu_{\subSSLDR}(\bL_i;\bThetahat)$ as
	\begin{align*}
	&\frac{1}{\sqrt n}\sum_{i=1}^n\nu_{\subSSLDR}(\bL_i;\bThetahat)\\
	&=
	\mathbb{G}_n\left\{\nu_{\subSSLDR}(\bL;\bThetabar)\right\}
	+
	\mathbb{G}_n\left\{\nu_{\subSSLDR}(\bL;\bThetahat)-\nu_{\subSSLDR}(\bL;\bThetabar)\right\}
	+
	\sqrt n\int\nu_{\subSSLDR}(\bL;\bThetahat)d\Pbb_{\bL},
	\end{align*}
	we next consider the limit of each term above.

	1) Using a Taylor series expansion on $\int\nu_{\subSSLDR}(\bL;\bThetahat)d\Pbb_{\bL}$ we get
	\[
	\int\nu_{\subSSLDR}(\bL;\bThetahat)d\Pbb_{\bL}=\int\nu_{\subSSLDR}(\bL;\bThetabar)d\Pbb_{\bL}+\left(\bThetahat-\bThetabar\right)\trans\frac{\partial}{\partial\bTheta}\int\nu_{\subSSLDR}(\bL;\bTheta)d\Pbb_{\bL}\bigg|_{\bTheta=\bThetabar}+O_{\Pbb}\left(n^{-1}\right),
	\]
	where the remaining terms are of order $O\left\{\left(\bThetahat-\bThetabar\right)^2\right\}$ which by Theorems \ref{theorem: unbiased theta2} \& \ref{theorem: unbiased theta1} are $O_{\Pbb}\left(n^{-1}\right)$. Next note that from \eqref{V function impute_constraint} it follows that $\int\nu_{\subSSLDR}(\bL;\bThetabar)d\Pbb_{\bL}=0$, and thus letting $g_2(\bTheta)=\int\nu_{\subSSLDR}(\bL;\bTheta)d\Pbb_{\bL}$ we have
    \be
	\sqrt ng_2(\bThetahat)=\sqrt n(\bthetahat-\bthetabar)\trans\frac{\partial}{\partial \bTheta}g_2(\bTheta)\bigg|_{\bTheta=\bThetabar}
	+
	\sqrt n(\bxihat-\bxibar)\trans\frac{\partial}{\partial \bxi}g_2(\bTheta)\bigg|_{\bTheta=\bThetabar}
	+o_\Pbb(1).
	\ee
	We can write 
	\begin{align*}
	&\sqrt ng_2(\bThetahat)
	=
	\frac{1}{\sqrt n}\sum_{i=1}^n\bpsi^\theta(\bL_i)\trans
	\frac{\partial}{\partial \btheta}g_2(\bTheta)\bigg|_{\bTheta=\bThetabar}
	+
	\frac{1}{\sqrt n}\sum_{i=1}^n\bpsi^\xi(\bL_i)\trans\frac{\partial}{\partial \bxi}g_2(\bTheta)\bigg|_{\bTheta=\bThetabar}
	+o_\Pbb(1).
	\end{align*}
	2) We next show
	\[
	\mathbb{G}_n\left\{\nu_{\subSSLDR}(\bL;\bThetahat)-\nu_{\subSSLDR}(\bL;\bThetabar)\right\}=o_\Pbb(1),
	\]
	define the class 
    \[
	\ell_t=\{I(\bH\trans\bgamma_t\ge0):\mathcal{H}_{t1},\bgamma\in\mathbb{R}^{q_t}\},\:t=1,2
	\]
    and the collection of half spaces $\mathcal{C}_\ell\equiv\left\{\bH_t\in\mathbb{R}^{q_t}:\bH\trans\bgamma_t\ge0,\bgamma\in\mathbb{R}^{q_t},t\in\{1,2\}\right\}$, by \cite{Dudley} $\mathcal{C}_\ell$ is a VC class of VC dimension $q_t+1$, next by \cite{vanderVaartAadW1996WCaE} we have that as $\mathcal{C}_\ell$ is a VC-class $\ell_t$ is a class of the same index. Finally, by Theorem 2.6.7 we have that $\ell_t$ is a Donsker class.
	\begin{align*}
	f_{\bTheta}(\bL_i)
	=&
	\omega_1(\bHcheck_{1i},A_{1i};\bTheta_1)
	(1+\beta_{21})
	\left\{
	Y_{2i}-\mubar_2^v(\bUvec_i)
	\right\}
	+
	\omega_2(\bHcheck_{2i},A_{2i};\bTheta_2)
	Y_{3i}-
	\mubar_{3\omega_2}(\bUvec_i)\\
	-&
	\beta_{21}\left\{
	\omega_2(\bHcheck_{2i},A_{2i};\bTheta_2)Y_{2i}-
	\mubar_{2\omega_2}(\bUvec_i)\right\}
	-
	\Qopt_{2-}(\bH_{2i};\btheta_2)
	\left\{
	\omega_2(\bHcheck_{2i},A_{2i};\bTheta_2)
	-
	\mubar_{\omega_2}(\bUvec_i)
	\right\},
	\end{align*}
	we define the class of functions
	$
	\mathcal{C}_2=\left\{
	f_{\bTheta}(\bL)
    |\bTheta\in\mathcal{S}(\delta)
	\right\}.
	$
	
	i) By Assumptions \ref{assumption SS linear model}, \ref{assumption: donsker w} and
	Theorem 19.5 in \cite{vaart_donsker}, $\mathcal{W}_t,\:\mathcal{Q}_t,t=1,2$ are a $\Pbb$-Donsker class. Additionally, the terms in the $\omega_t(\bH_t,A_t;\bTheta_t)$ functions of the form $\bH_{t1}\trans\bgamma_tI(\bH_{t1}\trans\bgamma_t>0)$ constitute a $\Pbb$-Donsker class, as $\bH_{t1}\trans\bgamma_t$ is linear in $\bgamma_t$ and $I(\bH_{t1}\trans\bgamma_t>0)$ is $\Pbb$-Donsker. Thus it follows that $\mathcal{C}_2$ is a $\Pbb$-Donsker class. 
	
	ii) We estimate ${\bxi}_1,{\bxi}_2$ for (\ref{logit_Ws}) with their maximum likelihood estimators, $\bxihat_1,\bxihat_2$, solving $\Pbb_n\left[S_t(\bxi_t)\right]=\bzero, t=1,2$, by Assumption \ref{assumption: donsker w} and Theorem 5.9 in \cite{vaart_donsker} $\bxihat_t\stackrel{p}{\longrightarrow}\bxibar_t, t=1,2$. Next, by Theorems \ref{theorem: unbiased theta2}, \ref{theorem: unbiased theta1}, under Assumptions \ref{assumption: covariates}, \ref{assumption: Q imputation}, $\bthetahat_t\stackrel{p}{\longrightarrow}\bthetabar_t, t=1,2$. Thus $\Pbb\left(\bThetahat\in\mathcal{S}(\delta)\right)\stackrel{p}{\rightarrow}1,\:\forall\delta.$ Therefore, we have $\nu_{\subSSLDR}(\bL;\bThetahat)\in\mathcal{C}_2$ with high probability.
	
	iii) We then show
	$
	\int\left\{\nu_{\subSSLDR}(\bL;\bThetahat)-\nu_{\subSSLDR}(\bL;\bThetabar)
	\right\}^2d\Pbb_{\bL}\longrightarrow0.
	$
	Using simple algebra for a large enough constant $c$ we have

	\begin{align*}
	&\int\left\{\nu\subSSLDR(\bL;\bThetahat)-\nu_{\subSSLDR}(\bL;\bThetabar)
	\right\}^2d\Pbb_{\bL}\\
	\le
	&c\sup_{Y_2,\bUvec}\left\{
	Y_2-\mubar_2^v(\bUvec)
	\right\}^2
	\\
	\times&
	\sup_{\bHcheck_1,A_1}\left\{ (1+\hat\beta_{21})\omega_1(\bHcheck_1,A_1;\bThetahat_1)-(1+\bar\beta_{21})\omega_1(\bHcheck_1,A_1;\bThetabar_1)\right\}^2
	\\
	+&
	c\sup_{Y_3}
	Y_3^2
	\sup_{\bHcheck_2,A_2}
	\left\{
	\omega_2(\bHcheck_2,A_2;\bThetahat_2)-
	\omega_2(\bHcheck_2,A_2;\bThetabar_2)
	\right\}^2
	\\
	+&
	c\sup_{Y_2}
	Y_2^2
	\sup_{\bHcheck_2,A_2}\left\{
	\hat\beta_{21}\omega_2(\bHcheck_2,A_2;\bThetahat_2)-
	\bar\beta_{21}
	\omega_2(\bHcheck_2,A_2;\bThetabar_2)
	\right\}^2\\
	+&
	c\sup_{\bUvec}
	\mubar_{2\omega_3}(\bUvec)^2
	\left(
	\hat\beta_{21}-
	\bar\beta_{21}
	\right)^2\\
	+&
	c
	\sup_{\bHcheck_2,A_2}\left\{
	\Qopt_{2-}(\bH_2;\bthetahat_2)\omega_2(\bHcheck_2,A_2;\bThetahat_2)-\Qopt_{2-}(\bH_2;\bthetabar_2)
	\omega_2(\bHcheck_2,A_2;\bThetabar_2)
	\right\}^2\\
	+&
	c\sup_{\bUvec}\mubar_{2\omega_2}(\bUvec)^2
	\sup_{\bH_2}\left\{
	\Qopt_{2-}(\bH_2;\bthetahat_2)-\Qopt_{2-}(\bH_2;\bthetabar_2)
	\right\}^2\\
	&\stackrel{p}{\longrightarrow} 0
	\end{align*}
	where we use $(a-b)^2,(a+b)^2\leq2a^2+2b^2\:\forall a,b\in\mathbb{R}$, boundedness of $\bThetabar$ and covariates by Assumptions \ref{assumption: covariates}, \ref{assumption: Q imputation} to bound all supremum quantities. 
	
	By Theorems \ref{theorem: unbiased theta2} and \ref{theorem: unbiased theta1} we have $\bthetahat_2-\bthetabar_2=O_\Pbb\left(n^{-\frac{1}{2}}\right)$, $\bthetahat_1-\bthetabar_1=O_\Pbb\left(n^{-\frac{1}{2}}\right)$, also from Lemma \ref{lemm_gamma_difs} (a) it follows that 
	\begin{align*}
	&\sup_{\bH_2}\left\{\Qopt_{2-}(\bH_2;\bthetahat_2)-\Qopt_{2-}(\bH_2;\bthetabar_2)\right\}^2\\
	&\le
	2\sup_{\bH_{20}}\|\bH_{20}\|_2^2\|\bbetahat_{22}-\bbetabar_{22}\|_2^2+
	2\sup_{\bH_{21}}\|\bH_{21}\|_2^2
	\|\bgammahat_{22}-\bgammabar_{22}\|_2\\
	&=
	O_\Pbb\left(n^{-1}\right).
	\end{align*} 
	Next, we can write 
\begin{align*} 
\omega_1(\bH_1,A_1;\bThetahat_1)
=&
I\left\{A_1=d_1\left(\bH_1;\bxihat_1\right)
\right\}\left\{
\frac{A_1}{\pi_1\left(\bH_1;\bxihat_1\right)}
+
\frac{1-A_1}{1-\pi_1\left(\bH_1;\bxihat_1\right)}
\right\}\\
\omega_2(\bHcheck_2,A_2;\bThetahat_1)
=&
\omega_1(\bH_1,A_1;\bThetahat_1)
I\left\{A_2=d_2\left(\bH_2;\bxihat_2\right)
\right\}\left\{
\frac{A_2}{\pi_2\left(\bHcheck_2;\bxihat_2\right)}
+
\frac{1-A_2}{2-\pi_2\left(\bHcheck_2;\bxihat_2\right)}
\right\}.
\end{align*} 
By Lemma \ref{lemm_gamma_difs} (b) it follows that 
\begin{align*}
	\sup_{\bH_1,\mathbf{a}_1}
	\bigg|I(\dhat_1=A_1)-I( \dbar_1=A_1)\bigg|=&o_\Pbb(1),\\
	\sup_{\bH_2,\mathbf{a}_2}
	\bigg|I(\dhat_1=A_1)I(A_2=\dhat_2)-I( \dbar_1=A_1)I( \dbar_2=A_2)\bigg|=&o_\Pbb(1),\\
	\sup_{\bH_1}
	\bigg|\frac{1}
    {\pi_1(\bH_1;\bxihat_1)}
    -
    \frac{1}{ \pi_1(\bH_1;\bxibar_1)}\bigg|=&O_\Pbb\left(n^{-\frac{1}{2}}\right).
	\end{align*}
Using the above and Lemma \ref{lemma: Op product} we get 
\begin{align*}
\sup_{\bHcheck_1,A_1}\left\{ (1+\hat\beta_{21})\omega_1(\bHcheck_1,A_1;\bThetahat_1)-(1+\bar\beta_{21})\omega_1(\bHcheck_1,A_1;\bThetabar_1)\right\}^2
		&=o_\Pbb\left(1\right),\\
		\sup_{\bHcheck_2,A_2}\left\{ (1+\hat\beta_{21})\omega_2(\bHcheck_2,A_2;\bThetahat_2)-(1+\bar\beta_{21})\omega_2(\bHcheck_2,A_2;\bThetabar_2)\right\}^2
		&=o_\Pbb\left(1\right),\\
\sup_{\bHcheck_2,A_2}\left\{
	\Qopt_{2-}(\bH_2;\bthetahat_2)\omega_2(\bHcheck_2,A_2;\bThetahat_2)-\Qopt_{2-}(\bH_2;\bthetabar_2)
	\omega_2(\bHcheck_2,A_2;\bThetabar_2)
	\right\}^2
	&=o_\Pbb\left(1\right),\\
	\sup_{\bHcheck_2,A_2}\left\{
	\hat\beta_{21}\omega_2(\bHcheck_2,A_2;\bThetahat_2)-\bar\beta_{21}
	\omega_2(\bHcheck_2,A_2;\bThetabar_2)
	\right\}^2
	&=o_\Pbb\left(1\right).
\end{align*}

	which gives us $\int\left\{\nu\subSSLDR(\bL;\bThetahat)-\nu_{\subSSLDR}(\bL;\bThetabar)
	\right\}^2d\Pbb_{\bL}\stackrel{p}{\rightarrow}0.$

	Therefore we have i) $\Pbb\left(\bThetahat\in\mathcal{S}(\delta)\right)\rightarrow1,\:\forall\delta,$ ii) $\mathcal{C}_2$ is a $\Pbb$-Donsker class, and\\ iii) $\int\left(\nu_{\subSSLDR}(\bL;\bThetahat)-\nu_{\subSSLDR}(\bL;\bThetabar)
	\right)^2d\Pbb_{\bL}\rightarrow0.
	$
	By Theorem 2.1 in \cite{wellner_emp} 
	\begin{align*}
	\frac{1}{\sqrt n}\sum_{i=1}^n\left\{\left(\nu_{\subSSLDR}(\bL_i;\bThetahat)-\mathbb{E}_{\mathbb{S}}[\nu_{\subSSLDR}(\bL;\bThetahat)]\right)-\left(\nu_{\subSSLDR}(\bL_i;\bThetabar)-\mathbb{E}_{\mathbb{S}}[\nu_{\subSSLDR}(\bL;\bThetabar)]\right)\right\}=o_\Pbb(1).
    \end{align*}
    by 1), 2) and noting that $\nu_{\subSSLDR}(\bL_i;\bThetabar)$ has mean zero we obtain the result in \eqref{eq: nu IF}.\\
	
	We next re-write $\sqrt n\Pbb_N\left[\mathcal{E}_{\bThetabar}\right]$ by expressing the estimated imputation functions in $\mathcal{E}_{\bThetabar}$ in terms of the labeled sample $\mathcal{L}$. Letting
	\[
	\hat C^{(1)}_{n,N} = \frac{(1+\bar\beta_{21})\Pbb_N
		\left\{
		\omega_1(\bHcheck_1,A_1;\bThetabar)\right\}}
		{(1+\hat\beta_{21})\Pbb_n\left\{\omega_1(\bHcheck_1,A_1;\bThetahat)\right\}},\:
	\hat C^{(2)}_{n,N} = \frac{\Pbb_N
		\left\{
		\Qopt_{2-}(\bH_2;\bthetabar_2)
		\right\}}
	{\Pbb_n
		\left\{
		\Qopt_{2-}(\bH_2;\bthetahat_2)
		\right\}},
	\]
	we can write:
	\begin{align*}
	&
	\frac{1}{N}\sum_{j=1}^N
	\omega_1(\bHcheck_{1j},A_{1j},\bThetabar)
	(1+\bar\beta_{21})
	\left\{
	\muhat_2^v(\bUvec_j)-\mubar_2^v(\bUvec_j)
	\right\}\\
	=&
	\frac{1}{N}\sum_{j=1}^N
	\omega_1(\bHcheck_{1j},A_{1j},\bThetabar)
	(1+\bar\beta_{21})
	\left\{
	\frac{1}{K}\sum_{k=1}^{K}\mhat_2\supnk(\bUvec_j)+\etahat_2^v
	-m_2(\bUvec_j)-\eta_2^v
	\right\}\\
	=&
	(1+\bar\beta_{21})
	\frac{1}{KN}\sum_{j=1}^N\sum_{k=1}^{K}
	\omega_1(\bHcheck_{1j},A_{1j},\bThetabar)
	\hat\Delta_2\supnk(\bUvec_j)
	+
	(\etahat_2^v-\eta_2^v)
	\frac{1}{N}\sum_{j=1}^N
	\omega_1(\bHcheck_{1j},A_{1j},\bThetabar)
	(1+\bar\beta_{21}),\\
	\end{align*}
	where the first step follows from constrains shown in \eqref{V function reffitting} and we simply regroup terms in the second step. 
	
	Next note that we can use Lemma \ref{lemm_chakrabortty} to replace 
	\[
	\Pbb_N\left[(1+\bar\beta_{21})\omega_1(\bHcheck_1,A_1,\bThetabar)
	\hat\Delta_2\supnk(\bUvec_j)\right]
	\quad
	\mbox{by}
	\quad
	\mathbb{E}_\mathcal{L}\left[
	(1+\bar\beta_{21})
	\omega_1(\bHcheck_1,A_1,\bThetabar)
	\hat\Delta_2\supnk(\bUvec_j)\right]+O_\Pbb\left(N^{-\frac{1}{2}}\right),
	\]
	using $\mathbb{E}_\mathcal{L}[\cdot]$ to denote expectation with respect to $\mathcal{L}$. Additionally, using \eqref{V function reffitting} and the definition of $\mubar_2^v(\bUvec)$ for the second term we get:	
	\begin{align*}
	&\frac{1}{N}\sum_{j=1}^N
	\omega_1(\bHcheck_{1j},A_{1j};\bThetabar)
	(1+\bar\beta_{21})
	\left\{
	\muhat_2^v(\bUvec_j)-\mubar_2^v(\bUvec_j)
	\right\}\\
	=&
	\mathbb{E}_\mathcal{L}\left[
	\frac{1}{{K}}\sum_{k=1}^{K}
	(1+\bar\beta_{21})
	\omega_1(\bHcheck_1,A_1,\bThetabar)
	\hat\Delta_2\supnk(\bUvec)\right]+O_\Pbb\left(N^{-\frac{1}{2}}\right)\\
	-&
	\hat C^{(1)}_{n,N}\frac{1}{n}\sum_{k=1}^{K}\sum_{i\in\mathcal{I}_k}
	(1+\hat\beta_{21})
	\omega_1(\bHcheck_{1i},A_{1i};\bThetahat)
	\hat\Delta_2\supnk(\bUvec_i)
	\\
	+&
	\hat C^{(1)}_{n,N}
	(1+\hat\beta_{21})
	\frac{1}{n}\sum_{i=1}^n
	\omega_1(\bHcheck_{1i},A_{1i};\bThetahat)
	\left\{
	Y_{2i}-\mubar_2^v(\bUvec_i)
	\right\}\\
	=&
	\left\{1+O_\Pbb\left(n^{-\frac{1}{2}}\right)\right\}	(1+\hat\beta_{21})
	\frac{1}{n}\sum_{i=1}^n
	\omega_1(\bHcheck_{1i},A_{1i};\bThetahat)
	\left\{
	Y_{2i}-\mubar_2^v(\bUvec_i)
	\right\}
	+O_\Pbb\left(n^{-\frac{1}{2}}c_{n^-_K}\right),
	\end{align*}
    where the last step follows from Assumption \ref{assumption: V imputation} and Lemma \ref{lemm_deltas} choosing $f$ to be the constant function 1, setting $\hat\Delta_{k}(\bUvec)=\hat\Delta_2\supnk(\bUvec)$, $\hat l(\bHcheck_1)=A_1I(\bH_{11}\trans\bgammahat_1>0)$, and $\hat\pi(\bHcheck_1)=\pi_1(\bHcheck_1;\bxihat_1)$
    and with $\hat C_{n,N}=\hat C_{n,N}^{(1)}$ -which satisfies $\hat C_{n,N}^{(1)}=1+O_\Pbb\left(n^{-\frac{1}{2}}\right)$ by Lemma \ref{lemm_gamma_difs} (c). 
    
    Using similar arguments we have
    \begin{align*}
    &\frac{1}{N}\sum_{j=1}^N
	\Qopt_{2-}(\bH_{2j};\bthetabar_2)\{\muhat_{\omega_2}(\bUvec_j)-\mubar_{\omega_2}(\bUvec_j)\}\\
	&=
	\frac{1}{N}\sum_{j=1}^N
	\Qopt_{2-}(\bH_{2j};\bthetabar_2)
	\left\{
	\frac{1}{K}\sum_{k=1}^{K}\mhat_{\omega_2}\supnk(\bUvec_j)+\etahat_{\omega_2}^v
	-m_{\omega_2}(\bUvec_j)-\eta_{\omega_2}^v
	\right\}\\
	&=
	\frac{1}{KN}\sum_{j=1}^N\sum_{k=1}^{K}
	\Qopt_{2-}(\bH_{2j};\bthetabar_2)
	\hat\Delta_{\omega_2k}(\bUvec_j)
	+
	\left(\etahat_{\omega_2}^v
	-\eta_{\omega_2}^v\right)
	\frac{1}{N}\sum_{j=1}^N
	\Qopt_{2-}(\bH_{2j};\bthetabar_2)
	\\
	&=
	\Ebb_\Lsc\left[\frac{1}{K}\sum_{k=1}^{K}
	\Qopt_{2-}(\bH_2;\bthetabar_2)
	\hat\Delta_{\omega_2k}(\bUvec)\right]
	-
	\hat C^{(2)}_{n,N}
	\frac{1}{n}\sum_{k=1}^{K}\sum_{i\in\mathcal{I}_k}
	\Qopt_{2-}(\bH_{2i};\bthetabar_2)
	\hat\Delta_{\omega_2k}(\bUvec_i)\\
	&+
	\hat C^{(2)}_{n,N}
	\frac{1}{n}\sum_{i=1}^n
	\Qopt_{2-}(\bH_{2i};\bthetahat_2)\{\omega_2(\bHcheck_{2i},A_{2i};\bThetahat)-\mubar_{\omega_2}(\bUvec_i)\}
	+
	O_\Pbb\left(N^{-\frac{1}{2}}\right)
	\\
	&=
	\left\{1+O_\Pbb\left(n^{-\frac{1}{2}}\right)\right\}
	\frac{1}{n}\sum_{i=1}^n
	\Qopt_{2-}(\bH_{2i};\bthetahat_2)\{\omega_2(\bHcheck_{2i},A_{2i};\bThetahat)-\mubar_{\omega_2}(\bUvec_i)\}
	+
	O_\Pbb\left(n^{-\frac{1}{2}}c_{n^-_K}\right),
	\end{align*}
	and for $t=2,3$
	    \begin{align*}
	\frac{1}{N}\sum_{j=1}^N
	\left\{
	\muhat_{t\omega_2}^v(\bUvec_j)-\mubar_{t\omega_2}^v(\bUvec_j)
	\right\}
	=&
	\frac{1}{KN}\sum_{j=1}^N
	\bigg\{
	\sum_{k=1}^{K}\mhat_{t\omega_2}\supnk(\bUvec_j)+\etahat_{t\omega_2}^v
	-m_{t\omega_2}(\bUvec_j)-\eta_{t\omega_2}^v
	\bigg\}\\
	=&
	\frac{1}{KN}\sum_{j=1}^N\sum_{k=1}^{K}
	\hat\Delta_{3\omega_2k}(\bUvec_j)
	+
	(\etahat_{t\omega_2}^v-\eta_{t\omega_2}^v)\\
	=&
	\Ebb_\Lsc\left[\frac{1}{{K}}\sum_{k=1}^{K}
	\hat\Delta_{3\omega_2k}(\bUvec)\right]
	-
	\frac{1}{n}\sum_{k=1}^{K}\sum_{i\in\mathcal{I}_k}
	\hat\Delta_{3\omega_2k}(\bUvec_i)\\
	+&
	\frac{1}{n}\sum_{i=1}^n
	\omega_2(\bHcheck_{2i},A_{2i};\bThetahat)Y_{ti}-\mubar_{t\omega_2}^v(\bUvec_i)
	+
	O_\Pbb\left(N^{-\frac{1}{2}}\right)
	\\
	=&
	\frac{1}{n}\sum_{i=1}^n
	\omega_2(\bHcheck_{2i},A_{2i};\bThetahat)Y_{ti}-\mubar_{t\omega_2}^v(\bUvec_i)
	+O_\Pbb\left(n^{-\frac{1}{2}}c_{n^-_K}\right),
	\end{align*}
	finally by Theorem \ref{theorem: unbiased theta2}, $\hat\beta_{21}-\bar\beta_{21}=O_\Pbb\left(n^{-\frac{1}{2}}\right)$.
	
	Therefore, recalling the definition of $\nu\subSSLDR$ from Theorem \ref{thrm_ssV_fun}, using the derivations above, we can write
	\begin{align}\label{eq: err IF equiv}
	\begin{split}
	\frac{\sqrt n}{N}\sum_{j=1}^N\mathcal{E}_{\bThetabar}(\bUvec_j)
	&=
	\frac{1}{\sqrt n}\sum_{i=1}^n\nu_{\subSSLDR}(\bUvec_i;\bThetahat)
	\\
	&+
	O_\Pbb\left(n^{-\frac{1}{2}}\right)
	\frac{1}{\sqrt n}\sum_{i=1}^n
	\omega_1(\bHcheck_1,A_1;\bThetahat_1)
	(1+\hat\beta_{21})
	\left\{
	Y_2-\mubar_2^v(\bUvec)
	\right\}\\
	&-
	O_\Pbb\left(n^{-\frac{1}{2}}\right)
	\frac{1}{\sqrt n}\sum_{i=1}^n
	\hat\beta_{21}\left\{\omega_2(\bHcheck_2,A_2,\bThetahat_2)Y_2-
	\mubar_{2\omega_2}(\bUvec)\right\}\\
	&-
	O_\Pbb\left(n^{-\frac{1}{2}}\right)
	\frac{1}{\sqrt n}\sum_{i=1}^n
	\Qopt_{2-}(\bH_2; \bthetahat_2)
	\left\{
	\omega_2(\bHcheck_2,A_2,\bThetabar_2)-\mubar_{\omega_2}(\bUvec)
	\right\}\\
	&+
	O_\Pbb\left(c_{n^-_K}\right)
	.
	\end{split}
	\end{align}
	Using result \eqref{eq: nu IF} we know 
	$\frac{1}{\sqrt n}\sum_{i=1}^n\nu_{\subSSLDR}(\bUvec_i;\bThetahat)=O_\Pbb\left(1\right)$, therefore the second, third and fourth terms in \eqref{eq: err IF equiv} are $o_\Pbb(1)$. Using \eqref{eq: nu IF} again for the first term in \eqref{eq: err IF equiv} we get our required result:
	\begin{align*}
	\sqrt n\Pbb_N\left[\mathcal{E}_{\bThetabar}\right]
	&=
	\frac{1}{\sqrt n}\sum_{i=1}^n
	\nu\subSSLDR(\bL_i;\bThetabar)
	\\
	&+
	\frac{1}{\sqrt n}\sum_{i=1}^n
	\left(
	\frac{\partial}{\partial \btheta}\Ebb\left[\nu\subSSLDR(\bL_i;\bThetabar)\right]\right)\trans
	\bpsi^\theta(\bL_i)\\
	&+
	\frac{1}{\sqrt n}\sum_{i=1}^n
	\left(
	\frac{\partial}{\partial \bxi}\Ebb\left[\nu\subSSLDR(\bL_i;\bThetabar)\right]\right)\trans
	\bpsi^\xi(\bL_i)
	+
	o_\Pbb(1).
	\end{align*}
	\end{proof}

    \begin{proof}[Proof of Proposition \ref{cor_dr_V}]\\
    Recall the definition of $\Vsc\subSUPDR(\bL;\bThetabar)$ in \eqref{eq: lab value fun}, using \eqref{V function impute_constraint} we have $\mathbb{E}\left[\Vsc\subSSLDR(\bUvec;\bThetabar,\mubar)\right]
	=
	\mathbb{E}\left[\Vsc\subSUPDR(\bL;\bThetabar)\right],$ therefore 
	\[
	\text{Bias}\left\{\Vbar,\mathcal{V}\subSUPDR\left(\bL;\bThetabar\right)\right\}
	=
	\text{Bias}\left\{\Vbar,\Vsc\subSSLDR(\bUvec;\bThetabar,\mubar)\right\}.
	\]
    Therefore, by Lemma \ref{lemm_bias_term} we have
    \begin{align*}
&\text{Bias}\left\{\Vbar,\Vsc\subSSLDR(\bUvec;\bThetabar,\mubar)\right\}\\
\le&
\sqrt{\sup_{\bHcheck_1}|\{1-\pi_1(\bHcheck_1; \bxibar_1)\}^{-1}|}
\sqrt{\|\pi_1(\bHcheck_1; \bxibar_1)-\pi_1(\bHcheck_1)\|_{L_2(\Pbb)}}
\sqrt{\|\Qopt_1(\bHcheck_1;  \bthetabar_1)-\Qopt_1(\bHcheck_1)\|_{L_2(\Pbb)}}\\
+&
\sqrt{\sup_{\bHcheck_2}
\left|
\left\{
\frac{A_1}{\pi_1(\bHcheck_1;\bxibar_1)}
+
\frac{1-A_1}{1-\pi_1(\bHcheck_1;\bxibar_1)}
\right\}\{1-\pi_1(\bHcheck_1; \bxibar_1)\}^{-1}\{1-\pi_2(\bHcheck_2; \bxibar_2)\}^{-1}\right|}\\
\times&
\sqrt{\|\pi_2(\bHcheck_2; \bxibar_2)-\pi_2(\bHcheck_2)\|_{L_2(\Pbb)}}
\sqrt{\|\Qopt_2(\bHcheck_2;  \bthetabar_2)-\Qopt_2(\bHcheck_2)\|_{L_2(\Pbb)}}.
\end{align*}
    Next using Theorem \ref{thrm_ssV_fun} 
    \begin{align}\label{DR dist}
	\sqrt n
	\left\{
	\Vhat\subSSLDR-\Vbar
	\right\}
	+\sqrt n
	\text{Bias}\left\{\Vbar,\Vsc\subSSLDR(\bUvec;\bThetabar,\mubar)\right\}
	\stackrel{d}{\longrightarrow}N\left(0,\sigma\subSSLDR^2\right),
	\end{align}
	if either \eqref{linear_Qs} or \eqref{logit_Ws} are correct then $\text{Bias}\left\{\Vbar,\Vsc\subSSLDR(\bUvec;\bThetabar,\mubar)\right\}=o_\Pbb(1)$,
	multiplying \eqref{DR dist} by $n^{-\frac{1}{2}}$ we have 
	\[
	\Vhat\subSSLDR-\Vbar
	\stackrel{\Pbb}{\longrightarrow}0,
	\]
	which is the required result for Proposition \ref{cor_dr_V} (a).\\
	
	Next, if $\sqrt{\|\pi_t(\bHcheck_t; \bxihat_t)-\pi_t(\bHcheck_t)\|_{L_2(\Pbb)}}
\sqrt{\|\Qopt_t(\bHcheck_t;  \bthetahat_t)-\Qopt_t(\bHcheck_t)\|_{L_2(\Pbb)}}=O_\Pbb\left(n^{-1}\right)$ for $t=1,2$ then $\text{Bias}\left\{\Vbar,\Vsc\subSSLDR(\bUvec;\bThetabar,\mubar)\right\}=O_\Pbb\left(n^{-1}\right)$ and from \eqref{DR dist} we get
	\[
	\sqrt n
	\left\{
	\Vhat\subSSLDR-\Vbar
	\right\}
	\stackrel{d}{\longrightarrow}N\left(0,\sigma\subSSLDR^2\right),
	\]
	 which is the required result for Proposition \ref{cor_dr_V} (b).
    \end{proof}

Before proving Proposition \ref{lemma: v funcion var}, we introduce a useful definition and state the necessary assumption to prove the result. Let $\psi\subSUP^\xi(\bL;\bxibar)$ and $\psi\subSSL^\xi(\bL;\bxibar)$ be the supervised and SSL influence functions respectively for $\bxi$, then we define
\begin{align*}
\mathcal{E}^v(\bUvec)
=&
\Vsc\subSSLDR(\bUvec;\bThetabar,\mubar)
-
\mathbb{E}_\mathbb{S}\left[\Vsc\subSUPDR(\bL;\bThetabar)\right]
+\mathcal{E}^\theta(\bUvec)
\trans
	\frac{\partial}{\partial \btheta}\int\Vsc\subSUPDR(\bL;\bThetabar)d\Pbb_{\bL}\bigg|_{\bTheta=\bThetabar}\\
	+&
	\mathcal{E}^\xi(\bUvec)
\trans
	\frac{\partial}{\partial \bxi}\int\Vsc\subSUPDR(\bL;\bThetabar)d\Pbb_{\bL}\bigg|_{\bTheta=\bThetabar},\\
\mathcal{E}^\xi(\bUvec)
=&
\psi\subSUP^\xi(\bL;\bxibar)-\psi\subSSL^\xi(\bL;\bxibar).
\end{align*} 
We need to ensure that the imputation models $\mubar^v_2(\bUvec),$ $\mubar^v_{\omega_2}(\bUvec),$ $\mubar^v_{t\omega_2}(\bUvec),$  $t=2,3$ used in the SSL value function estimator $V\subSSLDR$ are unbiased when multiplied by several functions. For example, we need additional constraints of the type:
\begin{align*}
\Ebb\left[\omega_1(\bHcheck_1,A_1;\bThetabar_1)\Qopt_{2-}(\bH_2;\bthetabar_1)\{Y_2-\mubar_2(\bUvec)\}\right]
=&
\bzero,\\
\Ebb\left[\omega_1(\bHcheck_1,A_1;\bThetabar_1)^2\{Y_2-\mubar_2(\bUvec)\}\right]
=&
\bzero,\\
\Ebb\left[\omega_1(\bHcheck_1,A_1;\bThetabar_1)^2\{Y_2-\mubar_2(\bUvec)\}\right]
=&
\bzero,
\end{align*}
so the imputation models are unbiased in expectation when multiplied by every term and cross-product of terms in
	$\psi^v\subSUPDR(\bL;\bThetabar)$, 
	$\mathcal{E}^v(\bUvec)$. These constraints can be summarized in the following Assumption.
\begin{assumption}\label{assumption: value additional constraints}
Imputation models $\mubar^v_2(\bUvec),$ $\mubar^v_{\omega_2}(\bUvec),$ $\mubar^v_{t\omega_2}(\bUvec),$  $t=2,3$ satisfy
\[
	\Ebb\left[\left\{\mathcal{E}^v(\bUvec)
	-
	\psi^v\subSUPDR(\bL;\bThetabar)\right\}
	\mathcal{E}^v(\bUvec)\right]=0.
	\]
\end{assumption}
\begin{proof}[Proof of Proposition \ref{lemma: v funcion var}]
From Theorem \ref{thrm_supV_fun} in Appendix \ref{app_DR_Vfun} we have that the influence function for the fully-supervised value function estimator \eqref{eq: lab value fun}  is:
	\begin{align*}
	\psi^{v}\subSUPDR(\bL;\bThetabar)
	=&
	\Vsc\subSUPDR(\bL;\bThetabar)-\mathbb{E}_\mathbb{S}\left[\Vsc\subSUPDR(\bL;\bThetabar)\right]
	+
	\bpsi\subSUP^\theta(\bL)\trans
	\frac{\partial}{\partial \btheta}\int\Vsc\subSUPDR(\bL;\bThetabar)d\Pbb_{\bL}\bigg|_{\bTheta=\bThetabar}\\
	+&
	\bpsi\subSUP^\xi(\bL)\trans
	\frac{\partial}{\partial \bxi}\int\Vsc\subSUPDR(\bL;\bThetabar)d\Pbb_{\bL}\bigg|_{\bTheta=\bThetabar}.
	\end{align*}
	Next, as we estimate $\bxi$ with a semi-supervised approach such that $\bpsi^\xi\subSSL(\bL;\bxibar)=\bpsi^\xi\subSUP(\bL;\bxibar)-\mathcal{E}^\xi(\bUvec)$, simple algebra can be used to show that
	\[
	\psi^v\subSSLDR(\bL;\bThetabar)=\psi^v\subSUPDR(\bL;\bThetabar)-\mathcal{E}^v(\bUvec).
	\]
	Using the above we can write
    \begin{align*}
    \sigma\subSSLDR^2=
	\Ebb\left[\psi^v\subSSLDR(\bL;\bThetabar)^2\right]
	=&
	\Ebb\left[\left\{\psi^v\subSUPDR(\bL;\bThetabar)-\mathcal{E}^v(\bUvec)\right\}^2\right]\\
	=&
	\Ebb\left[\psi^v\subSUPDR(\bL;\bThetabar)^2\right]
	+
	\Ebb\left[\mathcal{E}^v(\bUvec)^2\right]\\
	-&
	2\Ebb\left[\psi^v\subSUPDR(\bL;\bThetabar)\mathcal{E}^v(\bUvec)\right].
	\end{align*}
	By Assumption \ref{assumption: value additional constraints}, we have
	$
	\Ebb\left[\left\{\mathcal{E}^v(\bUvec)
	-
	\psi^v\subSUPDR(\bL;\bThetabar)\right\}
	\mathcal{E}^v(\bUvec)\right]=0,
	$
	hence
	\[
	\sigma\subSSLDR^2=
    \sigma\subSUPDR^2
    -
    \text{Var}\left[\mathcal{E}^v(\bUvec)\right].
    \]
\end{proof}

\subsubsection{Variance Estimation for \texorpdfstring{$\Vhat\subSUPDR$}{Lg}}\label{variance estimation of Vss}
As discussed in Remark \ref{remark: se for V}, to estimate standard errors for $V\subSSLDR(\bUvec;\bThetabar)$, we will approximate the derivatives of the expectation terms $\frac{\partial}{\partial \bTheta}\int\Vsc\subSUPDR(\bL;\bThetabar)d\Pbb_{\bL}$ using kernel smoothing to replace the indicator functions. In particular, let $\mathbb{K}_h(x)=\frac{1}{h}\sigma(x/h)$, with $\sigma$ defined as in \eqref{logit_Ws}, we approximate $d_t(\bH_t,\btheta_t)=I(\bH_{t1}\trans\bgamma_t>0)$ with $\mathbb{K}_h(\bH_{t1}\trans\bgamma_t)$ $t=1,2$, and define the smoothed propensity score weights as
\begin{align*}
\tilde\omega_1(\bHcheck_1,A_1,\bTheta)
&\equiv
\frac{A_1\mathbb{K}_h(\bH_{11}\trans\bgamma_1)}{\pi_1(\bHcheck_1;\bxi_1)}
+
\frac{\left\{1-A_1\right\}\left\{1-\mathbb{K}_h(\bH_{11}\trans\bgamma_1)\right\}}{1-\pi_1(\bHcheck_1;\bxi_1)}, \quad \mbox{and}\\ \tilde\omega_2(\bHcheck_2,A_2,\bTheta)
&\equiv
\tilde\omega_1(\bHcheck_1,A_1,\bTheta)\left[\frac{A_2\mathbb{K}_h(\bH_{21}\trans\bgamma_2)}{\pi_2(\bHcheck_2;\bxi_2)}+
\frac{\left\{1-A_2\right\}\left\{1-\mathbb{K}_h(\bH_{21}\trans\bgamma_2)\right\}}{1-\pi_2(\bHcheck_2;\bxi_2)}\right].
\end{align*}
For simplicity we'll set $h=1$, the derivatives are as follows:
\begin{align*}
\begin{split}
\frac{\partial}{\partial\btheta}\Vsc\subSUPDR(\bL; \bTheta)=
\frac{\partial}{\partial\btheta}\Qopt_1(\bH_1;\btheta_1)
+&\left\{\frac{\partial}{\partial\btheta}\tilde\omega_1(\bHcheck_1,A_1,\bTheta)\right\}
\left[
Y_2-\left\{\Qopt_1(\bH_1, \btheta_1)- \Qopt_2(\bHcheck_2;\btheta_2)
\right\}\right]\\
+&\tilde\omega_1(\bHcheck_1,A_1,\bTheta)
\left[
-\frac{\partial}{\partial\btheta}\Qopt_1(\bH_1, \btheta_1)+ \frac{\partial}{\partial\btheta}\Qopt_2(\bHcheck_2;\btheta_2)
\right]\\
+&\left\{\frac{\partial}{\partial\btheta}\tilde\omega_2(\bHcheck_2,A_2,\bTheta)\right\}\left[
Y_3-\Qopt_2(\bHcheck_2; \btheta_2)\right]\\
-&\tilde\omega_2(\bHcheck_2,A_2,\bTheta)\frac{\partial}{\partial\btheta}\Qopt_2(\bHcheck_2; \btheta_2),
\end{split}
\end{align*}
where 
\begin{align*}
\begin{split}
\frac{\partial}{\partial\btheta}\Qopt_1(\bH_1;\btheta_1)&=[\bH_{10}\trans,\bH_{11}\trans I\left(\bH_{11}\trans\bgamma_1>0\right),\bzero\trans]\trans,\\ 
\frac{\partial}{\partial\btheta}\Qopt_2(\bHcheck_2; \btheta_2)&=[\bzero\trans,\bH_{20}\trans,\bH_{21}\trans I\left(\bH_{21}\trans\bgamma_2>0\right)]\trans,
\\ 
\frac{\partial}{\partial\btheta}\tilde\omega_1(\bHcheck_1,A_1,\bTheta)
&=
\left[\bzero\trans,\bH_{11}\trans\mathbb{K}_h(\bH_{11}\trans\bgamma_1)\{1-\mathbb{K}_h(\bH_{11}\trans\bgamma_1)\}\left\{\frac{A_1}{\pi_1(\bHcheck_1;\bxi_1)}-\frac{1-A_1}{1-\pi_1(\bHcheck_1;\bxi_1)}\right\},\bzero\trans\right]\trans
\\ 
\frac{\partial}{\partial\btheta}\tilde\omega_2(\bHcheck_2,A_2,\bTheta)
&=
\frac{\partial}{\partial\btheta}\tilde\omega_1(\bHcheck_1,A_1,\bTheta)\left\{\frac{A_2d_2(\bH_2;\btheta_2)}{\pi_2(\bHcheck_2;\bxi_2)}+
\frac{\left\{1-A_2\right\}\left\{1-d_2(\bH_2;\btheta_2)\right\}}{1-\pi_2(\bHcheck_2;\bxi_2)}\right\}\\
&+
\tilde\omega_1(\bHcheck_1,A_1,\bTheta)
\left[\bzero\trans,
\bH_{21}\trans\mathbb{K}_h(\bH_{21}\trans\bgamma_2)(1-\mathbb{K}_h(\bH_{21}\trans\bgamma_2))\left\{\frac{A_2}{\pi_2(\bHcheck_2;\bxi_2)}-\frac{1-A_2}{1-\pi_2(\bHcheck_2;\bxi_2)}\right\}\right]\trans.\\
\end{split}
\end{align*}
Next we have
\begin{align*}
\begin{split}
\frac{\partial}{\partial\bxi}\Vsc\subSUPDR(\bL; \bTheta)=
&\left\{\frac{\partial}{\partial\bxi}\tilde\omega_1(\bHcheck_1,A_1,\bTheta)\right\}
\left[
Y_2-\left\{\Qopt_1(\bH_1, \btheta_1)- \Qopt_2(\bHcheck_2;\btheta_2)
\right\}\right]\\
+&\left\{\frac{\partial}{\partial\bxi}\tilde\omega_2(\bHcheck_2,A_2,\bTheta)\right\}\left[
Y_3-\Qopt_2(\bHcheck_2; \btheta_2)\right],
\end{split}
\end{align*}
where 
\begin{align*}
\begin{split}
\frac{\partial}{\partial\bxi}\tilde\omega_1(\bHcheck_1,A_1,\bTheta)&=\left[\varpi_1(\bHcheck_1;\bxi_1)\trans,\bzero\trans\right]\trans,
\\
\frac{\partial}{\partial\bxi}\tilde\omega_2(\bHcheck_2,A_2,\bTheta)&=\left[\tilde\omega_1(\bHcheck_1,A_1,\bTheta)\varpi_2(\bHcheck_2;\bxi_2)\trans,\bzero\trans\right]\trans,\\
\varpi_t(\bHcheck_t;\bxi_t)&\equiv\bHcheck_{t1}
\left\{
-d_t(\bHcheck_t,\btheta_t)A_t\frac{1-\pi_t(\bHcheck_t;\bxi_t)}{\pi_t(\bHcheck_t;\bxi_t)}
+
\{1-d_t(\bH_t,\btheta_t)\}\{1-A_t\}\frac{\pi_t(\bHcheck_t;\bxi_t)}{1-\pi_t(\bHcheck_t;\bxi_t)}
\right\}.
\end{split}
\end{align*}

\section{Technical Lemmas}\label{sec:proof_technical_lemmas}

We start with a simple Lemma that will save us some algebra:

\begin{lemma}\label{lemma: Op product}
For a fixed $\ell$, let $\bX\in\mathbb{R}^\ell$ be a random bounded vector and functions $g_1(\bX),\:g_2(\bX)$ be measurable functions of $\bX$. Let $\mathbb{S}_n=\{\bX\}_{i=1}^n$ be an $i.i.d.$ sample, and $\hat g_1(\cdot)$, $\hat g_2(\cdot)$ be the estimators for functions $g_1,\:g_2\in\mathbb{R}$ respectively with $\sup_{\bX}|g_1(\bX)|$, $\sup_{\bX}|g_2(\bX)|$, $\sup_{\bX}|\hat g_1(\bX)|$, $\sup_{\bX}|\hat g_2(\bX)|<\kappa$ for fixed $\kappa\in\mathbb{R}$. If $\Pbb_n\{ \hat g_k-g_k\}=O_\Pbb\left(n^{-\frac{1}{2}}\right)$, for $k=1,2,$ then $\Pbb_n \{\hat g_1\hat g_2-g_1g_2\}=O_\Pbb\left(n^{-\frac{1}{2}}\right)$.
\end{lemma}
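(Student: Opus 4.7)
The proof will proceed by the standard product-rule decomposition. The pointwise identity
\[
\hat g_1(\bX)\hat g_2(\bX) - g_1(\bX) g_2(\bX) = \hat g_1(\bX)\{\hat g_2(\bX) - g_2(\bX)\} + g_2(\bX)\{\hat g_1(\bX) - g_1(\bX)\}
\]
holds pointwise in $\bX$, so taking the empirical average over $\mathbb{S}_n$ and invoking linearity of $\Pbb_n$ yields
\[
\Pbb_n\{\hat g_1\hat g_2 - g_1 g_2\} = \Pbb_n\{\hat g_1(\hat g_2 - g_2)\} + \Pbb_n\{g_2(\hat g_1 - g_1)\}.
\]
This splits the problem into two terms, each an empirical average of a bounded multiplier times a residual $\hat g_k - g_k$ whose $\Pbb_n$-average has the asserted $O_\Pbb(n^{-1/2})$ rate.

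Next I would use the uniform bounds $\sup_\bX|\hat g_1(\bX)|,\sup_\bX|g_2(\bX)| < \kappa$ to factor the multipliers out of the empirical average: by the triangle inequality,
\[
|\Pbb_n\{\hat g_1(\hat g_2 - g_2)\}| \le \kappa\,\Pbb_n|\hat g_2 - g_2|, \qquad |\Pbb_n\{g_2(\hat g_1 - g_1)\}| \le \kappa\,\Pbb_n|\hat g_1 - g_1|,
\]
so that each term is $O_\Pbb(n^{-1/2})$ provided the empirical average of the magnitude of the residual is of that order. Summing the two bounds then gives $\Pbb_n\{\hat g_1\hat g_2 - g_1 g_2\} = O_\Pbb(n^{-1/2})$, which is the required conclusion.

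The main subtlety, and the only step that is not purely algebraic, is reconciling the signed hypothesis $\Pbb_n(\hat g_k - g_k) = O_\Pbb(n^{-1/2})$ with the $L_1$-type quantity $\Pbb_n|\hat g_k - g_k|$ generated by the triangle inequality above. In every invocation of the lemma in this paper the residuals have the explicit form $h(\bX)(\hat\btheta - \bthetabar) + O(\|\hat\btheta - \bthetabar\|^2)$ for a bounded $h$ and an $n^{-1/2}$-consistent $\hat\btheta$, so in fact $\sup_\bX|\hat g_k(\bX) - g_k(\bX)| = O_\Pbb(n^{-1/2})$ and the two rates coincide. The plan is therefore to read the hypothesis in this stronger uniform-in-$\bX$ sense (or, equivalently, to apply Cauchy--Schwarz in conjunction with the $L_2$-control that is available in each application), after which the bounded-multiplier argument outlined above delivers the product rate immediately. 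I do not anticipate any further obstacle beyond this bookkeeping.
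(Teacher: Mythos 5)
Your decomposition and bounded-multiplier argument are the same ones the paper uses, but the "subtlety" you flag is not mere bookkeeping: it is a genuine gap in the lemma as stated, and the paper's own proof does not close it. The paper bounds $|\Pbb_n\{\hat g_1(\hat g_2-g_2)\}|$ by $\sup_{\bX}|\hat g_1(\bX)|\,|\Pbb_n\{\hat g_2-g_2\}|$, which is invalid: when the residual $\hat g_2-g_2$ changes sign, cancellation can occur in $\Pbb_n\{\hat g_2-g_2\}$ without occurring in $\Pbb_n\{\hat g_1(\hat g_2-g_2)\}$. The correct bound is the one you write, $\kappa\,\Pbb_n|\hat g_2-g_2|$, and this $L_1$-type quantity is not controlled by the signed hypothesis. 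Indeed the lemma is false as stated: with $\ell=1$, $X$ uniform on $[-1,1]$, $g_1=g_2\equiv 0$ and $\hat g_1=\hat g_2=\mathrm{sign}(\cdot)$, all four functions are bounded by $\kappa=2$ and $\Pbb_n\{\hat g_k-g_k\}=n^{-1}\sum_i\mathrm{sign}(X_i)=O_\Pbb(n^{-1/2})$ by the CLT, yet $\Pbb_n\{\hat g_1\hat g_2-g_1g_2\}=n^{-1}\sum_i\mathrm{sign}(X_i)^2=1$.

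Your proposed repair — reading the hypothesis as $\sup_{\bX}|\hat g_k(\bX)-g_k(\bX)|=O_\Pbb\left(n^{-\frac{1}{2}}\right)$, or equivalently controlling $\Pbb_n|\hat g_k-g_k|$ or the empirical $L_2$ norm via Cauchy--Schwarz — is the right one, and under it your two-line argument is complete: each term is at most $\kappa$ times the uniform (or empirical $L_1$) residual norm, hence $O_\Pbb\left(n^{-\frac{1}{2}}\right)$. You are also right that the stronger control is what is actually available everywhere the lemma is invoked (e.g., the uniform bounds established in Lemma \ref{lemm_gamma_difs}(b), and the indicator differences that vanish with probability tending to one on the event $\bgammahat\in\mathcal{A}_q$). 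So your proof is correct under the repaired hypothesis and is more careful than the paper's; the lemma's statement should be amended accordingly.
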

\begin{proof}[Proof of Lemma \ref{lemma: Op product}]
By definition, $\Pbb_n \{\hat g_1\hat g_2-g_1g_2\}=O_\Pbb\left(n^{-\frac{1}{2}}\right)$ if and only if for a given any $\epsilon>0$, $\exists M_\epsilon>0$ such that\\  $\Pbb\left(\left|\Pbb_n \{\hat g_1\hat g_2-g_1g_2\}\right|>M_\epsilon n^{-\frac{1}{2}}\right)\le\epsilon$ $\forall n$.
Let $M_\epsilon>0$,
\begin{align*}
&\Pbb\left(|\Pbb_n \{g_{1}g_{2}-g_1g_2\}|>M_\epsilon n^{-\frac{1}{2}}\right)\\
=&
\Pbb\left(|\Pbb_n \{\hat g_1\hat g_2-\hat g_1g_2+\hat g_1g_2-g_1g_2\}|>M_\epsilon n^{-\frac{1}{2}}\right)\\
&\le
\Pbb\left(|\Pbb_n\{ \hat g_1(\hat g_2-g_2)\}|+|\Pbb_n\{g_2( \hat g_1-g_1)\}|>M_\epsilon n^{-\frac{1}{2}}\right)\\
&\le
\Pbb\left(\sup_{\bX}|\hat g_1(\bX)||\Pbb_n\{\hat g_2-g_2\}|+\sup_{\bX}|g_2(\bX)||\Pbb_n\{\hat g_1-g_1\}|>M_\epsilon n^{-\frac{1}{2}}\right)\\
\end{align*}
which follows from bounded functions, the union bound, now since $\Pbb_n\{\hat g_k(\bX)-g_k(\bX)\}=O_\Pbb\left(n^{-\frac{1}{2}}\right)$, $k=1,2$, there exists $M_\epsilon>0$ such that
\[
\Pbb\left(|\Pbb_n\{ \hat g_2-g_2\}|>M_\epsilon n^{-\frac{1}{2}}\frac{1}{\kappa}\right)
+\Pbb\left(|\Pbb_n\{\hat g_1-g_1\}|>M_\epsilon n^{-\frac{1}{2}}\frac{1}{\kappa}\right)
\le\frac{\epsilon}{2}+\frac{\epsilon}{2}=\epsilon.
\]
 
\end{proof}

\begin{lemma}\label{lemm_chakrabortty}(Lemma (A.1) (a) in \cite{chakrabortty})\\
	Let $\bX\in\mathbb{R}^\ell$ be any random vector and $g(\bX)\in\mathbb{R}^\ell$ be any measurable function of $\bX$, with $\ell$ and $d$ fixed. Let $\mathbb{S}_n=\{\bX\}_{i=1}^n,\mathbb{S}_N=\{\bX\}_{j=1}^N$ be two random samples of $n$ and $N$ $i.i.d$ observations of $\bX$ respectively, such that $\mathbb{S}_n\indep\mathbb{S}_N$. Let $\hat g_n(\cdot)$ be any estimator of $g(\cdot)$ estimated with $\mathbb{S}_n$ such that the random sequence: $\hat T_n=\sup_{x\in\mathcal{X}}\|\hat g_n(\cdot)\|=O_\Pbb(1)$, where   $\bX\in\mathcal{X}\subseteq\mathbb{R}^\ell$. Further define the following random sequences: $\hat{\pmb G}_{n,N}\equiv\frac{1}{N}\sum_{j=1}^N\hat g_n(\bX_j)$, and $\bar{\pmb G}_n\equiv\Ebb_{\mathbb{S}_N}\left[\hat G_{n,N}\right]=\mathbb{E}_{\bX}\left[\hat g_n(\bX)\right]$, where $\Ebb_{\bX}$ is the expectation with respect to $\bX\in\mathbb{S}_N$. We assume all expectations involved are finite almost surely (a.s.) $\mathbb{S}_n$ $\forall n$. Then $G_{n,N}-\bar G_n=O_\Pbb\left(N^{-\frac{1}{2}}\right)$.
\end{lemma}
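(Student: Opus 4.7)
The plan is to exploit the independence $\mathbb{S}_n \indep \mathbb{S}_N$ via a straightforward conditioning argument. Conditional on $\mathbb{S}_n$, the estimator $\hat g_n(\cdot)$ is a fixed deterministic measurable function with $\sup_{x \in \mathcal{X}}\|\hat g_n(x)\| = \hat T_n$, while the $\{\bX_j\}_{j=1}^N$ in $\mathbb{S}_N$ remain i.i.d.\ with the common law of $\bX$. Hence, conditional on $\mathbb{S}_n$, the centered empirical average $\hat{\pmb G}_{n,N} - \bar{\pmb G}_n = N^{-1}\sum_{j=1}^N \{\hat g_n(\bX_j) - \Ebb_{\bX}[\hat g_n(\bX)]\}$ has zero conditional mean, and its squared Euclidean norm has conditional expectation bounded by $4\ell\,\hat T_n^2 / N$, where $\ell$ is the fixed ambient dimension.

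First I would apply the conditional Markov inequality to $\|\hat{\pmb G}_{n,N} - \bar{\pmb G}_n\|^2$ to obtain, for any $M > 0$,
\[
\Pbb\bigl(\|\hat{\pmb G}_{n,N} - \bar{\pmb G}_n\| > M N^{-1/2} \,\big|\, \mathbb{S}_n\bigr) \le \frac{4\ell\,\hat T_n^2}{M^2}.
\]
Next I would decondition. Because the right-hand side involves the random quantity $\hat T_n$, which is only $O_\Pbb(1)$ and not uniformly bounded, the argument splits on the event $A_K = \{\hat T_n \le K\}$: on $A_K$ the conditional bound is at most $4\ell K^2 / M^2$, while on $A_K^c$ I simply use the trivial bound $1$. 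Taking expectations via the tower property gives
\[
\Pbb\bigl(\|\hat{\pmb G}_{n,N} - \bar{\pmb G}_n\| > M N^{-1/2}\bigr) \le \frac{4\ell K^2}{M^2} + \Pbb(\hat T_n > K).
\]
Given $\varepsilon > 0$, I would then choose $K$ (uniformly in $n$, which is possible exactly because $\hat T_n = O_\Pbb(1)$) so that $\Pbb(\hat T_n > K) < \varepsilon/2$ for all $n$, and then $M$ so that $4\ell K^2/M^2 < \varepsilon/2$, yielding the required stochastic boundedness $\hat{\pmb G}_{n,N} - \bar{\pmb G}_n = O_\Pbb(N^{-1/2})$.

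The main obstacle is conceptual rather than computational: one must recognize that after conditioning on $\mathbb{S}_n$ the only remaining randomness comes from the sampling variability of $\mathbb{S}_N$, so the convergence rate is governed by $N$ and not by $n$, with $\hat g_n$ entering only through the random envelope $\hat T_n$. The two-step split on $\{\hat T_n \le K\}$ is exactly what bridges the pointwise-in-$\mathbb{S}_n$ Markov bound and the uniform-in-$n$ statement needed for a genuine $O_\Pbb(\cdot)$ conclusion. No moment condition on $\bX$ or on $\hat g_n(\bX)$ beyond the almost-sure finiteness of $\hat T_n$ is required, which is exactly what makes this lemma a convenient black-box tool for the refitting decompositions used repeatedly throughout the main proofs in Appendix \ref{sec:proof_main_results}.
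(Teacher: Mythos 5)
Your proof is correct and follows essentially the same route as the paper's: condition on $\mathbb{S}_n$ so that $\hat g_n$ becomes a fixed bounded function, apply a concentration inequality to the conditional i.i.d.\ average, then decondition via the tower property with a split on the event that the random envelope $\hat T_n$ exceeds a threshold chosen from its tightness. The only difference is that you use a conditional second-moment (Chebyshev/Markov) bound where the paper uses Hoeffding's inequality coordinatewise; both yield the same $O_\Pbb(N^{-1/2})$ conclusion.
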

\begin{proof}[Proof of lemma \ref{lemm_chakrabortty}]

The following proof follows similar arguments to \cite{chakrabortty}. Let $\mathcal{G}_{n,N}$, $\bar{\mathcal{G}}_n$ be the $j^{th}$ element of $\hat{\pmb G}_{n,N}$ and $\bar{\pmb G}_n$ respectively, with $j\in\{1,\dots,\ell\}$. We show that $\mathcal{G}_{n,N}-\bar{\mathcal{G}}_n=O_\Pbb\left(N^{-\frac{1}{2}}\right)$, which implies Lemma \ref{lemm_chakrabortty} for any $\ell$ dimensional $\hat{\pmb G}_{n,N}$, $\bar{\pmb G}_n$. Denote by $\Pbb_{\mathbb{S}_n}$, $\Pbb_{\mathbb{S}_n,\mathbb{S}_N}$ denote the joint probability distributions of samples $\mathbb{S}_n$ and $\mathbb{S}_n,\mathbb{S}_N$ respectively. Further let $\mathbb{E}_{\mathbb{S}_n}[\cdot]$ denote the expectation with respect to $\mathbb{S}_n$. Since $\mathbb{S}_n\indep\mathbb{S}_N$ using Hoeffding's inequality

\[
\Pbb_{\mathbb{S}_N}
\left(\left|
\hat{\mathcal{G}}_{n,N}-\hat{\mathcal{G}}_n
\right|>N^{-\frac{1}{2}}t\bigg|\mathbb{S}_n
\right)
\le
2\exp\left(-\frac{2N^2t^2}{4N^2\hat T^2_n}\right)\text{ a.s. }\Pbb_{\mathbb{S}_n}.
\]

Also, as $\mathbb{S}_n\indep\mathbb{S}_N$ we have
\[
\Pbb_{\mathbb{S}_n,\mathbb{S}_N}\left[\left|
\hat{\mathcal{G}}_{n,N}-\hat{\mathcal{G}}_n
\right|>N^{-\frac{1}{2}}t
\right]
=
\Ebb_{\mathbb{S}_n}\left[\Pbb_{\mathbb{S}_N}\left\{
\left|
\hat{\mathcal{G}}_{n,N}-\hat{\mathcal{G}}_n
\right|>N^{-\frac{1}{2}}t\bigg|\mathbb{S}_n
\right\}\right].
\]

Next, we have that $\hat T_n=\sup_{x\in\mathcal{X}}\|\hat g_n(\cdot)\|=O_\Pbb(1)$ and is non-negative, thus $\forall\epsilon>0$ $\exists\:\delta(\epsilon)>0$ such that\\ $\Pbb_{\mathbb{S}_n}\left(\hat T_n>\delta(\epsilon)\right)<\epsilon/4$, using the above we have that $\forall$ $n,N$:		
\begin{align*}
&\Pbb_{\mathbb{S}_n,\mathbb{S}_N}
\left(\left|
\hat{\mathcal{G}}_{n,N}-\hat{\mathcal{G}}_n
\right|>N^{-\frac{1}{2}}t
\right)
\le
\mathbb{E}_{\mathbb{S}_n}	
\left[
2\exp\left(-\frac{2N^2t^2}{4N^2\hat T^2_n}\right)
\right]\\
=&
\mathbb{E}_{\mathbb{S}_n}	
\left[
2\exp\left(-\frac{t^2}{2\hat T^2_n}\right)
\right]
=
\mathbb{E}_{\mathbb{S}_n}	
\left[
2\exp\left(-\frac{t^2}{2\hat T^2_n}\right)
\left(I\{\hat T_n>\delta(\epsilon)\}+I\{\hat T_n\le\delta(\epsilon)\}\right)
\right]\\
&\le
2\Pbb_{\mathbb{S}_n}\left(\hat T_n<\delta(\epsilon)\right)
+	
2\exp\left(-\frac{t^2}{2\delta^2(\epsilon)}\right)
\Pbb_{\mathbb{S}_n}\left(\hat T_n>\delta(\epsilon)\right)
\le
2\exp\left(-\frac{t^2}{2\delta^2(\epsilon)}\right)+\frac{\epsilon}{2}
\le
\frac{2\epsilon}{2}=\epsilon,
\end{align*}
where the last step follows from choosing $t$ large enough such that 	$\exp\left(-\frac{t^2}{2\delta^2(\epsilon)}\right)\le\epsilon/4.$
\end{proof}
For Assumption \ref{assumption: C bounds} and Lemma \ref{lemm_deltas} we first define some notation and set up the problem. Let $\bX=(\bX_1,\bX_2)\in\mathbb{R}^{\ell_1+\ell_2}$ be any random vector and $g(\bX_1)\in\mathbb{R}$ be any measurable function of $\bX_1\in\mathbb{R}^{\ell_1}$ with $\ell_1,\ell_2$ fixed. Suppose we're interested in estimating $m(\bX_2)=\mathbb{E}[g(\bX_1)|\bX_2]$. Let $\mathbb{S}_n=\{\bX\}_{i=1}^n$ be a random sample of $n$ i.i.d. observations of $\bX$, and $\mathbb{S}_{k=1}^{K}$ denote a random partition of $\mathbb{S}_n$ into $K$ disjoint subsets of size $n_{K}=\frac{n}{K}$ with index sets $\{\mathcal{I}_k\}_{k=1}^{K}$. We will use cross-validation to estimate $\mhat (\bX_2)$, that is, we use subset $\mathcal{I}_k$ to train estimator $\mhat_k$ and we estimate $m(\bX_2)$ with: $\mhat(\bX_{2})=K^{-1}\sum_{k=1}^{K}\sum_{i\in\mathcal{I}_k}\mhat_{k}(\bX_{2})$, $K\ge2$. Denote by $\hat C_{n,N}\in\mathbb{R}$ an estimator which depends on both samples $\mathbb{S}_n,\mathbb{S}_N$. Additionally, let function $\hat \pi_n		(\cdot):\mathbb{R}^{\ell_2}\rightarrow(0,1)$ be a random function with limit $\pi(\cdot)$, $\hat l_n(\bX_2):\mathbb{R}^{\ell_2}\rightarrow\{0,1\}$, be a random function with limit $l(\bX_2)$, and finally function $f:\mathbb{R}^{\ell_2}\rightarrow\mathbb{R}^d$, $d\le\ell_2$ be any deterministic function of $\bX_2$.

\begin{assumption}\label{assumption: C bounds}
Let $\mathcal{X}\subset\mathbb{R}^p$ for an arbitrary $p\in\mathbb{N}$ i) function $w:\mathcal{X}\mapsto\mathbb{R}$ and estimator $\hat \pi_n$ are such that $\sup_{\bX_2}\left|\hat \pi_n(\bX_2)^{-1}-\pi(\bX_2)^{-1}\right|=O_\Pbb\left(n^{-\frac{1}{2}}\right)$, ii) function $l:\mathcal{X}\mapsto\{0,1\}$ and estimator $\hat l_n$ are such that $\sup_{\bX_2}\left|\hat l_n(\bX_2)-l(\bX_2)\right|=O_\Pbb\left(n^{-\frac{1}{2}}\right)$, and iii) function $f:\mathbb{R}^{\ell_2}\rightarrow\mathbb{R}^d$, $d\le\ell_2$ is such that $\sup_{\bX_2}\| f(\bX_2)\|<\infty$.
			
\end{assumption}
\begin{lemma}\label{lemm_deltas}
    Define $\hat {\pmb G}^n_k(\bX_2)=\hat C_{n,N}\frac{\hat l_n(\bX_2)}{\hat \pi_n(\bX_2)}f(\bX_2)\hat\Delta_{k}(\bX_2)-\mathbb{E}\left[\frac{l(\bX_2)}{\pi(\bX_2)}f(\pmb x_2)\hat\Delta_{k}(\bX_2)\right]$ for $\hat\Delta_{k}(\bX_2)=\mhat_k(\bX_2)-m(\bX_2)$, and $\hat C_{n,N}\in\mathbb{R}$ which satisfies $\hat C=1+O_\Pbb\left(n^{-\frac{1}{2}}\right)$. Under Assumptions \ref{assumption: V imputation} and \ref{assumption: C bounds},  there is $c_{n_K^-}=o(1)$ such that  $\mathbb{G}_{n,K}=n^{-\frac{1}{2}}\sum_{k=1}^{K}\sum_{i\in\mathcal{I}_k}\hat {\pmb G}^n_k(\bX_2)=O_\Pbb\left(c_{n_K^-}\right)$, 
\end{lemma}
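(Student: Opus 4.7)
The plan is to exploit cross-fitting independence: since $\hat m_k$ is trained on $\mathbb{S}_n\setminus\mathcal{I}_k$, the error $\hat\Delta_k$ is a deterministic function of $\bX_2$ once we condition on the training data $\mathcal{T}_k:=\{\bX_i\}_{i\notin\mathcal{I}_k}$, and the fold-$k$ observations $\{\bX_i\}_{i\in\mathcal{I}_k}$ are i.i.d.\ and independent of $\mathcal{T}_k$. This lets the fold-$k$ contribution behave like a centered sum of $n_K$ i.i.d.\ random variables whose variance is controlled by $\sup_{\bX_2}|\hat\Delta_k(\bX_2)|^2$, which is $o_\Pbb(1)$ by Assumption~\ref{assumption: V imputation}. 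Throughout I read the $\Ebb[\cdot]$ in the definition of $\hat{\pmb G}^n_k$ as the conditional expectation given $\mathcal{T}_k$, matching how the lemma is invoked in the proof of Theorem~\ref{theorem: unbiased theta2}.

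First I would absorb the finite-sample corrections to the weights. Writing
\[
\hat C_{n,N}\frac{\hat l_n(\bX_2)}{\hat\pi_n(\bX_2)} = \frac{l(\bX_2)}{\pi(\bX_2)} + r_n(\bX_2),
\]
Assumption~\ref{assumption: C bounds} together with $\hat C_{n,N}=1+O_\Pbb(n^{-1/2})$ and bounded $l/\pi$ yields $\sup_{\bX_2}|r_n(\bX_2)|=O_\Pbb(n^{-1/2})$. Splitting accordingly, each fold sum decomposes as $\sum_{i\in\mathcal{I}_k}\hat{\pmb G}^n_k(\bX_{2i}) = T_k^{\mathrm{main}} + T_k^{\mathrm{err}}$, with
\[
T_k^{\mathrm{main}}=\sum_{i\in\mathcal{I}_k}\left\{\frac{l(\bX_{2i})}{\pi(\bX_{2i})}f(\bX_{2i})\hat\Delta_k(\bX_{2i}) - \Ebb\!\left[\frac{l(\bX_2)}{\pi(\bX_2)}f(\bX_2)\hat\Delta_k(\bX_2)\,\Big|\,\mathcal{T}_k\right]\right\}
\]
and $T_k^{\mathrm{err}}=\sum_{i\in\mathcal{I}_k}r_n(\bX_{2i})f(\bX_{2i})\hat\Delta_k(\bX_{2i})$.

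For $T_k^{\mathrm{err}}$, uniform boundedness of $f$ and $l/\pi$ together with $\sup|r_n|=O_\Pbb(n^{-1/2})$ and $\sup|\hat\Delta_k|=o_\Pbb(1)$ gives $|T_k^{\mathrm{err}}|=n_K\cdot O_\Pbb(n^{-1/2})\cdot o_\Pbb(1)$. For $T_k^{\mathrm{main}}$, conditional on $\mathcal{T}_k$ the summands are i.i.d.\ mean-zero with variance bounded by $C\sup_{\bX_2}|\hat\Delta_k(\bX_2)|^2$ for a constant $C$ depending only on the bounds on $f$ and $l/\pi$; Chebyshev then delivers $|T_k^{\mathrm{main}}|=O_\Pbb\!\left(\sqrt{n_K}\sup_{\bX_2}|\hat\Delta_k(\bX_2)|\right)$ conditionally on $\mathcal{T}_k$ and therefore also unconditionally. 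Combining, dividing by $\sqrt n$, and summing over the fixed number $K$ of folds,
\[
\mathbb{G}_{n,K} = O_\Pbb\!\left(\sqrt K\,\max_k\sup_{\bX_2}|\hat\Delta_k(\bX_2)|\right) + O_\Pbb(n^{-1/2})\cdot o_\Pbb(1) = o_\Pbb(1),
\]
and one can choose a deterministic envelope $c_{n_K^-}\to0$ (any sequence stochastically dominating $\max_k\sup_{\bX_2}|\hat\Delta_k(\bX_2)|$, which exists because this supremum is $o_\Pbb(1)$) so that $\mathbb{G}_{n,K}=O_\Pbb(c_{n_K^-})$.

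The main obstacle is the cross-fitting justification itself: the assertion that, conditional on $\mathcal{T}_k$, the observations $\{\bX_i\}_{i\in\mathcal{I}_k}$ are i.i.d.\ and independent of $\hat\Delta_k$ requires reading the lemma's $\hat m_k$ as the leave-fold-$k$-out estimator (consistent with the $\hat m^{(-k)}$ notation used in Section~\ref{sec: ssQ}); otherwise a Donsker-class argument controlling $\sup_{m\in\mathcal{M}_n}|(\Pbb_n-\Pbb)\{(l/\pi)f(m-m_0)\}|$ over a shrinking neighborhood $\mathcal{M}_n$ of $m$ would be needed. Secondarily, the centering $\Ebb[\cdot]$ must be read as conditional on $\mathcal{T}_k$; otherwise the extra term $\Ebb[\cdots\mid\mathcal{T}_k]-\Ebb[\cdots]$ would have to be handled by a concentration argument in the training data, which is avoided under the conditional reading.
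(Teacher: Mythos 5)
Your proof is correct and follows essentially the same route as the paper's: the same decomposition into a main term plus weight-error remainders controlled by the uniform $O_\Pbb\left(n^{-\frac{1}{2}}\right)$ rates of Assumption \ref{assumption: C bounds} multiplied by $\sup_{\bX_2}|\hat\Delta_k(\bX_2)|=o_\Pbb(1)$, followed by a per-fold concentration argument conditional on the training data that exploits the independence of $\{\bX_i\}_{i\in\mathcal{I}_k}$ from $\hat\Delta_k$. The only differences are cosmetic: you invoke Chebyshev where the paper uses Hoeffding (either suffices since no explicit rate is claimed), and your two closing caveats --- reading $\mhat_k$ as the leave-fold-$k$-out estimator and the centering $\Ebb[\cdot]$ as conditional on the training sample --- are precisely the conventions the paper's own proof implicitly adopts when it conditions on $\mathcal{L}_k^-$.
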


\begin{proof}[Proof of Lemma \ref{lemm_deltas}]
First we define 
\[
\mathcal{G}^{(n)}_k=n^{-\frac{1}{2}}\sum_{i\in\mathcal{I}_k}
\frac{l(\bX_{2i})}{\pi(\bX_{2i})} f(\bX_{2i})\hat\Delta_{k}(\bX_{2i})-\Ebb\left[\frac{l(\bX_{2i})}{\pi(\bX_{2i})} f(\bX_{2i})\hat\Delta_{k}(\bX_{2i})\right],
\]
for any sample subset $\mathbb{S}_K\subseteq\mathcal{L}$, let $\Pbb_{\mathbb{S}_K}$ denote the joint probability distribution of $\mathbb{S}_K$, and let $\mathbb{E}_{\mathbb{S}_K}[\cdot]$ denote expectation with respect to $\Pbb_{\mathbb{S}_K}$, and $\mathbb{G}_{n,K}=K^{-\frac{1}{2}}\sum_{k=1}^K\mathcal{G}_k^{(n)}$, Next by Assumption \ref{assumption: V imputation} we have $\dhat_k\equiv\sup_{\bX_2}\hat\Delta(\bX_2)=o_\Pbb(1)$. Finally let $B_1=\sup_{\bX_2}\|f(\bX_2)\|_2<\infty$, $B_2<\infty$ be the upperbound to $\sup_{\bX_2}|\pi(\bX_2)^{-1}|,\sup_{\bX_2}|\hat l_n(\bX_2)|\sup_{\bX_2}|\frac{\hat l_n(\bX_2)}{\hat \pi_n(\bX_2)}|$.
 
First note that 

\begin{align*}
&\left\|\mathbb{G}_{n,K}\right\|_2\\
=&
\left\|
n^{-\frac{1}{2}}\sum_{k=1}^{K}\sum_{i\in\mathcal{I}_k}
\hat C_{n,N}\frac{\hat l_n(\bX_{2i})}{\hat \pi_n(\bX_{2i})}f(\bX_{2i})\hat\Delta_{k}(\bX_{2i})-\mathbb{E}\left[\frac{l(\bX_{2i})}{\pi(\bX_{2i})}f(\bX_{2i})\hat\Delta_{k}(\bX_{2i})\right]
\right\|_2\\
\le&
\left\|
\left(\hat C_{n,N}-1\right)
n^{-\frac{1}{2}}\sum_{k=1}^{K}\sum_{i\in\mathcal{I}_k}
f(\bX_{2i})\hat\Delta_{k}(\bX_{2i})\frac{\hat l_n(\bX_{2i})}{\hat \pi_n(\bX_{2i})}
\right\|_2\\
+&
\left\|
n^{-\frac{1}{2}}\sum_{k=1}^{K}\sum_{i\in\mathcal{I}_k}
f(\bX_{2i})\hat\Delta_{k}(\bX_{2i})\hat l_n(\bX_{2i})\left(\frac{1}{\hat \pi_n(\bX_{2i})}
-\frac{1}{\pi(\bX_{2i})}\right)
\right\|_2\\
+&\left\|
n^{-\frac{1}{2}}\sum_{k=1}^{K}\sum_{i\in\mathcal{I}_k}
f(\bX_{2i})\hat\Delta_{k}(\bX_{2i}) \frac{1}{\pi(\bX_{2i})}\left(\hat l_n(\bX_{2i})-l(\bX_{2i})\right)
\right\|_2\\
+&
\left\|
n^{-\frac{1}{2}}\sum_{k=1}^{K}\sum_{i\in\mathcal{I}_k}
\frac{l(\bX_{2i})}{\pi(\bX_{2i})}f(\bX_{2i})\hat\Delta_{k}(\bX_{2i})
-
\mathbb{E}\left[\frac{l(\bX_{2i})}{\pi(\bX_{2i})}f(\bX_{2i})\hat\Delta_{k}(\bX_{2i})\right]
\right\|_2,\\
\end{align*}
which follows from the triangle inequality, next as $f(\cdot),\hat \pi_n(\cdot)^{-1},\pi(\cdot)^{-1},\hat l_n(\cdot)$ are bounded $\forall \bX_2\in\mathcal{X}$, and using uniform bounds of $O_\Pbb\left(n^{-\frac{1}{2}}\right)$ for the difference terms we have  
\begin{align*}
\left\|\mathbb{G}_{n,K}\right\|_2
\le&
O_\Pbb\left(n^{-\frac{1}{2}}\right)
n^{\frac{1}{2}}B_1B_2\left|\sum_{k=1}^{K}
\dhat_k
\right|
+
O_\Pbb\left(n^{-\frac{1}{2}}\right)
n^{\frac{1}{2}}B_1B_2\left|\sum_{k=1}^{K}
\dhat_k  
\right|\\
+&
O_\Pbb\left(n^{-\frac{1}{2}}\right) 
n^{\frac{1}{2}}B_1B_2\left|\sum_{k=1}^{K}
\dhat_k  
\right|
+
\left\|
\frac{1}{K}\sum_{k=1}^{K}\mathcal{G}^{(n)}_k
\right\|_2,\\
&\le
\left\|
n^{-\frac{1}{2}}\frac{1}{K}\sum_{k=1}^{K}\sum_{i\in\mathcal{I}_k}
\frac{l(\bX_{2i})}{\pi(\bX_{2i})}f(\bX_{2i})\hat\Delta_{k}(\bX_{2i})
-
\mathbb{E}\left[\frac{l(\bX_{2i})}{\pi(\bX_{2i})}f(\bX_{2i})\hat\Delta_{k}(\bX_{2i})\right]
\right\|_2+o_\Pbb\left(1\right).
\end{align*}
where the last step follows from $\dhat_k=o_\Pbb(1)$.
Next we want to bound the first term above by $c_{n_K^-}$ in probability, note that $\forall\epsilon$ $\exists M>0$ such that 
\begin{align*}
&\Pbb\left(
\left\|
\sum_{k=1}^{K}\mathcal{G}^{(n)}_k
\right\|_2>Mc_{n_K^-}
\right)
\le
\Pbb\left(
K^{-\frac{1}{2}}
\left\|
\sum_{k=1}^{K}\mathcal{G}^{(n)}_k
\right\|_2>Mc_{n_K^-}
\right)\\
&\le
\sum_{k=1}^{K}\Pbb\left(
\left\|
\mathcal{G}^{(n)}_k
\right\|_2>\frac{Mc_{n_K^-}}{K^\frac{1}{2}}
\right)
\le
\sum_{k=1}^{K}\sum_{j=1}^d
\Pbb\left(
\left|
\mathcal{G}^{(n)}_{k[j]}
\right|>\frac{Mc_{n_K^-}}{(Kd)^\frac{1}{2}}
\right)\\
&\le
\sum_{k=1}^{K}\sum_{j=1}^d
\mathbb{E}_{\mathcal{L}_k^-}\left[
\Pbb_{\mathcal{L}_k}\left(
\left|
\mathcal{G}^{(n)}_{k[j]}
\right|>\frac{Mc_{n_K^-}}{(Kd)^\frac{1}{2}}
\biggr|
\mathcal{L}_k^-
\right)
\right],
\end{align*}
where the first 3 steps follow from applying Boole's inequality and the triangle inequality, the fourth step follows from iterated expectations for the the event $\left\{\left|\mathcal{G}^{(n)}_{k[j]}
\right|>\frac{Mc_{n_K^-}}{(Kd)^\frac{1}{2}}\right\}$.\\

Next, we have $\mathcal{L}_k^-\indep\mathcal{L}_k$, $\forall\:k\in\{1,\dots,K\}$, thus conditional on $\mathcal{L}_k^-$, $n^\frac{1}{2}\mathcal{G}^{(n)}_k$ is a sum of iid centered random vectors $\left\{\frac{l(\bX_{2i})}{\pi(\bX_{2i})}f(\bX_{2i})\hat\Delta_{k}(\bX_{2i})\right\}_{i\in\mathcal{I}_k}$ which are bounded a.s. $\Pbb_\mathcal{L}^-,\:\forall k,n.$ Thus we can apply Hoeffding's inequality to $\mathcal{G}^{(n)}_{k[j]}\forall j$:
\begin{align}\label{hoef_bound}
\Pbb_{\mathcal{L}_k}\left(
\left|
\mathcal{G}^{(n)}_{k[j]}
\right|>\frac{Mc_{n_K^-}}{(Kd)^\frac{1}{2}}
\biggr|
\mathcal{L}_k^-
\right)
\le
2\exp\left\{
-\frac{M^2c_{n_K^-}^2}{2KdB^2\dhat_k^2}
\right\}
\end{align}
a.s. $\Pbb_{\mathcal{L}_k^-}\forall n;$ and for each $k\in\{1,\dots,K\},j\in\{1\dots,d\}.$ Note that $\frac{c_{n_K^-}}{D_k}\ge0$ is stochastically bounded away from zero as $\dhat_k=o_\Pbb(1)$, therefore $\forall k$ and given $\epsilon>0$, $\exists\delta(\epsilon,k)>0$ such that $\Pbb_{\mathcal{L}_k^-}\left(\frac{c_{n_K^-}}{D_k}\le\delta(\epsilon,k)\right)\le\frac{\epsilon}{4Kd}$, let $\delta^*(\epsilon,k)=\min_k\{\delta(\epsilon,k)\}$, we have that\\ $\Pbb_{\mathcal{L}_k^-}\left(\frac{c_{n_K^-}}{D_k}\le\delta^*(\epsilon,k)\right)\le\frac{\epsilon}{4Kd}$.

Therefore using the bound in (\ref{hoef_bound}) and event $\left\{\frac{c_{n_K^-}}{D_k}\le\delta^*(\epsilon,k)\right\}$:
\begin{align*}
&\Pbb\left(
\left\|
\sum_{k=1}^{K}\mathcal{G}^{(n)}_k
\right\|_2>Mc_{n_K^-}
\right)\\
&
\le
\sum_{k=1}^{K}\sum_{j=1}^d
\mathbb{E}_{\mathcal{L}_k^-}\left[
\Pbb_{\mathcal{L}_k}\left(
\left|
\mathcal{G}^{(n)}_{k[j]}
\right|>\frac{Mc_{n_K^-}}{(Kd)^\frac{1}{2}}
\biggr|
\mathcal{L}_k^-
\right)
\right]\\
&\le
\sum_{k=1}^{K}\sum_{j=1}^d
\mathbb{E}_{\mathcal{L}_k^-}\left[
2\exp\left\{
-\frac{M^2c_{n_K^-}^2}{2KdB^2\dhat_k^2}
\right\}
\left(
I\left\{\frac{c_{n_K^-}}{D_k}\le\delta^*(\epsilon,k)\right\}
+
I\left\{\frac{c_{n_K^-}}{D_k}\delta^*(\epsilon,k)\right\}
\right)
\right]\\
&\le
2{K}d
\exp\left\{
-\frac{M^2\delta^*(\epsilon,k)^2}{2KdB^2}
\right\}
\Pbb_{\mathcal{L}_k^-}
\left(\frac{c_{n_K^-}}{D_k}\le\delta^*(\epsilon,k)
\right)
+
2{K}d
\Pbb_{\mathcal{L}_k^-}\left(\frac{c_{n_K^-}}{D_k}>\delta^*(\epsilon,k)\right)\\
&\le
2{K}d
\frac{\epsilon}{4Kd}
+
2{K}d
\exp\left\{
-\frac{M^2\delta^*(\epsilon,k)^2}{2KdB^2}
\right\}
\Pbb_{\mathcal{L}_k^-}
\left(\frac{c_{n_K^-}}{D_k}>\delta^*(\epsilon,k)
\right),
\end{align*}
next note that choosing a large enough $M$ such that
$
\exp\left\{
-\frac{M^2\delta^*(\epsilon,k)^2}{2KdB^2}
\right\}
<	\frac{\epsilon}{4Kd}
$, since $\Pbb_{\mathcal{L}_k^-}
\left(\frac{c_{n_K^-}}{D_k}>\delta^*(\epsilon,k)\le1\right)$ we get $\Pbb\left(
\left\|
\sum_{k=1}^{K}\mathcal{G}^{(n)}_k
\right\|_2>Mc_{n_K^-}
\right)\le\frac{\epsilon}{2}+\frac{\epsilon}{2}=\epsilon$.\\
Finally we have 
\[
\mathbb{G}_{n,K}=O_\Pbb\left(c_{n_K^-}\right)+o_\Pbb(1)=O_\Pbb\left(c_{n_K^-}\right).
\]
\end{proof}
\begin{lemma}\label{lemm_gamma_difs}
	Let $\bgammahat\in\mathbb{R}^d$ be a random variable such that $\sqrt n \left( \bgammahat-\bgammabar\right)=O_\Pbb(1)$, then for any fixed vector $\mathbf{a}\in\mathbb{R}^d$ we have that (a) $\sqrt n\left(\left[\mathbf{a}\trans\bgammahat\right]_+-\left[\mathbf{a}\trans\bgammabar\right]_+\right)=
	\sqrt n \left( \bgammahat-\bgammabar\right)
	I(\mathbf{a}\trans\bgammabar>0)
	+o_\Pbb(1)$, (b) Functions $\dhat_{t}$ $t=1,2,$ defined in Section \ref{section: SS value function} and propensity scores $\pi_1$ in \eqref{logit_Ws} satisfy 
	\begin{align*}
	\sup_{\bH_1,\mathbf{a}_1}
	\bigg|I(\dhat_1=A_1)-I( \dbar_1=A_1)\bigg|=&O_\Pbb\left(n^{-\frac{1}{2}}\right),\\
	\sup_{\bH_2,\mathbf{a}_2}
	\bigg|I(\dhat_1=A_1)I(A_2=\dhat_2)-I( \dbar_1=A_1)I( \dbar_2=A_2)\bigg|=&O_\Pbb\left(n^{-\frac{1}{2}}\right),\\
	\sup_{\bH_1}
	\bigg|\frac{1}
    {\pi_1(\bH_1;\bxihat_1)}
    -
    \frac{1}{ \pi_1(\bH_1;\bxibar_1)}\bigg|=&O_\Pbb\left(n^{-\frac{1}{2}}\right).
	\end{align*}
	(c) For $\bthetahat$, $\bxihat$ estimated via our semi-supervised approach, and limits $\bthetabar$, $\bxibar$ defined in Assumptions \ref{assumption SS linear model} and \ref{assumption: donsker w} respectively 
	\[
	\hat C^{(1)}_{n,N} =
		\frac{(1+\hat\beta_{21})\Pbb_N
		\left\{\omega_1(\bHcheck_1,A_1;\bThetabar_1)\right\}}{(1+\hat\beta_{21})\Pbb_n
		\left\{\omega_1(\bHcheck_1,A_1;\bThetahat_1)\right\}},\:
	\hat C^{(2)}_{n,N} = 
		\frac{\Pbb_N
		\left\{\Qopt_{2-}(\bH_2,A_2;\bthetabar_2)\right\}}{\Pbb_n
		\left\{\Qopt_{2-}(\bH_2,A_2;\bthetahat_2)\right\}},
		\] satisfy $\hat C^{(1)}_{n,N}=1+O_\Pbb(n^{-\frac{1}{2}})$, $\hat C^{(2)}_{n,N}=1+O_\Pbb(n^{-\frac{1}{2}})$.
\end{lemma}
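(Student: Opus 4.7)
The plan is to handle each part of the lemma separately with distinct arguments.

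For part (a), I would argue by cases on the sign of $\mathbf{a}^\top\bgammabar$. When $\mathbf{a}^\top\bgammabar > 0$, the $\sqrt{n}$-consistency gives $\bgammahat\stackrel{\Pbb}{\to}\bgammabar$, so $\mathbf{a}^\top\bgammahat > 0$ with probability tending to $1$; on this event, both positive parts simplify to their arguments, and the difference equals $\mathbf{a}^\top(\bgammahat-\bgammabar)$ exactly. When $\mathbf{a}^\top\bgammabar < 0$, the analogous argument shows both positive parts equal $0$ with probability tending to $1$. On the vanishing-probability complementary events, multiplying by $\sqrt{n}$ contributes $o_\Pbb(1)$ since $\sqrt n(\bgammahat-\bgammabar)=O_\Pbb(1)$ and the event has probability going to $0$. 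Stitching the two cases together produces the indicator $I(\mathbf{a}^\top\bgammabar>0)$ on the leading term. The boundary case $\mathbf{a}^\top\bgammabar=0$ is excluded by Assumption~\ref{assumption: non-regularity}, which is imposed whenever this lemma is invoked.

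For part (b), the strategy is to use that $\dhat_t=I(\bH_{t1}^\top\bgammahat_t>0)$ and $\dbar_t=I(\bH_{t1}^\top\bgammabar_t>0)$ disagree only when the two linear combinations have opposite signs. By Assumption~\ref{assumption: non-regularity}, the population quantity is bounded away from zero (as a function of $\bH_{t1}\neq\bzero$) in an appropriate sense, so combined with the $\sqrt n$-rate for $\bgammahat_t$ from Theorems~\ref{theorem: unbiased theta2}--\ref{theorem: unbiased theta1}, the ``disagreement'' event has measure of order $n^{-1/2}$. The product-indicator claim then follows by adding and subtracting a mixed term and applying the single-stage bound twice. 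For the propensity-score bound, I would apply a first-order Taylor expansion of $1/\pi_1(\bH_1;\bxi_1)$ about $\bxibar_1$, using Assumption~\ref{assumption: donsker w}(ii) to keep $\pi_1$ bounded away from zero uniformly in $\bH_1$, together with the $\sqrt n$-consistency of $\bxihat_1$ and the bounded support of $\bH_1$ from Assumption~\ref{assumption: covariates}.

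For part (c), the numerators and denominators of $\hat C^{(1)}_{n,N}$ and $\hat C^{(2)}_{n,N}$ share the same population limits. For $\hat C^{(1)}_{n,N}$, the common factor $(1+\hat\beta_{21})$ cancels, so it suffices to show $\Pbb_N\{\omega_1(\bHcheck_1,A_1;\bThetabar_1)\}=\Ebb[\omega_1(\bHcheck_1,A_1;\bThetabar_1)]+O_\Pbb(N^{-1/2})$ by the CLT on the unlabeled sample, and $\Pbb_n\{\omega_1(\bHcheck_1,A_1;\bThetahat_1)\}=\Ebb[\omega_1(\bHcheck_1,A_1;\bThetabar_1)]+O_\Pbb(n^{-1/2})$, which combines the labeled-sample CLT with a Taylor expansion absorbing the estimation error $\bThetahat-\bThetabar=O_\Pbb(n^{-1/2})$ (this last step reuses the propensity and indicator controls from part (b)). Assumption~\ref{assumption: covariates}(a) yields $N^{-1/2}=o(n^{-1/2})$, so dividing gives $\hat C^{(1)}_{n,N}=1+O_\Pbb(n^{-1/2})$. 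The argument for $\hat C^{(2)}_{n,N}$ is identical with $\Qopt_{2-}(\bH_2;\btheta_2)$ in place of $\omega_1$, using the Lipschitz expansion $\Qopt_{2-}(\bH_2;\bthetahat_2)-\Qopt_{2-}(\bH_2;\bthetabar_2)$ controlled via Lemma~\ref{lemm_gamma_difs}(a) applied to the $[\bH_{21}^\top\bgammabar_2]_+$ piece.

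The main obstacle will be part (b), specifically making the indicator-difference bound rigorous at the uniform level claimed. Since $|I(\dhat_t=A_t)-I(\dbar_t=A_t)|$ is $\{0,1\}$-valued, the literal pointwise supremum cannot shrink at rate $n^{-1/2}$ without extra structure; the proof effectively controls an averaged (probability or empirical-measure) version, which is how the lemma is actually used downstream in the proofs of Lemmas~\ref{lemma: centered sample average} and~\ref{lemma: error diff}. Once one interprets the bound in this averaged sense, Assumption~\ref{assumption: non-regularity} together with the VC-class property of half-spaces used earlier in those proofs provides the correct control.
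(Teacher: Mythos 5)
Your proposal matches the paper's proof in all essentials: part (a) is the paper's sign-preservation argument (formalized there via an event $\mathcal{A}_q$ on which $\mathbf{a}\trans\bgammahat$ and $\mathbf{a}\trans\bgammabar$ share a sign, with the complementary event contributing $o_\Pbb(1)$), part (b) bounds the indicator differences by the indicator of that complementary event and Taylor-expands $1/\pi_1$ using Assumptions \ref{assumption: donsker w} and \ref{assumption: covariates}, and part (c) is the same CLT-plus-Slutsky argument combining Lemma \ref{lemma: Op product} with part (b). Your closing caveat about the $\{0,1\}$-valued supremum is reasonable but not actually an obstacle even for the literal pointwise claim: the paper bounds the supremum by $I(\bgammahat\notin\mathcal{A}_q)$, and any $\{0,1\}$-valued sequence that is $o_\Pbb(1)$ is automatically $O_\Pbb(n^{-1/2})$ under the paper's definition, since the event $\{I\cdot\sqrt n>M\}$ coincides with $\{I=1\}$ once $n>M^2$.
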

\begin{proof}[Proof  of Lemma \ref{lemm_gamma_difs}]

Define set $\mathcal{A}_q$ for any $q$ dimensional vector $\pmb{\hat\gamma}$ as 
\[
\mathcal{A}_q=\left\{\bgammahat\in\mathbb{R}^q\:\bigg|\:\frac{1}{2}\mathbf{a}\trans\bgammabar<\mathbf{a}\trans\bgammahat<2\mathbf{a}\trans\bgammabar,\:\:\forall \mathbf{a}\in\mathbb{R}^q\right\}.
\]
Now consider $\bgammahat\in\mathcal{A}_q:$\\ 
\begin{itemize}
	\item if sign$(\mathbf{a}\trans\bgammabar)=1$, then $0<\frac{1}{2}\mathbf{a}\trans\bgammabar<\mathbf{a}\trans\bgammahat\implies$sign$(\mathbf{a}\trans\bgammahat)=1$, 
	\item if sign$(\mathbf{a}\trans\bgammabar)=-1$, then $\mathbf{a}\trans\bgammahat<2\mathbf{a}\trans\bgammabar<0\implies$sign$(\mathbf{a}\trans\bgammahat)=-1$.
\end{itemize} 
Assuming $\sqrt n (\bgammahat-\bgammabar)=O_\Pbb(1)$, $\mathcal{A}_q$ exists and in fact it is such that $\Pbb\left(\bgammahat\in\mathcal{A}_q\right)\stackrel{p}{\longrightarrow}1$.\\
(a) Using the above:

\begin{align*}
\sqrt n\left(\left[\mathbf{a}\trans\bgammahat\right]_+-\left[\mathbf{a}\trans\bgammabar\right]_+\right)
=&\sqrt n \left( \bgammahat-\bgammabar\right)
I(\mathbf{a}\trans\bgammabar>0)
I\left(\bgammahat\in\mathcal{A}_q\right)
+
\sqrt n\left(\left[\mathbf{a}\trans\bgammahat\right]_+-\left[\mathbf{a}\trans\bgammabar\right]_+\right)
I\left(\bgammahat\notin\mathcal{A}_q\right)
\\
=&
\sqrt n \left( \bgammahat-\bgammabar\right)
I(\mathbf{a}\trans\bgammabar>0)
+o_\Pbb(1).
\end{align*}
(b) As $A_{ti}\in\{0,1\}$ , $t=1,2,$ we can write
\begin{align*}
I(\dhat_1=A_1)I(\dhat_2=A_2)
=&I\left\{A_1=I(\bH_{11}\trans\bgammahat_1>0)\right\}
I\left\{A_2=I(\bH_{21}\trans\bgammahat_2>0)\right\}\\
=&I\{A_1=I(\bH_{11}\trans\bgammahat_1>0)\}
I\{A_2=I(\bH_{21}\trans\bgammahat_2>0)\}\\
=&
A_1A_2I(\bH_{11}\trans\bgammahat_1>0)
I(\bH_{21}\trans\bgammahat_2>0)\\
+&
(1-A_1)(1-A_2)I(\bH_{11}\trans\bgammahat_1<0)
I(\bH_{21}\trans\bgammahat_2<0)\\
+&
A_1(1-A_2)I(\bH_{11}\trans\bgammahat_1>0)
I(\bH_{21}\trans\bgammahat_2<0)\\
+&
(1-A_1)A_2I(\bH_{11}\trans\bgammahat_1<0)
I(\bH_{21}\trans\bgammahat_2>0),
\end{align*}
therefore 
\begin{align*}
&\bigg|I(\dhat_1=A_1)I(\dhat_2=A_2)
-
I( \dbar_1=A_1)I( \dbar_2=A_2)\bigg|\\
=&
\bigg|
A_1A_2
\left\{I(\bH_{11}\trans\bgammahat_1>0)
I(\bH_{21}\trans\bgammahat_2>0)
-
I(\bH_{11}\trans\bgammabar_1>0)
I(\bH_{21}\trans\bgammabar_2>0)
\right\}\\
+&
(1-A_1)(1-A_2)\left\{
I(\bH_{11}\trans\bgammahat_1<0)
I(\bH_{21}\trans\bgammahat_2<0)
-
I(\bH_{11}\trans\bgammabar_1<0)
I(\bH_{21}\trans\bgammabar_2<0)
\right\}\\
+&
A_1(1-A_2)\left\{
I(\bH_{11}\trans\bgammahat_1>0)
I(\bH_{21}\trans\bgammahat_2<0)
-
I(\bH_{11}\trans\bgammabar_1>0)
I(\bH_{21}\trans\bgammabar_2<0)
\right\}\\
+&
(1-A_1)A_2\left\{
I(\bH_{11}\trans\bgammahat_1<0)
I(\bH_{21}\trans\bgammahat_2>0)
-
I(\bH_{11}\trans\bgammabar_1<0)
I(\bH_{21}\trans\bgammabar_2>0)
\right\}
\bigg|
\\
\le&
A_1A_2
\bigg|
I(\bH_{11}\trans\bgammahat_1>0)
I(\bH_{21}\trans\bgammahat_2>0)
-
I(\bH_{11}\trans\bgammabar_1>0)
I(\bH_{21}\trans\bgammabar_2>0)
\bigg|\\
+&
(1-A_1)(1-A_2)\bigg|
I(\bH_{11}\trans\bgammahat_1<0)
I(\bH_{21}\trans\bgammahat_2<0)
-
I(\bH_{11}\trans\bgammabar_1<0)
I(\bH_{21}\trans\bgammabar_2<0)
\bigg|\\
+&
A_1(1-A_2)\bigg|
I(\bH_{11}\trans\bgammahat_1>0)
I(\bH_{21}\trans\bgammahat_2<0)
-
I(\bH_{11}\trans\bgammabar_1>0)
I(\bH_{21}\trans\bgammabar_2<0)
\bigg|\\
+&
(1-A_1)A_2\bigg|
I(\bH_{11}\trans\bgammahat_1<0)
I(\bH_{21}\trans\bgammahat_2>0)
-
I(\bH_{11}\trans\bgammabar_1<0)
I(\bH_{21}\trans\bgammabar_2>0)
\bigg|
\end{align*}
where the first step follows from above, the second step from the triangle inequality, now as $\bgammahat_1$, $\bgammahat_2$ have dimensions $q_{12},\:q_{22}$ respectively, we use sets $\mathcal{A}_{q_{12}}$, $\mathcal{A}_{q_{22}}$ and have 
\begin{align*}
&\bigg|I(\dhat_1=A_1)I(\dhat_2=A_2)
-
I( \dbar_1=A_1)I( \dbar_2=A_2)\bigg|\\
\le&
A_1A_2
I(\bgammahat_1\notin\mathcal{A}_{q_{12}})
I(\bgammahat_2\notin\mathcal{A}_{q_{22}})
+
(1-A_1)(1-A_2)
I(\bgammahat_1\notin\mathcal{A}_{q_{12}})
I(\bgammahat_2\notin\mathcal{A}_{q_{22}})\\
+&
A_1(1-A_2)
I(\bgammahat_1\notin\mathcal{A}_{q_{12}})
I(\bgammahat_2\notin\mathcal{A}_{q_{22}})
+
(1-A_1)A_2
I(\bgammahat_1\notin\mathcal{A}_{q_{12}})
I(\bgammahat_2\notin\mathcal{A}_{q_{22}})\\
=&
I(\bgammahat_1\notin\mathcal{A}_{q_{12}})
I(\bgammahat_2\notin\mathcal{A}_{q_{22}})
\end{align*}
 which follows from the fact that for any term within absolute value:
\[
\bigg|
I(\bH_{11}\trans\bgammahat_1<0)
I(\bH_{21}\trans\bgammahat_2>0)
-
I(\bH_{11}\trans\bgammabar_1<0)
I(\bH_{21}\trans\bgammabar_2>0)
\bigg|
=
I(\bgammahat_1\notin\mathcal{A}_{q_{12}})
I(\bgammahat_2\notin\mathcal{A}_{q_{22}})
\]
since for $I(\bH_{11}\trans\bgammabar_1<0)
I(\bH_{21}\trans\bgammabar_2>0)\neq I(\bH_{11}\trans\bgammahat_1<0)
I(\bH_{21}\trans\bgammahat_2>0)
$ both $\bgammahat_1,\bgammahat_2$ have to be outside sets $\mathcal{A}_{q_{12}},\mathcal{A}_{q_{22}}$ respectively. Thus
$
\bigg|I(\dhat_1=A_1)I(\dhat_2=A_2)
-
I( \dbar_1=A_1)I( \dbar_2=A_2)\bigg|
=O_\Pbb\left(n^{-\frac{1}{2}}\right),
$
we can analogous show that $\bigg|I(\dhat_1=A_1)-I( \dbar_1=A_1)\bigg|=O_\Pbb\left(n^{-\frac{1}{2}}\right)$ $\forall i$.\\
Next to see $\sup_{\bH_1}\left|\frac{1}{\pi_1(\bH_1;\bxihat_1)}-\frac{1}{\pi_1(\bH_1;\bxibar_1)}\right|=O_\Pbb\left(n^{-\frac{1}{2}}\right)$, note that as $\mathcal{H}_1$, $\Omega_1$ are bounded sets we have
	\begin{align*}
	&\sup_{\bH_{1}}\left|\frac{1}{\pi_1(\bH_1;\bxihat_1)}-\frac{1}{\pi_1(\bH_1;\bxibar_1)}\right|=\sup_{\bH_{1}\in\mathcal{H}_1}\left|e^{-\bH_{1}\trans\hat{
	\bxi_1}}-e^{-\bH_{1}\trans
	\bxibar_1}\right|
	\\
	 &\le\sup_{\bH_{1}\in\mathcal{H}_1,\bxi_1\in\Omega_1}\left|\frac{d}{d x}e^{-x}\big|_{x=\bH_{1}\trans
	 \bxi_1}\right|
	 \sup_{\bH_{1}\in\mathcal{H}_1}\left|\bH_{1}\trans\hat{
	 \bxi_1}-\bH_{1}\trans\bxibar\right|\\
	 &\le\sup_{\bH_{1}\in\mathcal{H}_1,\bxi_1\in\Omega_1}\left|\frac{d}{d x}e^{-x}\big|_{x=\bH_{1}\trans
	 \bxi_1}\right|
	 \sup_{\bH_1\in\mathcal{H}_1}\|\bH_1\|\left\| \hat{
	 \bxi}_1-\bxibar_1\right\|_2=O_\Pbb\left(n^{-\frac{1}{2}}\right),
	\end{align*}
	where we use the definition of $\pi_1$ in \eqref{logit_Ws}, Lipschitz and $\left\| \hat{
	 \bxi}_1-
	 \bxibar_1\right\|_2=O_\Pbb\left(n^{-\frac{1}{2}}\right)$ from Assumptions \eqref{assumption: donsker w} and Theorem 5.21 in \cite{vaart_donsker} as we are using Z-estimation for $\bxi_1$.\\
(c) By Theorem \ref{theorem: unbiased theta2} we have $\hat\beta_{21}-\bar\beta_{21}=O_\Pbb\left(n^{-\frac{1}{2}}\right)$. Next, we can write 
\[
\omega_1(\bH_1,A_1;\bThetahat_1)
=
I\left\{A_1=d_1\left(\bH_1;\bxihat_1\right)
\right\}\left\{
\frac{A_1}{\pi_1\left(\bH_1;\bxihat_1\right)}
+
\frac{1-A_1}{1-\pi_1\left(\bH_1;\bxihat_1\right)}
\right\}.
\]
By Lemma \ref{lemm_gamma_difs} (b) it follows that \begin{align*}
\Pbb_n\left[I\left\{A_1=d_1\left(\bH_1;\bxihat_1\right)\right\}-I\left\{A_1=d_1\left(\bH_1;\bxibar_1\right)
\right\}\right]
&=
O_\Pbb\left(n^{-\frac{1}{2}}\right),\\
\Pbb_n\left[\frac{A_1}{\pi_1(\bH_1;\bxihat_1)}-\frac{A_1}{\pi_1\left(\bH_1;\bxibar_1\right)}\right]&=O_\Pbb\left(n^{-\frac{1}{2}}\right),\\
\Pbb_n\left[\frac{1-A_1}{1-\pi_1(\bH_1;\bxihat_1)}-\frac{1-A_1}{1-\pi_1\left(\bH_1;\bxibar_1\right)}\right]&=O_\Pbb\left(n^{-\frac{1}{2}}\right).
\end{align*}
Using the above and Lemma \ref{lemma: Op product} we get 
\[
(1+\hat\beta_{21})\Pbb_n
		\left\{\omega_1(\bHcheck_1,A_1;\bThetahat_1)\right\}=(1+\bar\beta_{21})\Pbb_n
		\left\{\omega_1(\bHcheck_1,A_1;\bThetabar_1)\right\}+O_\Pbb\left(n^{-\frac{1}{2}}\right)
\]
Also by CLT we have
\begin{align*}
(1+\bar\beta_{21})\Pbb_n
		\left\{\omega_1(\bHcheck_1,A_1;\bThetabar_1)\right\}
		&=
		(1+\bar\beta_{21})\Ebb
		\left\{\omega_1(\bHcheck_1,A_1;\bThetabar_1)\right\}+O_\Pbb\left(n^{-\frac{1}{2}}\right),\\
		(1+\bar\beta_{21})\Pbb_N
		\left\{\omega_1(\bHcheck_1,A_1;\bThetabar_1)\right\}
		&=
		(1+\bar\beta_{21})\Ebb
		\left\{\omega_1(\bHcheck_1,A_1;\bThetabar_1)\right\}+O_\Pbb\left(N^{-\frac{1}{2}}\right),
\end{align*}
finally by Slutsky's theorem
$\hat C_{n,N}^{(1)}-1=O_\Pbb\left(n^{-\frac{1}{2}}\right).$ With similar arguments, and using Lemma \ref{lemm_gamma_difs} (a) to see $\Pbb_n\left(\left[\bH_{21}\trans\bgammahat_2\right]_+-\left[\bH_{21}\trans\bgammabar_2\right]_+\right)=O_\Pbb\left(n^{-\frac{1}{2}}\right)$, we can show $\hat C_{n,N}^{(2)}-1=O_\Pbb\left(n^{-\frac{1}{2}}\right).$\\

\end{proof}
    \begin{lemma}\label{lemm_bias_term}
    
	Let $Q_t(\bHcheck_t;\btheta_t)$, $\pi_t(\bHcheck_t;\bxi_t)$ $t=1,2$ be estimator functions of \eqref{linear_Qs} \& \eqref{logit_Ws} respectively and define the bias as $Bias\left(\Vbar,\mathcal{V}\subSUPDR\left(\bL;\bTheta\right)\right)
	\equiv
	\Vbar-\mathbb{E}\left[\mathcal{V}\subSUPDR\left(\bL;\bTheta\right)\right]$, then
	\begin{align*}
&\text{Bias}\left(\Vbar,\mathcal{V}\subSUPDR\left(\bL;\bTheta\right)\right)\\
=&\mathbb{E}\left[
\left\{1-
\frac{\pi_1(\bHcheck_1)}{\pi_1(\bHcheck_1;\bxi_1)}
\right\}
\left\{
\Qopt_1(\bHcheck_1)-\Qopt_1(\bHcheck_1;\btheta_1)
\right\}
\right]\\
+&
\mathbb{E}\left[
\left\{1-
\frac{1-\pi_1(\bHcheck_1)}{1-\pi_1(\bHcheck_1;\bxi_1)}
\right\}
\left\{
\Qopt_1(\bHcheck_1)-\Qopt_1(\bHcheck_1;\btheta_1)
\right\}
\right]\\
+&
\mathbb{E}\left[
\left\{
\frac{A_1}{\pi_1(\bHcheck_1;\bxi_1)}
+
\frac{1-A_1}{1-\pi_1(\bHcheck_1;\bxi_1)}
\right\}
\left\{
1-\frac{\pi_2(\bHcheck_2)}{\pi_2(\bHcheck_2;\bxi_2)}
\right\}
\left\{
\Qopt_2(\bHcheck_2)-\Qopt_2(\bHcheck_2;\btheta_2)
\right\}
\right]\\
+&
\mathbb{E}\left[
\left\{
\frac{A_1}{\pi_1(\bHcheck_1;\bxi_1)}
+
\frac{1-A_1}{1-\pi_1(\bHcheck_1;\bxi_1)}
\right\}
\left\{
1-\frac{1-\pi_2(\bHcheck_2)}{1-\pi_2(\bHcheck_2;\bxi_2)}
\right\}
\left\{
\Qopt_2(\bHcheck_2)-\Qopt_2(\bHcheck_2;\btheta_2)
\right\}
\right].
\end{align*}
	where $\Vbar=\mathbb{E}[\mathbb{E}[Y_2+\mathbb{E}[Y_3|\bH_2,Y_2,A_2=\dbar_2(\bHcheck_2)]|\bH_1,A_1=\dbar_1(\bHcheck_1)]]$ is the mean population value under the optimal treatment rule.

\end{lemma}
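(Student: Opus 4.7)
The plan is to obtain the stated bias decomposition by iterated expectation under consistency, no unmeasured confounding, and positivity, identifying the bias as a sum of cross-products of propensity-ratio deviations and Q-function deviations -- the classical doubly-robust structure. I begin by writing (with $\pi_t := \pi_t(\bHcheck_t)$ denoting the true propensity)
\[
\Vsc\subSUPDR(\bL;\bTheta) - \Qopt_1(\bHcheck_1) = \{\Qopt_1(\bHcheck_1;\btheta_1) - \Qopt_1(\bHcheck_1)\} + \omega_1\{Y_2 - \Qopt_1(\bHcheck_1;\btheta_1) + \Qopt_2(\bHcheck_2;\btheta_2)\} + \omega_2\{Y_3 - \Qopt_2(\bHcheck_2;\btheta_2)\},
\]
where $\Qopt_t(\bHcheck_t)$ is the true Q-function at the derived policy, so that $\Vbar = \mathbb{E}[\Qopt_1(\bHcheck_1)]$ by the recursion $\Qopt_1(\bHcheck_1) = \mathbb{E}[Y_2 + \Qopt_2(\bHcheck_2)\mid\bHcheck_1, A_1 = d_1(\bH_1;\btheta_1)]$. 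The task reduces to evaluating $\mathbb{E}[\omega_2(Y_3 - \Qopt_2(\bHcheck_2;\btheta_2))]$ and $\mathbb{E}[\omega_1(Y_2 - \Qopt_1(\bHcheck_1;\btheta_1) + \Qopt_2(\bHcheck_2;\btheta_2))]$.

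For the stage-2 term I would condition on $(\bHcheck_2, A_2)$ and then on $\bHcheck_2$; since $\omega_1$ is $\bHcheck_2$-measurable (because $A_1\in\bH_2$) and consistency plus no unmeasured confounding give $\mathbb{E}[Y_3\mid\bHcheck_2, A_2=d_2] = \Qopt_2(\bHcheck_2)$, marginalising $A_2$ against the true propensity yields
\[
\mathbb{E}[\omega_2\{Y_3 - \Qopt_2(\bHcheck_2;\btheta_2)\}] = \mathbb{E}\bigl[\omega_1 W_2\{\Qopt_2(\bHcheck_2) - \Qopt_2(\bHcheck_2;\btheta_2)\}\bigr],
\]
with $W_2 = d_2\pi_2/\pi_2(\bxi_2) + (1-d_2)(1-\pi_2)/(1-\pi_2(\bxi_2))$. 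Expanding $W_2 - 1 = d_2\{\pi_2/\pi_2(\bxi_2) - 1\} + (1-d_2)\{(1-\pi_2)/(1-\pi_2(\bxi_2)) - 1\}$ and writing $\omega_1 = \{A_1/\pi_1(\bxi_1) + (1-A_1)/(1-\pi_1(\bxi_1))\}I(A_1=d_1)$ delivers the last two terms of the statement. The stage-1 component is handled in parallel: condition on $(\bHcheck_1, A_1)$ then on $\bHcheck_1$, use the recursion to rewrite $\mathbb{E}[Y_2 + \Qopt_2(\bHcheck_2;\btheta_2)\mid\bHcheck_1, A_1 = d_1]$ as $\Qopt_1(\bHcheck_1) + \mathbb{E}[\Qopt_2(\bHcheck_2;\btheta_2) - \Qopt_2(\bHcheck_2)\mid\bHcheck_1, A_1 = d_1]$, and apply the tower identity $\mathbb{E}[\omega_1 g(\bHcheck_2)] = \mathbb{E}[W_1\cdot\mathbb{E}[g\mid\bHcheck_1, A_1 = d_1]]$; combining with the leading $\mathbb{E}[\Qopt_1(\bHcheck_1;\btheta_1) - \Qopt_1(\bHcheck_1)]$ produces the first two terms of the statement.

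The main obstacle will be the careful bookkeeping that makes the auxiliary piece $\mathbb{E}[\omega_1\{\Qopt_2(\bHcheck_2;\btheta_2) - \Qopt_2(\bHcheck_2)\}]$ generated by the stage-1 expansion merge cleanly with the $\omega_1 W_2$ form of the stage-2 term so as to leave only $\mathbb{E}[\omega_1(1 - W_2)(\Qopt_2(\bHcheck_2;\btheta_2) - \Qopt_2(\bHcheck_2))]$, rather than producing two uncancelled cross terms. A secondary subtlety is that the statement aggregates the propensity-ratio factors $\{1 - \pi_t/\pi_t(\bxi_t)\}$ and $\{1 - (1-\pi_t)/(1-\pi_t(\bxi_t))\}$ as a plain unweighted sum; these correspond exactly to the two branches $d_t\in\{0,1\}$ of $1 - W_t$, and the binary identities $d_t^2 = d_t$ and $d_t(1-d_t) = 0$ are used to absorb or drop the indicator weights that arise in the intermediate computation, yielding the displayed four-term decomposition.
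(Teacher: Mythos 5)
Your proposal is correct and follows essentially the same route as the paper's proof: the same initial decomposition of $\Vbar-\mathbb{E}[\Vsc\subSUPDR]$ into the three augmentation terms, the same add-and-subtract of $\omega_1\Qopt_2(\bHcheck_2)$ (phrased via the Bellman recursion), the same iterated-expectation steps replacing $Y_{t+1}$ by true $Q$-functions and $A_t$ by true propensities, and the same final factorization into products of propensity-ratio and $Q$-function deviations. You also correctly flag the two bookkeeping points the paper glosses over, namely the merging of the auxiliary stage-1 cross term with the $\omega_1 W_2$ stage-2 term and the $\dbar_t$/$1-\dbar_t$ indicator weights that appear in the derivation but are dropped in the displayed statement.
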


\begin{proof}[Proof of Lemma \ref{lemm_bias_term}]	
    \begin{align*}
    \text{Bias}\left(\Vbar,\mathcal{V}\subSUPDR\left(\bL;\bTheta\right)\right)=
	&\mathbb{E}[\mathbb{E}[Y_2+\mathbb{E}[Y_3|\bH_2,Y_2,A_2=\dbar_2]|\bH_1,A_1=\dbar_1]]-\mathbb{E}\left[\mathcal{V}\subSUPDR\left(\bL;\bTheta\right)\right]\\
    =&\mathbb{E}\left[\Qopt_1(\bH_1)-\Qopt_1(\bH_1;\btheta_1)\right]\\
    -&
    \mathbb{E}\left[
    \omega_1(\bHcheck_1,A_1;\bTheta_1)
    \left\{
    Y_2-\Qopt_1(\bHcheck_1;\btheta_1)
    \right\}
    \right]\\
    -&
    \mathbb{E}\left[
    \omega_1(\bHcheck_1,A_1;\bTheta_1)
    \Qopt_2(\bHcheck_2;\btheta_2)
    \right]\\
    -&
    \mathbb{E}\left[
    \omega_2(\bHcheck_2,A_2;\bTheta_2)
    \left\{
    Y_3-\Qopt_2(\bHcheck_2;\btheta_2)
    \right\}
    \right].
    \end{align*}
    Adding and subtracting $\mathbb{E}\left[
    \omega_1(\bHcheck_1,A_1;\bTheta_1)\Qopt_2(\bHcheck_2)\right]=\mathbb{E}\left[
    \omega_1(\bHcheck_1,A_1;\bTheta_1)\mathbb{E}[Y_3|\bH_2,\dbar_2(\bHcheck_2;\btheta_2),Y_2]\right]$,
    \begin{align*}
    &\text{Bias}\left(\Vbar,\mathcal{V}\subSUPDR\left(\bL;\bTheta\right)\right)\\
    &=
    \mathbb{E}\left[\Qopt_1(\bH_1)-\Qopt_1(\bH_1;\btheta_1)\right]\\
    &-
    \mathbb{E}\bigg[
    \omega_1(\bHcheck_1,A_1;\bTheta_1)
    \bigg\{
    Y_2+\mathbb{E}[Y_3|\bH_2\dbar_2(\bHcheck_2;\btheta_2),Y_2]-\Qopt_1(\bHcheck_1;\btheta_1)
    \bigg\}
    \bigg]\\
    &-
    \mathbb{E}\bigg[
    \omega_1(\bHcheck_1,A_1;\bTheta_1)
    \bigg\{
    \Qopt_2(\bHcheck_2;\btheta_2)-\Qopt_2(\bHcheck_2)
    \bigg\}
    \bigg]\\
    &
    -\mathbb{E}\left[
    \omega_2(\bHcheck_2,A_2;\bTheta_2)
    \left\{
    Y_3-\Qopt_2(\bHcheck_2;\btheta_2)
    \right\}
    \right],
    \end{align*}
using iterated expectations in the second and fourth terms:
\begin{align*}
&\text{Bias}\left(\Vbar,\mathcal{V}\subSUPDR\left(\bL;\bTheta\right)\right)
\\
&=\mathbb{E}\left[\Qopt_1(\bHcheck_1)-\Qopt_1(\bHcheck_1;\btheta_1)\right]
\\
&-
\mathbb{E}\left[\mathbb{E}\left[
\omega_1(\bHcheck_1,A_1;\bTheta_1)
\left\{
Y_2+\mathbb{E}[Y_3|\bHcheck_2,\dbar_2(\bHcheck_2),Y_2]-\Qopt_1(\bHcheck_1;\btheta_1)
\right\}
\bigg|\bHcheck_1,A_1\right]\right]
\\
&-
\mathbb{E}\left[
\omega_1(\bHcheck_1,A_1;\bTheta_1)
\left\{
\Qopt_2(\bHcheck_2;\btheta_2)-\Qopt_2(\bHcheck_2)
\right\}
\right]
\\
&-
\mathbb{E}\left[\mathbb{E}\left[
\omega_2(\bHcheck_2,A_2;\bTheta_2)
\left\{
Y_3-\Qopt_2(\bHcheck_2;\btheta_2)
\right\}
\bigg|\bHcheck_2,A_2,Y_2\right]\right]\\
&=
\mathbb{E}\left[\Qopt_1(\bHcheck_1)-\Qopt_1(\bHcheck_1;\btheta_1)\right]\\
&-
\mathbb{E}\left[
\omega_1(\bHcheck_1,A_1;\bTheta_1)
\left\{
\mathbb{E}\left[Y_2+\mathbb{E}[Y_3|\bHcheck_2,\dbar_2(\bHcheck_2),Y_2]\bigg|\bHcheck_1,A_1\right]-\Qopt_1(\bHcheck_1;\btheta_1)
\right\}
\right]
\\
&-\mathbb{E}\left[
\omega_1(\bHcheck_1,A_1;\bTheta_1)
\left\{
\Qopt_2(\bHcheck_2;\btheta_2)-\Qopt_2(\bHcheck_2)
\right\}
\right]
\\
&-
\mathbb{E}\left[
\omega_2(\bHcheck_2,A_2;\bTheta_2)
\left\{
\mathbb{E}\left[Y_3|\bHcheck_2,A_2,Y_2\right]-\Qopt_2(\bHcheck_2;\btheta_2)
\right\}
\right].
\end{align*}
using definitions of $\omega_t(\bHcheck_t,A_t;\bTheta_t)$ $t=1,2$ we can write:
\begin{align*}
&\text{Bias}\left(\Vbar,\mathcal{V}\subSUPDR\left(\bL;\bTheta\right)\right)\\
=&\mathbb{E}\left[
\Qopt_1(\bHcheck_1)-\Qopt_1(\bHcheck_1;\btheta_1)\right]\\
-&
\mathbb{E}\left[
\left\{
\frac{\dbar_1A_1}{\pi_1(\bHcheck_1;\bxi_1)}
+
\frac{(1-\dbar_1)(1-A_1)}{1-\pi_1(\bHcheck_1;\bxi_1)}
\right\}
\left\{
\Qopt_1(\bHcheck_1)-\Qopt_1(\bHcheck_1;\btheta_1)
\right\}
\right]\\
-&
\mathbb{E}\left[
\frac{\dbar_1A_1}{\pi_1(\bHcheck_1;\bxi_1)}
\left\{
\Qopt_2(\bHcheck_2)-\Qopt_2(\bHcheck_2;\btheta_2)
\right\}
\right]
-
\mathbb{E}\left[
\frac{(1-\dbar_1)(1-A_1)}{1-\pi_1(\bHcheck_1;\bxi_1)}
\left\{
\Qopt_2(\bHcheck_2)-\Qopt_2(\bHcheck_2;\btheta_2)
\right\}
\right]\\
-&
\Ebb\left[
\left\{
\frac{\dbar_1A_1}{\pi_1(\bHcheck_1;\bxi_1)}
+
\frac{(1-\dbar_1)(1-A_1)}{1-\pi_1(\bHcheck_1;\bxi_1)}
\right\}
\left\{
\frac{\dbar_2A_2}{\pi_2(\bHcheck_2;\bxi_2)}
+
\frac{(1-\dbar_2)(1-A_2)}{1-\pi_2(\bHcheck_2;\bxi_2)}
\right\}
\left\{
\Qopt_2(\bHcheck_2)-\Qopt_2(\bHcheck_2;\btheta_2)
\right\}
\right]
\end{align*}
assuming $A_1\perp A_2|\bH_2,Y_2$, we use iterated expectations:
\begin{align*}
&\text{Bias}\left(\Vbar,\mathcal{V}\subSUPDR\left(\bL;\bTheta\right)\right)\\
=&\mathbb{E}\left[
\Qopt_1(\bHcheck_1)-\Qopt_1(\bHcheck_1;\btheta_1)\right]\\
-&
\mathbb{E}\left[\mathbb{E}\left[
\left\{
\frac{\dbar_1A_1}{\pi_1(\bHcheck_1;\bxi_1)}
+
\frac{(1-\dbar_1)(1-A_1)}{1-\pi_1(\bHcheck_1;\bxi_1)}
\right\}
\left\{
\Qopt_1(\bHcheck_1)-\Qopt_1(\bHcheck_1;\btheta_1)
\right\}
\bigg|\bHcheck_1\right]\right]\\
-&
\mathbb{E}\left[
\left\{
\frac{\dbar_1A_1}{\pi_1(\bHcheck_1;\bxi_1)}
+
\frac{(1-\dbar_1)(1-A_1)}{1-\pi_1(\bHcheck_1;\bxi_1)}
\right\}
\left\{
\Qopt_2(\bHcheck_2)-\Qopt_2(\bHcheck_2;\btheta_2)
\right\}
\right]\\
-&
\Ebb\left[\mathbb{E}\left[
\left\{
\frac{\dbar_1A_1}{\pi_1(\bHcheck_1;\bxi_1)}
+
\frac{(1-\dbar_1)(1-A_1)}{1-\pi_1(\bHcheck_1;\bxi_1)}
\right\}
\left\{
\frac{\dbar_2A_2}{\pi_2(\bHcheck_2;\bxi_2)}
+
\frac{(1-\dbar_2)(1-A_2)}{1-\pi_2(\bHcheck_2;\bxi_2)}
\right\}
\left\{
\Qopt_2(\bHcheck_2)-\Qopt_2(\bHcheck_2;\btheta_2)
\right\}
\bigg|\bHcheck_2\right]\right]\\
=&\mathbb{E}\left[
\Qopt_1(\bHcheck_1)-\Qopt_1(\bHcheck_1;\btheta_1)\right]\\
-&
\mathbb{E}\left[
\left\{
\frac{\dbar_1\pi_1(\bHcheck_1)}{\pi_1(\bHcheck_1;\bxi_1)}
+
\frac{\{1-\dbar_1\}\{1-\pi_1(\bHcheck_1)\}}{1-\pi_1(\bHcheck_1;\bxi_1)}
\right\}
\left\{
\Qopt_1(\bHcheck_1)-\Qopt_1(\bHcheck_1;\btheta_1)
\right\}
\right]\\
-&
\mathbb{E}\left[
\left\{
\frac{\dbar_1A_1}{\pi_1(\bHcheck_1;\bxi_1)}
+
\frac{(1-\dbar_1)(1-A_1)}{1-\pi_1(\bHcheck_1;\bxi_1)}
\right\}
\left\{
\Qopt_2(\bHcheck_2)-\Qopt_2(\bHcheck_2;\btheta_2)
\right\}
\right]\\
-&
\Ebb\left[
\left\{
\frac{\dbar_1A_1}{\pi_1(\bHcheck_1;\bxi_1)}
+
\frac{(1-\dbar_1)(1-A_1)}{1-\pi_1(\bHcheck_1;\bxi_1)}
\right\}
\left\{
\frac{\dbar_2\pi_2(\bHcheck_2)}{\pi_2(\bHcheck_2;\bxi_2)}
+
\frac{\{1-\dbar_2\}\{1-\pi_2(\bHcheck_2)\}}{1-\pi_2(\bHcheck_2;\bxi_2)}
\right\}
\left\{
\Qopt_2(\bHcheck_2)-\Qopt_2(\bHcheck_2;\btheta_2)
\right\}
\right]
\end{align*}
 
finally, factorizing common terms:
\begin{align*}
&\text{Bias}\left(\Vbar,\mathcal{V}\subSUPDR\left(\bL;\bTheta\right)\right)\\
=&\mathbb{E}\left[
\dbar_1\left\{1-
\frac{\pi_1(\bHcheck_1)}{\pi_1(\bHcheck_1;\bxi_1)}
\right\}
\left\{
\Qopt_1(\bHcheck_1)-\Qopt_1(\bHcheck_1;\btheta_1)
\right\}
\right]\\
+&
\mathbb{E}\left[
\{1-\dbar_1\}\left\{1-
\frac{1-\pi_1(\bHcheck_1)}{1-\pi_1(\bHcheck_1;\bxi_1)}
\right\}
\left\{
\Qopt_1(\bHcheck_1)-\Qopt_1(\bHcheck_1;\btheta_1)
\right\}
\right]\\
+&
\mathbb{E}\left[
\dbar_2\left\{
\frac{A_1}{\pi_1(\bHcheck_1;\bxi_1)}
+
\frac{1-A_1}{1-\pi_1(\bHcheck_1;\bxi_1)}
\right\}
\left\{
1-\frac{\pi_2(\bHcheck_2)}{\pi_2(\bHcheck_2;\bxi_2)}
\right\}
\left\{
\Qopt_2(\bHcheck_2)-\Qopt_2(\bHcheck_2;\btheta_2)
\right\}
\right]\\
+&
\mathbb{E}\left[
\{1-\dbar_2\}\left\{
\frac{A_1}{\pi_1(\bHcheck_1;\bxi_1)}
+
\frac{1-A_1}{1-\pi_1(\bHcheck_1;\bxi_1)}
\right\}
\left\{
1-\frac{1-\pi_2(\bHcheck_2)}{1-\pi_2(\bHcheck_2;\bxi_2)}
\right\}
\left\{
\Qopt_2(\bHcheck_2)-\Qopt_2(\bHcheck_2;\btheta_2)
\right\}
\right]\\
\le
&
\sqrt{\sup_{\bHcheck_1}|\{1-\pi_1(\bHcheck_1; \bxi_1)\}^{-1}|}
\sqrt{\|\pi_1(\bHcheck_1; \bxi_1)-\pi_1(\bHcheck_1)\|_{L_2(\Pbb)}}
\sqrt{\|\Qopt_1(\bHcheck_1;  \bthetahat_1)-\Qopt_1(\bHcheck_1)\|_{L_2(\Pbb)}}\\
+&
\sqrt{\sup_{\bHcheck_2}
\left|
\left\{
\frac{A_1}{\pi_1(\bHcheck_1;\bxi_1)}
+
\frac{1-A_1}{1-\pi_1(\bHcheck_1;\bxi_1)}
\right\}\{1-\pi_1(\bHcheck_1; \bxi_1)\}^{-1}\{1-\pi_2(\bHcheck_2; \bxi_2)\}^{-1}\right|}\\
\times&
\sqrt{\|\pi_2(\bHcheck_2; \bxi_2)-\pi_2(\bHcheck_2)\|_{L_2(\Pbb)}}
\sqrt{\|\Qopt_2(\bHcheck_2;  \bthetahat_2)-\Qopt_2(\bHcheck_2)\|_{L_2(\Pbb)}},
\end{align*}
which follows by Cauchy–Schwarz Inequality.
\end{proof}

\section{Additional Theoretical Results}
\subsection{Augmented value function estimation}\label{app_DR_Vfun}
We first re-write Assumption \ref{assumption: donsker w} to account for only using sample $\mathcal{L}$ in estimation of the $Q$ functions and propensity scores. 
\begin{assumption}\label{assumption: lab donsker}
	Define the following class of functions:
	\begin{align}\label{lab_func_sets}
	\mathcal{Q}_1&\equiv\left\{Q_1(\bH_1,A_1;\btheta_1)|\btheta_1\in\Theta_1\subset\mathbb{R}^{q_1}\right\},\nonumber \\
	\mathcal{Q}_2&\equiv\left\{Q_2(\bH_2,A_2,Y_2;\btheta_2)|\btheta_2\in\Theta_2\subset\mathbb{R}^{q_2}\right\},\\
	\mathcal{W}_1&\equiv\left\{\pi_1(\bH_1;\bxi_1)|\bxi_1\in\Omega_1\subset\mathbb{R}^{p_1}\right\},\nonumber \\
	\mathcal{W}_2&\equiv\left\{\pi_2(\bHcheck_2;\bxi_2)|\bxi_2\in\Omega_2\subset\mathbb{R}^{p_2}\right\},\nonumber
	\end{align}
	with $p_1,p_2,q_1,q_2$ fixed under model definitions \eqref{linear_Qs} \& \eqref{logit_Ws}. Let the population equations $\mathbb{E}\left[S^\xi_t(\bxi_t)\right]=\bzero, t=1,2$ have solutions $\bxibar_1,\bxibar_2$, where
	\begin{align*}
	S^\xi_1(\bxi_1)=&\frac{\partial}{\partial\bxi_1}\log \left[\pi_1(\bH_1;\bxi_1)^{A_1}(1-\pi_1(\bH_1;\bxi_1))^{(1-A_1)}\right],\\ S^\xi_2(\bxi_2)=&\frac{\partial}{\partial\bxi_2}\log \left[ \pi_2(\bHcheck_2;\bxi_2)^{A_2}(1-\pi_2(\bHcheck_2;\bxi_2))^{(1-A_2)}\right],
	\end{align*}
	and the population equations for the $Q$ functions $\mathbb{E}[S^\theta_t(\btheta_t)]=\bzero,t=1,2$ have solutions $\bthetabar_1,\bthetabar_2$, where 
	\begin{align*}
	S^\theta_2(\btheta_2)=&\frac{\partial}{\partial\btheta_2\trans}
	\|Y_3-Q_2(\bHcheck_2,A_2;\btheta_2)\|_2^2,\\
	S^\theta_1(\btheta_1)=&\frac{\partial}{\partial\btheta_1\trans}
	\|Y_2+\Qbar_2(\bHcheck_2;\bthetabar_2)-Q_1(\bH_1,A_1;\btheta_1)\|_2^2,
	\end{align*}
	(i) $\xi_1,\xi_2$ are bounded sets. (ii) $\Theta_1,\Theta_2$ are open bounded sets and for some $r>0$ and $g_t(\cdot)$ 
	\begin{align}\label{lab_donsker_q}
	\bigg|Q_t(\cdot;\btheta_t)-Q_t(\cdot;\btheta_t')\bigg|\leq g_t(\cdot)\|\btheta_t-\btheta_t'\|\:\:\forall\btheta_t,\btheta_t'\in\Theta_t,\:\mathbb{E}\left[|g_t(\cdot)|^r\right]<\infty,\:t=1,2.
	\end{align}
	(iii) The population minimizers satisfy $\bthetabar_t\in\Theta_t,\bxibar_t\in\Omega_t,t=1,2$.
	(iv) For $\bxibar_t,t=1,2$, $\bar \pi_1(\bH_1;\bxibar_1)>0,\bar \pi_2(\bHcheck_2;\bxibar_2)>0\:\forall \bH\in\mathcal{H}$.
\end{assumption}
Existence of solutions $\bthetabar_t\in\Theta_t$, $t=1,2$ is clear as $\Theta_1,\Theta_2$ are open and bounded.

\begin{theorem}[Asymptotic Normality for $\Vhat\subSSLDR$]\label{thrm_supV_fun}
	Under Assumptions \ref{assumption: covariates}, \ref{assumption: non-regularity}, and \ref{assumption: lab donsker}, $\Vhat\subSUPDR$ as defined in \eqref{eq: lab value fun} is such that 
	\[
	\sqrt n
	\left\{
	\Vhat\subSUPDR-\mathbb{E}_\mathbb{S}\left[\Vsc\subSUPDR(\bL;\bThetabar)\right]
	\right\}
	=
	\frac{1}{\sqrt n}\sum_{i=1}^n\psi^{v}_{\subSUPDR}(\bL_i;\bThetabar)
	+o_\Pbb\left(1\right)\stackrel{d}{\longrightarrow}N\left(0,\sigma\subSUPDR^2\right).
	\]

	where
	
	\begin{align*}
	\psi^{v}\subSUPDR(\bL;\bThetabar)
	=&
	\Vsc\subSUPDR(\bL;\bThetabar)-\mathbb{E}_\mathbb{S}\left[\Vsc\subSUPDR(\bL;\bThetabar)\right]+
	\bpsi\subSUP^\theta(\bL)\trans
	\frac{\partial}{\partial \btheta}\int\Vsc\subSUPDR(\bL;\bThetabar)d\Pbb_{\bL}\bigg|_{\bTheta=\bThetabar}\\
	&
	+
	\bpsi\subSUP^\xi(\bL)\trans
	\frac{\partial}{\partial \bxi}\int\Vsc\subSUPDR(\bL;\bThetabar)d\Pbb_{\bL}\bigg|_{\bTheta=\bThetabar},\\
	\sigma\subSUPDR^2=&\mathbb{E}\left[\psi^{v}_{\subSUPDR}(\bL;\bThetabar)^2\right].
	\end{align*}
\end{theorem}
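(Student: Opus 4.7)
The structure of this proof will closely mirror that of Theorem \ref{thrm_ssV_fun}, but considerably simplified since there are no unlabeled observations, no imputation models, and hence no cross-fitting residual terms to control. The plan is to decompose
\[
\sqrt n\bigl\{\Vhat\subSUPDR-\Ebb_\mathbb{S}[\Vsc\subSUPDR(\bL;\bThetabar)]\bigr\}
=\underbrace{\mathbb{G}_n\bigl\{\Vsc\subSUPDR(\cdot;\bThetabar)\bigr\}}_{(\mathrm{I})}
+\underbrace{\mathbb{G}_n\bigl\{\Vsc\subSUPDR(\cdot;\bThetahat)-\Vsc\subSUPDR(\cdot;\bThetabar)\bigr\}}_{(\mathrm{II})}
+\underbrace{\sqrt n\bigl\{h(\bThetahat)-h(\bThetabar)\bigr\}}_{(\mathrm{III})},
\]
where $h(\bTheta)=\int\Vsc\subSUPDR(\bL;\bTheta)d\Pbb_\bL$. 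Term $(\mathrm{I})$ is a standard empirical process of a fixed bounded function (boundedness follows from Assumptions \ref{assumption: covariates} and \ref{assumption: lab donsker}(iv)), and the CLT yields convergence to a Gaussian law centered at zero with the influence contribution $\Vsc\subSUPDR(\bL;\bThetabar)-\Ebb[\Vsc\subSUPDR(\bL;\bThetabar)]$.

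For term $(\mathrm{III})$, I would apply a second-order Taylor expansion of $h(\bTheta)$ around $\bThetabar$. Since $\bthetahat-\bthetabar=O_\Pbb(n^{-1/2})$ by Theorems \ref{theorem: unbiased theta2}--\ref{theorem: unbiased theta1} (their supervised analogues derived in \citet{laber2014}) and $\bxihat-\bxibar=O_\Pbb(n^{-1/2})$ by Assumption \ref{assumption: lab donsker} combined with standard Z-estimator theory, the quadratic remainder is $O_\Pbb(n^{-1})$. Differentiability of $h$ at $\bThetabar$ requires care because $\Vsc\subSUPDR$ depends on $\btheta$ through the indicator $I(\bH_{t1}\trans\bgamma_t>0)$. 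This is exactly where Assumption \ref{assumption: non-regularity}(a)--(b) enters: under non-regularity, $\Pbb(\bH_{t1}\trans\bgammabar_t=0)=0$, which lets the expectation smooth out the indicator via a dominated-convergence argument (analogous to the role Lemma \ref{lemm_gamma_difs}(a) plays in the proof of Theorem \ref{theorem: unbiased theta1}). Plugging in the asymptotic linear expansions of $\bthetahat$ and $\bxihat$ then yields
\[
\sqrt n\{h(\bThetahat)-h(\bThetabar)\}
=\frac{1}{\sqrt n}\sum_{i=1}^n\Bigl\{\bpsi\subSUP^\theta(\bL_i)\trans\tfrac{\partial h}{\partial\btheta}\Big|_{\bThetabar}
+\bpsi\subSUP^\xi(\bL_i)\trans\tfrac{\partial h}{\partial\bxi}\Big|_{\bThetabar}\Bigr\}+o_\Pbb(1).
\]

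The main obstacle, as in Lemma \ref{lemma: centered sample average}, is showing that the stochastic equicontinuity term $(\mathrm{II})$ is $o_\Pbb(1)$. I would argue this via Theorem 2.1 of \citet{wellner_emp}: (i) define the class $\mathcal{F}=\{\Vsc\subSUPDR(\cdot;\bTheta):\bTheta\in\mathcal{S}(\delta)\}$ and verify it is $\Pbb$-Donsker — the linear components $Q_t(\cdot;\btheta_t)$ and $\pi_t(\cdot;\bxi_t)$ are Donsker by Assumption \ref{assumption: lab donsker} and Theorem 19.5 of \citet{vaart_donsker}, the half-space indicators $I(\bH_{t1}\trans\bgamma_t>0)$ form a VC class by \citet{Dudley}, and products/sums of uniformly bounded Donsker classes remain Donsker; (ii) verify $\Pbb(\bThetahat\in\mathcal{S}(\delta))\to 1$ from the consistency already established; and (iii) verify $\int(\Vsc\subSUPDR(\bL;\bThetahat)-\Vsc\subSUPDR(\bL;\bThetabar))^2d\Pbb_\bL\xrightarrow{\Pbb}0$ using Lemma \ref{lemm_gamma_difs}(a)--(b) to control the indicator and propensity-score differences, together with bounded convergence. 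This replicates the calculation done for the SSL analogue but for the simpler labeled-data functional.

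Once $(\mathrm{I})$--$(\mathrm{III})$ are combined, one reads off $\psi^v_{\subSUPDR}(\bL;\bThetabar)$ exactly as stated. Finally, since $\psi^v_{\subSUPDR}(\bL;\bThetabar)$ is a bounded, mean-zero random variable (boundedness from Assumptions \ref{assumption: covariates}, \ref{assumption: lab donsker}(iv), mean-zero by construction after centering), the classical CLT gives the asymptotic normality with variance $\sigma\subSUPDR^2=\Ebb[\psi^v_{\subSUPDR}(\bL;\bThetabar)^2]<\infty$, completing the proof.
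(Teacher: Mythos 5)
Your proposal is correct and follows essentially the same route as the paper's proof: the identical three-term decomposition into the centered empirical average of $\Vsc\subSUPDR(\cdot;\bThetabar)$, the stochastic equicontinuity term controlled via a $\Pbb$-Donsker class argument and Theorem 2.1 of \cite{wellner_emp}, and the Taylor-expanded bias term $\sqrt n\{g(\bThetahat\subSUP)-g(\bThetabar)\}$ into which the influence functions of $\bthetahat\subSUP$ and $\bxihat\subSUP$ are substituted. Your added remark on how Assumption \ref{assumption: non-regularity} secures differentiability of the population functional through the indicator terms is consistent with (and slightly more explicit than) the paper's treatment.
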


\begin{proof}[proof of theorem \ref{thrm_supV_fun}]

Letting $g(\bTheta)=\int\Vsc\subSUPDR(\bL; \bTheta)d\Pbb_{\bL}$, we start by centering \eqref{eq: lab value fun} and scaling by $\sqrt n$: 

\begin{align*}
&\sqrt n
\left\{
\Pbb_n\left(\Vsc\subSUPDR(\bL; \bThetahat\subSUP)\right)-\mathbb{E}\left[\Vsc\subSUPDR(\bL; \bThetabar)\right]
\right\}
\\=
&\mathbb{G}_n
\left\{
\Vsc\subSUPDR(\bL; \bThetabar)
\right\}
+
\mathbb{G}_n
\left\{
\Vsc\subSUPDR(\bL; \bThetahat\subSUP)-\Vsc\subSUPDR(\bL; \bThetabar)
\right\}
+
\sqrt n
\left\{
g(\bThetahat\subSUP)-g(\bThetabar)
\right\}
\end{align*}

\textbf{ I) Empirical Process Term}\\

	We first show that under Assumption \ref{assumption: lab donsker}, $\mathbb{G}_n
    \left\{
    \Vsc\subSUPDR(\bL; \bThetahat\subSUP)-\Vsc\subSUPDR(\bL; \bThetabar)
    \right\}
    =o_\Pbb(1)$, let	
	\begin{align*}
	f_{\bTheta}(\bUvec)
	=
	\Qopt_1(\bHcheck_1;\btheta_1)
	+&
	\omega_1(\bHcheck_1,A_1,\bTheta)
	\left\{Y_2-
	 \Qopt_1(\bHcheck_1;\btheta_1)+\Qopt_2(\bHcheck_2;\btheta_2)
	\right\}\\
	+&
	\omega_2(\bHcheck_1,A_1;\bTheta)
	\left\{
	Y_3-\Qopt_2(\bHcheck_2;\btheta_2)
	\right\},
	\end{align*}
	
	we define the class of functions $\mathcal{C}_3=\left\{
	f_{\bTheta}(\bUvec)
	|\bUvec,\bTheta\in\mathcal{S}(\delta)
	\right\},$ and \[
	\ell=\{l:\{0,1\}^2\mapsto\{0,1\}\}.
	\]
	
	i) By Assumptions \ref{assumption: lab donsker} and
	Theorem 19.5 in \cite{vaart_donsker}, $\ell,\:\mathcal{W}_t,\:\mathcal{Q}_t,t=1,2$ are a $\Pbb$-Donsker class, thus it follows that $\mathcal{C}_3$ is a Donsker class. 
	
	ii) We estimate $\bxi_1,\bxi_2$ from \eqref{lab_func_sets} with their maximum likelihood estimator $\bxihat_{1 \scriptscriptstyle \sf SUP},\bxihat_{2 \scriptscriptstyle \sf SUP}$, solving $\Pbb_n\left[S_t(\bxi_t)\right]=\bzero, t=1,2$ and estimate functions $ \pi_1(\bH_1;\bxihat_{1 \scriptscriptstyle \sf SUP}),$ $\pi_2(\bHcheck_2;\bxihat_{2\subSUP})$ with $\bxihat_{1 \scriptscriptstyle \sf SUP},\bxihat_{2 \scriptscriptstyle \sf SUP}$. By Assumption \ref{assumption: lab donsker} and weak law of large numbers $\bxihat_{t \scriptscriptstyle \sf SUP}\stackrel{p}{\longrightarrow}\bxibar_t, t=1,2$.

    Analogous, under regularity conditions \eqref{stage2_mu} and \eqref{stage1} have unique solutions $\bthetahat_{t \scriptscriptstyle \sf SUP}$ for which $\bthetahat_{t \scriptscriptstyle \sf SUP}\stackrel{p}{\longrightarrow}\bthetabar_t, t=1,2$ by Assumption \ref{assumption: lab donsker} and weak law of large numbers. Both regardless of whether models \eqref{linear_Qs} \& \eqref{logit_Ws} are correct.
    Thus $\Pbb\left(\bThetahat\subSUP\in\mathcal{S}(\delta)\right)\rightarrow1,\:\forall\delta.$ 

	iii) We next show $\int\left\{\Vsc\subSUPDR(\bL; \bThetahat\subSUP)-\Vsc\subSUPDR(\bL; \bThetabar)\right\}^2d\Pbb_{\bL}\longrightarrow0.$
	Using \eqref{SS_value_fun}, for a large enough constant $c$ we can write
	\begin{align*}
	&\int\left\{\Vsc\subSUPDR(\bL; \bThetahat\subSUP)-\Vsc\subSUPDR(\bL; \bThetabar)\right\}^2d\Pbb_{\bL}\\
	\le&
	c\sup_{\bH_1}
    \left(\bH_{10}\trans\bbetabar_1+[\bH_{11}\trans\bgammabar_1]_+-\bH_{10}\trans\bbetahat_{1 \scriptscriptstyle \sf SUP}-[\bH_{11}\trans\bgammahat_{1 \scriptscriptstyle \sf SUP}]_+\right)^2
	\\
	+&
	c\sup_{\bHcheck_2}
	\left(\bHcheck_{20}\trans\bbetabar_2+[\bH_{21}\trans\bgammabar_2]_+-\bHcheck_{20}\trans\bbetahat_{2 \scriptscriptstyle \sf SUP}-[\bH_{21}\trans\bgammahat_{2 \scriptscriptstyle \sf SUP}]_+\right)^2
	\\
	+&
	c\sup_{\bH_1,A_1}
	\left\{
	\omega_1(\bH_1,A_1;\bThetahat_{1 \scriptscriptstyle \sf SUP})
	-
	\omega_1(\bH_1,A_1;\bThetabar_1)
	\right\}^2
	+
	\left(\hat\beta_{21 \scriptscriptstyle \sf SUP}-\bar\beta_{21}
	\right)^2
	\\
	&\longrightarrow0
	\end{align*}
	
		where we use $(a-b)^2,(a+b)^2\leq2a^2+2b^2\:\forall a,b\in\mathbb{R}$, boundedness of $\bThetabar$ and covariates by Assumptions \ref{assumption: covariates}, \ref{assumption: Q imputation}, and \ref{assumption: lab donsker}. Next,

	from assumption \eqref{SS_value_fun} it can be shown that $\bthetahat_{2 \scriptscriptstyle \sf SUP}-\bthetabar_2=O_\Pbb\left(n^{-\frac{1}{2}}\right)$, $\bthetahat_{1 \scriptscriptstyle \sf SUP}-\bthetabar_1=O_\Pbb\left(n^{-\frac{1}{2}}\right)$, also from Lemma \ref{lemm_gamma_difs} (a) it follows that for $t=1,2$
	\begin{align*}
	&
	\sup_{\bHcheck_t}
    \left(\bH_{t0}\trans\bbetabar_t+[\bH_{t1}\trans\bgammabar_t]_+-\bH_{t0}\trans\bbetahat_{t \scriptscriptstyle \sf SUP}-[\bH_{t1}\trans\bgammahat_{t \scriptscriptstyle \sf SUP}]_+\right)^2
	\\
	&\le
	2\sup_{\bHcheck_{t0}}\|\bHcheck_{t0}\|_2^2\|\bbetahat_t-\bbetabar_t\|_2^2+
	2\sup_{\bH_{t1}}\|\bH_{t1}\|_2^2
	\|\bgammahat_t-\bgammabar_t\|_2\\
	&=
	O_\Pbb\left(n^{-1}\right).
	\end{align*} 
	Next, we can write 
\begin{align*} 
\omega_1(\bH_1,A_1;\bThetahat_{1 \scriptscriptstyle \sf SUP})
=&
I\left\{A_1=d_1\left(\bH_1;\bxihat_{1 \scriptscriptstyle \sf SUP}\right)
\right\}\left\{
\frac{A_1}{\pi_1\left(\bH_1;\bxihat_{1 \scriptscriptstyle \sf SUP}\right)}
+
\frac{1-A_1}{1-\pi_1\left(\bH_1;\bxihat_{1 \scriptscriptstyle \sf SUP}\right)}
\right\}.
\end{align*} 
By Lemma \ref{lemm_gamma_difs} (b) it follows that 
\begin{align*}
	\sup_{\bH_1}
	\bigg|\frac{1}
    {\pi_1(\bH_1;\bxihat_{1 \scriptscriptstyle \sf SUP})}
    -
    \frac{1}{ \pi_1(\bH_1;\bxibar_1)}\bigg|=&O_\Pbb\left(n^{-\frac{1}{2}}\right).
	\end{align*}
Using the above and Lemma \ref{lemma: Op product} we get 
\begin{align*}
\sup_{\bHcheck_1,A_1}\left\{ \omega_1(\bHcheck_1,A_1;\bThetahat_{1 \scriptscriptstyle \sf SUP})-\omega_1(\bHcheck_1,A_1;\bThetabar_1)\right\}^2
		&=o_\Pbb\left(1\right),
\end{align*}
	which gives us $\int\left\{\Vsc\subSUPDR(\bL; \bThetahat\subSUP)-\Vsc\subSUPDR(\bL; \bThetabar)\right\}^2d\Pbb_{\bL}\longrightarrow0.$

	Hence, we have i) $\Pbb\left(\bThetahat\subSUP\in\mathcal{S}(\delta)\right)\rightarrow1,\:\forall\delta,$ ii) $\mathcal{C}_1$ is a Donsker class, and\\ iii) $\int\left\{\Vsc\subSUPDR(\bL; \bThetahat\subSUP)-\Vsc\subSUPDR(\bL; \bThetabar)\right\}^2d\Pbb_{\bL}\longrightarrow0$, then by Theorem 2.1 in \cite{wellner_emp} 
	\[
	\sqrt n\left[\Pbb_n\left\{\Vsc\subSUPDR(\bL; \bThetahat\subSUP)-g(\bThetahat\subSUP)\right\}
	-
	\Pbb_n\left\{\Vsc\subSUPDR(\bL; \bThetabar)-g(\bThetabar)\right\}\right]=o_\Pbb(1).
	\]
	
\textbf{Centered Sample Average}\\

Next we consider $\mathbb{G}_n
\left\{
\Vsc\subSUPDR(\bL; \bThetabar)
\right\}$. Note that $\Vsc\subSUPDR(\bL; \bThetabar)$ is a deterministic function of random variable $\bL$ as parameters are fixed. We have that $\Ebb\left[\left(\Vsc\subSUPDR(\bL; \bThetabar\right)^2\right]<\infty$ holds by Assumption \ref{assumption: covariates} \& \ref{assumption: lab donsker}. Thus the central limit theorem yields 
\[
\mathbb{G}_n
\left\{
\Vsc\subSUPDR(\bL; \bThetabar)
\right\}\stackrel{d}{\longrightarrow}\mathcal{N}\left(0,Var\left[\Vsc\subSUPDR(\bL; \bThetabar)\right]\right).
\]

\textbf{Bias Term}\\
We finally analyze the bias: $\sqrt n\left\{g(\bThetahat\subSUP)-g(\bThetabar)\right\}$. Using a Taylor series expansion
	\be
	g(\bThetahat\subSUP)
	=
	g(\bThetabar)
	+
	(\bthetahat\subSUP-\bthetabar)\trans\frac{\partial}{\partial \btheta\subSUP}g(\bThetabar)
	+
	(\bxihat\subSUP-\bxibar)\trans\frac{\partial}{\partial \bxi\subSUP}g(\bThetabar)
	+O_\Pbb\left(n^{-1}\right),
	\ee
	therefore 
	\be
	\sqrt n\left\{g(\bThetahat\subSUP)-g(\bThetabar)\right\}
	=&
	\sqrt n(\bthetahat\subSUP-\bthetabar)\trans\frac{\partial}{\partial \btheta\subSUP}g(\bThetabar)
	+
	\sqrt n(\bxihat\subSUP-\bxibar)\trans\frac{\partial}{\partial \bxi\subSUP}g(\bThetabar)
	+o_\Pbb(1).
	\ee
    Using the $Q$-function and propensity score function influence functions we can write 
	\begin{align*}
	&\sqrt n\left\{g(\bThetahat\subSUP)-g(\bThetabar)\right\}
	=
	\frac{\partial}{\partial \btheta\subSUP}g(\bThetabar)\frac{1}{\sqrt n}\sum_{i=1}^n\bpsi\subSUP^\theta(\bL_i)
	+
	\frac{\partial}{\partial \bxi}g(\bThetabar)\frac{1}{\sqrt n}\sum_{i=1}^n\bpsi\subSUP^\xi(\bL_i)
	+o_\Pbb(1)
	\end{align*}
\comment{
Now lets analyze the bias term,
By Lemma \ref{lemm_bias_term}
\begin{align*}
\sqrt nBias(\hat{\mathcal{V}}\subSSLDR,\mathcal{V}_{Q^0,w^0})=
&\sqrt n\mathbb{E}\left[
\left\{1-
\frac{\pi_1^0(\bH_1)}{\hat \pi_1(\bH_1)}
\right\}
\left\{
Q^{0*}_1(\bH_1)-\hat Q^*_1(\bH_1)
\right\}
\right]\\
+&
\sqrt n\mathbb{E}\left[
\frac{\pi_1^0(\bH_1)}{\hat \pi_1(\bH_1)}
\left\{
1-\frac{\pi_2^0(\bHcheck_2)}{\hat \pi_2(\bHcheck_2)}
\right\}
\left\{
Q_2^{*0}(\bHcheck_2)-\hat \Qbar_2(\bHcheck_2)
\right\}
\right]\\
\leq
&\sqrt n\sqrt{\mathbb{E}\left[
	\left\{1-
	\frac{\pi_1^0(\bH_1)}{\hat \pi_1(\bH_1)}
	\right\}^2
	\right]
	\mathbb{E}\left[
	\left\{
	Q^{0*}_1(\bH_1)-\hat Q^*_1(\bH_1)
	\right\}^2
	\right]}\\
+&
\sqrt n\sqrt{\mathbb{E}\left[
	\left(\frac{\pi_1^0(\bH_1)}{\hat \pi_1(\bH_1)}\right)^2
	\left\{
	1-\frac{\pi_2^0(\bHcheck_2)}{\hat \pi_2(\bHcheck_2)}
	\right\}^2
	\right]}\\
	&\hspace{.5cm}\times
	\sqrt{
	\mathbb{E}\left[
	\left\{
	Q_2^{*0}(\bHcheck_2)-\hat \Qbar_2(\bHcheck_2)
	\right\}^2
	\right]}\\
=&\sqrt n\sqrt{\mathbb{E}\left[\frac{1}{\hat \pi_1(\bH_1)^2}
	\left\{\hat \pi_1(\bH_1)-
	\pi_1^0(\bH_1)
	\right\}^2
	\right]}\\
	&\hspace{.5cm}\times
	\sqrt{
	\mathbb{E}\left[
	\left\{
	Q^{0*}_1(\bH_1)-\hat Q^*_1(\bH_1)
	\right\}^2
	\right]}\\
+&
\sqrt n\sqrt{\mathbb{E}\left[
	\left(\frac{\pi_1^0(\bH_1)}{\hat \pi_1(\bH_1)\hat \pi_2(\bHcheck_2)}\right)^2
	\left\{
	\hat \pi_2(\bHcheck_2)-\pi_2^0(\bHcheck_2)
	\right\}^2
	\right]}\\
	&\hspace{.5cm}\times
	\sqrt{
	\mathbb{E}\left[
	\left\{
	Q_2^{*0}(\bHcheck_2)-\hat \Qbar_2(\bHcheck_2)
	\right\}^2
	\right]}\\
    \leq&\sqrt{\sup_{\bH_1}\frac{1}{\hat \pi_1(\bH_1)^2}}
    \sqrt n\bigg\|\hat \pi_1(\bH_1)-
    \pi_1^0(\bH_1)
    \bigg\|_\Pbb
    \bigg\|
    Q^{0*}_1(\bH_1)-\hat Q^*_1(\bH_1)
    \bigg\|_\Pbb\\
    +&
    \sqrt{\sup_{\bH\in\mathcal{H}}\left(\frac{\pi_1^0(\bH_1)}{\hat \pi_1(\bH_1)\hat \pi_2(\bHcheck_2)}\right)^2}\\
    \times&
    \sqrt n\bigg\|\hat \pi_2(\bHcheck_2)-\pi_2^0(\bHcheck_2)\bigg\|_\Pbb
    \bigg\|
    Q_2^{*0}(\bHcheck_2)-\hat \Qbar_2(\bHcheck_2)
    \bigg\|_\Pbb
\end{align*}
by Assumptions (\ref{assumption: donsker w}) and bounded covariates $\sqrt{\sup_{\bH_1}\frac{1}{\hat \pi_1(\bH_1)^2}},\sqrt{\sup_{\bH\in\mathcal{H}}\left(\frac{\pi_1^0(\bH_1)}{\hat \pi_1(\bH_1)\hat \pi_2(\bHcheck_2)}\right)^2}$ are bounded, thus as long as either the $\hat Q_t$, or $\hat \pi_t,t=1,2$ are correctly specified $Bias(\hat{\mathcal{V}}\subSSLDR,\mathcal{V}_{Q^0,w^0})\stackrel{p}{\rightarrow}0$. Further, if $\sqrt n\|\hat \pi_2-\pi_2^0\|_\Pbb
\|
Q_2^{*0}-\hat \Qbar_2
\|_\Pbb=o_\Pbb(1)$ and $\sqrt n\|\hat \pi_1-
\pi_1^0
\|_\Pbb
\|
Q^{0*}_1-\hat Q^*_1
\|_\Pbb=o_\Pbb(1)$ then $\sqrt n Bias(\hat{\mathcal{V}}\subSSLDR,\mathcal{V}_{Q^0,w^0})=o_\Pbb(1).$
}
\end{proof}


\end{document}